\documentclass{article}
\usepackage{placeins}
\usepackage{caption}
\usepackage{booktabs}
\usepackage{microtype}
\usepackage{graphicx}
\usepackage{subfigure}
\usepackage{booktabs} 

\usepackage{hyperref}

\newcommand{\ppre}{p^{\mathrm{pre}}}

\usepackage[preprint]{icml2026}

\usepackage{amsmath}
\usepackage{amssymb}
\usepackage{mathtools}
\usepackage{amsthm}
\usepackage{empheq}

\newcommand{\cX}{\mathcal{X}}

\newcommand{\E}{\mathbb{E}}
\newcommand{\R}{\mathbb{R}}

\newcommand{\Var}{\mathrm{Var}}
\DeclareMathOperator*{\argmax}{arg\,max}
\usepackage[capitalize,noabbrev]{cleveref}

\theoremstyle{plain}
\newtheorem{theorem}{Theorem}[section]
\newtheorem{proposition}[theorem]{Proposition}
\newtheorem{lemma}[theorem]{Lemma}
\newtheorem{corollary}[theorem]{Corollary}
\theoremstyle{definition}
\newtheorem{definition}[theorem]{Definition}
\newtheorem{assumption}[theorem]{Assumption}
\theoremstyle{remark}
\newtheorem{remark}[theorem]{Remark}

\usepackage[textsize=tiny]{todonotes}

\icmltitlerunning{Learnable Chernoff Baselines for Inference-Time Alignment}

\begin{document}

\twocolumn[
\icmltitle{Learnable Chernoff Baselines for Inference-Time Alignment}



\icmlsetsymbol{equal}{*}

\begin{icmlauthorlist}
\icmlauthor{Sunil Madhow}{ucsd}
\icmlauthor{Yuchen Liang}{osu}
\icmlauthor{Ness Shroff}{osu}
\icmlauthor{Yingbin Liang}{osu}
\icmlauthor{Yu-Xiang Wang}{ucsd}
\end{icmlauthorlist}

\icmlaffiliation{ucsd}{UC San Diego}
\icmlaffiliation{osu}{Ohio State University}

\icmlcorrespondingauthor{Sunil Madhow}{smadhow@ucsd.edu}

\icmlkeywords{Machine Learning, ICML}

\vskip 0.3in
]



\printAffiliationsAndNotice{} 

\begin{abstract}
We study inference-time reward-guided alignment for generative models. Existing methods often rely on either architecture-specific adaptations or computationally costly inference procedures. We introduce \textbf{Learnable Chernoff Baselines (LCBs)} as a method for efficiently and approximately sampling from the exponentially tilted kernels that arise from KL-regularized reward alignment. Using only black-box sampling access to the pretrained model, LCBs implement a form of rejection sampling with adaptively selected acceptance probabilities, which allows fine-grained control over inference-compute scaling. We establish total-variation guarantees to the ideal aligned model, and demonstrate in both continuous and discrete diffusion settings that LCB sampling closely matches ideal rejection sampling while using substantially fewer queries to the pretrained model.
\end{abstract}





\section{Introduction}
\label{intro}


As Diffusion Models and other generative systems grow in their ability to produce meaningful facsimiles of human data, we face two central questions:  (1) \textbf{How can we harness pretrained generative models} to tackle specific downstream tasks that are poorly represented in their training corpora? (2) \textbf{How can we guarantee safety} by suppressing harmful or undesirable content while amplifying desirable content? The AI alignment literature aims to address these questions by teaching models to optimize {reward functions} that encode ``good behavior.'' These can come from human feedback (for commercial models) \cite{ouyang2022instructgpt}, formal verifiers of model output (in the domain of proofs) \cite{yang2023leandojo}, or domain expertise (in AI for science) \cite{li2024derivativefreeguidancecontinuousdiscrete}.

We consider a common formulation \cite{beirami2025theoretical, li2024derivativefreeguidancecontinuousdiscrete, huang2025correcting} of the alignment problem in terms of a KL-regularized reward maximization problem:
\begin{equation}
\label{eq:tradeoff}
p^* = \argmax_{p \in \Delta(\mathcal{X}) }\; \mathbb{E}_{x_0\sim p}[r(x_0)] - \alpha\, \mathrm{KL}\!\left(p \,\|\, p^{\mathrm{pre}}\right),
\end{equation}
where $\cX$ is the space of generated content (e.g. images or text), $\Delta(\cX)$  denotes the set of probability distributions over $\cX$, $p^{\mathrm{pre}} \in \Delta(\mathcal{X})$ is the sampling distribution of the final output $x_0$ of a pretrained model, $r : \mathcal{X} \to \mathbb{R}$ is a (possibly non-differentiable) reward function, and $\alpha > 0$ controls the strength of Kullback-Leibler divergence regularization $\mathrm{KL}\!\left(p \,\|\, p^{\mathrm{pre}}\right) = \sum_{x\in \cX} p(x)\log(p(x)/p^{\mathrm{pre}}(x))$. This regularization is introduced to ensure that the aligned model $p^*$ continues to produce semantically meaningful samples by remaining in a neighborhood of the pre-trained model $p^{\mathrm{pre}}$.

One approach to accessing $p^*(x_0)$ is RL finetuning, which updates model weights for higher reward \cite{rafailov2023dpo, shao2024deepseekmathpushinglimitsmathematical}.  RL finetuning, however, is computationally costly and requires open-weight access to the pre-trained model. Another approach, known as \textbf{Inference-time alignment}, aims at directly sampling from $p^*$ with only API-access to $\ppre$, thus enabling more flexible reward-guided generation with closed-weights models. 

\textbf{The main contribution} of this paper is a novel inference-time alignment algorithm known as \emph{Learnable Chernoff Baselines} (LCBs) that (1) is more efficient than existing inference-time alignment approaches; (2) comes with provable guarantees; (3) shows promise in correctly and efficiently aligning models at both toy scale and large scale. To describe the algorithm, we first need to understand the structure of the solution $p^*$.


\noindent\textbf{Structure of the solution.} 
It is well known that the optimal solution to \eqref{eq:tradeoff} is an exponential tilt, $p^*(x_0) \propto e^{r(x_0)/\alpha} \, p^{\mathrm{pre}}(x_0)$ \cite{levine2018rlasinference, yuan2023rewarddirected,uehara2025tutorial}. Less known is that, in the ubiquitous case where $\ppre(x_0)$ is induced by a Markov process over intermediate states $x_T \dots x_0$ (with backward indexing), $p^*(x_0)$ is realized by exponentially tilting each transition by the \emph{soft-value function}, $v_t (x_t):= \log \E_{\ppre(x_0|x_t)}[e^{r(x_0)/\alpha}]$\footnote{Here we abuse the notation and use $\ppre$ and $p^*$ to also denote the transition kernels $\ppre(x_{t}|x_{t+1})$ (and $p^*(x_t|x_{t+1})$) of the corresponding Markov process.The claim says that $p^*(x_0)$ can be sampled by starting from $\ppre(x_T)$ and then follows the Markov process defined by a sequence of tilted transition kernel $p^*(x_t|x_{t+1}) \propto e^{v_t(x_t)/\alpha}\ppre(x_{t}|x_{t+1})$.}. 


\noindent\textbf{The LCBs algorithm.}  Our proposed method is a rejection-based sampling paradigm for targeting $p^*(x_0)$ at the level of individual samples. In addition to estimating the soft-value functions $v_t$ with $\hat{v}_t$ as in \cite{li2024derivativefreeguidancecontinuousdiscrete}, we  introduce Learnable Chernoff Baselines (LCBs) as a technique for implementing adaptive rejection sampling envelopes for transition kernels. These ``baseline functions'' accelerate sampling by enabling adaptive choices of transition acceptance probabilities based on projected reward. 

\noindent\textbf{A summary of results.} For the sampling distribution $\hat{q}(x_0)$ induced by LCBs, our most important results are:




\begin{enumerate}
    \item 
    Given estimates $\{\hat{v}_t\}$ of the soft-value functions $\{v_t\}$, we prove theoretically (Theorem \ref{thm:lcb tv bound}) that  $\hat{q}(x_0)$ satisfies the following TV bound with respect to an ideal rejection sampling scheme that outputs samples $\hat p (x_0)$ from the process with tilted kernels $\hat p(x_t | x_{t + 1}) \propto e^{\hat{v}_t(x_t)}\ppre_t(x_t | x_{t + 1})$.
    \[\textstyle d_{TV}(\hat{q}(x_0), \hat{p}(x_0)) \lesssim \delta \sum_t e^{J^*_t + 2\epsilon_0}\]
    where $J^*_t$ is the optimal value of the ``LCB objective'' and $\epsilon_0$ is a statistical term from the number of samples, $m$, used to pretrain the LCB (typically $o(1)$ as $m \to \infty$ with rate defined by a complexity term).
    
    \item We show as corollary (Corollary \ref{cor:subgaussian value corollary}) that, if $\hat{v}_t(x_t)$ is $\sigma_t$-sub-Gaussian with respect to $\ppre_t(x_t | x_{t + 1})$ for all $x_{t + 1}$, then 
    \[\textstyle d_{TV}(\hat{q}(x_0), \hat{p}(x_0)) \lesssim \delta \sum_t e^{\sigma_t \sqrt{\log \frac{1}{\delta}} + 2\epsilon_0}\]
     We further show (Theorem \ref{thm:mog thm}) that the sub-Gaussian assumption holds for the true value function $v_t$ in a DDPM-style model with a Gaussian mixture target, with $\sigma_t = O(\beta_t + \tilde \beta_t)$, where $\beta_t$ is the forward step size and $\tilde \beta_t$ is the reverse.
    \item In order to connect $\hat{q}(x_0)$ to the true target, $p^*(x_0)$, we provide a non-parametric estimator $\hat{v}_t$ with asymptotically zero MSE (under statistical learning theory assumptions), and propagate its error to show that 
    \[\textstyle d_{TV}(\hat{q}(x_0), p^*(x_0)) \leq d_{TV}(\hat{q}(x_0), \hat{p}(x_0)) + o_n(1)\]
    where $n$ is the number of samples used to pretrain $\hat{v}_t$  (Theorem \ref{thm:tv error rejection sampling}).
    \item We conduct experiments in both continuous ($2$ dimensional mixture of Gaussians) and discrete (large language diffusion \cite{nie2025largelanguagediffusionmodels}) settings. In the continuous case, we find that LCBs can obtain similar alignment strength to BoN and Rejection Sampling with $7\times$ and $12.5\times$ fewer queries to the pretrained model respectively. In the language setting, we obtain comparable alignment to BoN with $20-40\%$ fewer queries. 
\end{enumerate}

\begin{figure}[t]
    \centering
    \scalebox{1}{
    \begin{minipage}{\columnwidth}
        \centering
        \begin{minipage}{0.49\columnwidth}
            \centering
            \includegraphics[width=\columnwidth]{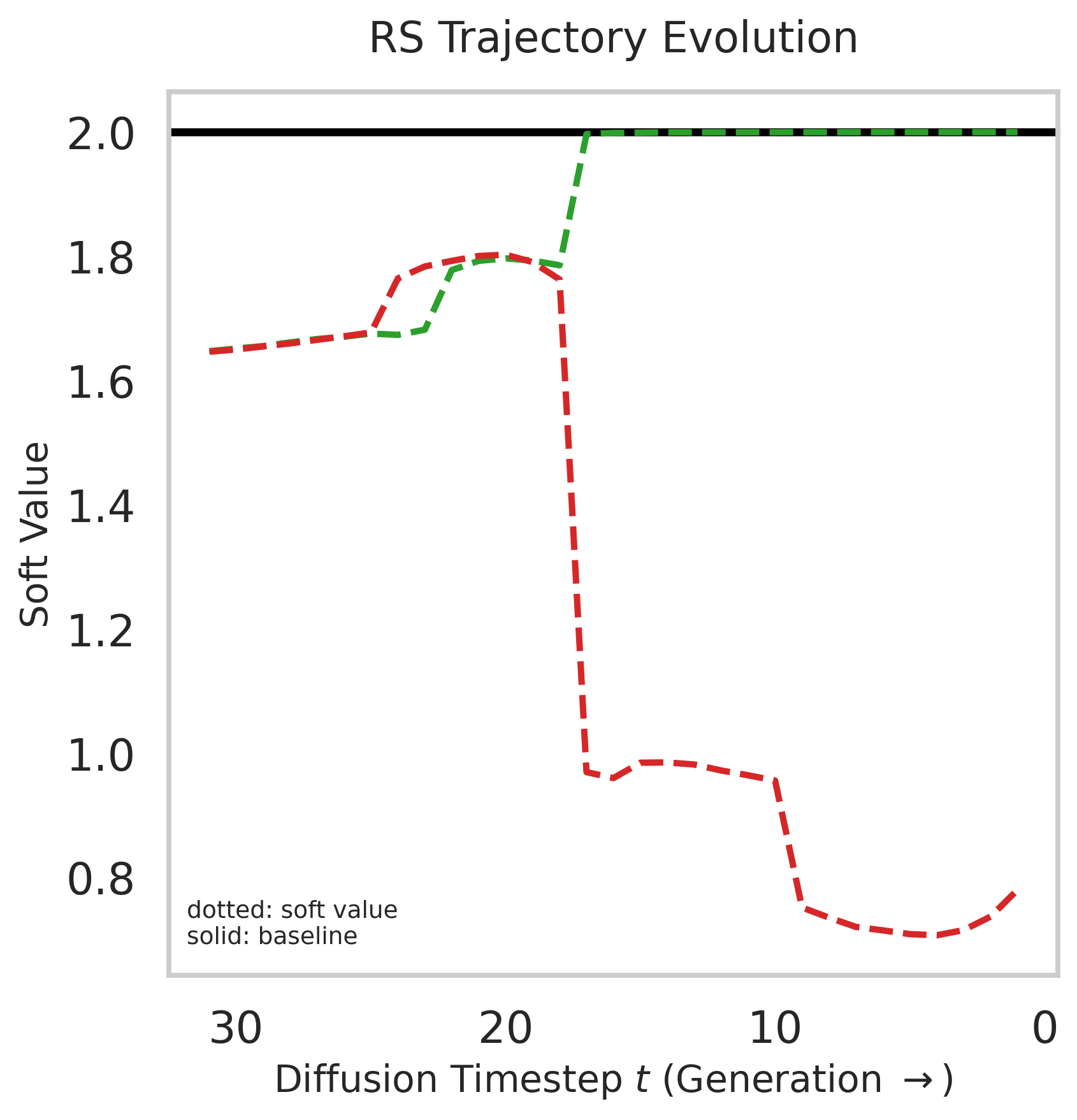}
        \end{minipage}\hfill
        \begin{minipage}{0.49\columnwidth}
            \centering
            \includegraphics[width=\columnwidth]{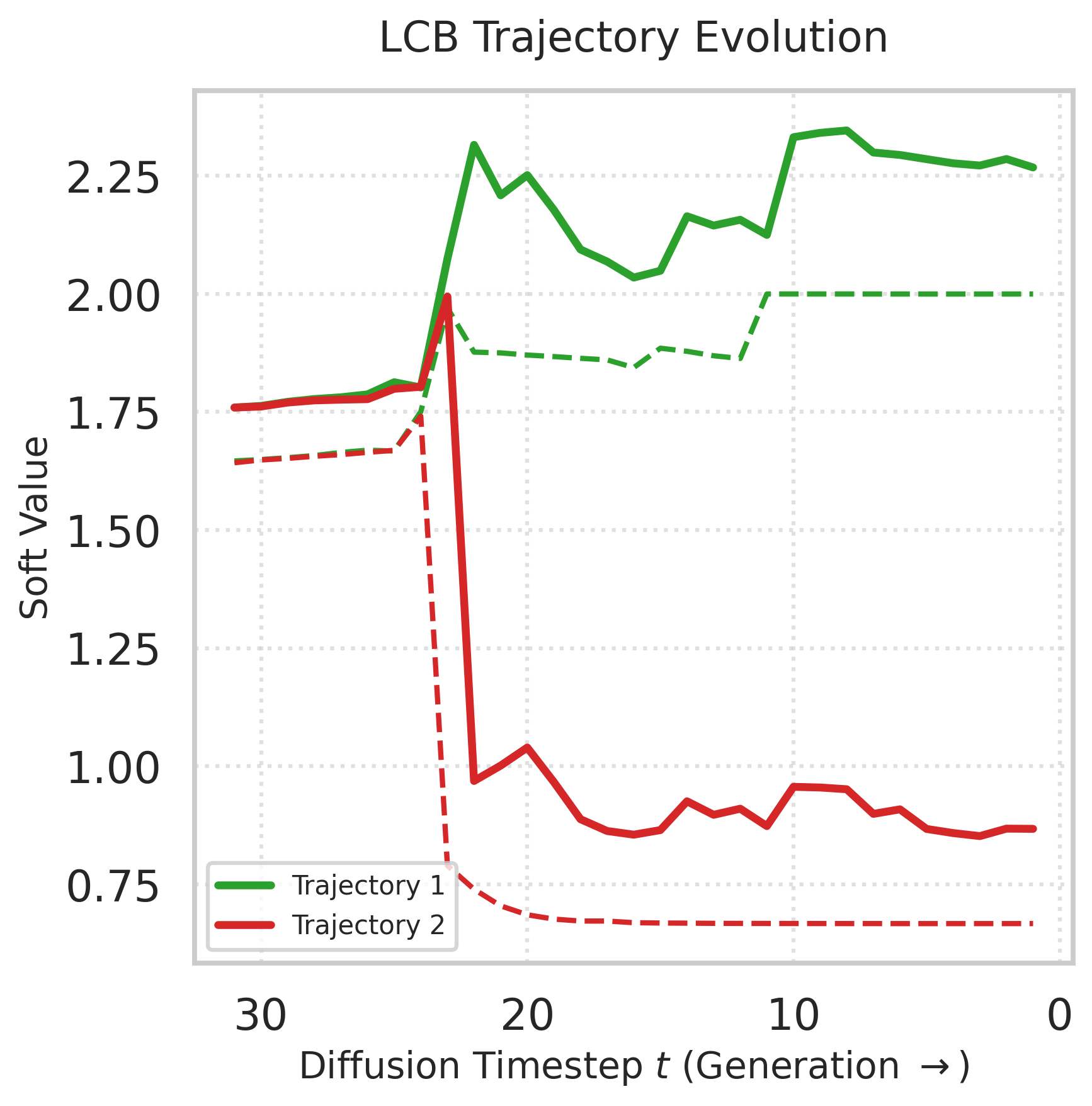}
        \end{minipage}
    \end{minipage}
    }
    \caption{We plot two guided language diffusion trajectories (Section \ref{sec:llada experiments}). LCB learns state-dependent baselines (solid) that track soft values (dotted), enhancing proposal efficiency compared to rejection sampling's (RS) global upper bound.}
    \label{fig:teaser_rs_vs_lcb}
\end{figure}

\noindent\textbf{Related work}  The objective presented in \eqref{eq:tradeoff} and its analysis via soft-value functions (Section \ref{sec:preliminaries}) has become a common starting point for research in inference, safety and alignment \cite{levine2018rlasinference, deng2023rad, li2024derivativefreeguidancecontinuousdiscrete,uehara2025tutorial, yoon2025psisampler, huang2025correcting}. Though alignment is often accomplished by finetuning \cite{fan2023dpok,black2024training}, we focus on inference-time alignment methods.  


Many existing inference-time alignment methods for sampling from \eqref{eq:tradeoff} depend on specific assumptions on the architecture of $\ppre$ or reward function $r$ \cite{dhariwal2021guided,ho2022classifierfree, kawar2022denoising,wu2023practical, nisonoff2025unlockingguidancediscretestatespace,deng2023rad, singh2025codeblockwisecontroldenoising}, thus not directly comparable with us. 

Our method belongs to a growing class of \emph{model-agnostic} inference-time alignment approaches that require only black-box sampling access to $\ppre(x_t \mid x_{t+1})$. Prominent examples include Sequential Monte Carlo/Particle Filters (SMC/PF) \cite{li2024derivativefreeguidancecontinuousdiscrete, wu2023practical, dou2024diffusion, cardoso2024monte}, which jointly propagate and reweight $K$ generation trajectories, and Best-of-$N$ (BoN) \cite{beirami2025theoretical}, which selects the highest-reward sample among $N$ draws from $\ppre(x_0)$. In their nontrivial regimes ($K,N \ge 2$), both methods multiply inference compute by at least $2\times$, and existing analyses often focus on asymptotic settings where $K$ or $N$ grow large \cite{wu2023practical, huang2025bestofn}. In contrast, we provide \emph{non-asymptotic}, end-to-end total-variation bounds on sampling error, with a tunable approximation–compute tradeoff governed by a continuous hyperparameter $\delta$ rather than discrete choices of $K$ or $N$.

BoN is nonetheless a strong baseline in practice, often delivering substantial reward gains at a fixed KL budget with essentially no additional engineering \cite{lin2024bonbon, huang2025bestofn, beirami2025theoretical}. Moreover, when intermediate value estimates are available, BoN can be applied \emph{per step} during generation \cite{deng2023rad}. Our approach is somewhat related in spirit: rather than committing to a fixed $N$, we use an adaptive acceptance rule that can be heuristically interpreted as setting a continuum-valued state- and time-dependent ``effective $N$,'' (which may be less than $2$) allocating extra compute only where it is most useful.



\section{Preliminaries}\label{sec:preliminaries}
We take a pretrained Diffusion Model (DM) to be a Markov process specified by a set of conditional distributions $\{\ppre_t(x_t | x_{t + 1})\}_{t =0}^{T - 1}$ together with an initial distribution $\ppre_T(x_T)$ over a desired sampling space $\mathcal{X}$. $T$ is the number of generations steps used by the model. Here $\cal X$ can be either continuous or discrete, where the former corresponds to continuous diffusion models (e.g., DDPM \cite{ho2020denoising}) and the latter, discrete ones (e.g., D3PM \cite{austin2021structured}).  This setting also includes autoregressive language models with bounded output size (via the choice of $\mathcal X = V^T$ to be the set of strings of tokens). 

Independently of the DM, we consider a (possibly non-differentiable) target reward function $r_0: \mathcal{X} \rightarrow [-B_0, B_0]$, where $B_0 > 0$ is a constant. The alignment temperature in \eqref{eq:tradeoff} is a constant $\alpha > 0$, which we may take to be $\alpha = 1$ without loss of generality, by defining $r := r_0/\alpha$ with $B := B_0/\alpha$.

Having fully specified the joint distribution of $x_1, ..., x_T$ under $\ppre$, we freely marginalize and condition on any subset of timesteps. From \cite{uehara2025tutorial}, it is known that the solution to \eqref{eq:tradeoff}, $p^*(x_0) \propto e^{r(x_0)}\ppre(x_0)$, is obtained by marginalizing over $x_1, ..., x_T$ the aligned process specified as \begin{align}\label{eq:pstar conditional definition}
p^*_t(x_t | x_{t + 1}) &:= e^{v_t(x_t) - v_{t + 1}(x_{t + 1})}\ppre_t(x_t | x_{t + 1}) \nonumber\\
p^*(x_T) &:= e^{v_T(x_T) - v_{T + 1}}\ppre(x_T)
\end{align}

where $v_t: \mathcal{X} \rightarrow [-B, B]$ is the well-known \emph{soft-value function} \cite{levine2018rlasinference}:
\begin{align}\label{eq:soft value definition} 
    v_t(x_t) &:= \log \E_{x_0 \sim \ppre(\cdot | x_t)}[e^{r(x_0)}], \quad  t \in [T]\nonumber\\
    v_{T+1} &:= \log \E_{x_0 \sim \ppre}[e^{r(x_0)}]
\end{align}

so-called because $v_t(x_t)$ is the value of \eqref{eq:tradeoff} where the model is started at time $t$ conditioned on the state $x_t$. 

Equations \ref{eq:pstar conditional definition} and \ref{eq:soft value definition} together suggest the following three step alignment process, which guides this paper.

\textbf{Step 1:} Obtain $n$ trajectories from the pretrained model, $\ppre$, and form the dataset $\{\left (x^i_T, ..., x^i_0, r(x^i_0)\right)\}_{i \in [n]}$

\textbf{Step 2:} Form an estimate of the soft-value function, $\hat{v}_t(x_t)$ using the data from Step 1. Also estimate any auxiliary quantities required for sampling. 

\textbf{Step 3:} Sample from the process with tilted kernels $\hat{p}(x_t | x_{t + 1}) \propto e^{\hat{v}_t(x_t)}\ppre(x_t | x_{t + 1})$

The costs in Steps 1--2 are one-time (or infrequent) preprocessing costs, whereas Step 3 is paid per generated sample.
Thus, if $C_{\mathrm{pre}} := \mathrm{Compute}_1 + \mathrm{Compute}_2$ denotes the total preprocessing compute and
$C_{\mathrm{inf}} := \mathrm{Compute}_3$ denotes the compute of one guided sample in Step 3, then after producing $S$
samples the amortized compute per sample is
\[
\mathrm{CostPerSample}(S) \;=\; C_{\mathrm{inf}} \;+\; \frac{C_{\mathrm{pre}}}{S},
\]
which is dominated by $C_{\mathrm{inf}}$ as $S \to \infty$.

This paper therefore focuses primarily on efficiently solving the sampling problem in Step 3, which is nontrivial when $\mathcal{X}$ is either continuous or high-dimensional discrete. In order to state fully end-to-end TV bounds, however, we also briefly give a recipe for consistently estimating $\hat{v}_t$ in Step 2.  



\section{Warm up: Rejection Sampling with Soft-Value Estimates}\label{sec:rejection sampling}

To begin, let us assume access to an estimate, $\hat{v}_t$, of the soft value function, $v_t$. Plugging $\hat{v}_t$ in for $v_t$ in \eqref{eq:pstar conditional definition} leads to the estimated aligned process
\[\hat{p}_t(x_t | x_{t + 1}) \propto e^{\hat{v}_t(x_t)}\ppre_t(x_t | x_{t + 1})\]
and $\hat{p}_T(x_T) \propto e^{\hat{v}_T(x_T)}\ppre_T(x_T)$. Moreover, the following theorem shows that samples from $\hat{p}(x_0)$ are close to samples from the optimal alignment distribution $p^*(x_0)$ defined in \eqref{eq:tradeoff}, provided that $\hat{v}_t$ is a good estimate of $v_t$. 
\begin{theorem}[TV error of $\hat{p}$]
\label{thm:tv error rejection sampling}
The total variation error between $p^*(x_0)$ and $\hat{p}_0(x_0)$ is bounded as
\[d_{TV}\left(\hat{p}(x_0), p^*(x_0)\right) \leq \frac{e^{2B}}{2}\sqrt{\sum_{t = 0}^{T} \|\hat{v}_t - v_t\|^2_{L^2(p^{pre}_t)}}.\]
\end{theorem}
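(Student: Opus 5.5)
We bound the TV distance between the two path measures and then pass to the marginal at time $0$. Since marginalization can only decrease total variation, $d_{TV}(\hat p(x_0),p^*(x_0))\le d_{TV}(\hat p(x_{T:0}),p^*(x_{T:0}))$. By Pinsker's inequality this is at most $\sqrt{\tfrac12\mathrm{KL}(p^*(x_{T:0})\,\|\,\hat p(x_{T:0}))}$. The chain rule for KL over the Markov factorization then gives
\[
\mathrm{KL}(p^*\|\hat p)=\mathrm{KL}(p^*_T\|\hat p_T)+\sum_{t=0}^{T-1}\E_{x_{t+1}\sim p^*_{t+1}}\,\mathrm{KL}\big(p^*_t(\cdot|x_{t+1})\,\|\,\hat p_t(\cdot|x_{t+1})\big).
\]
So it suffices to bound each per-step conditional KL by a constant times $\E_{\ppre_t(x_t|x_{t+1})}[(\hat v_t-v_t)^2]$, and then convert the outer expectation over $p^*$ into one over $\ppre$.

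\textbf{Per-step bound.} Write $g=\hat v_t-v_t$ and $\hat Z(x_{t+1})=\E_{\ppre_t(\cdot|x_{t+1})}[e^{\hat v_t}]$. Note that $p^*_t=e^{v_t-v_{t+1}}\ppre_t$ is already normalized, since $e^{v_{t+1}(x_{t+1})}=\E_{\ppre_t(\cdot|x_{t+1})}[e^{v_t(x_t)}]$ by the tower property. Then
\[
\log\frac{p^*_t}{\hat p_t}=-g+\log\E_{p^*_t}[e^{g}],
\]
so the per-step KL equals $\log\E_{p^*_t}[e^{g}]-\E_{p^*_t}[g]$, a log-MGF centering gap. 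The cleanest route is to bound KL by the $\chi^2$-divergence, or equivalently to use $\log\E[e^g]-\E[g]\le \tfrac12\Var(g)\cdot(\text{const})$ for bounded $g$. Alternatively we can write both densities relative to $\ppre_t$ and use the ratio bounds $e^{v_t-v_{t+1}}\le e^{2B}$ and $\hat p_t/\ppre_t\le e^{2B}$, which follow from $v_t,\hat v_t\in[-B,B]$. In any case we should get
\[
\mathrm{KL}\le C\,\E_{\ppre_t(\cdot|x_{t+1})}[g^2],
\]
with $C$ a power of $e^{B}$.

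\textbf{Change of measure and summation.} Next, $\E_{x_{t+1}\sim p^*_{t+1}}[h(x_{t+1})]\le e^{2B}\E_{\ppre_{t+1}}[h]$, because the marginal density ratio $p^*(x_{t+1})/\ppre(x_{t+1})=e^{v_{t+1}-v_{T+1}}\le e^{2B}$. Combining this with the per-step bound gives
\[
\mathrm{KL}(p^*\|\hat p)\le C e^{2B}\sum_t\|\hat v_t-v_t\|^2_{L^2(\ppre_t)}.
\]
The $t=T$ term is handled in the same way. Taking the square root and applying Pinsker then yields the stated form $\frac{e^{2B}}{2}\sqrt{\cdot}$, provided $Ce^{2B}\le e^{4B}/2$, i.e.\ $C\le e^{2B}/2$.

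\textbf{Main obstacle.} The main difficulty is matching the constant $e^{2B}/2$ exactly. The $\chi^2$ route gives
\[
\chi^2(p^*_t\|\hat p_t)=\E_{\hat p_t}[(p^*_t/\hat p_t-1)^2],
\]
and the density ratio $p^*_t/\hat p_t=e^{-g}\E_{p^*_t}[e^g]$ is bounded. If the constants do not come out right, I would instead work directly with TV and a telescoping hybrid argument, applying Cauchy–Schwarz only once at the end. In that version each step contributes $\E|p^*_t-\hat p_t|\lesssim e^{B}\E_{\ppre}|g-\bar g|$, which is controlled by $\|g\|_{L^2}$. Cauchy–Schwarz across steps, followed by the change of measure, then gives the square-root-of-sum form.
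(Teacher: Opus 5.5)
You follow the paper's proof: data processing to path measures, Pinsker, the KL chain rule, and the change of measure $p^*_{t+1}\le e^{2B}\ppre_{t+1}$. The gap is that the step carrying the whole statement is left open. That step is the per-step bound $\mathrm{KL}(p^*_t(\cdot|x_{t+1})\,\|\,\hat p_t(\cdot|x_{t+1}))\le \tfrac{e^{2B}}{2}\E_{\ppre_t(\cdot|x_{t+1})}[g^2]$, and none of the routes you list delivers it.

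Generic inequalities for a bounded $g\in[-2B,2B]$ lose extra powers of $e^{B}$:
\begin{itemize}
\item Hoeffding's lemma gives only the constant $2B^2$, with no $L^2$ dependence at all.
\item A Bennett-type bound $\log\E e^{g}-\E g\le \Var(g)\,(e^{4B}-1-4B)/(16B^2)$, followed by the change of measure from $p^*_t$ to $\ppre_t$, gives a constant of order $e^{6B}/B^2$.
\item The $\chi^2$ route cannot work in general. Take $\hat v_t\equiv -B$, and $v_t=B$ on an atom of $\ppre_t$-mass $\epsilon\ll e^{-2B}$ with $v_t=-B$ elsewhere. Then $\chi^2(p^*_t\|\hat p_t)\approx \epsilon(e^{2B}-1)^2$, while $\tfrac{e^{2B}}{2}\E_{\ppre_t}[g^2]=2B^2\epsilon e^{2B}$.
\end{itemize}

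Your TV fallback cannot reach the stated form either. Per-step TV is first order in $g_t$, so the telescoped bound is of order $\sum_t\|g_t\|$. Cauchy--Schwarz turns this into $\sqrt{T+1}\,(\sum_t\|g_t\|^2)^{1/2}$, an extra $\sqrt{T+1}$ that the theorem does not have. Only the additivity of KL along the chain avoids this.

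The missing idea fits directly into your centering-gap formula. Let $\phi(s):=\log\E_{p^*_t}[e^{sg}]$, so $\phi(0)=0$ and $\phi'(0)=\E_{p^*_t}[g]$. Your per-step KL then equals $\phi(1)-\phi(0)-\phi'(0)=\tfrac12\phi''(\xi)$ for some $\xi\in(0,1)$.

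Here $\phi''(\xi)=\Var_{h_\xi}(g)$ with $h_\xi\propto e^{(1-\xi)v_t+\xi\hat v_t}\,\ppre_t(\cdot|x_{t+1})$. Because the interpolated exponent stays in $[-B,B]$, you have $dh_\xi/d\ppre_t\le e^{B}/e^{-B}=e^{2B}$. Hence $\phi''(\xi)\le\E_{h_\xi}[g^2]\le e^{2B}\E_{\ppre_t}[g^2]$, i.e.\ $C=e^{2B}/2$ exactly.

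This is the paper's auxiliary lemma on KL between two tilts of a common base measure. The density-ratio bound is paid once, for a law that is itself a $[-B,B]$-tilt of $\ppre_t$, instead of bounding a variance under $p^*_t$ or $\hat p_t$ and then changing measure. With it, your remaining steps give $\sqrt{\tfrac12\cdot\tfrac{e^{4B}}{2}\sum_t\|\hat v_t-v_t\|^2_{L^2(\ppre_t)}}=\tfrac{e^{2B}}{2}\sqrt{\sum_t\|\hat v_t-v_t\|^2_{L^2(\ppre_t)}}$.
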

How can we sample from $\hat{p}(x_0)$? Since the reward signal is bounded, $r(x_0) \leq B$, we have $v_t(x_t) \leq B $ for all $t$. Thus, we may take $\hat{v}_t(x_t) \leq B$ WLOG\footnote{If $\hat{v}_t$ exceeds $B$, clipping it reduces its MSE, and this can be absorbed into the definition of $\hat{v}_t$}. For any $x_{t + 1} \in \mathcal{X}$, we then have:
\vspace{-1mm}
\begin{equation}\label{eq:exact rej sampling bound}
\frac{\hat{p}_t(x_t | x_{t + 1})}{\ppre_t(x_t | x_{t + 1})} \propto \frac{e^{\hat{v}_t(x_t)}\ppre_t(x_t | x_{t + 1})}{\ppre_t(x_t | x_{t + 1})} \leq e^{\hat{v}_t(x_t)}\leq e^{B}    
\end{equation}

\begin{algorithm}[t]
\caption{Exact rejection sampling for tilted diffusion transitions}
\label{alg:exact_rejection_sampling}
\begin{algorithmic}[1]
\INPUT pretrained prior $p^{\mathrm{pre}}_{T}(x_T)$ and diffusion kernels
$\{p^{\mathrm{pre}}_{t}(x_t \mid x_{t+1})\}_{t=0}^{T-1}$;
soft-value estimator $\{\hat v_t:\mathcal{X}\to\mathbb{R}\}_{t=0}^{T}$;
reward upper bound $B$ (clipping to $[-B,B]$)

\REPEAT
    \STATE Propose $x_T \sim p^{\mathrm{pre}}_{T}(\cdot)$
    \STATE Draw $u \sim \mathrm{Uniform}(0,1)$
    \STATE $a \leftarrow \exp\!\big(\hat v_T(x_T) - B\big)$
\UNTIL{$u \le a$}

\FOR{$t = T-1, T-2, \dots, 0$}
    \REPEAT
        \STATE Propose $x_t \sim p^{\mathrm{pre}}_{t}(\cdot \mid x_{t+1})$
        \STATE Draw $u \sim \mathrm{Uniform}(0,1)$
    \UNTIL{$u \le \exp(\hat{v}_t(x_t) - B)$}
\ENDFOR
\STATE \textbf{return} $x_0$
\end{algorithmic}
\end{algorithm}

By standard rejection sampling theory, this implies that, given $x_{t + 1}$, accepting proposals $x_t \sim \ppre_t(x_t | x_{t + 1})$ with probability $e^{\hat{v}_t(x_t) - B}$ gives exact samples from $\hat{p}(x_t | x_{t + 1})$. Using rejection sampling at every timestep $t = T \dots 0$ therefore produces exact samples from $\hat{p}(x_0)$. 

\begin{proposition}[Rejection sampling for $\hat{p}$]\label{prop:rejection sampling is exact}
The sampler in Algorithm \ref{alg:exact_rejection_sampling} produces exact trajectories from $\hat{p}(x_T, \dots, x_0)$, and in particular $x_0$ is distributed as $\hat{p}(x_0)$.
\end{proposition}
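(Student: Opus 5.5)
The plan is to prove the statement in two stages: first show that each inner repeat loop of Algorithm \ref{alg:exact_rejection_sampling} outputs an exact draw from the corresponding tilted kernel, then chain these steps using the Markov structure of $\hat{p}$.

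\textbf{Single-step rejection sampling.} Fix $t\in\{0,\dots,T-1\}$ and a conditioning state $x_{t+1}$. A single round of the loop proposes $x_t\sim\ppre_t(\cdot\mid x_{t+1})$ and accepts with probability $a(x_t)=e^{\hat v_t(x_t)-B}$. This lies in $[0,1]$ because $\hat v_t\le B$; this clipping convention was adopted before \eqref{eq:exact rej sampling bound}.

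The joint sub-density of "propose $x_t$ and accept" is $\ppre_t(x_t\mid x_{t+1})\,e^{\hat v_t(x_t)-B}$. The total acceptance probability is therefore
\[
Z_t(x_{t+1})=e^{-B}\,\E_{\ppre_t(\cdot\mid x_{t+1})}\big[e^{\hat v_t}\big].
\]
This is strictly positive, since $\hat v_t$ takes values in $[-B,B]$, which gives $Z_t\ge e^{-2B}$.

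Rounds are i.i.d. given $x_{t+1}$, so the number of rounds is geometric with parameter $Z_t>0$. Hence the loop terminates almost surely. Summing over the round index $k$ of first acceptance gives the law of the output:
\[
\sum_{k\ge1}(1-Z_t)^{k-1}\,\ppre_t(x_t\mid x_{t+1})\,e^{\hat v_t(x_t)-B}=\frac{e^{\hat v_t(x_t)}\ppre_t(x_t\mid x_{t+1})}{\E_{\ppre_t(\cdot\mid x_{t+1})}[e^{\hat v_t}]}.
\]
This is exactly $\hat p_t(x_t\mid x_{t+1})$. In the continuous case the same computation is done against test functions or densities with respect to the base measure. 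The initial loop for $x_T$ is the identical argument with $\ppre_T$ in place of the conditional kernel, and it yields $\hat p_T(x_T)$.

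\textbf{Chaining over timesteps.} Each loop at time $t$ uses only $x_{t+1}$ and fresh randomness: fresh proposals and fresh uniforms, independent of $x_{T},\dots,x_{t+2}$ and of earlier rejected proposals. Therefore, conditionally on the accepted history $(x_T,\dots,x_{t+1})$, the accepted $x_t$ has law $\hat p_t(\cdot\mid x_{t+1})$. Multiplying the conditional laws gives, by induction on $t$ from $T$ down to $0$, that $(x_T,\dots,x_0)$ has joint law
\[
\hat p_T(x_T)\prod_{t=0}^{T-1}\hat p_t(x_t\mid x_{t+1}),
\]
which is by definition $\hat p(x_T,\dots,x_0)$. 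Marginalizing out $x_T,\dots,x_1$ then gives that $x_0\sim\hat p(x_0)$.

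\textbf{Main obstacle.} The only delicate points are, first, that acceptance probabilities are valid and termination is almost sure, which the boundedness of $\hat v_t$ handles. Second, one must be careful that discarded proposals do not bias the conditional law passed to the next step. This is resolved by the conditional independence of fresh randomness just described, i.e. a strong-Markov-type argument at each acceptance time.
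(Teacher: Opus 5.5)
Your proposal is correct and follows essentially the same route as the paper: a per-step rejection-sampling computation showing that the accepted draw has law $\hat p_t(\cdot\mid x_{t+1})$, chained by induction from $t=T$ down to $0$. Your version is somewhat more careful than the paper's, since it makes almost-sure termination explicit, carries out the geometric-sum computation, and tracks the joint trajectory law rather than only the marginals $\hat p_t(x_t)$, but it is the same argument.
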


The proofs for Theorem \ref{thm:tv error rejection sampling} and Proposition \ref{prop:rejection sampling is exact} are provided in Appendix \ref{apdx:rejection sampling theory}.

\noindent\textbf{Estimating the Soft Value Function}
In light of Theorem \ref{thm:tv error rejection sampling} and Proposition \ref{prop:rejection sampling is exact}, we can achieve arbitrarily small sampling error if we have a consistent estimator, $\hat{v}_t$. In Appendix \ref{apdx:value estimation}, we show that, under standard realizability/complexity assumptions from statistical learning theory, $\hat{v}_t$ can be chosen from a (transformation of a) nonparametric hypothesis class $H$ using $n$ trajectories from $\ppre(x_T \dots x_0)$, so that $\|\hat{v}_t - v_t\|_{L^2(\ppre_t)}^2 \approx \sqrt\frac{e^{cB}\mathrm{comp}(H)}{n}$. Thus, it suffices that $n \asymp \frac{e^{cB}\mathrm{comp(H)^2}}{\epsilon^4}$ to achieve $\epsilon$-small TV error. Here, $n$ corresponds to one-time pretraining compute, and is amortized over the total number of inferences made. A more careful analysis via localization may improve the dependence of $n$ on $\epsilon$ \cite{wainwright}.

\noindent\textbf{Proposal Complexity}
Though this section has shown that we can approximately obtain samples from $p^*(x_0)$ using rejection sampling, Algorithm \ref{alg:exact_rejection_sampling} requires a large number of proposals from the pretrained model at each iteration. The expected total number of proposals, $\E[N]$, under Algorithm \ref{alg:exact_rejection_sampling} is approximately
\begin{equation*}
     \E[N] \approx \sum_{t = 0}^{T-1} \E_{x_{t + 1} \sim \hat p_{t + 1}}\left[\frac{1}{\E_{x_t \sim \ppre_t(\cdot | x_{t + 1})}[e^{\hat{v}_t(x_t) - B}]}\right].
\end{equation*}

In general, the only bound on $\E[N]$ that can be stated is $\E[N] \leq e^{2B}T$. While the $e^{2B}$ factor in Theorem \ref{thm:tv error rejection sampling} can be mitigated through amortized pretraining compute (making $\hat{v}_t$ more accurate), the $e^{2B}$ factor in proposal complexity is a recurring cost paid at every inference. 

According to our problem specification $B$ may be a loose upper bound on the reward function $r$. Furthermore, recall that $B = B_0/\alpha$ implicitly subsumes a $1/\alpha$ temperature term, meaning that stronger (i.e. lower temperature) alignment is computationally infeasible via rejection sampling. 

To remedy this, we introduce a theory of sampling with baselines, which will ultimately allow us to adaptively set efficient acceptance probabilities.

\section{Efficient Sampling with Baseline Functions}

Intuitively, the issue with Algorithm \ref{alg:exact_rejection_sampling}'s sampling approach is that it demands that each state $x_t$ have value very close to $B$. If such states are rare, the probability of seeing states with high acceptance probability will be exponentially small.

We now explore the use of ``baseline functions'' for adaptively setting acceptance probabilities.  \emph{The goal is to speed up sampling} by adaptively and approximately setting the constant on the RHS of \eqref{eq:exact rej sampling bound} in an $x_{t + 1}$-dependent way. We introduce a user-specified parameter $\delta > 0$ that trades off between sampling complexity and approximation quality. 

\begin{definition}\label{def:joint baseline}
We say $B_{t + 1}(x_{t + 1})$ is a \textbf{joint baseline} for $f_t: \mathcal{X} \rightarrow \R$ under $(\ppre_t(\cdot | x_{t + 1}), q_{t + 1}(x_{t + 1}))$ at level $\delta$, if 
\[\Pr_{\substack{x_{t + 1} \sim q_{t + 1} \\ x_t \sim \ppre_t(\cdot | x_{t + 1})}}[f_t(x_t) > B_{t + 1}(x_{t + 1})] \leq \delta\]
\end{definition}

\begin{algorithm}[t]
\caption{Approximate rejection sampling with baseline}
\label{alg:baseline rejection sampling}
\begin{algorithmic}[1]
\INPUT All inputs to Algorithm \ref{alg:exact_rejection_sampling}, \textbf{plus}
baseline functions $\{B_t:\mathcal{X}\to\mathbb{R}\}_{t=1}^{T + 1}$

\REPEAT
    \STATE Propose $x_T \sim p^{\mathrm{pre}}_{T}(\cdot)$
    \STATE Draw $u \sim \mathrm{Uniform}(0,1)$
\UNTIL{$\log u \le \hat v_T(x_T) - B_{T+1}(\bot)$}

\FOR{$t = T-1, T-2, \dots, 0$}
    \REPEAT
        \STATE Propose $x_t \sim \ppre_{t}(\cdot| x_{t+1})$
        \STATE Draw $u \sim \mathrm{Uniform}(0,1)$
    \UNTIL{$u \le \exp\left (\hat v_t(x_t) - B_{t+1}(x_{t+1})\right)$}
\ENDFOR
\STATE \textbf{return} $x_0$
\end{algorithmic}

\end{algorithm}

When we take $f_t := \hat{v}_t$ in Definition \ref{def:joint baseline}, $B_{t + 1}(x_{t + 1})$ can approximately replace $B$ in \eqref{eq:exact rej sampling bound}. Moreover, if $x_{t + 1}$ is a state that guarantees low value in future states $x_t, \dots x_1$, we may have $B_{t + 1}(x_{t + 1}) \ll B$. Intuitively, this allows us to ``give up'' on noised samples $x_{t + 1}$ that are bound to be low-reward by quickly accepting their successors.

For a fixed set of baseline functions $\{B_{t}\}_{t = 1}^{T + 1}$, we can produce samples using Algorithm \ref{alg:baseline rejection sampling}. Notationally, we define $\hat{q}_T(x_T)$ to be the distribution on $x_T$ induced by accepting proposals from $\ppre_T(x_T)$ with probability $\min\{1, e^{\hat{v}_T(x_T) - B_{T + 1}}\}$, where $B_{T + 1}$ is a constant-valued baseline. Inductively, we then define 
\begin{equation}\label{eq:hat q inductive def}
\hat q_t(x_t) = \int \hat q_{t + 1}(x_{t + 1})\hat q_t(x_t | x_{t + 1})dx_{t + 1}
\end{equation}
 where $\hat q_t(x_t | x_{t + 1})$ is the distribution induced by performing rejection sampling with proposal $\ppre_t(x_t | x_{t + 1})$ and acceptance probability $\min\{1, e^{\hat v_t(x_t)- B_{t + 1}(x_{t + 1})}\}$. 

We say a set of joint baselines  $\{B_{t}\}$ is \emph{mutually compatible} if $B_{t + 1}(x_{t + 1})$ is a joint baseline under $(\ppre_t(\cdot | x_{t}), \hat{q}_{t + 1}(x_{t+1}))$, for $\hat{q}_{t + 1}$ defined as in \eqref{eq:hat q inductive def}. Mutual compatibility ensures that each baseline remains a valid joint baseline under the distribution induced by applying all baselines together, making the approximate rejection sampler self-consistent and allowing per-step approximation errors to compose.

Before showing how baseline functions can be learned using data, we analyze Algorithm \ref{alg:baseline rejection sampling}'s  TV error and proposal complexity in terms of fine-grained properties of the baseline functions used. 
 
\subsection{Fine-grained TV bounds}\label{sec:finegrained tv theory}

Before analyzing the sampling error of Algorithm \ref{alg:baseline rejection sampling}, we introduce a worst-case \emph{coverage assumption}. In particular, a joint baseline has \textbf{worst-case coverage at level $c \in (0,1 )$} if there exists $c \in (0, 1)$ such that $\forall x_{t + 1}$, $B_{t + 1}$ also satisfies
\[\Pr_{x_t \sim \ppre_t(\cdot | x_{t + 1})}[\hat v_t(x_t) \geq B_{t + 1}(x_{t + 1})] \leq c\]

Importantly, we do not require that $c\rightarrow 0$ for our results, and it suffices for $c$ to be a constant like $1/2$ or $2/3$. The worst-case coverage assumption is mainly made for ease of exposition, and in Appendix \ref{apdx:coverage}, we show that it can be weakened to an empirically verifiable ``high-probability coverage assumption.'' 

We now state a TV bound for the transitions induced by Algorithm \ref{alg:baseline rejection sampling}, depending on fine-grained properties of the baseline function. The following Lemma is proved as Lemma \ref{lem:tv mgf apdx} in the Appendix.

\begin{lemma}[TV:MGF Lemma]\label{lem:tv mgf body}
Fix an estimate of the soft-value function $\hat{v}_t$ and an arbitrary\footnote{Subject to the constraint that $M(\lambda)$ is defined in a neighborhood of $\lambda = 2$ and $\lambda = -2$} distribution $q_{t + 1}(x_{t + 1})$. Suppose $B_{t + 1}(x_{t + 1}) = b_{t + 1}(x_{t + 1}) + \tau$ is a joint baseline for $\hat{v}_t$ under $(\ppre_t(\cdot | x_{t + 1}), q_{t + 1}(x_{t + 1}))$ at level $\delta \in (0, 1)$, with worst-case coverage at level $c \in (0,1 )$.

Suppose there is $\lambda \geq 2$ so that
$e^{-\lambda \tau}M(\lambda)\leq \delta$
where \[M(\lambda) := \E_{\substack{x_{t + 1} \sim q_{t + 1} \\ x_t \sim \ppre_t(\cdot | x_{t + 1})}}[e^{\lambda (\hat{v}_t(x_t) - b_{t + 1}(x_{t + 1}))}].\]
Then, for $\hat{q}_t(x_t | x_{t + 1})$ induced by Algorithm \ref{alg:baseline rejection sampling}, we have the sampling error bound:
\begin{multline}
\E_{x_{t + 1} \sim q_{t + 1}}[d_{TV}(\hat{q}_t(\cdot | x_{t + 1}), \hat{p}_t(\cdot | x_{t + 1}))]  \\ \lesssim \frac{\delta^{1 - 2/\lambda}}{(1 - c)^2}(M(\lambda)M(-\lambda))^{1/\lambda} .\label{eq:tv mgf expression}
\end{multline}
\end{lemma}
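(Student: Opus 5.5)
The plan is to fix $x_{t+1}$, write both conditionals as normalized reweightings of the common proposal $\ppre_t(\cdot\mid x_{t+1})$, bound the per-state TV by a tail expectation over a normalizer, and then average over $q_{t+1}$ with H\"older's inequality. For fixed $x_{t+1}$ set $B := B_{t+1}(x_{t+1})$, $b := b_{t+1}(x_{t+1})$ and $Z := \hat v_t(x_t) - b$, so that $B - b = \tau$. Rejection sampling with acceptance probability $\min\{1, e^{\hat v_t - B}\}$ returns an exact sample from the density proportional to $g(x_t) := \min\{e^{Z}, e^{\tau}\}\,\ppre_t(x_t\mid x_{t+1})$. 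The target $\hat p_t(\cdot\mid x_{t+1})$ is proportional to $f(x_t) := e^{Z}\,\ppre_t(x_t\mid x_{t+1})$. Since $0 \le f - g = (e^{Z} - e^{\tau})_+\,\ppre_t$, the elementary normalization bound $\|f/\|f\|_1 - g/\|g\|_1\|_1 \le 2\|f-g\|_1/\|f\|_1$ gives
\[
d_{TV}\big(\hat q_t(\cdot\mid x_{t+1}), \hat p_t(\cdot\mid x_{t+1})\big) \;\le\; \frac{N(x_{t+1})}{D(x_{t+1})},
\]
with $N := \E[e^{Z}\mathbf 1\{Z > \tau\}\mid x_{t+1}]$ and $D := \E[e^{Z}\mid x_{t+1}]$.

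Next I bound $N$ and $D$ pointwise in $x_{t+1}$. Let $m_{\pm}(x_{t+1}) := \E[e^{\pm\lambda Z}\mid x_{t+1}]$ and $P(x_{t+1}) := \Pr[Z > \tau \mid x_{t+1}]$.
\begin{itemize}
\item For the numerator, H\"older with exponents $\lambda$ and $\lambda/(\lambda-1)$ gives $N \le m_+^{1/\lambda} P^{1-1/\lambda}$.
\item For the denominator, Jensen applied to the convex map $y \mapsto y^{-\lambda}$ gives $D^{-\lambda} \le \E[e^{-\lambda Z}\mid x_{t+1}]$, so $1/D \le m_-^{1/\lambda}$.
\end{itemize}
Hence $N/D \le m_+^{1/\lambda}\, m_-^{1/\lambda}\, P^{1-1/\lambda}$. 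Here the footnote's requirement that $M(\pm\lambda)$ be finite near $\lambda = \pm 2$ is used, so that these moments exist.

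Then I average over $x_{t+1}\sim q_{t+1}$, using a three-term H\"older inequality with exponents $\lambda$, $\lambda$ and $\lambda/(\lambda-2)$ (valid for $\lambda > 2$; the case $\lambda = 2$ follows by taking the limit, where the third factor is simply bounded by $1$). This yields
\[
\E_{q_{t+1}}\!\left[\frac{N}{D}\right] \le M(\lambda)^{1/\lambda} M(-\lambda)^{1/\lambda}\Big(\E\big[P^{(\lambda-1)/(\lambda-2)}\big]\Big)^{1-2/\lambda}.
\]
Because $P \le 1$ and $(\lambda-1)/(\lambda-2) \ge 1$, we have $P^{(\lambda-1)/(\lambda-2)} \le P$. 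Moreover $\E_{q_{t+1}}[P] \le \delta$: this follows from the joint-baseline property directly, and also from Markov's inequality via the hypothesis $e^{-\lambda\tau}M(\lambda) \le \delta$. Together these give the claimed $\delta^{1-2/\lambda}(M(\lambda)M(-\lambda))^{1/\lambda}$ rate.

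The main obstacle is the role of the $(1-c)^{-2}$ factor and the worst-case coverage assumption, which the route above does not seem to need. I expect them to enter if the normalization step is done more carefully against $\|g\|_1$, the acceptance mass of $\hat q_t$. Coverage guarantees $\Pr[Z < \tau \mid x_{t+1}] \ge 1 - c$, so $g$ retains a constant fraction of the mass of $f$ on $\{Z \le \tau\}$, i.e.\ $\|g\|_1 \gtrsim (1-c)\|f\|_1$. Using this comparison once to swap normalizers, and once more to bound the denominator below by a restricted expectation, would produce $(1-c)^{-2}$. If my simpler bound survives scrutiny, it only strengthens the statement. Otherwise I would fall back to this coverage-based comparison of normalizers, which is where care is needed, since the two normalizers can differ when much of the $e^{Z}$-mass lies above $\tau$.
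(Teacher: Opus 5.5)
Your proof is correct, and it proves more than the statement asks. You get $\E_{q_{t+1}}[d_{TV}(\hat q_t(\cdot\mid x_{t+1}),\hat p_t(\cdot\mid x_{t+1}))]\le\delta^{1-2/\lambda}(M(\lambda)M(-\lambda))^{1/\lambda}$ with no $(1-c)^{-2}$ factor and no use of worst-case coverage. You also use the Chernoff hypothesis only through the joint-baseline property $\E_{q_{t+1}}[P]\le\delta$. Since $(1-c)^{-2}\ge 1$, this implies the claimed bound, so the fallback in your last paragraph is not needed. For $\lambda=2$, note that no limit is required: H\"older with exponents $(2,2,\infty)$ together with $P\le 1$ gives it directly.

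Your route departs from the paper's at the first step. The paper combines Lemma~\ref{lem:const Radon} with the Exceedance Error Lemma~\ref{lem:excedence error} to get $d_{TV}\le \hat q_t(F\mid x_{t+1})+\hat p_t(F\mid x_{t+1})$, i.e.\ the sum of the two terms $A_{x_{t+1}}$ and $B_{x_{t+1}}$ defined there. The target-side term $B_{x_{t+1}}$ is exactly your $N/D$, and the paper bounds it essentially as you do: Jensen on $1/z$, then a three-factor H\"older over $x_{t+1}$.

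All the work that uses coverage and the certificate $e^{-\lambda\tau}M(\lambda)\le\delta$ goes into the sampler-side term $A_{x_{t+1}}$. There the acceptance normalizer $\E[\mathbf{1}\{E\}e^{\hat s}\mid x_{t+1}]$ must be bounded below, which is where $\ppre_t(E\mid x_{t+1})^2\ge(1-c)^2$ enters. The certificate then converts $e^{-(\lambda-2)\tau}$ into $(\delta/M(\lambda))^{1-2/\lambda}$.

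Your observation that $g\le f$ pointwise lets you normalize against $\|f\|_1$ and never touch $\|g\|_1$. Equivalently, the constant value of $d\hat q_t/d\hat p_t$ on $E$ is $\|f\|_1/\|g\|_1\ge 1$, so $\{d\hat q_t/d\hat p_t<1\}\subseteq F$ and $d_{TV}\le \hat p_t(F\mid x_{t+1})$. This makes the $A$ term superfluous.

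Your route therefore gives a shorter proof and a strictly stronger lemma. Carried through the paper, it would remove the coverage assumption from the TV conclusions of Theorem~\ref{thm:lcb tv bound}, Corollary~\ref{cor:subgaussian value corollary} and Theorem~\ref{thm:mog thm}. Coverage is still genuinely needed for the proposal-complexity bound, Lemma~\ref{lem:prop complexity generic}. The paper's decomposition additionally controls how much mass the sampler itself places on the exceedance region, but that is not required for the TV statement.
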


Lemma \ref{lem:tv mgf body} establishes that a baseline $b_{t + 1}(\cdot ) + \tau$ that is ``Chernoff certified'' --  in the sense that the Chernoff bound reproduces the joint baseline condition (Definition \ref{def:joint baseline}) -- can be evaluated in terms of the MGF $\hat{v}_t - \hat{b}_{t + 1}$. 

Such a bound can (a) be used explicitly \textbf{as an optimization objective} in order to learn baselines that directly target low sampling error (b) be analytically controlled in special cases, in order to furnish \textbf{witnesses} that certify low error for (a). 

We can sum these bounds together in order to bound $d_{TV}(\hat{q}(x_0), \hat{p}(x_0))$, as shown in the following lemma, which is proved as Lemma \ref{lem:joint TV decomp apdx} in the Appendix.

\begin{lemma}\label{lem:joint TV decomp body}
In the setting of Algorithm \ref{alg:baseline rejection sampling},
\begin{multline*}
d_{TV}(\hat{q}(x_0), \hat{p}(x_0)) \leq d_{TV}(\hat{q}_T, \hat{p}_T) \\
+ \sum_{t = 0}^{T - 1} \E_{x_{t +1} \sim \hat q_{t + 1}}[d_{TV}(\hat{q}_t(\cdot | x_{t + 1}), \hat{p}_t(\cdot | x_{t + 1}))].
\end{multline*}
\end{lemma}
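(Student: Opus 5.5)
The plan is a hybrid (telescoping) coupling argument over the two Markov chains. Both $\hat q$ and $\hat p$ are Markov processes on $(x_T,\dots,x_0)$ that share the same structure, starting from $\hat q_T$ (resp. $\hat p_T$) and evolving with kernels $\hat q_t(\cdot\mid x_{t+1})$ (resp. $\hat p_t(\cdot\mid x_{t+1})$). The marginal at $x_0$ is obtained by pushing forward the joint law, and total variation does not increase under marginalization, so
\[
d_{TV}(\hat q(x_0),\hat p(x_0)) \le d_{TV}(\hat q(x_T,\dots,x_0),\hat p(x_T,\dots,x_0)).
\]
It therefore suffices to bound the TV distance between the two path measures.

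For the path bound, I will define hybrid measures $H_k$ for $k=T+1,T,\dots,0$. Under $H_k$, the coordinates $x_T,\dots,x_k$ evolve under $\hat q$ (initial law $\hat q_T$, then kernels $\hat q_t$ for $t\ge k$), and the remaining steps $t<k$ use the $\hat p$ kernels. The endpoints are then $H_{T+1}=\hat p$ (path) and $H_0=\hat q$ (path). By the triangle inequality,
\[
d_{TV}(\hat q,\hat p)\le \sum_{k} d_{TV}(H_k,H_{k+1}).
\]

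Each consecutive pair differs in exactly one ingredient. For $k=T$, only the initial law differs ($\hat q_T$ versus $\hat p_T$), while all subsequent kernels are identical. A common Markov kernel applied to both laws cannot increase TV (data processing), which gives the term $d_{TV}(\hat q_T,\hat p_T)$. For $k=t<T$, the two measures share the same law of $(x_T,\dots,x_{t+1})$, namely the $\hat q$ marginal, whose $x_{t+1}$-marginal is $\hat q_{t+1}$ by \eqref{eq:hat q inductive def}. They also share the same downstream kernels $\hat p_s$ for $s<t$, and differ only in the step $t$ kernel. Writing the TV distance as half the $L^1$ norm, integrating out the identical downstream kernels (again by data processing), and conditioning on the shared prefix, I obtain
\[
d_{TV}(H_t,H_{t+1})\le \E_{x_{t+1}\sim\hat q_{t+1}}\bigl[d_{TV}(\hat q_t(\cdot\mid x_{t+1}),\hat p_t(\cdot\mid x_{t+1}))\bigr].
\]
This uses the fact that both kernels depend on the past only through $x_{t+1}$ (Markovianity, which holds since Algorithm \ref{alg:baseline rejection sampling}'s acceptance rule depends only on $x_t$ and $x_{t+1}$). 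Summing these terms over $t$ yields the claim.

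The main point requiring care is the ordering of the hybrids. The expectation must be taken under $\hat q_{t+1}$ and not $\hat p_{t+1}$, which forces the prefix to be run under $\hat q$ and the suffix under $\hat p$. A secondary issue is measure-theoretic: the identity $\int \bigl|\mu(dx_{t+1})(K-K')(dx_t\mid x_{t+1})\bigr| = \E_\mu\|K-K'\|_{TV}$ holds for product-form measures, and I would state it via densities with respect to a common dominating measure.
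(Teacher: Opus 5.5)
Your proposal is correct and is essentially the paper's argument. The paper proceeds by backward induction on the one-time marginals. At each step it inserts the intermediate law $\hat q_{k+1}\hat P_k$ (prefix run under $\hat q$, one step under the $\hat p$ kernel), applies the triangle inequality, and uses the fact that a common kernel cannot increase TV. That is exactly your hybrid telescoping, unrolled one step at a time; the only difference is that you telescope over path measures and marginalize to $x_0$ at the end, whereas the paper stays with the marginals $\hat q_k, \hat p_k$ throughout.
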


\noindent\textbf{Proposal complexity.} We can also bound the number of proposals from $\ppre_t$ required at each timestep $t \in [T]$ of Algorithm \ref{alg:baseline rejection sampling} in terms of the MGF of $\hat{v}_t - B_{t + 1}(x_{t + 1})$. 

\begin{lemma}\label{lem:prop complexity generic}[Lemma \ref{lem:prop complexity generic apdx}]
Let $B_{t + 1}$ be a joint baseline for $\hat{v}_t$ with worst-case coverage at level $c$. Let $N_t$ be the number of proposals used to sample $x_t\sim \hat{q}_t(\cdot | x_{t + 1})$. Then, the expected number of proposals (over the behavior of Algorithm \ref{alg:baseline rejection sampling}) is
\begin{align*}
    &\E[N_t | x_{t + 1}] \leq \frac{1}{(1 - c)^2}\E_{x_t \sim \ppre_t(\cdot | x_{t + 1})}[e^{B_{t + 1}(x_{t + 1}) - \hat{v}_t(x_t)}]
\end{align*}
\end{lemma}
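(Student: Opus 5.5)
Fix $x_{t+1}$ and write $B := B_{t+1}(x_{t+1})$. The argument rests on the fact that, in Algorithm \ref{alg:baseline rejection sampling}, the proposals drawn at step $t$ are i.i.d.\ from $\ppre_t(\cdot\mid x_{t+1})$. Each proposal is accepted independently with probability $a(x_t) := \min\{1, e^{\hat v_t(x_t) - B}\}$. Consequently $N_t \mid x_{t+1}$ is geometric with success probability
\[
Z := \E_{x_t\sim\ppre_t(\cdot\mid x_{t+1})}[a(x_t)],
\]
so $\E[N_t\mid x_{t+1}] = 1/Z$. The whole task is therefore to lower bound $Z$ by $(1-c)^2$ divided by the claimed MGF expression.

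The first step restricts to the good set. Let $G := \{x_t : \hat v_t(x_t) < B\}$. By worst-case coverage at level $c$, $\Pr_{\ppre_t(\cdot\mid x_{t+1})}[G] \ge 1-c$. On $G$ we have $a(x_t) = e^{\hat v_t(x_t)-B}$, so
\[
Z \ge \E\big[e^{\hat v_t - B}\,\mathbf 1_G\big].
\]

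The second step applies Cauchy--Schwarz (equivalently Jensen) to the product $\mathbf 1_G = e^{(\hat v_t - B)/2}\mathbf 1_G \cdot e^{(B-\hat v_t)/2}\mathbf 1_G$. This gives
\[
(1-c)^2 \le \Pr[G]^2 \le \E\big[e^{\hat v_t - B}\mathbf 1_G\big]\;\E\big[e^{B-\hat v_t}\mathbf 1_G\big] \le Z\cdot \E\big[e^{B-\hat v_t}\big].
\]
Rearranging yields $1/Z \le (1-c)^{-2}\,\E[e^{B-\hat v_t}]$, which is the stated bound.

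No step is really an obstacle. The only delicate point is the $\min\{1,\cdot\}$ clipping in the acceptance probability, and restricting to $G$ handles it. This restriction is exactly where the coverage assumption enters. A direct Jensen argument, $Z \ge e^{-\E[B-\hat v_t]}$, would fail without it because of the clipping. The restriction costs one factor of $(1-c)$, and the square arises because $\Pr[G]$ appears squared in Cauchy--Schwarz. A crude argument would in fact give only a single factor $1/(1-c)$, so the stated $(1-c)^{-2}$ is a weaker, safe bound.
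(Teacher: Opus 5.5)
Your proof is correct and is essentially the paper's argument. The paper writes the acceptance probability as $\eta + (1-\eta)\E[e^{\hat v_t - B}\mid F^c]$ with $F = G^c$, drops the $\eta$ term, applies Jensen to $z\mapsto 1/z$ under the law conditioned on $F^c$, and converts back to an unconditional expectation, paying the second $1/(1-\eta)$; this is exactly your Cauchy--Schwarz step $\Pr[G]^2 \le \E[e^{\hat v_t - B}\mathbf 1_G]\,\E[e^{B-\hat v_t}\mathbf 1_G]$ written in conditional form, and your side remark is also right, since Jensen gives $1/Z \le \E[\max\{1, e^{B-\hat v_t}\}]$ and $\E[e^{B-\hat v_t}\mathbf 1_G]\ge \Pr[G]\ge 1-c$ lets the extra mass $\Pr[G^c]\le c$ be absorbed into a single factor $1/(1-c)$.
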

If $B_{t + 1}(x_{t + 1})$ tends to be a good estimate of $\hat{v}_t(x_t)$ conditioned on $x_{t + 1}$, the expression given above can be far smaller than $\E_{x_t \sim \ppre_t(\cdot | x_{t + 1})}[e^{B - \hat{v}_t(x_t)}]$, the approximate cost of {\em naive} rejection sampling.

\subsection{Learnable Chernoff Baselines: Learning Baselines from Data}\label{sec:lcb theory}

We now use our theory to train joint baselines that target the estimated soft-value function. Like learning the soft-value functions $\{\hat{v}_t\}$, learning baselines $\{B_{t + 1}\}$ corresponds to a single upfront cost that is then amortized by cheaper inferences.

Let $\delta \in (0, 1)$ be a  \textbf{user-specified} parameter that selects a tradeoff between sampling accuracy and efficiency. Let us inductively construct a set of \textbf{mutually-compatible, joint} baselines for $\{\hat{v}_t\}$ to address the approximate sampling problem. At time $t$, the goal is to sample $x_{t} | x_{t + 1}$, where $x_{t + 1}$ is a state whose distribution is induced by the approximate sampling process from timesteps $T, \dots, t + 1$. Therefore, we wish to produce a joint baseline (see Definition \ref{def:joint baseline}) for $(\hat{q}_{t + 1}, \ppre_t)$, where $\hat{q}_{t + 1}$ is the marginal distribution of $x_{t + 1}$ under sampling. All probabilities and expectations in this section are defined with respect to this joint distribution for $(x_{t + 1}, x_{t })$. 

Fix $t \in [T]$. Consider parameterizing the baseline as  $ B_{t + 1}(x_{t + 1}) =  b_{t + 1}(x_{t + 1}) + \tau$, where $b_{t + 1} \in H$ is a member of a hypothesis class $H$ and $\tau$ is constant with respect to $x_{t + 1}$. By the Chernoff bound, we have that the joint baseline condition can be analyzed as 
\begin{align*}
&\textstyle \Pr_{\substack {x_{t + 1} \sim \hat{q}_{t + 1} \\ x_t \sim \ppre_t(\cdot | x_{t + 1})}}[\hat v_t(x_t) - {b}_{t + 1}(x_{t + 1}) \geq \tau] \\
& \leq e^{-\lambda \tau}\E_{\substack {x_{t + 1} \sim \hat{q}_{t + 1} \\ x_t \sim \ppre_t(\cdot | x_{t + 1})}}[e^{\lambda (\hat{v}_t(x_t) -  b_{t + 1}(x_{t + 1}))}]  ;\quad  \lambda \geq 0
\end{align*}
Treating $\lambda$ and $b_{t + 1}$ as fixed, we see that to ensure that $B_{t + 1}$ is a baseline, the RHS must be $\leq \delta$. This leads to the $(\lambda, b_{t + 1})$-dependent choice:
\[\textstyle \tau \gets \tau_{\lambda,  b} := \frac{1}{\lambda}(\log\frac{1}{\delta}+ \log \E e^{\lambda (\hat{v}_t(x_t) - b_{t + 1}(x_{t + 1})}).\]
In other words, for any $b_{t + 1}$, we can select $\tau_{\lambda, b}$ in order to make $B^{(\lambda, b)}_{t + 1}(x_{t + 1}) := b_{t + 1}(x_{t + 1}) + \tau_{\lambda, b}$ a joint baseline at level $\delta$, where $\lambda > 0$ is a free parameter.  

Moreover, for any choice of $\lambda \geq 2$ and $b_{t + 1}$, the baseline $B^{(\lambda, b)}_{t + 1}$ satisfies the Chernoff condition of the TV-MGF Lemma (Lemma \ref{lem:tv mgf body}), i.e.,
\begin{align*}
\E_{x_{t + 1} \sim q_{t+1}}&[d_{TV}\left(\hat{q}_t(\cdot | x_{t + 1}), \hat{p}_t(\cdot | x_{t + 1})\right)]\\
&\lesssim \delta e^{\frac{2}{\lambda}\log{\frac{1}{\delta}+\frac{1}{\lambda}\log M(\lambda) + \frac{1}{\lambda}\log M(-\lambda)}}
\end{align*}
where $M(\lambda) := \E[e^{\lambda (\hat{v}_t(x_t) - b_{t + 1}(x_{t + 1}))}]$. Here and as follows, $\E$ denotes the joint expectation.

We propose choosing $\lambda \in (2, \Lambda)$ and $b_{t + 1} \in H$ in order to minimize the right hand side, for some fixed ceiling $\Lambda \in \R_{>2}$. Taking logs leads to the objective function
\begin{align*}
J_t(\lambda, b_{t + 1}) :=& \textstyle \frac{1}{\lambda}\log\E[e^{\lambda (\hat v_t(x_t) - b_{t + 1}(x_{ t+ 1}))}]\\
& \textstyle + \frac{1}{\lambda}\log \E [e^{\lambda(b_{ t+ 1}(x_{ t+ 1}) - \hat v_t (x_t))}] + \frac{2}{\lambda}\log \frac{1}{\delta} .
\end{align*}




We can use samples to approximate the expectations and produce $\hat J_t: (2, \Lambda) \times H \rightarrow \R$:
\begin{align*}
\hat J_t(\lambda, b_{t + 1}) &:= \textstyle\frac{1}{\lambda}\log\hat\E[e^{\lambda (\hat v_t(x_t) - b_{t + 1}(x_{ t+ 1}))}]\\
& \textstyle + \frac{1}{\lambda}\log \hat\E [e^{\lambda(b_{ t+ 1}(x_{ t+ 1}) - \hat v_t (x_t))}] + \frac{2}{\lambda}\log \frac{1}{\delta} .
\end{align*}
In particular, we may safely choose $\hat{\lambda}, \hat{b}_{t + 1} \in \arg\min_{\lambda \in (2, \Lambda), b \in H}\hat J_t(\lambda, b)$, and then form $\hat\tau_{\hat\lambda,  \hat b} = (\log\frac{1}{\delta}+ \log \hat \E e^{\hat \lambda (\hat{v}_t(x_t) - \hat b_{t + 1}(x_{t + 1})})/\hat \lambda$ using the same data.
This is born out by the following theorem, proved as Corollary \ref{cor:lcb learning bound apdx}, in the Appendix, which establishes uniform convergence results for the empirical risk minimizer of $J$.

\begin{proposition}[Learning the LCB]\label{prop:lcb learning bounds}

Given $m$ samples $(x^i_{t + 1}, x^i_t) \sim \hat{q}_{t + 1} \ppre_t$ and the associated $\hat{J}_t$, let $\hat{\lambda}, \hat{b}_{t + 1} \in \arg \min_{\lambda \in (1, \Lambda), b \in H}\hat J(\lambda, b)$. Let $J^*_t$ be the minimum value of $J_t$ (assume realized). Then, with probability $1 - \eta$ over the data
\[J_t(\hat\lambda, \hat{b}_{t + 1}) - J^*_t\leq 2\epsilon_0 \ \ \  \textrm{    and    }\ \ \ \hat \tau_{\hat \lambda, \hat{b}_{t + 1}} \in \tau_{\hat{\lambda}, \hat{b}_{t + 1}} \pm \epsilon_0\]

where all empirical quantities are estimated with the same samples and $\epsilon_0 := e^{4B\Lambda}(2\hat{\mathcal{{R}}}_m(H) + \sqrt{\frac{\log1/\eta}{2m}})$, and $\hat{\mathcal{{R}}}_m$ is the standard empirical Rademacher complexity \cite{Mohri2018FML}. 

\end{proposition}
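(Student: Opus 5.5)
The plan is a standard uniform-convergence argument applied to the two moment-generating-function terms in $J_t$, followed by a Lipschitz transfer through $\frac{1}{\lambda}\log(\cdot)$. Bounded ranges drive everything. Since $\hat v_t\in[-B,B]$, we may clip $b\in H$ to $[-B,B]$; clipping never hurts, because a baseline outside the range of $\hat v_t$ can be projected inward. Then $\hat v_t-b\in[-2B,2B]$, so for $\lambda\in(1,\Lambda)$ each exponential $e^{\pm\lambda(\hat v_t-b)}$ lies in $[e^{-2B\Lambda},e^{2B\Lambda}]$. Write
\[
M_\pm(\lambda,b)=\E\bigl[e^{\pm\lambda(\hat v_t(x_t)-b(x_{t+1}))}\bigr]
\]
and $\hat M_\pm$ for the empirical versions.

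\textbf{Step 1: uniform deviation of the MGFs.} Consider the function class $\mathcal{F}_\pm=\{(x_{t+1},x_t)\mapsto e^{\pm\lambda(\hat v_t(x_t)-b(x_{t+1}))}:\lambda\in(1,\Lambda),\,b\in H\}$. By the standard Rademacher bound (McDiarmid plus symmetrization), with probability $1-\eta$ we have
\[
\sup_{\mathcal F_\pm}|\hat M_\pm-M_\pm|\le 2\hat{\mathcal R}_m(\mathcal F_\pm)+c\,e^{2B\Lambda}\sqrt{\tfrac{\log(1/\eta)}{2m}},
\]
after a union bound over the two signs. To pass to $\hat{\mathcal R}_m(H)$, use Talagrand's contraction lemma: on $[-2B\Lambda,2B\Lambda]$ the map $u\mapsto e^{\pm u}$ is $e^{2B\Lambda}$-Lipschitz. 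The term $\hat v_t(x_t)$ does not depend on $b$, so it contributes nothing to the Rademacher average.

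The multiplication by $\lambda$ is the awkward part. I would handle it by noting that $\lambda b$ ranges over a scaled class whose complexity is at most $\Lambda\hat{\mathcal R}_m(H)$, plus a one-dimensional term for $\lambda$; alternatively, I would cover $(1,\Lambda)$ with a fine grid and use the Lipschitz dependence on $\lambda$. These are the lower-order factors that I expect get absorbed into the crude $e^{4B\Lambda}$. \textbf{This step is the main obstacle}: keeping the constants consistent with the stated $\epsilon_0$, because the joint supremum over $\lambda$ and $b$ must be controlled.

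\textbf{Step 2: transfer to $J_t$ and $\tau$.} On the range of values taken by $M$, namely $[e^{-2B\Lambda},\cdot]$, the map $\frac1\lambda\log(\cdot)$ is $\frac{1}{\lambda}e^{2B\Lambda}$-Lipschitz. Hence
\[
\sup|\hat J_t-J_t|\le e^{2B\Lambda}\bigl(\sup|\hat M_+-M_+|+\sup|\hat M_--M_-|\bigr)\lesssim\epsilon_0 .
\]
The $\frac{2}{\lambda}\log\frac1\delta$ term is identical in $\hat J_t$ and $J_t$ and cancels. Multiplying the two factors of $e^{2B\Lambda}$ produces the $e^{4B\Lambda}$ in $\epsilon_0$.

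In the same way, $\hat\tau_{\lambda,b}-\tau_{\lambda,b}=\frac1\lambda(\log\hat M_+-\log M_+)$ is bounded by $\epsilon_0$ uniformly in $(\lambda,b)$. In particular it holds at the data-dependent pair $(\hat\lambda,\hat b_{t+1})$, and uniformity is exactly what makes that legitimate.

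\textbf{Step 3: the ERM argument.} Let $(\lambda^*,b^*)$ attain $J^*_t$. Then
\[
J_t(\hat\lambda,\hat b)-J^*_t=[J_t(\hat\lambda,\hat b)-\hat J_t(\hat\lambda,\hat b)]+[\hat J_t(\hat\lambda,\hat b)-\hat J_t(\lambda^*,b^*)]+[\hat J_t(\lambda^*,b^*)-J_t(\lambda^*,b^*)].
\]
The middle bracket is $\le0$ because $(\hat\lambda,\hat b)$ minimizes $\hat J_t$. Each outer bracket is at most $\epsilon_0$ by Step 2, which gives the bound $2\epsilon_0$.

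All of these statements live on the single event from Step 1, and they use the same samples. Using the same samples is therefore harmless: the bounds are uniform over the class, so they still hold at the data-dependent minimizer.
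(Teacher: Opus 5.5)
Your proposal is correct and is essentially the paper's proof. The paper fixes $\lambda$, controls $\sup_{b\in H}|\hat{\E}e^{\lambda(\hat v_t-b)}-\E e^{\lambda(\hat v_t-b)}|$ by symmetrization and Talagrand contraction (the factor $\lambda e^{2\lambda B}$ cancels against the $e^{2\lambda B}/\lambda$ Lipschitz constant of $\frac1\lambda\log$, giving $e^{4\Lambda B}$), extends to all $\lambda$ by a union bound over a $1/m$-grid plus the $4B$-Lipschitz dependence of $\frac1\lambda\log\E e^{\lambda U}$ on $\lambda$ (your second option), and finishes with the same three-term ERM decomposition; the paper also does not literally recover the stated $\epsilon_0$, since it picks up a $\log(\Lambda m)$, an $8B/m$ term and a factor $2$. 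The one thing to fix is your clipping remark: the boundedness $H\subset\{\mathcal{X}\to[-B,B]\}$ must simply be assumed, as the paper does, because $J_t$ is invariant under $b\mapsto b+\mathrm{const}$, so clipping a shifted optimal baseline can strictly increase $J_t$.
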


Thus, for each $t \in [T]$, we can generate a Learnable Chernoff Baseline (LCB), which is simply a joint baseline $B^{(\lambda, b)}_{t + 1}(x_{t + 1}) = \hat b_{t + 1}(x_{t + 1}) + \hat \tau_{\hat \lambda, \hat b} + \epsilon_0$. Given LCBs $\hat{B}_{T + 1} \dots \hat{B}_1$ for each soft-value function $\hat{v}_T, \dots \hat{v}_0$, we can then generate samples according to Algorithm \ref{alg:baseline rejection sampling}.

\subsubsection{Optimization}

Theoretically, Proposition \ref{prop:lcb learning bounds} requires samples $\{(x^i_{t + 1}, x^i_t)\}_{i = 1}^m \sim ({\hat{q}_{t + 1}\ppre_t})^{\otimes m}$ to form $\hat{J}_t$ and $\hat{\tau}_{\hat{\lambda}, \hat b_{t + 1}}$. Crucially, we must train on states $\{x^i_{t + 1}\}_{i = 1}^m \sim \hat{q}_{t + 1}(\cdot)$ from the sampling distribution associated to the baselines $B_{T + 1} \cdots B_{t + 2}$ in order to keep $\{B_{T + 1} \dots B_{t + 1}\}$ mutually compatible and avoid covariate shift during the sampling process. This leads to the Algorithm \ref{alg:lcb training}, which evolves $m$ trajectories in parallel to the training of the LCBs $\{B_t\}_{t = 1}^{T + 1}$. Due to space constraints, the pseudocode for the algorithm has been relegated to Appendix \ref{apdx:training alg}. Importantly, training $B_{T + 1} \cdots B_1$ is a one-time cost, and we only need to run Algorithm \ref{alg:baseline rejection sampling} for inference. We discuss other practical aspects of optimizing $J_t$ in Appendix \ref{apdx:optimization properties}.

\subsubsection{Properties of LCBs}\label{sec:properties of lcbs}

As a corollary of the TV:MGF Lemma (Lemma \ref{lem:tv mgf body}) and Proposition \ref{prop:lcb learning bounds}, we obtain the following bound on the sampling error of LCBs, proved in Appendix \ref{apdx:lcb main thm proofs}.

\begin{theorem}\label{thm:lcb tv bound}
Let $\hat \lambda, \hat b_{t + 1}$ be the ERMs of $\hat{J}_t$. Let $\hat B_{t + 1} := \hat{b}_{t + 1}(x_{t + 1}) + \hat{\tau}_{\hat {\lambda}, \hat{b}_{t + 1}} + \epsilon_0$ be the associated LCB, and assume that it is also satisfies worst-case coverage at level $c \in (0, 1)$. Then, on the good training event from Proposition \ref{prop:lcb learning bounds}, using $\hat B_{t + 1}$ in Algorithm \ref{alg:baseline rejection sampling} leads to: 
\[\textstyle \E_{x_{t + 1} \sim \hat{q}_{t + 1}}[d_{TV}(\hat{q}_{t}(\cdot | x_{t + 1}), \hat{p}(\cdot | x_{t + 1}))] \leq \frac{2\delta}{(1 - c)^2} e^{J^* + 2\epsilon_0}.\]
\end{theorem}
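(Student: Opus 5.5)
Our plan is to apply the TV:MGF Lemma (Lemma \ref{lem:tv mgf body}) with $q_{t+1} = \hat q_{t+1}$, $b_{t+1} = \hat b_{t+1}$, $\lambda = \hat\lambda$, and shift $\tau = \hat\tau_{\hat\lambda,\hat b_{t+1}} + \epsilon_0$. Then we translate the resulting bound into the population objective $J_t$ evaluated at the ERM, and finally invoke Proposition \ref{prop:lcb learning bounds}. Throughout we work on the good training event of that proposition. There we have $\hat\tau_{\hat\lambda,\hat b} \ge \tau_{\hat\lambda,\hat b} - \epsilon_0$, so $\tau \ge \tau_{\hat\lambda,\hat b}$, and also $J_t(\hat\lambda,\hat b_{t+1}) \le J^*_t + 2\epsilon_0$.

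\textbf{Step 1 (hypotheses of the lemma).} By definition of $\tau_{\lambda,b}$ we have $e^{-\hat\lambda\tau_{\hat\lambda,\hat b}}M(\hat\lambda) = \delta$, where $M$ is the population MGF of $\hat v_t - \hat b_{t+1}$ under $\hat q_{t+1}\ppre_t$. Since $\tau \ge \tau_{\hat\lambda,\hat b}$, it follows that $e^{-\hat\lambda\tau}M(\hat\lambda) \le \delta$. This is the Chernoff condition of Lemma \ref{lem:tv mgf body}, and via Markov/Chernoff it also shows that $\hat B_{t+1}$ is a joint baseline at level $\delta$. Worst-case coverage at level $c$ is assumed. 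We need $\hat\lambda \ge 2$, which we take from the optimization domain $(2,\Lambda)$. There is a mismatch with the $(1,\Lambda)$ written in Proposition \ref{prop:lcb learning bounds}, and we would restrict to $\lambda\ge 2$ there. Finiteness of $M(\pm\lambda)$ is automatic since $\hat v_t$ and $\hat b_{t+1}$ are bounded.

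\textbf{Step 2 (the bound in terms of $J_t$).} Lemma \ref{lem:tv mgf body} gives error $\lesssim (1-c)^{-2}\,\delta^{1-2/\hat\lambda}(M(\hat\lambda)M(-\hat\lambda))^{1/\hat\lambda}$. We rewrite this as
\[
\delta^{1-2/\hat\lambda}(M(\hat\lambda)M(-\hat\lambda))^{1/\hat\lambda} = \delta\exp\Big(\tfrac{2}{\hat\lambda}\log\tfrac1\delta + \tfrac1{\hat\lambda}\log M(\hat\lambda) + \tfrac1{\hat\lambda}\log M(-\hat\lambda)\Big) = \delta\, e^{J_t(\hat\lambda,\hat b_{t+1})}.
\]
Combining with $J_t(\hat\lambda,\hat b_{t+1}) \le J_t^*+2\epsilon_0$ yields the claimed $\delta e^{J^*+2\epsilon_0}/(1-c)^2$.

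\textbf{Step 3 (the main obstacle).} The main difficulty is pinning the constant $2$, since Lemma \ref{lem:tv mgf body} is stated only up to $\lesssim$. Lemma \ref{lem:tv mgf body} was proved with the exact threshold $\tau$ satisfying the Chernoff inequality, so the inflated shift $\epsilon_0$ is harmless: it only enlarges $\tau$. Any extra dependence on $\tau$ beyond the Chernoff condition would have to be checked in the appendix proof. One place this could enter is the acceptance normalizer, where a larger baseline only lowers acceptance, and coverage handles that through the $(1-c)^{-2}$ factor.

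Concretely, we would open the appendix version, Lemma \ref{lem:tv mgf apdx}. We expect its bound to have the form $\frac{2}{(1-c)^2}\Pr[\hat v_t - B_{t+1} > 0]^{1-2/\lambda}$ times a Hölder factor. We would rederive it by decomposing the TV between the two tilted kernels into two pieces:
\begin{itemize}
\item the mass where $\hat v_t(x_t) > B_{t+1}(x_{t+1})$, i.e.\ where truncation by $\min\{1,\cdot\}$ occurs;
\item the resulting change in the normalizing constant.
\end{itemize}
Each piece is bounded via Hölder with exponents $(\lambda/2,\lambda/(\lambda-2))$, giving two such terms and hence the factor $2$. If the explicit constant there differs, the statement's constant would be adjusted accordingly.
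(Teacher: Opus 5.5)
Your proposal is correct and is essentially the paper's proof: on the good event, apply the appendix TV:MGF Lemma at $\lambda=\hat\lambda\ge 2$, which needs only $\hat\tau_{\hat\lambda,\hat b}+\epsilon_0\ge\tau_{\hat\lambda,\hat b}$. Then rewrite $\delta^{1-2/\hat\lambda}(M(\hat\lambda)M(-\hat\lambda))^{1/\hat\lambda}=\delta e^{J_t(\hat\lambda,\hat b_{t+1})}$ and conclude with $J_t(\hat\lambda,\hat b_{t+1})\le J_t^*+2\epsilon_0$.

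The factor $2$ arises as you anticipate, from the Exceedance Error Lemma's split $d_{TV}\le A_{x_{t+1}}+B_{x_{t+1}}$. These two terms control the mass of the exceedance region under $\hat q_t$ and under $\hat p_t$, and each is bounded by $\frac{\delta^{1-2/\hat\lambda}}{(1-c)^2}(M(\hat\lambda)M(-\hat\lambda))^{1/\hat\lambda}$. The $B$ term uses H\"older with exponents $(\frac{\hat\lambda}{\hat\lambda-2},\hat\lambda,\hat\lambda)$, close to your guess. The $A$ term instead uses the domination $1\{F\}\le e^{(\hat\lambda-1)(\hat s-\tilde\tau)}$ with exponents $(\frac{\hat\lambda}{\hat\lambda-1},\hat\lambda)$. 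That choice makes the bound decreasing in the slack $\tilde\tau$, which is why the $\epsilon_0$ inflation costs no extra $e^{2\epsilon_0}$ factor.
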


\begin{figure*}[h]
\includegraphics[width=\textwidth]{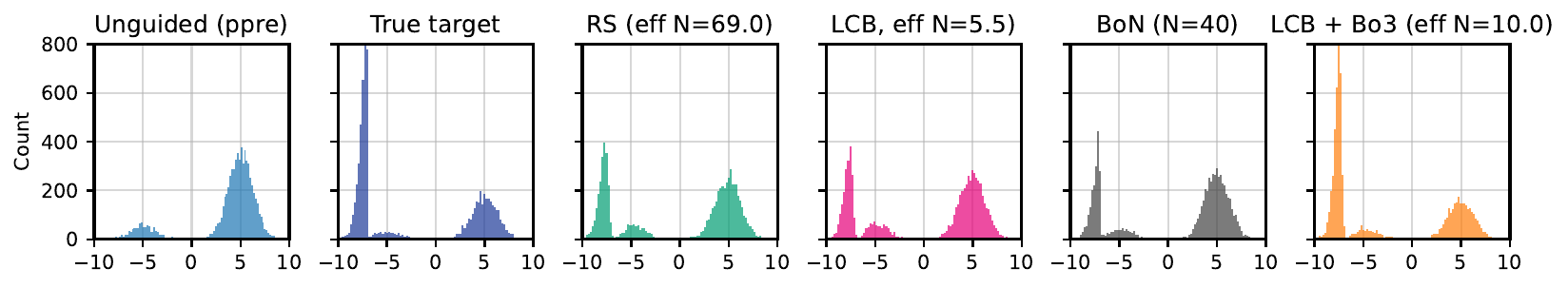}
\caption{Gaussian mixture: LCB matches RS using $8\%$ of the proposals. It uses $14\%$ of the proposals required by Bo$N$ for comparable alignment. By chaining LCB with BoN, we can efficiently boost alignment to compensate for error in value estimation. }\label{fig:1d hists}
\end{figure*}

Suppose $\hat v_t(x_t)$ is $\sigma^2_t$-sub-Gaussian under $\ppre(x_t | x_{t + 1})$, where $\sigma_t$ does not depend on $x_{ t + 1}$. We have, for $b^*(x_{t + 1}) := \E[\hat{v}(x_{t}) | x_{t + 1}]$ and $\lambda^* := \sqrt{\frac{\log\frac{1}{\delta}}{\sigma_t^2}}$, the upper bound $J(\lambda^*, b^*) \leq 2\sigma_t\sqrt{\log\frac{1}{\delta}}$. This leads to the following corollary, which quantifies the LCB performance in the subgaussian case.
\begin{corollary}\label{cor:subgaussian value corollary}
Consider the setting of Theorem \ref{thm:lcb tv bound}. Further, suppose there exists a constant $\sigma_t^2$ so that $\hat{v}_t(x_t)$ is $\sigma_t^2$ subgaussian under $\ppre_t(x_t | x_{t+1})$, for each $x_{t + 1}$. Further assume that $\delta \leq e^{-4\sigma_t^2}$, $\Lambda \geq \sqrt{\frac{\log1/\delta}{\sigma_t^2}}$ and that $\E[\hat{v}_t(x_t) | x_{t + 1} = (\cdot)] \in H$. Then, on the good training event from Proposition \ref{prop:lcb learning bounds}:
\begin{align*}
\E_{x_{t + 1} \sim \hat{q}_{t + 1}}&[d_{TV}(\hat{q}_{t}(\cdot | x_{t + 1}), \hat{p}(\cdot | x_{t + 1}))] \\
&\leq \textstyle \frac{2\delta}{(1 - c)^2} e^{2\sigma_t\sqrt{\log\frac{1}{\delta}} + 2\epsilon_0}.
\end{align*}
\end{corollary}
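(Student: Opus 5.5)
The corollary follows from Theorem \ref{thm:lcb tv bound} once we exhibit a feasible witness whose $J_t$ value is small. Theorem \ref{thm:lcb tv bound} already bounds the per-step TV by $\frac{2\delta}{(1-c)^2}e^{J^*_t+2\epsilon_0}$, and $J^*_t$ is the minimum of $J_t$ over $\lambda\in(2,\Lambda)$, $b\in H$. So it suffices to show $J^*_t\le J_t(\lambda^*,b^*)\le 2\sigma_t\sqrt{\log(1/\delta)}$, where $b^*(x_{t+1}):=\E[\hat v_t(x_t)\mid x_{t+1}]$ and $\lambda^*:=\sqrt{\log(1/\delta)/\sigma_t^2}$. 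Monotonicity of $e^{(\cdot)}$ then gives the stated bound.

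First, check that $(\lambda^*,b^*)$ is feasible.
\begin{itemize}
\item $b^*\in H$ holds by hypothesis.
\item $\lambda^*\le\Lambda$ holds by hypothesis.
\item $\lambda^*\ge 2$ follows from $\delta\le e^{-4\sigma_t^2}$, since then $\log(1/\delta)\ge 4\sigma_t^2$ and hence $\lambda^{*2}\ge 4$.
\end{itemize}
If the constraint set is the open interval $(2,\Lambda)$ and $\lambda^*$ hits an endpoint, argue by continuity of $J_t$ in $\lambda$, or simply treat the interval as closed. This is a minor technicality.

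Next, bound the two MGF terms. Conditioning on $x_{t+1}$, $\hat v_t(x_t)-b^*(x_{t+1})$ is mean zero and $\sigma_t^2$-sub-Gaussian under $\ppre_t(\cdot\mid x_{t+1})$, so for every real $\lambda$,
\[
\E\bigl[e^{\pm\lambda(\hat v_t(x_t)-b^*(x_{t+1}))}\mid x_{t+1}\bigr]\le e^{\lambda^2\sigma_t^2/2}.
\]
Since $\sigma_t$ does not depend on $x_{t+1}$, taking the outer expectation over $x_{t+1}\sim\hat q_{t+1}$ preserves this bound. This is where the uniformity assumption is essential, because the distribution $\hat q_{t+1}$ is otherwise uncontrolled. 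Hence $\frac1\lambda\log M(\lambda)+\frac1\lambda\log M(-\lambda)\le\lambda\sigma_t^2$, and
\[
J_t(\lambda,b^*)\le\lambda\sigma_t^2+\frac{2}{\lambda}\log\frac1\delta.
\]

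Finally, plug in $\lambda=\lambda^*$. Both terms equal $\sigma_t\sqrt{\log(1/\delta)}$, so $J_t(\lambda^*,b^*)\le 2\sigma_t\sqrt{\log(1/\delta)}$. Substituting into Theorem \ref{thm:lcb tv bound} on the same good training event gives the corollary.

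The only real obstacle is bookkeeping. One must confirm that the sub-Gaussian convention gives $e^{\lambda^2\sigma^2/2}$ for the centered variable in both directions, and that $J^*_t$ in Theorem \ref{thm:lcb tv bound} is indeed the infimum over the same domain. Everything else is substitution.
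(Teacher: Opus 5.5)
Your route is the paper's route: the witness $b^*=\E[\hat v_t(x_t)\mid x_{t+1}]$, the sub-Gaussian MGF bound uniform in $x_{t+1}$, then Theorem~\ref{thm:lcb tv bound}. Everything through $J_t(\lambda,b^*)\le\lambda\sigma_t^2+\frac{2}{\lambda}\log\frac1\delta$ is correct, and so is your feasibility check.

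The last step fails. At $\lambda^*=\sqrt{\log(1/\delta)}/\sigma_t$ the first term is $\sigma_t\sqrt{\log(1/\delta)}$, but the second is $\frac{2}{\lambda^*}\log\frac1\delta=2\sigma_t\sqrt{\log(1/\delta)}$. So you only get $J_t(\lambda^*,b^*)\le 3\sigma_t\sqrt{\log(1/\delta)}$.

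No other choice of $\lambda$ recovers the constant $2$. Indeed $\min_{\lambda>0}(\lambda\sigma_t^2+\frac{2}{\lambda}\log\frac1\delta)=2\sqrt2\,\sigma_t\sqrt{\log(1/\delta)}$, attained at $\lambda=\sqrt{2\log(1/\delta)}/\sigma_t$. This is not just looseness in the bound. Suppose $\hat v_t(x_t)-b^*(x_{t+1})$ is exactly $N(0,\sigma_t^2)$ given $x_{t+1}$ (ignoring boundedness of $\hat v_t$). For any $b$, write $g=b-b^*$; then $M(\lambda)M(-\lambda)=e^{\lambda^2\sigma_t^2}\,\E[e^{-\lambda g}]\,\E[e^{\lambda g}]\ge e^{\lambda^2\sigma_t^2}$ by Cauchy--Schwarz. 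Hence $J_t^*\ge 2\sqrt2\,\sigma_t\sqrt{\log(1/\delta)}$ in that case, so $J_t^*\le 2\sigma_t\sqrt{\log(1/\delta)}$ cannot follow from sub-Gaussianity. The factor $2$ on $\frac1\lambda\log\frac1\delta$ is genuine: it is exactly the $\delta^{-2/\lambda}$ in the TV:MGF Lemma.

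The paper's own proof reaches $2\sigma_t\sqrt{\log(1/\delta)}$ only because it writes $J(\lambda,b^*)\le\lambda\sigma_t^2+\frac{1}{\lambda}\log\frac1\delta$. That drops the same factor of $2$ your arithmetic drops, so the constant in the statement is not supported by either argument.

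What the approach actually proves is one of two corrected versions:
\begin{itemize}
\item Under the stated hypotheses, the bound holds with exponent $3\sigma_t\sqrt{\log(1/\delta)}+2\epsilon_0$, using your $\lambda^*$, which is feasible.
\item If you instead assume $\Lambda\ge\sqrt{2\log(1/\delta)}/\sigma_t$, the bound holds with exponent $2\sqrt2\,\sigma_t\sqrt{\log(1/\delta)}+2\epsilon_0$. Here $\lambda=\sqrt{2\log(1/\delta)}/\sigma_t\ge 2$ already follows from $\delta\le e^{-4\sigma_t^2}$.
\end{itemize}
You should state and prove one of these rather than the $2\sigma_t$ version.
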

\vspace{-2mm}

A more general version of Corollary \ref{cor:subgaussian value corollary} is proved in the Appendix as Corollary \ref{cor:subgaussian val corollary apdx} without the restriction that $\delta < e^{-4\sigma^2_t}$.

For the proposal complexity, we have by Jensen's inequality that $\frac{1}{(1 - c)^2}e^{J(\hat{\lambda}, \hat{b}_{t + 1}) + 2\epsilon_0}$ upper bounds the number of proposals used by the LCB $\hat B _{t + 1}$ (Appendix Lemma \ref{lem:apdx lcb objective bounds proposals}).

\subsection{Example: Multivariate Gaussian Mixture DDPM}\label{sec:mog theory}

As an example, we consider the special case of our setting where $\{\ppre_t\}_{t = 0}^T$ targets a Gaussian mixture with uniform covariances across clusters. 
Namely, define
\begin{align*}
& \textstyle p_m(x_0) = \sum_{k = 1}^K \pi_k N(m^k, \Sigma); \quad m^k \in \R^d, \Sigma \in \R^{d\times d}\\
& p_m(x_{t+1} | x_t) \sim \sqrt{ \alpha_t}x_t + N(0, \beta_tI); \quad \beta_t := (1 -  \alpha_t)
\end{align*}
for all $t \in [T - 1]$. Then, we take as our model $\ppre_t(x_t | x_{t + 1}) = p_m(x_t | x_{t + 1})$, $\ppre_T(x_T) = p_m(x_T)$ (i.e., assume that we exactly invert the forward noising process\footnote{Since we assume perfect inversion, this is not quite the same as running DDPM \cite{ho2020denoising} on $p_m$}). We define $\tilde \beta_t = \|\mathrm{Cov}_{p_m}[x_t | x_{t + 1}, k]\|$, where $\|\cdot\|$ is the operator norm and $k$ is any latent cluster center. 

In this setting, Corollary \ref{cor:subgaussian value corollary} applies with $\sigma = O(\beta_t + \tilde \beta_t)$, as we state in the following Theorem (proved in the Appendix as Theorem \ref{thm:mog thm apdx}). 

\begin{theorem}\label{thm:mog thm}
Consider the setting of $\ppre$ described above and assume $r: \R^d \rightarrow [-B, B]$ is Lipschitz. Let $a_{t + 1}(x_{t + 1}) = \E_{p_m}[v_t(x_t) | x_{t + 1}]$. For any $x_{t+1}$, we have that $v_t(x_t) - a_{t + 1}(x_{t + 1})$ is conditionally sub-gaussian under $\ppre(\cdot | x_{t+1})$, with parameter $\sigma_a = O(\beta_t + \tilde \beta_t)$, where $O(\cdot)$ hides dimension-independent factors of $\{m^k\}$, $\Sigma$, and $L_r$

Therefore, in the setting of Theorem \ref{cor:subgaussian value corollary}, given $\hat{v}_t \gets v_t$, LCB sampling achieves the bound 
\begin{align*}
\E_{x_{t + 1} \sim \hat{q}_{t + 1}}&[d_{TV}(\hat{q}_{t}(\cdot | x_{t + 1}), \hat{p}(\cdot | x_{t + 1}))] \\
&\leq \textstyle \frac{\delta}{(1 - c)^2} e^{O(\tilde \beta_t + \beta_t)\sqrt{\log\frac{1}{\delta}} + 2\epsilon_0}.
\end{align*}

\end{theorem}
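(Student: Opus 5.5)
The plan is to establish the conditional sub-Gaussian property of $v_t(x_t)$ given $x_{t+1}$ and then apply Corollary~\ref{cor:subgaussian value corollary} with $\hat v_t = v_t$. The second part follows directly once the first is proved. Indeed, $a_{t+1} = \E[v_t(x_t)\mid x_{t+1}]$ is exactly the witness $b^*$ used to bound $J(\lambda^*,b^*)\le 2\sigma_a\sqrt{\log(1/\delta)}$. Since Theorem~\ref{thm:lcb tv bound} is stated in terms of the optimum $J^*\le J(\lambda^*,b^*)$, the factor $2$ is absorbed into the $O(\cdot)$. So the work is in the sub-Gaussian claim.

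\textbf{Step 1: Lipschitz control of $v_t$.} Write $v_t(x_t) = \log \E[e^{r(x_0)}\mid x_t]$. For the Gaussian mixture, $p_m(x_0\mid x_t)$ is itself a Gaussian mixture over $k$. Each component has a mean affine in $x_t$ with slope matrix $A_t = \sqrt{\bar\alpha_t}\,\Sigma(\bar\alpha_t\Sigma + (1-\bar\alpha_t)I)^{-1}$ and a covariance independent of $x_t$. The posterior weights $\pi_k(x_t)$ are softmaxes of quadratics whose gradients are affine in $x_t$ with coefficients bounded by $\|m^k\|$ and $\|(\bar\alpha_t\Sigma+(1-\bar\alpha_t)I)^{-1}\|$. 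Differentiating gives
\[\nabla v_t(x_t) = \frac{\nabla_{x_t}\E[e^{r}\mid x_t]}{\E[e^r\mid x_t]}.\]
This splits into a within-component term, bounded by $L_r\|A_t\|$ after a reparameterization $x_0 = A_t x_t + c_k + \text{noise}$, and a term from differentiating the weights. The weight term is bounded using $e^{r}\in[e^{-B},e^B]$: it is a covariance-like expression $\sum_k \nabla\pi_k(x_t)(h_k - \bar h)$ with $|h_k-\bar h|\lesssim e^{2B}$, times $\|\nabla \log \pi_k\|$. That last gradient is not uniformly bounded, since it is affine in $x_t$. So rather than global Lipschitzness, I would prove that $v_t$ is Lipschitz with constant $L_t$ plus a piece linear in $\|x_t\|$, or equivalently locally Lipschitz with Gaussian-integrable growth.

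\textbf{Step 2: Gaussian concentration of $x_t\mid x_{t+1}$.} Under the exact reversal, $p_m(x_t\mid x_{t+1})$ is a mixture over the latent cluster $k$ of Gaussians with covariance of operator norm $\tilde\beta_t$. It is not itself Gaussian: the mixing weights depend on $x_{t+1}$, and the component means differ across $k$ by an amount proportional to $\beta_t$. I would condition on $k$ and handle the two pieces separately.
\begin{itemize}
\item \emph{Within a component.} Gaussian concentration for Lipschitz functions gives $v_t - \E[v_t\mid x_{t+1},k]$ sub-Gaussian with parameter $O(L_t\sqrt{\tilde\beta_t})$, which is $O(\tilde\beta_t)$ after the paper's $O$-convention. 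Hidden constants here are dimension-free, depending on $L_r$, $\Sigma$ and $m^k$.
\item \emph{Across components.} The conditional means $\E[v_t\mid x_{t+1},k]$ differ across $k$ by at most $L_t$ times the spread of the component means of $x_t\mid x_{t+1},k$. That spread is $O(\beta_t)$ times the differences of cluster-dependent shifts. A bounded random variable of range $R$ is $R/2$-sub-Gaussian by Hoeffding's lemma, so this contributes $O(\beta_t)$.
\end{itemize}
Combining the two with the standard mixture bound (the MGF of a mixture is at most the worst-component MGF times a bounded-shift factor) gives $\sigma_a = O(\beta_t+\tilde\beta_t)$.

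\textbf{Main obstacle.} The delicate step is Step 1's weight-gradient term and its interaction with Step 2. Because $\nabla\log\pi_k(x_t)$ grows linearly in $x_t$, the Lipschitz constant is only local, so plain Gaussian concentration does not apply directly. I would first try to show that the weight-gradient term is bounded uniformly. The weights are softmaxes with affine logit-differences, and $\nabla \pi_k = \pi_k(\nabla\ell_k - \sum_j\pi_j\nabla\ell_j)$ involves only differences of the logit gradients, which are constant in $x_t$ for equal covariances. This is exactly where the uniform-covariance assumption is used. If that works, $v_t$ is globally Lipschitz with dimension-free constant $L_r\|A_t\| + O(e^{2B}\max_{j,k}\|m^j-m^k\|\cdot\|(\cdot)^{-1}\|)$, and Steps 2 and 3 go through cleanly.
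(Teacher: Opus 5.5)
Your route is the paper's. It uses a global Lipschitz bound on $v_t$ from the equal-covariance cancellation $\nabla\log\gamma^k_t(x)=\Sigma_t^{-1}\bigl(m^k_t-\sum_j\gamma^j_t(x)m^j_t\bigr)$, plus a coupling argument giving $L_r\|A_t\|$ for $g^k_t=\log\E[e^{r(x_0)}\mid x_t,k]$. It then applies Gaussian Lipschitz concentration given $(x_{t+1},k)$, Hoeffding's lemma over the $O(\beta_t)$ spread of the $a^k_{t+1}$, and Corollary~\ref{cor:subgaussian value corollary} with $a_{t+1}$ as witness. The hedge in your last paragraph resolves exactly as you hope. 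One small improvement: the $e^{2B}$ in your weight term is unnecessary. Writing $v_t=\log\sum_k\exp(g^k_t+\log\gamma^k_t)$ makes $\nabla v_t$ a convex combination of the $\nabla g^k_t+\nabla\log\gamma^k_t$, so $L_{v_t}\le L_r\|A_t\|+2\|\Sigma_t^{-1}\|\max_k\|m^k\|$. The hidden constant then stays free of $B$, as the statement requires.

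The step that does not hold is \emph{$O(L_t\sqrt{\tilde\beta_t})$, which is $O(\tilde\beta_t)$ after the paper's $O$-convention}. No convention makes $\sqrt{\tilde\beta_t}=O(\tilde\beta_t)$ as $\tilde\beta_t\to0$, and the linear rate is in fact false. Take $K=d=1$, $\Sigma=1$, $m^1=0$, so $x_t\mid x_{t+1}\sim N(\sqrt{\alpha_t}\,x_{t+1},\beta_t)$ and $\tilde\beta_t=\beta_t$. Here $v_t$ depends only on $\alpha_0,\dots,\alpha_{t-1}$. So for nonconstant Lipschitz $r$ and any $x_{t+1}$ with $v_t'(\sqrt{\alpha_t}x_{t+1})\neq0$, the conditional standard deviation of $v_t(x_t)$ is $\asymp|v_t'|\sqrt{\beta_t}$ as $\beta_t\to0$. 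Any sub-Gaussian parameter must be at least that, while the constants hidden in $O(\cdot)$ stay fixed.

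What your argument (and the paper's) actually proves is the variance proxy $L_{v_t}^2\tilde\beta_t+\Delta_{t+1}^2$, i.e.\ $\sigma_a=O(\beta_t+\sqrt{\tilde\beta_t})$. Since $\Sigma_{t\mid t+1}=\beta_t\Sigma_t\Sigma_{t+1}^{-1}$, this is $O(\sqrt{\beta_t})$, and the TV exponent becomes $O(\beta_t+\sqrt{\tilde\beta_t})\sqrt{\log(1/\delta)}$. The paper's proof reaches $\tilde\beta_t$ only by writing the Gaussian-concentration MGF bound as $e^{\lambda^2\tilde\beta_t^2L_{v_t}^2/2}$. That contradicts its own Proposition~\ref{prop:mog oracle baseline apdx}, which states $\sigma_a^2=L_v^2\tilde\beta_t+\Delta_{t+1}^2$. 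So state and prove the result with $\sqrt{\tilde\beta_t}$ in place of $\tilde\beta_t$ rather than appealing to a convention.
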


In Appendix \ref{apdx:MoG}, we also analytically derive tight baseline functions for the mixture setting, and \textbf{bound the expected number of proposals} when  these analytic baselines are directly plugged into Algorithm \ref{alg:baseline rejection sampling}.

\section{Empirical Results}\label{sec:empircs}

In this section, we provide empirical evaluations of LCB-based sampling in both a continuous setting and a (discrete) language setting.

\subsection{Mixture of Gaussians}\label{sec:mog empirics}
We run score-based DDPM \cite{ho2020denoising} on the 2-d mixture of Gaussians $p_{\mathrm{target}} = 0.05\times \mathcal{N}([-5, 0]^T, I) + 0.95\times \mathcal{N}([5, 0], I)$, with analytically calculated scores and $T = 20$. This defines $\ppre$. The reward function is $r(x, y) = 1\{x < -7\}$. 


For temperature $\alpha = 0.2$, we estimate value functions and run LCB sampling with parameter $\delta = 0.1$. We compare to Rejection Sampling (RS, Algorithm \ref{alg:exact_rejection_sampling}) and BoN, taking $7000$ samples from each method. We also compare to $p^*$ --- the true solution to \eqref{eq:tradeoff}.

For each method, we plot a histogram of the 1-d marginal of the $x$-coordinate (Figure \ref{fig:1d hists}).  We see that LCB and RS are visually identical, which shows that the LCB introduces negligible sampling error due to its approximation. In order to compare to Bo$N$, we increase $N$ until it yields visually similar samples. This allows us to compare RS, LCB, and BoN at ``the same level'' of alignment.

We report ``effective $N$'' as the average number of proposals per timestep required for a single sample. LCB's ``effective $N$'' is approximately $5.5$ compared to $69$ for RS and $40$ for Bo$N$.

For both RS and LCB, there is substantial error to the true distribution due to value estimation (Theorem \ref{thm:tv error rejection sampling}). In the final panel of Figure \ref{fig:1d hists}, we show that combining LCB sampling ($\delta = 0.3$) and a small Bo$N$ pass allows us to boost reward to the level of the true target, $p^*(x_0)$, while still maintaining an effective $N$ much lower than the other baselines.

Sweeps over $\alpha$ and $\delta$, together with a detailed description of methodology, can be found in Appendix \ref{apdx:mog experiments}.

\subsection{LLaDA}\label{sec:llada experiments}
\begin{figure}[!t]
\centering

\includegraphics[width=\columnwidth]{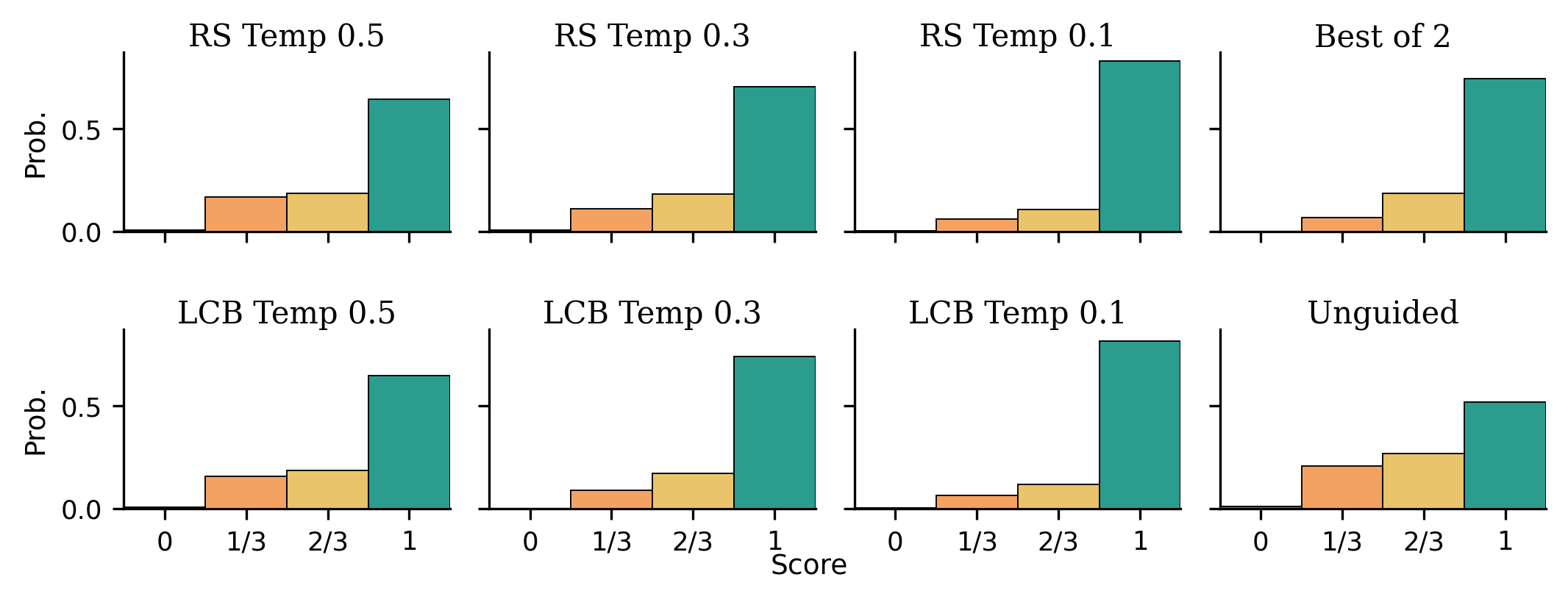}

\caption{LLaDA: Comparison of induced reward distribution by Rejection sampling (Algorithm \ref{alg:exact_rejection_sampling}) and LCBs (Algorithm \ref{alg:baseline rejection sampling} applied to the LCB) across a variety of temperatures. LCB sampling induces the same reward statistics as RS sampling across a variety of temperatures.}
\label{fig:reward bar graphs}
\end{figure}

We now evaluate the efficiency of LCBs on \texttt{LLaDA-8B} \cite{nie2025largelanguagediffusionmodels}, a masked language diffusion model of 8 billion parameters. The experimental setup is as follows. We give LLaDA the prompt: \emph{``Write a creative story of exactly three sentences; the first word of each sentence should start with A, B, and C, respectively.''}

and sample responses at a high temperature. The reward function, $r(x_0)$, awards a reward of $1/3$ for each first-letter constraint that is met. Prompted LLaDA's average reward is $0.77$ and it gets a perfect score $52\%$ of time. We wish to align to this reward function at various temperatures. 

We compare alignment policies at temperatures $\alpha \in \{0.1, 0.3, 0.5\}$.  As simple baselines, we take Best-of-2 and Rejection Sampling (Algorithm \ref{alg:exact_rejection_sampling} with reward bound $1/\alpha$). 

The soft value function is learned and LCBs are learned by finetuning RoBERTa (82 million parameters) with a simple prediction head and learned time embedding.

\subsubsection{Results}

\begin{table}[!t]
\centering
\small 
\setlength{\tabcolsep}{3.5pt} 
\renewcommand{\arraystretch}{0.9} 
\begin{tabular}{@{}lccc@{}} 
\toprule
\textbf{Method} & \textbf{Rew.} & \textbf{Perf. Rate (\%)} & \textbf{Prop.} \\
\midrule
RS 0.5 & $0.822 \pm 0.023$ & $64.40 \pm 4.20$ & $49.9 \pm 2.0$ \\
RS 0.3 & $0.863 \pm 0.020$ & $70.60 \pm 3.99$ & $63.3 \pm 4.3$ \\
RS 0.1 & $0.923 \pm 0.016$ & $83.20 \pm 3.28$ & $403.2 \pm 115.9$ \\
\midrule
LCB 0.5 & $0.827 \pm 0.023$ & $65.00 \pm 4.18$ & $37.2 \pm 0.3$ \\
LCB 0.3 & $0.885 \pm 0.021$ & $74.2 \pm 3.84$ & $38.7 \pm 0.5$ \\
LCB 0.1 & $0.917 \pm 0.017$ & $81.80 \pm 3.38$ & $51.0 \pm 1.8$ \\
\midrule
Best of 2 & $0.893 \pm 0.017$ & $74.60 \pm 3.82$ & $64.0 \pm 0.0$ \\
Unguided & $0.766 \pm 0.017$ & $52.00 \pm 3.10$ & $32.0 \pm 0.0$ \\
\bottomrule
\end{tabular}
\caption{LLaDA: comparison of policies at temps $0.5, 0.3, 0.1$. Average reward ($95\%$ Gaussian CI), average rate of ``perfect samples'' (those with reward of 1) ($95 \%$ Bernoulli CI), and average number of proposals ($95\%$ Gaussian CI).}\label{tab:method_comparison}
\end{table}

Due to the nature of textual data, we cannot exactly quantify the distances between the sampling distributions of each policy. However, we now study their reward statistics. Appendix \ref{apdx:llada experiments} includes various text quality metrics for each policy and finds no significant differences. 

Comparing rejection sampling (RS) and LCB sampling (LCB) for each temperature $\alpha \in \{0.5, 0.3, 0.1\}$ in Figure \ref{fig:reward bar graphs} and Table \ref{tab:method_comparison}, we see that all functionals of the reward observed are within statistical distances of each other, which suggests that the LCB is correctly targeting the distribution $\hat{p}(x_0 \dots x_T)$. 
Moreover, LCB reduces the proposal complexity of RS by $25-87\%$, depending on the temperature. At higher temperatures like 0.5, LCB uses $16.3\%$ more queries to LLaDA than using no guidance whatsoever, while still substantially boosting reward metrics.  

The reward statistics of Best-of-2 (Bo2) are sandwiched in between LCB 0.3 and LCB 0.1, but Bo2 requires substantially more queries to LLaDA. LCB 0.1 achieves a slightly higher average reward and substantially higher rate of perfect outputs while using $20\%$ fewer proposals (64 versus 51). LCB 0.3 achieves slightly lower rewards than Bo2, but uses $40\%$ (64 versus 38.7) fewer queries to the base model. 


Overall, we find evidence that LCB sampling correctly emulates Rejection Sampling with a great reduction in proposal complexity. Experimental details are available in Appendix \ref{apdx:llada experiments}. 

\section{Conclusion}

We introduced Learnable Chernoff Baselines (LCBs) as a method for performing quick sampling from tilted generative processes. 
Our methodology is especially valuable in the ill-conditioned exponential tilt regime induced by KL alignment geometry. Future empirical directions include prompt-conditioned rewards, values and baselines, and to use these trained modules across many downstream tasks. LCBs are limited by soft-value approximation, which is an active area of study in the literature \cite{li2024derivativefreeguidancecontinuousdiscrete}.



\section*{Impact Statement}

This paper develops inference-time alignment methods for pretrained generative models. When reward functions reflect socially beneficial objectives (e.g., safety constraints or expert feedback), such methods may improve safety and usefulness by steering generations toward desired behavior without updating model weights. However, these techniques are dual-use: optimizing adversarial or poorly specified rewards can induce pathological outputs, and our experiments suggest the method can efficiently amplify regions of low model support, which could be misused to amplify rare harmful behaviors. Mitigating these risks requires careful reward design, robust evaluation and red-teaming (including against adversarial rewards), and appropriate safeguards in deployment and access.

\bibliography{citations.bib}
\bibliographystyle{abbrvnat}
\nocite{*}

\appendix
\onecolumn
\section{Notation}

For a natural number, $a$, we let $[a] = \{1, \dots, a\}$. For a random variable $X$, an expectation $\E[f(X)]$, and $n$ realizations $x_1, \dots x_n$ of $X$, we define $\hat{\E}[f(X)] := \frac{1}{n}\sum_i f(x_i)$. For two probability measures $P$ and $Q$, the total-variation distance between them is defined to be $d_{TV}(P, Q) = \sup_A |Q(A) - P(A)|$, where the supremum is over events. When $Q$ and $P$ have densities $q$ and $p$ with respect to a shared measure $\lambda$, this definition is equivalent to $\frac{1}{2}\int_x |q(x) - p(x)|d\lambda(x)$. $D(p || q)$ is the KL divergence between $p$ and $q$. For a random variable $X: \Omega \rightarrow \R$, we define its $L^p$ norm as $\|X\|_{L^p}^p = \int X(\omega)^pd\omega$. Also see \Cref{table:notations} for other notations used in this paper.

\begin{table}[t]
\centering
\begin{tabular}{p{0.12\linewidth} p{0.26\linewidth} p{0.56\linewidth}}
\hline
\textbf{Process} & \textbf{Role} & \textbf{Definition / how it is sampled} \\
\hline

$\ppre$ &
Pretrained (base) diffusion process &
Initial distribution $\ppre_T(x_T)$ and reverse-time kernels $\{\ppre_t(x_t\mid x_{t+1})\}_{t=0}^{T-1}$. Sampling: draw $x_T\sim \ppre_T$ and then for $t=T-1,\dots,0$, sample $x_t\sim \ppre_t(\cdot\mid x_{t+1})$. \\[0.6em]

$p^*$ &
Ideal reward-aligned target process (true soft values) &
Defines the desired aligned marginal $p^*(x_0)\propto e^{r(x_0)}\,\ppre(x_0)$. It can be represented by reverse-time kernels obtained by tilting $\ppre_t(\cdot\mid x_{t+1})$ using the true soft-value function $\{v_t\}_{t=0}^T$. \\[0.6em]

$\hat p$ &
Value-tilted process using learned $\hat v$ (exact RS target) &
Reverse-time kernels are the learned-value tilts
\[
\hat p_t(x_t\mid x_{t+1}) \propto e^{\hat v_t(x_t)}\,\ppre_t(x_t\mid x_{t+1}),
\]
(with an analogous definition for $\hat p_T$). This is the process targeted by \emph{exact} per-step rejection sampling when using $\hat v_t$. \\[0.6em]

$\hat q$ &
Baseline-induced process (approximate sampler) &
The process induced by baselined rejection sampling: propose from $\ppre_t(\cdot\mid x_{t+1})$ and accept with probability
\[
\min\{1,\exp(\hat v_t(x_t)-B_{t+1}(x_{t+1}))\}.
\]
The resulting accepted transition defines $\hat q_t(\cdot\mid x_{t+1})$, and marginals are obtained by propagation
\[
\hat q_t(x_t)=\int \hat q_{t+1}(x_{t+1})\,\hat q_t(x_t\mid x_{t+1})\,dx_{t+1}.
\]
\\
\hline
\end{tabular}
\caption{Summary of the base process $\ppre$, ideal aligned process $p^*$, learned-value tilted process $\hat p$, and baseline-induced sampling process $\hat q$.}
\label{table:notations}
\end{table}

\section{Related works}\label{apdx:related work}

The objective presented in \eqref{eq:tradeoff} and its analysis via soft-value functions (Section \ref{sec:preliminaries}) are common starting points for research in safety and alignment, though recent work has challenged the use of KL divergence as regularization \cite{huang2025correcting}. The conditional kernels $\hat{p}(x_t | x_{t + 1})$ can be targeted both by fine-tuning methods and inference-time methods. 

Finetuning approaches include applying PPO to reward plus divergence \cite{fan2023dpok}, or ignoring the divergence constraint all together \cite{black2024training}. 

To our knowledge, the most classical examples of  inference-time alignment procedures are classifier guidance -- which steers continuous diffusion processes by means of an external classifier-- and classifier-free guidance \cite{ho2022classifierfree}, which moves the classifier into the base model and sets a guidance scale. Both are primarily applied to continuous Gaussian diffusion. Classifier guidance corresponds to the special case of our setting where $r(x_0) = \log p(\texttt{class} | x_0)$, where $\texttt{class}$ is the target class. The principled extension of classifier guidance is to inject $\nabla_x \hat{v}_t(x_t)$ at every step of diffusion. 

Recent works use the soft-value formulation to perform guidance in LLMs by scoring prefixes \cite{deng2023rad, singh2025codeblockwisecontroldenoising}. In the diffusion context, \cite{li2024derivativefreeguidancecontinuousdiscrete, yoon2025psisampler} use SMC techniques to sample from a distribution proportional to $\E[r(x_0) | x_t]\ppre(x_t | x_{t + 1})$. \cite{kim2025inferencetimescaling} seeks to extend such inference-time techniques to deterministic flow models, which is relevant to this work as our work also primarily addresses the case where $\ppre(\cdot | x_{t + 1})$ is not a Dirac delta. This paper was inspired by \cite{uehara2025tutorial}, which sets down an RL-inspired framework for alignment. Unlike existing works, the proposals in this paper (a) provide theoretical guarantees on the sampling distribution (b) scale compute adaptively according to the problem instance, where SMC methods scale inference-time compute by an integer corresponding to the number of particles.

Finally, it has been widely observed that Best-of-N (BoN), which simply draws $N$ samples from $\ppre(x_0)$ and picks the sample with the highest reward, is a very difficult baseline to ``beat'' \cite{lin2024bonbon, huang2025bestofn, beirami2025theoretical}. Despite its lack of adaptivity to the base diffusion model, BoN is empirically shown to achieve very competitive ``win rates'' (rate at which BoN output beats unalligned output) at a fixed KL budget \cite{beirami2025theoretical}. However, it is not well-known what kind of variational problem BoN is solving, except in simple special cases \cite{10619456}, and BoN is known to suffer from ``reward hacking'' for large $N$ \cite{huang2025bestofn}. More importantly, like SMC, BoN locks us into at least doubling the cost of inference (with the minimum choice of $N = 2$), which may be beatable with more adaptive methods. 

Technically, the LCB objective we propose is related to tilted empirical risk minimization \cite{li2023tiltederm}, though we arrived at it independently.


\section{Proofs for Section \ref{sec:rejection sampling}}\label{apdx:rejection sampling theory}

\subsection{Proof of Theorem \ref{thm:tv error rejection sampling}}

By Pinsker's inequality and the data processing inequality, we have 
\[d_{TV}\left(\hat{p}(x_0), p^*(x_0)\right) \leq 
\sqrt{\frac{1}{2}D\left(p^*(x_0), \hat p(x_0)\right)}  
\leq \sqrt{\frac{1}{2}D\left(p^*(x_T \dots x_0), \hat{p}(x_T \dots x_0)\right)}\]

In turn, by the chain rule of KL divergence and the Markov assumption:
\[D\left(p^*(x_T \dots x_0), \hat{p}(x_T \dots x_0)\right) =D(p^*_T || \hat{p}_T) + \sum_{t = 0}^{T - 1} \E_{x_{t + 1} \sim p^*_{t + 1}}[D(p^*_t(\cdot | x_{t + 1}) || \hat{p}_t(\cdot | x_{t + 1})]\]

Since $p^*_{t + 1}(x_{t + 1}) \leq e^{2B}\ppre_{t + 1}(x_{t + 1})$, we can conclude 

\begin{equation}\label{eq:pinsker boound}d_{TV}\left(\hat{p}(x_0), p^*(x_0)\right) \leq \sqrt{\frac{1}{2}\left(D(p^*_T || \hat{p}_T) + e^{2B}\sum_{t = T - 1}^0 \E_{x_{t + 1} \sim \ppre_{t + 1}(x_{t + 1})}[D( p^*_t(\cdot | x_{t + 1}) || \hat{p}_t(\cdot | x_{t + 1})\right)}\end{equation}

The following Lemma now tells us how to control the KL divergence between exponential tilts of the same base distribution. 

\begin{lemma}
Let $q_a(x) \propto e^{a(x)}p(x)$ and $q_b(x) \propto e^{b(x)}p(x)$, with $|a(x)|\leq B$ and $|b(x)| \leq B$. Then 
\[D(q_a || q_b) \leq \frac{e^{2B}\|a(x) - b(x)\|_{L^2(p)}^2}{2}\]
\end{lemma}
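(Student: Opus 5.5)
Write $\Delta(x) := a(x) - b(x)$, and let $Z_a = \E_p[e^{a}]$ and $Z_b = \E_p[e^{b}]$ be the normalizers, so that $q_a = e^{a}p/Z_a$ and $q_b = e^{b}p/Z_b$. The plan is to compute the KL divergence exactly as a log-partition gap, and then bound that gap by a second-order (variance) term.

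\textbf{Step 1: reduce to a cumulant gap.} Directly from the definitions,
\[
D(q_a\|q_b) = \E_{q_a}[\Delta] - \log\frac{Z_a}{Z_b}.
\]
Since $Z_a/Z_b = \E_{q_b}[e^{\Delta}]$, this becomes
\[
D(q_a\|q_b) = \E_{q_a}[\Delta] - \log \E_{q_b}[e^{\Delta}].
\]

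\textbf{Step 2: interpolate between the two tilts.} For $s\in[0,1]$, let $q_s \propto e^{b + s\Delta}p$, which runs from $q_0 = q_b$ to $q_1 = q_a$. Define $\psi(s) = \log \E_{q_b}[e^{s\Delta}]$. Then:
\begin{itemize}
\item $\psi(0) = 0$;
\item $\psi'(s) = \E_{q_s}[\Delta]$;
\item $\psi''(s) = \Var_{q_s}(\Delta)$.
\end{itemize}
The expression from Step 1 is therefore
\[
D(q_a\|q_b) = \psi'(1) - \psi(1) + \psi(0).
\]
Taylor's theorem with integral remainder, expanded around $s = 1$, gives
\[
D(q_a\|q_b) = \int_0^1 s\,\psi''(s)\,ds = \int_0^1 s\,\Var_{q_s}(\Delta)\,ds.
\]

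\textbf{Step 3: transfer variances back to the base measure.} For each $s$ we have $\Var_{q_s}(\Delta) \le \E_{q_s}[\Delta^2]$. We also need a density-ratio bound $q_s/p \le e^{2B}$. It follows because the exponent satisfies $b + s\Delta = (1-s)b + sa \in [-B,B]$. Hence the numerator is at most $e^{B}p$, and the normalizer $\E_p[e^{(1-s)b+sa}]$ is at least $e^{-B}$.

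Combining these two facts,
\[
\Var_{q_s}(\Delta) \le e^{2B}\|\Delta\|^2_{L^2(p)}.
\]
Integrating, with $\int_0^1 s\,ds = 1/2$, gives
\[
D(q_a\|q_b) \le \frac{e^{2B}}{2}\,\|a-b\|^2_{L^2(p)},
\]
which is exactly the claim.

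\textbf{Main obstacle.} The delicate part is Step 2, namely getting the constant $1/2$ rather than $1$. This requires using the exact integral remainder $\int_0^1 s\,\psi''(s)\,ds$ and not a cruder mean-value bound. Checking the sign conventions there is where I expect care to be needed.

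As a fallback, one can bound KL by the $\chi^2$ divergence, or use the bound $\log x \ge 1 - 1/x$ directly on the expression from Step 1. Either route yields a similar inequality, but possibly with a worse constant.
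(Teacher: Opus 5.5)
Your proof is correct and is essentially the paper's argument: the paper also writes the KL divergence as the second-order Taylor remainder of the log-MGF of the log-ratio under one tilt, and bounds the variance under the interpolated tilt $\propto e^{(1-t)a+tb}p$ by $e^{2B}\|a-b\|^2_{L^2(p)}$. Your $\psi$, based at $q_b$ and expanded at $s=1$, is the paper's $\phi(t)=\log\E_{q_a}[e^{t(b-a)}]$ expanded at $t=0$ under the substitution $s=1-t$ (up to an additive constant). The ``main obstacle'' you flag is not really one: the paper uses the Lagrange remainder $\phi''(\xi)/2$, which gives the same constant $1/2$ as your integral remainder.
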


\begin{proof}
Let us define $\Delta(x) := b(x) - a(x)$. Define $\phi(t) := \log \E_{x \sim q_a} e^{t\Delta(x)}$. Let $Z_a$ and $Z_b$ be the normalizing constants for $q_a$ and $q_b$ 
respectively. 

Notice that 

\[\frac{Z_b}{Z_a} = \int \frac{e^{b(x)}p(x)}{\int e^{a(x)}p(x)dx}dx = \int \frac{e^{b(x)-a(x)}e^{a(x)}p(x)}{\int e^{a(x)}p(x)dx}dx = \E_{x \sim q_a}[e^{\Delta(x)}]\]

We have
\[D(q_a || q_b) = {\E_{x \sim q_a}[a(x) - b(x)]} + {\log{\frac{Z_b}{Z_a}}}\]
\begin{equation}\label{eq:ab kl divergence}= \E_{x \sim q_a}[-\Delta(x)] + \phi(1) \end{equation}

We now Taylor expand $\phi$ to the second order. Since $\phi(t)$ is the log-partition function of the natural exponential family with sufficient statistic $\Delta(x)$ and base distribution $q_a$, we have

\[\phi'(t) = \E_{x \sim h_t}[\Delta(x)]\]

where $h_t(x) = e^{t\Delta(x)- \phi(t)}q_a(x)$

Also, for $t \in [0, 1]$
\begin{align*}
\phi''(t) =& \Var_{x \sim h_t}[\Delta(x)] \leq \E_{x \sim h_t}[\Delta(x)^2]\\
=& \E_{x \sim p}\left[\frac{e^{a(x)}}{\E_p[e^{a(x)}]}\frac{e^{t(b(x) - a(x))}}{\E_p[\frac{e^{a(x)}}{\E_p[e^{a(x)}]}e^{t(b(x) - a(x))}]}\Delta^2\right]\\
=& \E_p [\frac{e^{(1 - t)a(x) + t b(x)}}{\E[e^{(1 - t)a(x) + t b(x)}]}\Delta^2]\\
\leq& e^{2B}\|[\Delta\|_{L^2(p)}^2
\end{align*}

since the $(1 - t, t)$ average of $a(x)$ and $b(x)$ is always between $-B$ and $B$. 

By Taylor's theorem with Lagrange-type residual, there exists $\xi \in (0, 1)$ such that 
\begin{align*}
\phi(1) &= \phi(0) + \phi'(0) + \phi''(\xi)/2 \\
&= \E_{q_a}[\Delta] + \phi''(\xi)/2\\
&\leq \E_{q_a}[\Delta]  + \frac{e^{2B}}{2}\|\Delta\|_{L^2(p)}^2
\end{align*}

Plugging this bound on $\phi(1)$ into \eqref{eq:ab kl divergence} completes the proof. 
\end{proof}

We now just need to apply the above lemma to \eqref{eq:pinsker boound}. 

From taking $p(x) := \ppre_T(x)$, $a(x) := v_T(x)$, $b(x) := \hat{v}_T(x)$, we have:

\[D(p^*_T || \hat{p}_T) \leq e^{2B}\|v_T - \hat{v}_T\|_{L^2(\ppre_T)}^2.\]

From taking $p(x) := \ppre_t(x | x_{t + 1})$, $a(x) := v_t(x)$, $b(x) := \hat{v}_t(x)$, we have:

\[D\left(p^*_t(\cdot | x_{t + 1}) || \hat{p}_t(\cdot | x_{t + 1})\right) \leq e^{2B}\|v_t - \hat{v}_t\|_{L^2(\ppre_t(\cdot | x_{t + 1}))}^2/2.\]

And upon taking an outer expectation over $x_{t + 1} \sim \ppre_{t + 1}(x_{t + 1})$, we use the tower property of expectations to get

\[\E_{x_{t + 1} \sim \ppre_{t + 1}}\left[D\left(p^*_t(\cdot | x_{t + 1}) || \hat{p}_t(\cdot | x_{t + 1})\right)\right] \leq e^{2B}\|v_t - \hat{v}_t\|_{L^2(\ppre_t)}^2/2.\]

This leads to the conclusion:

\[d_{TV}\left(\hat{p}(x_0), p^*(x_0)\right) \leq \frac{e^{2B}}{2}\sqrt{\sum_{t = 0}^T\|\hat{v}_t - v_t\|_{L^2(\ppre_t)}^2}.\]

\subsection{Proof of Proposition \ref{prop:rejection sampling is exact}}
\begin{proposition}
\label{prop:apdx rejections sampling is exact}
    Assume $x_T \dots x_0$ is generated by Algorithm \ref{alg:exact_rejection_sampling}. For any $t = 0, ... T$, we have that $x_t$ is distributed according to $\hat{p}_t(\cdot)$.
\end{proposition}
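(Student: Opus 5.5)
The plan is to prove, by backward induction on $t$, the stronger statement that the entire trajectory $(x_T,\dots,x_t)$ produced by Algorithm \ref{alg:exact_rejection_sampling} has joint law $\hat p_T(x_T)\prod_{s=t}^{T-1}\hat p_s(x_s\mid x_{s+1})$. The marginal claim $x_t\sim\hat p_t(\cdot)$ then follows by integrating out $x_T,\dots,x_{t+1}$, since $\hat p_t(\cdot)$ is by definition the time-$t$ marginal of the Markov process with initial law $\hat p_T$ and kernels $\hat p_s(\cdot\mid x_{s+1})$.

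The basic tool is a single-step rejection sampling lemma. Let $\pi$ be a proposal law and $g$ a function with $0\le g\le 1$ and $Z:=\E_\pi[g]>0$. Suppose we repeatedly draw $x\sim\pi$ and $u\sim\mathrm{Uniform}(0,1)$ independently until $u\le g(x)$. Then the accepted $x$ has density $g(x)\pi(x)/Z$.

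To prove the lemma, note that each round accepts independently with probability $Z$. The event that acceptance happens in round $k$ with the accepted point in a set $A$ has probability $(1-Z)^{k-1}\int_A g\,d\pi$. Summing the geometric series over $k$ gives $\int_A g\,d\pi/Z$.

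We apply the lemma with $g(x)=e^{\hat v_t(x)-B}$. This lies in $(0,1]$ because $\hat v_t\le B$, using the clipping convention stated before \eqref{eq:exact rej sampling bound}. It is positive because $\hat v_t\ge -B$, so $Z\ge e^{-2B}>0$ and the loop terminates almost surely.

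\textbf{Base case.} Take $\pi=\ppre_T$. The accepted $x_T$ has density proportional to $e^{\hat v_T(x_T)}\ppre_T(x_T)$, which is exactly $\hat p_T$. The constant factor $e^{-B}$ cancels in the normalization.

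\textbf{Inductive step.} Condition on the already generated $(x_T,\dots,x_{t+1})$. The inner loop at step $t$ uses fresh proposals from $\ppre_t(\cdot\mid x_{t+1})$ and fresh uniforms, independent of the past given $x_{t+1}$. The lemma, applied conditionally with $\pi=\ppre_t(\cdot\mid x_{t+1})$, shows that the conditional law of $x_t$ is proportional to $e^{\hat v_t(x_t)}\ppre_t(x_t\mid x_{t+1})$, which is $\hat p_t(x_t\mid x_{t+1})$. It depends only on $x_{t+1}$, so the Markov property holds. Multiplying by the inductive hypothesis for the joint law of $(x_T,\dots,x_{t+1})$ gives the claimed product form.

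The only real subtlety is making the conditional-independence step rigorous. We need the step-$t$ randomness (proposals and uniforms) to be independent of earlier randomness given $x_{t+1}$, so that the lemma can be applied conditionally; this can be handled via a regular conditional distribution or disintegration in the continuous case. We also need the acceptance probability to be bounded away from zero so that every loop halts almost surely, which the bound $Z\ge e^{-2B}$ provides. Everything else is the geometric-series computation.
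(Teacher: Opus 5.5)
Your proposal is correct and follows essentially the same route as the paper: a backward induction over $t$, where each step reduces to standard rejection sampling with proposal $\ppre_t(\cdot\mid x_{t+1})$, envelope constant $e^{B}$, and acceptance probability $e^{\hat v_t(x_t)-B}$. Your version is somewhat more thorough. You track the joint trajectory law, which matches the main-body claim of exact trajectories, and you spell out the geometric-series acceptance computation and almost-sure termination that the paper treats as standard.
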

\begin{proof}
By induction. The base case is standard rejection sampling. Now suppose that $x_{t + 1} \sim \hat p_{t + 1}(\cdot)$. We will show that $x_t \sim \hat{p}_t (\cdot).$

Let $\tilde p_t(x_t) = \ppre_t(x_t \mid x_{t + 1})e^{\hat{v}_t(x_t)}$ be the unnormalized target distribution. Let $q_t(x_t) = \ppre_t(x_t \mid x_{t + 1})$ be the proposal distribution. It holds that $\tilde p_t(x_t) \leq e^B q_t(x_t)$. 

The joint likelihood of proposing $x_t$ and accepting it is 
\[
q_t(x_t) \frac{e^{\hat v_t (x_t)}}{e^B} = \tilde p_t(x_t)/ e^B,
\]
meaning that the marginal probability of accepting a draw is 
\[
\int q_t(x_t) \frac{e^{\hat v_t (x_t)}}{e^B}dx_t = \frac{Z_t(x_{t+1})}{e^B},
\]
where $Z_t(x_{t+1})$ is the normalizing constant of $\tilde p_t(\cdot)$ (given $x_{t+1}$).

Thus the distribution of $x_t$ conditioned on acceptance is 
\[
\frac{\tilde p_t(x_t)}{e^B}\times \frac{e^B}{Z_t(x_{t+1})} = \frac{\tilde p_t(x_t)}{Z_t(x_{t+1})}.
\]

Thus, $x_t \sim \hat{p}_t(x_t \mid x_{t + 1})$. Given the inductive hypothesis, we're done. 
\end{proof}

\section{General sampling theory for baselines}

\subsection{Conditional Baselines}

In addition to the notion of a \textbf{joint} baseline, we now introduce \textbf{conditional} baselines, which satisfy the baseline condition $x_{t+1}$-a.e. In the language of the body of the paper, following definition is equivalent to saying $B_{t + 1}$ satisfies a worse-case coverage condition at level $\delta$. 

\begin{definition}\label{def:conditional baseline}
We say $B_{t + 1}(x_{t + 1})$ is a \textbf{conditional} baseline for $f_t: \mathcal{X} \rightarrow \R$ under $\ppre_t(\cdot | x_{t + 1})$ at level $\delta$, if for all $x_{t + 1} \in \mathcal{X}$
\[\Pr_{x_t \sim \ppre_t(\cdot | x_{t + 1})}[f_t(x_t) > B_{t + 1}(x_{t + 1})] \leq \delta .\]
\end{definition}

Conditional baselines are clearly stronger than joint baselines, by the law of total expectation. When we establish the theory background for Theorem \ref{thm:mog thm} in Appendix \ref{apdx:MoG}, we will also derive conditional baselines for the Mixture of Gaussians setting. 

\subsection{Proposal Complexity}
\begin{lemma}\label{lem:prop complexity generic apdx}[Lemma \ref{lem:prop complexity generic}]
Let $B_{t + 1}$ be a conditional baseline for $\hat{v}_t$ at level $\eta$, and $N_{x_{t + 1}}$ is the number of proposals used to sample $x_t\sim \hat{q}_t(\cdot | x_{t + 1})$. Then, for any distribution on $x_{t + 1}$, the number of samples, $N_t$, used by Algorithm \ref{alg:baseline rejection sampling} to sample $x_t$ 
\[\E[N_t | x_{t + 1}] \leq \frac{1}{(1 - \eta)^2}\E_{x_t}[e^{B_{t + 1}(x_{t + 1}) - \hat{v}_t(x_t)}].\]
\end{lemma}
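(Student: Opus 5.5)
Fix $x_{t+1}$ and write $B := B_{t+1}(x_{t+1})$, $V := \hat v_t(x_t)$ with $x_t \sim \ppre_t(\cdot\mid x_{t+1})$. The plan is to bound the mean of a geometric variable and then convert that bound into an exponential moment.

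\textbf{Step 1 (geometric count).} In Algorithm \ref{alg:baseline rejection sampling}, proposals are i.i.d.\ given $x_{t+1}$. Each one is accepted independently with probability $a(x_t) := \min\{1, e^{V - B}\}$. Hence $N_t \mid x_{t+1}$ is geometric with success probability
\[
p := \E_{x_t}\bigl[\min\{1,e^{V-B}\}\bigr],
\]
and so $\E[N_t\mid x_{t+1}] = 1/p$.

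\textbf{Step 2 (lower bound on $p$).} The conditional baseline property at level $\eta$ gives $\Pr[V > B] \le \eta$, so the event $A := \{V \le B\}$ has probability at least $1-\eta$. On $A$ the minimum equals $e^{V-B}$. Therefore
\[
p \;\ge\; \E\bigl[e^{V-B}\mathbf 1_A\bigr] = \Pr(A)\,\E\bigl[e^{V-B}\mid A\bigr].
\]
Applying Jensen to the convex map $y \mapsto 1/y$ under the conditional law on $A$ gives
\[
\E\bigl[e^{V-B}\mid A\bigr] \;\ge\; \frac{1}{\E\bigl[e^{B-V}\mid A\bigr]}.
\]

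\textbf{Step 3 (remove the conditioning).} Since $e^{B-V} \ge 0$, we have
\[
\E\bigl[e^{B-V}\mid A\bigr] = \frac{\E\bigl[e^{B-V}\mathbf 1_A\bigr]}{\Pr(A)} \le \frac{\E\bigl[e^{B-V}\bigr]}{\Pr(A)}.
\]
Chaining Steps 2 and 3 yields
\[
p \;\ge\; \frac{\Pr(A)^2}{\E\bigl[e^{B-V}\bigr]} \;\ge\; \frac{(1-\eta)^2}{\E_{x_t}\bigl[e^{B_{t+1}(x_{t+1})-\hat v_t(x_t)}\bigr]}.
\]
Inverting this gives the claimed bound $\E[N_t\mid x_{t+1}] \le \E_{x_t}\bigl[e^{B_{t+1}(x_{t+1})-\hat v_t(x_t)}\bigr]/(1-\eta)^2$. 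The bound holds pointwise in $x_{t+1}$, so it holds for any distribution on $x_{t+1}$.

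\textbf{Main obstacle.} The only delicate point is Step 2: the $\min\{1,\cdot\}$ truncation blocks a direct application of Jensen. Restricting to $A$, where no truncation occurs, handles this. The two factors of $\Pr(A)$ then appear, one from restricting the acceptance probability and one from removing the conditioning; they account for the $(1-\eta)^2$.

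We also need $p > 0$, so that $1/p$ is finite. This follows because $\Pr(A) \ge 1-\eta > 0$, provided $\eta < 1$, as with the coverage level $c \in (0,1)$ in the body statement.
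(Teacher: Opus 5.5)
Your proposal is correct and takes essentially the same route as the paper. Both write $\E[N_t\mid x_{t+1}]$ as the reciprocal of the mean acceptance probability, restrict to the non-exceedance event where the $\min\{1,\cdot\}$ is inactive, apply Jensen to $y\mapsto 1/y$, and then drop the conditioning to pick up the second factor of $1-\eta$. Your choice of $A=\{\hat v_t(x_t)\le B_{t+1}(x_{t+1})\}$ also matches the strict-inequality definition of a conditional baseline slightly more cleanly than the paper's $F=\{\hat v_t(x_t)\ge B_{t+1}(x_{t+1})\}$.
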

\begin{proof}
Define $\eta := \Pr[F | x_{t + 1}]$, where $F = \{x_t \ :\  \hat{v}_t(x_t) \geq B_{t + 1}(x_{t + 1})\}$. Irrespective of whether $B_{t + 1}$ is a conditional baseline or a joint baseline, we have that 

\begin{align*}
\E[N | x_{t + 1}] &= \frac{1}{\E_{x_t}[\min\{1, e^{\hat{v}_t(x_t) - B_{t + 1}(x_{t + 1})}\}]}\\
&= \frac{1}{\eta + (1 - \eta)\E[e^{\hat{v}_t(x_t) - B_{t + 1}(x_{t + 1})}| F^c]}\\
& \leq \frac{1}{(1 - \eta)\E[e^{\hat{v}_t(x_t) - B_{t + 1}(x_{t + 1})} | F^c]}\\
&\leq \frac{1}{(1 - \eta)}\E[e^{B_{t + 1}(x_{t + 1}) - \hat{v}_t(x_t)} | F^c] \quad \tag{(Jensen's applied to $x \mapsto 1/x$)}\\
&\leq \frac{1}{(1 - \eta)}\E[e^{B_{t + 1}(x_{t + 1}) - \hat{v_t}(x_t)}1\{F^c\}]\frac{1}{\Pr_{x_{t + 1}}[F^c]}\\
&= \frac{1}{(1 - \eta)^2}\E[e^{B_{t + 1}(x_{t + 1}) - \hat{v_t}(x_t)}1\{F^c\}]\\
&\leq \frac{1}{(1 - \eta)^2}\E_{x_t}[e^{B_{t + 1}(x_{t + 1}) - \hat{v_t}(x_t)}]
\end{align*}

Taking an outer expectation $x_{t + 1}$ completes the proof.
\end{proof}

\begin{lemma}[LCB objective upper bounds expected number of proposals]\label{lem:apdx lcb objective bounds proposals}
Fix a timestep $t \in [T]$. For any $\lambda \geq 1$ and $b_{t + 1}(\cdot)$, let $B^{\lambda, b}_{t + 1}$ be the associated (population) LCB under $x_{t+1} \sim \hat{q}_{t + 1}$. Assume $B^{\lambda, b}_{t + 1}$ satisfies a worst-case coverage assumption at level $c \in (0, 1)$.  Let $N$ be the number of proposals used to sample $x_t\sim \hat{q}_t(\cdot | x_{t + 1})$. Then the expected number of proposals, jointly over the source $x_{t+1}$ and draws $x_t | x_{t + 1}$
\[\E_{x_{t + 1} \sim \hat{q}_{t + 1}}[N_t] \leq \frac{1}{(1 - c)^2}e^{J(\lambda, b_{t + 1})} \]
\end{lemma}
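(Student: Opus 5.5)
We begin from Lemma \ref{lem:prop complexity generic apdx}, which gives, for each fixed $x_{t+1}$ and the worst-case coverage level $c$ (used in place of $\eta$, since the proof there only needs $\Pr[F \mid x_{t+1}] \le c$),
\[
\E[N_t \mid x_{t+1}] \le \frac{1}{(1-c)^2}\,\E_{x_t\sim \ppre_t(\cdot\mid x_{t+1})}\big[e^{B^{\lambda,b}_{t+1}(x_{t+1}) - \hat v_t(x_t)}\big].
\]
Taking the outer expectation over $x_{t+1}\sim\hat q_{t+1}$ and substituting $B^{\lambda,b}_{t+1} = b_{t+1} + \tau_{\lambda,b}$ gives
\[
\E[N_t] \le \frac{1}{(1-c)^2}\, e^{\tau_{\lambda,b}}\, \E\big[e^{b_{t+1}(x_{t+1}) - \hat v_t(x_t)}\big],
\]
where the expectation is now under the joint law $\hat q_{t+1}\ppre_t$.

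Next we convert the exponent-$1$ moment into a $\lambda$-th moment. Since $\lambda \ge 1$, Jensen's inequality for the concave map $u\mapsto u^{1/\lambda}$ yields
\[
\E\big[e^{b - \hat v}\big] \le \big(\E\big[e^{\lambda(b - \hat v)}\big]\big)^{1/\lambda} = M(-\lambda)^{1/\lambda}.
\]
We then write $\tau_{\lambda,b}$ explicitly:
\[
e^{\tau_{\lambda,b}} = \delta^{-1/\lambda}\, M(\lambda)^{1/\lambda}.
\]
Multiplying the two factors gives
\[
\exp\!\Big(\tfrac1\lambda\log\tfrac1\delta + \tfrac1\lambda\log M(\lambda) + \tfrac1\lambda\log M(-\lambda)\Big).
\]
This is $e^{J(\lambda,b_{t+1})}$, except that the $\log\frac1\delta$ coefficient here is $\frac1\lambda$ rather than the $\frac2\lambda$ appearing in $J$. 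Since $\delta<1$ we have $\log\frac1\delta \ge 0$, so the extra $\frac1\lambda\log\frac1\delta$ in $J$ only enlarges the bound, and the claimed inequality follows.

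The main care point is the first step. Lemma \ref{lem:prop complexity generic apdx} is stated for conditional baselines, so we must check that its proof only uses $\Pr[F\mid x_{t+1}]\le c$ pointwise, which is exactly the worst-case coverage assumption. We also need the monotonicity of $\eta\mapsto 1/(1-\eta)^2$ to replace the realized $\eta$ by $c$. The rest is routine bookkeeping: the Jensen step requires $\lambda\ge1$, which the statement assumes, and finiteness of $M(\pm\lambda)$ is implicit in the definition of the LCB.
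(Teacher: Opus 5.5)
Your proof is correct and matches the paper's argument step for step. You apply Lemma~\ref{lem:prop complexity generic apdx} with the pointwise bound $\Pr[F\mid x_{t+1}]\le c$, substitute $B^{\lambda,b}_{t+1}=b_{t+1}+\tau_{\lambda,b}$, use Jensen to get $\E[e^{b-\hat v}]\le M(-\lambda)^{1/\lambda}$, and absorb the leftover $\frac{1}{\lambda}\log\frac{1}{\delta}$ into $J$ using $\delta<1$. You also correctly write $e^{\tau_{\lambda,b}}=\delta^{-1/\lambda}M(\lambda)^{1/\lambda}$, whereas the paper's intermediate display has a sign typo ($\delta^{1/\lambda}$) and drops the $\log$ in front of $M(-\lambda)$.
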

\begin{proof}
From Lemma \ref{lem:prop complexity generic}, we have that 

\[\E_{x_{t + 1}}[N] \leq \frac{1}{(1 - c)^2}\E_{x_{t + 1}\sim \hat{q}_{t + 1}}\E_{x_t  \sim \ppre_t(x_t | x_{t + 1})}[e^{B^{\lambda, b}_{t + 1}(x_{t + 1}) - \hat{v}_t(x_t)}]\]

By the definition of LCBs, $B^{\lambda, b}_{t + 1}(x_{t + 1}) = b_{t + 1}(x_{t + 1}) + \tau_{\lambda, b}$, which gives

\[\E_{x_{t + 1}}[N] \leq \frac{1}{(1 - c)^2}\E_{x_{t + 1}\sim \hat{q}_{t + 1}}\E_{x_t  \sim \ppre_t(x_t | x_{t + 1})}[e^{b_{t + 1}(x_{t + 1}) - \hat{v}_t(x_t)}]e^{\tau_{\lambda, b}}\]

since  $\tau_{\lambda, b} := \frac{1}{\lambda}\log \E_{x_{t + 1} \sim \hat{q}_{t + 1}(x_{t + 1})} \E_{x_t \sim \ppre (\cdot | x_{ t= 1})}[e^{\lambda\left(\hat{v}_t(x_t) - b_{t + 1}(x_{t + 1})\right)}] + \frac{\log\frac{1}{\delta}}{\lambda}$, we have the upper bound 

\begin{align*}&\E_{x_{t + 1}}[N] \\ &\leq \frac{1}{(1 - c)^2}\E_{x_{t + 1}\sim \hat{q}_{t + 1}}\E_{x_t  \sim \ppre_t(x_t | x_{t + 1})}[e^{b_{t + 1}(x_{t + 1}) - \hat{v}_t(x_t)}]\left(\E_{x_{t + 1} \sim \hat{q}_{t + 1}(x_{t + 1})} \E_{x_t \sim \ppre (\cdot | x_{ t= 1})}[e^{\lambda \left(\hat{v}_t(x_t) - b_{t + 1}(x_{t + 1})\right)}]\right)^{1/\lambda}\delta^{1/\lambda}\end{align*}

Applying Jensen's inequality with $z \mapsto z^{\lambda}$ to the first term shows that 
$$\E_{x_{t + 1}\sim \hat{q}_{t + 1}}\E_{x_t  \sim \ppre_t(x_t | x_{t + 1})}[e^{b_{t + 1}(x_{t + 1}) - \hat{v}_t(x_t)}] \leq \left(\E_{x_{t + 1}\sim \hat{q}_{t + 1}}\E_{x_t  \sim \ppre_t(x_t | x_{t + 1})}[e^{\lambda\left(b_{t + 1}(x_{t + 1}) - \hat{v}_t(x_t)\right)}]\right)^{\frac{1}{\lambda}}.$$

Therefore 

\[\E_{x_{t + 1}}[N] \leq \frac{1}{(1 - c)^2}e^{\frac{1}{\lambda}\log M(\lambda) + \frac{1}{\lambda}M(-\lambda) + \frac{1}{\lambda}\log\frac{1}{\delta}} \leq \frac{1}{(1 - c)^2}e^{\frac{1}{\lambda}\log M(\lambda) + \frac{1}{\lambda}M(-\lambda) + \frac{2}{\lambda}\log\frac{1}{\delta}} = \frac{1}{(1 - c)^2}e^{J(\lambda, b_{t + 1})}\]
\end{proof}

The previous Lemma carries over directly to empirical LCBs by noting that $\hat{B}^{\lambda, b}_{t + 1}(x_{t + 1}) \leq B^{\lambda, b}_{t + 1} + 2\epsilon_0$, meaning we simply incur an extra scaling factor of $e^{2\epsilon_0}$.

\subsection{TV Analysis}
In this section, we present our results about the TV error of sampling with baselines. First, we give the classic decomposition into stepwise error. 

\begin{lemma}[TV timestep decomposition, Lemma \ref{lem:joint TV decomp body} in the main body]
    \label{lem:joint TV decomp apdx}
\[d_{TV}(\hat{q}(x_0), \hat{p}(x_0)) \leq d_{TV}(\hat{q}_T, \hat{p}_T) + \sum_{t = 0}^{T - 1} \E_{x_{t +1} \sim \hat q_{t + 1}(\cdot)}[d_{TV}(\hat{q}(x_t | x_{t + 1}), \hat{p}(x_t | x_{t + 1}))]\]
\end{lemma}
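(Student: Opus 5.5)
The plan is to run a standard hybrid (telescoping) argument over the two Markov chains $\hat q$ and $\hat p$. Both are Markov chains on the same state space, with initial laws $\hat q_T,\hat p_T$ and kernels $\hat q_t(\cdot\mid x_{t+1})$, $\hat p_t(\cdot\mid x_{t+1})$. I will first bound the TV distance between the joint path laws $\hat q(x_T,\dots,x_0)$ and $\hat p(x_T,\dots,x_0)$. Then I will pass to the $x_0$-marginals, using that marginalization is a measurable map and TV is contractive under pushforward (data processing), so
\[d_{TV}(\hat q(x_0),\hat p(x_0))\le d_{TV}(\hat q(x_{T:0}),\hat p(x_{T:0})).\]

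For the joint bound, define hybrid path measures $H^{(s)}$ for $s=T+1,T,\dots,0$. Under $H^{(s)}$, the initial state and the transitions down to time $s$ are drawn from $\hat q$, and all remaining transitions from $\hat p$. Thus $H^{(T+1)}=\hat p$ on paths and $H^{(0)}=\hat q$; I would index so that the switch of $x_T$ is one step and the switch of each kernel $t$ is another. The triangle inequality gives
\[d_{TV}(\hat q,\hat p)\le \sum d_{TV}(H^{(s)},H^{(s+1)}).\]
Consecutive hybrids differ in a single factor, so I handle each difference by type.

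\begin{itemize}
\item For the initial-law difference, the two measures share all subsequent kernels. The same data-processing argument bounds the difference by $d_{TV}(\hat q_T,\hat p_T)$.
\item For the difference at kernel $t$, both measures have the same law of $x_{T:t+1}$. Running $\hat q$ through time $t+1$ gives marginal $\hat q_{t+1}$, defined inductively as in \eqref{eq:hat q inductive def}. The two measures also share the downstream kernels after $x_t$.
\end{itemize}

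At a kernel-$t$ step, apply data processing to the map $(x_{t+1},x_t)\mapsto$ full path under the common downstream kernels; this reduces the difference to the TV between $\hat q_{t+1}(x_{t+1})\hat q_t(x_t\mid x_{t+1})$ and $\hat q_{t+1}(x_{t+1})\hat p_t(x_t\mid x_{t+1})$. For a common first marginal, $\frac12\int\!\!\int \hat q_{t+1}(x_{t+1})|\hat q_t-\hat p_t|$ equals $\E_{x_{t+1}\sim\hat q_{t+1}}[d_{TV}(\hat q_t(\cdot\mid x_{t+1}),\hat p_t(\cdot\mid x_{t+1}))]$. Summing the per-step bounds gives exactly the stated inequality.

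The only delicate point is the ordering of the hybrids. Replacing the $\hat q$ steps first, from time $T$ downward, makes the outer expectation fall under $\hat q_{t+1}$ rather than $\hat p_{t+1}$, which is the form the lemma requires. Measurability of the kernels, needed to write densities with respect to a common dominating measure, is routine in both the continuous and discrete settings.
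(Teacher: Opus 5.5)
Your proposal is correct and is essentially the paper's argument. The paper runs a backward induction on the marginals, passing through the intermediate $\hat q_{k+1}\hat P_k$ and using kernel contraction on the term $d_{TV}(\hat q_{k+1}\hat P_k,\hat p_{k+1}\hat P_k)\le d_{TV}(\hat q_{k+1},\hat p_{k+1})$; unrolled, this is exactly your hybrid chain with $\hat q$ upstream and $\hat p$ downstream, pushed to the marginals, and your path-space version simply makes each per-step term exactly $\E_{x_{t+1}\sim\hat q_{t+1}}[d_{TV}(\hat q_t(\cdot\mid x_{t+1}),\hat p_t(\cdot\mid x_{t+1}))]$ (on marginals it is only $\le$), at the cost of one extra data-processing step down to $x_0$.
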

\begin{proof}

We work by (backward) induction. As our inductive hypothesis, we take
\[d_{TV}(\hat{q}_k(x_k), \hat{p}_k(x_k)) \leq d_{TV}(\hat{q}_T, \hat{p}_T) + \sum_{t = k}^{T - 1} \E_{x_{t + 1} \sim \hat{q}_{t + 1}}\left[d_{TV}(\hat q(x_t | x_{t + 1}), \hat p(x_t | x_{t + 1}))\right]\]

The base case at $k = T$ is definitional.

Denote by $\hat{q}_{k + 1} \hat{Q}_k := \int \hat{q}_{k + 1}(x_{k + 1})\hat q_k(x_k | x_{k + 1}) dx_{k + 1}$, and likewise define $\hat{p}_{k + 1} \hat P_k$, $\hat{q}_{k + 1} \hat P_k$, and so on.

We have
\begin{align*}
&d_{TV}(\hat{q}_{k + 1} \hat{Q}_k, \hat{p}_{k + 1} \hat{P}_k)\\
\leq& d_{TV}(\hat{q}_{k + 1} \hat{Q}_k, \hat{q}_{k + 1}\hat{P}_k) + d_{TV}(\hat{q}_{k + 1}\hat{P}_k, \hat{p}_{k + 1} \hat{P}_k).
\end{align*}

The first term is clearly
\[\E_{\hat{q}_{k + 1}}[d_{TV}(\hat{q}(x_k | x_{k + 1}), \hat{p}(x_k | x_{k + 1}))].\]

The second term is bounded by
\[d_{TV}(\hat{q}_{k + 1}, \hat{p}_{k + 1}),\]
as applying a transition kernel to two distributions cannot increase their TV distance.

Applying the inductive hypothesis to the second term completes the argument.
\end{proof}

The remainder of the section is concerned with bounding the summands. We begin with a simple measure-theoretic lemma.

\begin{lemma}\label{lem:const Radon}
Let $Q$ and $P$ be measures on a space $\mathcal{X}$ with $Q \ll P$. Assume there exists a set $E \subset\mathcal{X}$ so that $\frac{dQ}{dP}(x)$ is constant on $E$. Then 
\[d_{TV}(Q, P) \leq Q(E^c) + P(E^c)\]
\end{lemma}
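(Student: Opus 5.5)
We work directly from the definition $d_{TV}(Q,P)=\sup_A|Q(A)-P(A)|$, equivalently $\frac12\int|q-p|\,d\mu$ for densities with respect to a common dominating measure (for instance $\mu=P$, using $Q\ll P$). Write $f:=\frac{dQ}{dP}$ and let $\kappa\ge 0$ be its constant value on $E$. The integral then splits over $E$ and $E^c$:
\[
2\,d_{TV}(Q,P)=\int_E |f-1|\,dP+\int_{E^c}|f-1|\,dP .
\]

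The $E^c$ piece is handled by the triangle inequality. Since $\int_{E^c}|f-1|\,dP\le \int_{E^c}f\,dP+\int_{E^c}dP$, it is at most $Q(E^c)+P(E^c)$.

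The $E$ piece uses the constancy of $f$. On $E$ we have $\int_E|f-1|\,dP=|\kappa-1|\,P(E)=|Q(E)-P(E)|$, because $Q(E)=\kappa P(E)$. Both measures are probability measures, so $Q(E)-P(E)=P(E^c)-Q(E^c)$. Hence this piece equals $|P(E^c)-Q(E^c)|\le Q(E^c)+P(E^c)$.

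Adding the two pieces gives $2\,d_{TV}(Q,P)\le 2\bigl(Q(E^c)+P(E^c)\bigr)$, which is the claim.

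The only delicate point is that the argument implicitly assumes $Q$ and $P$ are probability measures, which is the intended use (conditional kernels $\hat q_t(\cdot\mid x_{t+1})$ and $\hat p_t(\cdot\mid x_{t+1})$), together with measurability of $E$. No normalization of $\kappa$ is needed, since it cancels through $Q(E)=\kappa P(E)$. In the later application, $E$ will be the set $\{\hat v_t(x_t)<B_{t+1}(x_{t+1})\}$. On $E$ the accepted density $\propto \min\{1,e^{\hat v_t-B_{t+1}}\}\ppre_t$ is a constant multiple of $e^{\hat v_t}\ppre_t$, so the lemma reduces the kernel TV error to the probabilities of the complement.
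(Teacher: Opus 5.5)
Your proof is correct and follows the paper's argument essentially line for line. You split $\frac12\int|dQ/dP-1|\,dP$ over $E$ and $E^c$, use $Q(E)=\kappa P(E)$ to rewrite the $E$-piece as $|Q(E^c)-P(E^c)|$, and bound the $E^c$-piece by the triangle inequality. Your remark is also fair: the step $|Q(E)-P(E)|=|Q(E^c)-P(E^c)|$, and the density formula for $d_{TV}$ itself, require $Q$ and $P$ to be probability measures, which the paper's statement (saying only ``measures'') leaves implicit.
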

\begin{proof}
To begin, observe that
\[d_{TV}(Q, P) = \frac{1}{2}\int_\mathcal{X} |\frac{dQ}{dP} - 1|dP = \frac{1}{2}\left(\int_E |\frac{dQ}{dP} - 1|dP + \int_{E^c}|\frac{dQ}{dP} - 1|dP\right).\]

Let $\kappa := \frac{dQ}{dP}|_E$. Then, 

\[Q(E) = \int_E \frac{dQ}{dP}dP = \kappa P(E) .\]

Therefore, we compute
\begin{align*}
\int_E \left|\frac{dQ}{dP} - 1\right|dP =& |\kappa - 1|\int_E dP\\
=& |Q(E) - P(E)|\\
=& |Q(E^c) - P(E^c)|\\
\leq& Q(E^c) + P(E^c).
\end{align*}
Moreover, by the triangle inequality we have
\[\int_{E^c}|\frac{dQ}{dP} - 1|dP \leq  \int_{E^c} \frac{dQ}{dP}dP + \int_{E^c}dP =  Q(E^c) + P(E^c).\]

Therefore
\[d_{TV}(Q, P) \leq \frac{1}{2}\left( Q(E^c) + P(E^c) + Q(E^c) + P(E^c)\right) = Q(E^c) + P(E^c).\]
\end{proof}

We now argue that under approximate rejection sampling, the ratio condition from the previous lemma is satisfied on the complement of the ``Exceedance Region'' -- the part of the sampling space where the envelope fails. 

\begin{lemma}\label{lem:generic error approx sampling}
Consider a target distribution $p(x) = \tilde p(x)/Z_p$, where $\tilde p$ is known but $Z_p$ is not. Consider a proposal distribution $\mu$ and constant $B \in \R_+$, with an exceedance set
\[F = \{x \ : \frac{\tilde p(x)}{\mu(x) B} >  1\}\]

Let $q$ be the distribution induced by accepting proposals from $\mu$ with probability \[a(x) = \min \{1, \frac{\tilde p(x)}{\mu(x) B}\}\]
Then $\frac{q(x)}{p(x)}$ is constant on $E := F^c$, and 

\begin{equation}\label{eq:cond baseline tv}d_{TV}(q, p)\leq \frac{\mu(F)}{\frac{Z_p p(E)}{B}} + p(F)\end{equation}

Moreover, if $p(x)/\mu(x) \in  (1/M, M)$ for all $x$, then we have 

\begin{equation}\label{eq:joint baseline tv}
d_{TV}(q, p) \leq \max\{\frac{MB}{Z_P}, 1\}\mu(F) +  M \mu(F)
\end{equation}

\end{lemma}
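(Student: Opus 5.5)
The plan is to write down the density of $q$ explicitly and then apply Lemma \ref{lem:const Radon} with $E = F^c$.

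\textbf{Step 1: the density of $q$.} Standard rejection-sampling reasoning, as in the proof of Proposition \ref{prop:apdx rejections sampling is exact}, gives
\[q(x) = \frac{\mu(x)a(x)}{A}, \qquad A := \int \mu(x)a(x)\,dx.\]
On $E$ we have $a(x) = \tilde p(x)/(\mu(x)B)$, so $q(x) = \tilde p(x)/(AB) = Z_p\, p(x)/(AB)$. The ratio $q/p$ is therefore the constant $\kappa := Z_p/(AB)$ on $E$, which is the first claim. Lemma \ref{lem:const Radon} then gives $d_{TV}(q,p) \le q(F) + p(F)$.

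\textbf{Step 2: bounding $q(F)$ to get \eqref{eq:cond baseline tv}.} On $F$ we have $a = 1$, so $q(F) = \mu(F)/A$. For the denominator, drop the contribution of $F$ to $A$:
\[A \ge \int_E \frac{\tilde p(x)}{B}\,dx = \frac{Z_p\, p(E)}{B}.\]
Combining, $q(F) \le \mu(F)\big/\bigl(Z_p p(E)/B\bigr)$. Adding $p(F)$ yields \eqref{eq:cond baseline tv}.

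\textbf{Step 3: the bounded-ratio case \eqref{eq:joint baseline tv}.} Assume $p/\mu \in (1/M, M)$. Then $p(F) \le M\mu(F)$ immediately, which is the second term.

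For the first term we need to control $\mu(F)/A$, and the concern is that $p(E)$ could be small. I would split into two cases on $p(F)$:
\begin{itemize}
\item If $p(F) \le 1/2$, then $p(E) \ge 1/2$, so $\mu(F)/A \le 2B\mu(F)/Z_p$. This gives the $MB/Z_p$ form up to constants.
\item Otherwise, I would instead bound $A$ directly from the definition of $a$ and the ratio bound: $A \ge \int \mu \min\{1, Z_p p/(\mu B)\} \ge \min\{1, Z_p/(MB)\}$. Hence $\mu(F)/A \le \max\{1, MB/Z_p\}\,\mu(F)$.
\end{itemize}

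In fact this second estimate holds unconditionally, so no case split is needed. It gives $q(F) \le \max\{MB/Z_p, 1\}\,\mu(F)$ in all cases. Together with $p(F) \le M\mu(F)$, this is exactly \eqref{eq:joint baseline tv}.

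\textbf{Main obstacle.} The only delicate step is the lower bound on the acceptance probability $A$. Step 2 uses the restriction of $A$ to $E$, while Step 3 uses the pointwise density ratio. The rest is bookkeeping with Lemma \ref{lem:const Radon}.
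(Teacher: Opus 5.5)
Your proof is correct and follows the paper's argument: you write the density of $q$ with normalizer $A$ (the paper's $Z_q$), apply Lemma~\ref{lem:const Radon} with $E=F^c$, use $A \ge Z_p p(E)/B$ for \eqref{eq:cond baseline tv}, and for \eqref{eq:joint baseline tv} combine $A \ge \min\{1, Z_p/(MB)\}$ with $p(F)\le M\mu(F)$. Your pointwise bound $a(x) \ge \min\{1, Z_p/(MB)\}$ is the same estimate the paper gets from $Z_q = Z_p p(E)/B + \mu(F) \ge \mu(E) Z_p/(BM) + \mu(F)$, and, as you noticed yourself, the case split on $p(F)$ is unnecessary.
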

\begin{proof}
$q$ is given as 
\[q(x) = \frac{1\{x \in E\}\times \frac{\tilde p(x)}{\mu(x)B}\times \mu(x) + 1\{x \notin E\}\times  \mu(x)}{Z_q}\]

where \[Z_q = \int 1\{x \in E\}\times \frac{\tilde p(x)}{\mu(x)B}\times \mu(x) + 1\{x \notin E\}\times  \mu(x) dx \]
\[= Z_p p(E)/B + \mu(E^c) \geq Z_p p(E)/B\]

Clearly $q(x)/p(x)$ is constant on $E$ and Lemma \ref{lem:const Radon} applies:

\[d_{TV}(q, p) \leq q(F) + p(F)\]

The first term can be analyzed as 

\[q(F) = \int_F \frac{\mu(x)}{Z_q} = \frac{\mu(F)}{Z_q} \leq \frac{\mu(F)}{\frac{Z_p p(E)}{B}}\]

To prove the the final inequality, we repeat the argument, but instead lower bound $Z_q$ as
\begin{align*}Z_q &= Z_p p(E)/B + \mu(F) \\ 
&\geq Z_pM\mu(E)/BM + \mu (F) \\ 
&= (1 - \mu(F))Z_p/BM+\mu(F) \\ 
&\geq \min\{Z_p/BM, 1\}\end{align*}

where we use the fact that the average of $Z_p/BM$ and $1$ is at least $\min\{1, Z_p/BM\}$. We also perform the change of measure $p(F) \leq M\mu(F)$.
\end{proof}

Specializing to the setting of the paper, we obtain the ``Exceedance Error Lemma.'' Below, for any set $H \subset \mathcal X$, we should interpret $\ppre_t(H)$ conditional on $x_{t + 1}$, i.e. as $\ppre_t(H | x_{t + 1})$. Notably, under a joint baseline condition, we do not assume that this probability is bounded when $H = F$. 

\begin{lemma}[Exceedance Error Lemma]\label{lem:excedence error}
Let $x_{t + 1} \sim q_{t + 1}$. Consider the target distribution $\hat{p}(x_t | x_{t + 1}) \propto e^{\hat{v}_t(x-t)}\ppre_t(x_t | x_{t + 1})$. Consider also the sampling distribution $q_t(x_t | x_{t + 1})$ induced by accepting proposals from $\ppre_t(x_t | x_{t + 1})$ with probability $\min\{1, e^{\hat{v}_t(x_t) - B_{t + 1}(x_{t + 1})}\}$. Let $F(x_{t + 1})$ be the ``exceedance region'' 
\[F(x_{t + 1}) = \{x_t \ : \ \hat v_t(x_t) - {B}_{t + 1}(x_{t + 1}) > 0\} \]
Then 
\[\E_{x_{t + 1} \sim q_{t + 1}} \left[d_{TV}(q_t(\cdot | x_{t +1}), \hat{p}_t(\cdot | x_{t + 1})\right] \leq \E_{x_{t + 1} \sim q_{t + 1}}[ A_{x_{t + 1}}] + \E_{x_{t + 1\sim  q_{t + 1}}}[{B}_{x_{t + 1}}]\]
where
\[{A}_{x_{t + 1}} := \frac{\ppre_t(F)}{\E_{\ppre_t}[1\{E\}e^{\hat v_t(x_t) -  B_{t + 1}(x_{t + 1})}]}\]
\[{B}_{x_{t + 1}} := \frac{\E_{x_t \sim \ppre_t} [1\{F\} e^{\hat v_t(x_t)}]}{\E_{x_t \sim \ppre_t }[e^{\hat v_t(x_t)}]}\]
\end{lemma}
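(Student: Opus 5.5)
The plan is to prove the bound pointwise in $x_{t+1}$ by applying Lemma \ref{lem:generic error approx sampling}, and then take the outer expectation over $x_{t+1}\sim q_{t+1}$.

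Fix $x_{t+1}$. We instantiate Lemma \ref{lem:generic error approx sampling} with proposal $\mu := \ppre_t(\cdot\mid x_{t+1})$ and unnormalized target $\tilde p(x_t) := e^{\hat v_t(x_t)}\ppre_t(x_t\mid x_{t+1})$. The normalizer is then $Z_p = \E_{\ppre_t}[e^{\hat v_t}]$, so that $p = \hat p_t(\cdot\mid x_{t+1})$. The constant is $B := e^{B_{t+1}(x_{t+1})}$. With these choices $\tilde p/(\mu B) = e^{\hat v_t(x_t) - B_{t+1}(x_{t+1})}$. Hence the acceptance probability in the lemma is exactly the one used in Algorithm \ref{alg:baseline rejection sampling}, and the lemma's exceedance set coincides with $F(x_{t+1})$. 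The induced distribution $q$ is therefore $q_t(\cdot\mid x_{t+1})$, and \eqref{eq:cond baseline tv} gives
\[
d_{TV}\bigl(q_t(\cdot\mid x_{t+1}),\hat p_t(\cdot\mid x_{t+1})\bigr)\le \frac{\ppre_t(F)}{Z_p\,\hat p_t(E)/e^{B_{t+1}(x_{t+1})}} + \hat p_t(F).
\]

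Next we identify each term with the quantities in the statement.

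\textbf{First term.} Since $Z_p\,\hat p_t(E) = \E_{\ppre_t}[1\{E\}e^{\hat v_t}]$, dividing by $e^{B_{t+1}(x_{t+1})}$ gives $\E_{\ppre_t}[1\{E\}e^{\hat v_t - B_{t+1}(x_{t+1})}]$. This is exactly the denominator of $A_{x_{t+1}}$.

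\textbf{Second term.} Writing out the normalized tilt, $\hat p_t(F) = \E_{\ppre_t}[1\{F\}e^{\hat v_t}]/\E_{\ppre_t}[e^{\hat v_t}]$, which is $B_{x_{t+1}}$.

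Finally, integrating the pointwise inequality against $q_{t+1}$ gives the claim. Measurability of $x_{t+1}\mapsto F(x_{t+1})$ is routine.

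The main thing to check carefully is that Lemma \ref{lem:generic error approx sampling} was stated with a constant envelope $B$ but a density-ratio acceptance rule, so the conditional instantiation must match exactly. In particular, ties $\hat v_t = B_{t+1}$ sit in $E$, where acceptance is $\min\{1,1\}=1$, which is consistent with the ratio formula. The other point to check is that $\hat p_t(E)>0$ (equivalently $A_{x_{t+1}}$ is finite). When it fails, the bound is trivially true by the convention that $A_{x_{t+1}}=\infty$.
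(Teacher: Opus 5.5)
Your proposal is correct and matches the paper's proof. Both fix $x_{t+1}$ and instantiate Lemma \ref{lem:generic error approx sampling} with $\mu=\ppre_t(\cdot\mid x_{t+1})$, $\tilde p=e^{\hat v_t}\ppre_t(\cdot\mid x_{t+1})$ and the constant envelope $e^{B_{t+1}(x_{t+1})}$. Both then identify the two terms of \eqref{eq:cond baseline tv} with $A_{x_{t+1}}$ and $B_{x_{t+1}}$ via $Z_p=\E_{\ppre_t}[e^{\hat v_t}]$, and integrate over $x_{t+1}\sim q_{t+1}$. Your remarks on ties and on the degenerate case $\hat p_t(E)=0$ are harmless refinements that the paper leaves implicit.
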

\begin{proof}
Fix $t$ and condition on $x_{t + 1}$. In Lemma \ref{lem:generic error approx sampling}, take $\tilde p \gets e^{\hat{v}_t(x_t)} \ppre(x_t | x_{t + 1})$, $\mu \gets \ppre_t(x_t | x_{t + 1})$, $B \gets e^{{B}_{t + 1}(x_{t + 1})}$ and let $q_t(x_t | x_{t + 1})$ be induced by rejection sampling with acceptance probability $\min \{e^{\hat{v}_t(x_t) - {B}_{t + 1}(x_{t + 1})}, 1\}$. We let $A_{x_{t + 1}}$ and $B_{x_{t + 1}}$ be the first and second terms of \eqref{eq:cond baseline tv} respectively.

Using the fact that $Z_p = \E_{\ppre_t}[e^{\hat{v}_t(x_t)}]$, we have 

\[A_{x_{t + 1}} =  \frac{\ppre_t(F)}{\E[e^{\hat{v}_t(x_t)}]\E[1\{E\}\frac{e^{\hat{v}_t(x_t)}}{\E[e^{\hat{v}_t(x_t)}]}]e^{-{B}_{t + 1}(x_{t + 1})}}\]
\[= \frac{\ppre(F)}{\E[1\{E\}e^{\hat{v}_t(x_t) - {B}_{t + 1}(x_{t + 1})}]}\]

all conditional on $x_{t + 1}$. Also:

\[B_{x_{t + 1}} = \E[\frac{e^{\hat v_t(x_t)}}{\E[e^{\hat v_t(x_t)}]}1\{F\}]\]

We can now take an outer expectation wrt $x_{t + 1} \sim q_{t + 1}(\cdot)$.
\end{proof}

As a corollary, we immediately have (relatively weak) TV bounds for fully arbitrary conditional and joint baselines. 

\begin{theorem}\label{thm:general baseline tv bound}
Under the conditions of Lemma \ref{lem:excedence error}, and under the further assumption that $B_{t + 1}$ is a $\delta$-conditional Baseline for $\hat{v}_t$ with respect to $\ppre_t$, we have:
\[\E_{x_{t + 1}} \left[d_{TV}\left(\hat{q}_t(x_t | x_{t + 1}), \hat{p}_t(x_t | x_{t + 1})\right)\right] \leq e^{2B}\frac{\delta}{1 - \delta} + e^{2B} \delta\]

If $B_{t + 1}$ is a $\delta$-joint Baseline for $\hat{v}_t$ with respect to the pair $(\hat{q}_{t + 1}, \ppre_t)$, then 

\[\E_{x_{t + 1}} \left[d_{TV}\left(\hat{q}_t(x_t | x_{t + 1}), \hat{p}_t(x_t | x_{t + 1})\right)\right] \leq e^{6B}\delta + e^{2B}\delta\]
\end{theorem}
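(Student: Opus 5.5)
The plan is to bound the two terms $\E[A_{x_{t+1}}]$ and $\E[B_{x_{t+1}}]$ supplied by the Exceedance Error Lemma (Lemma \ref{lem:excedence error}), using only the fact that $\hat v_t \in [-B,B]$. Both the conditional-baseline and the joint-baseline cases follow this route.

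\textbf{Second term.} Since $e^{\hat v_t} \le e^{B}$ in the numerator and $e^{\hat v_t}\ge e^{-B}$ in the denominator, we get $B_{x_{t+1}} \le e^{2B}\,\ppre_t(F\mid x_{t+1})$. Averaging over $x_{t+1}$ gives $e^{2B}\Pr[F]$. This is at most $e^{2B}\delta$ in both cases:
\begin{itemize}
\item pointwise for a conditional baseline;
\item on the joint law for a joint baseline, since $\Pr[F]$ is exactly the joint baseline probability of Definition \ref{def:joint baseline}.
\end{itemize}

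\textbf{First term, conditional case.} Here $\ppre_t(F\mid x_{t+1})\le\delta$ for every $x_{t+1}$, so $\ppre_t(E\mid x_{t+1})\ge 1-\delta$. For the denominator I need a lower bound on $e^{\hat v_t - B_{t+1}}$ on $E$. I will use $B_{t+1}\le B$ (clipping the baseline at $B$ changes nothing in the sampler, since acceptance is already capped by $\min\{1,\cdot\}$) together with $\hat v_t\ge -B$. This gives $e^{\hat v_t-B_{t+1}}\ge e^{-2B}$, so the denominator is at least $e^{-2B}(1-\delta)$. Hence $A_{x_{t+1}}\le e^{2B}\delta/(1-\delta)$ pointwise, which yields the first claim.

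\textbf{First term, joint case.} This is the main obstacle: $\ppre_t(F\mid x_{t+1})$ is no longer uniformly small, so $\ppre_t(E\mid x_{t+1})$ may be near $0$. In that situation the bound for $A_{x_{t+1}}$ from \eqref{eq:cond baseline tv} blows up. I would switch to the second bound \eqref{eq:joint baseline tv} of Lemma \ref{lem:generic error approx sampling}, which lower bounds $Z_q$ by $\min\{Z_p/(BM),1\}$ rather than by $Z_p\,p(E)/B$. The ingredients, in the notation of that lemma, are:
\begin{itemize}
\item the density ratio $p/\mu = e^{\hat v_t}/\E[e^{\hat v_t}]$ lies in $(e^{-2B},e^{2B})$, so $M=e^{2B}$;
\item $Z_p = \E_{\ppre_t}[e^{\hat v_t}] \ge e^{-B}$;
\item the envelope constant is $e^{B_{t+1}}\le e^{B}$.
\end{itemize}
Together these give $\max\{MB/Z_p,1\}\le e^{2B}e^{B}e^{B}=e^{4B}$ in the worst case, so $q(F)\le e^{4B}\mu(F)$ pointwise in $x_{t+1}$. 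A crude accounting, or using $Z_q\ge Z_p\,p(E)/B + \mu(F)$ with the cruder $p(E)\ge e^{-2B}\mu(E)$, yields a factor up to $e^{6B}$. Since the bound is now linear in $\mu(F)=\ppre_t(F\mid x_{t+1})$, averaging over $x_{t+1}\sim \hat q_{t+1}$ gives at most $e^{6B}\Pr[F]\le e^{6B}\delta$.

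Adding the $p(F)$ term, which is bounded by $e^{2B}\delta$ as in the second-term step, gives $e^{6B}\delta+e^{2B}\delta$. The key point, and where care is needed, is to use a bound on $Z_q$ that does not degrade as $\mu(E)\to 0$. This is what lets the per-$x_{t+1}$ bound stay linear in $\ppre_t(F\mid x_{t+1})$, so that only the averaged (joint) exceedance probability is needed.
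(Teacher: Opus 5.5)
Your proposal is correct and is essentially the paper's proof. The conditional case comes from \eqref{eq:cond baseline tv} using $\hat v_t, B_{t+1}\in[-B,B]$ and $\ppre_t(E\mid x_{t+1})\ge 1-\delta$. The joint case comes from \eqref{eq:joint baseline tv} with $M=e^{2B}$, which stays linear in $\ppre_t(F\mid x_{t+1})$, so only the averaged exceedance probability is needed.

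Your bookkeeping actually gives $e^{4B}$ in place of $e^{6B}$, which is stronger than required. The clipping step is not needed: whenever $\ppre_t(F\mid x_{t+1})>0$ you already have $B_{t+1}(x_{t+1})<\sup\hat v_t\le B$, and when $\ppre_t(F\mid x_{t+1})=0$ both error terms vanish.
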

\begin{proof}
The conditional baseline claim follows from \eqref{eq:cond baseline tv}, paired with $-B \leq \hat{v}_t(x_t), \hat{B}_{t + 1} \leq B$, together with $\ppre_t(F) \leq \delta$ and $\ppre_t(E) \geq 1 - \delta$ $\hat{q}_{t + 1}$-almost-surely.

The joint baseline claim follows from further taking $M = e^{2B}$ in \eqref{eq:joint baseline tv}
\end{proof}

The issue with the above Theorem is that it requires that we take $\delta \asymp e^{-c_0B}\delta_0$ in order to achieve TV-error of $\delta_0$. In general, such a $\delta$-Baseline would be so conservative that there would be no advantage over exact rejection sampling. This motivates fine-grained TV bounds.

\subsubsection{Fine-grained TV}

Until now, $B_{t + 1}$ has been a completely generic baseline function. As mentioned in the commentary beneath Lemma \ref{lem:tv mgf body}, we will shortly introduce the assumption that $B_{t + 1}$ is parameterized as $B_{t + 1}(x_{t + 1}) = b_{t + 1}(x_{t + 1}) + \tau_{t + 1}$, and that $B_{t + 1}$ is \emph{Chernoff-certified}, in the sense that running a Chernoff argument jointly over $x_t$ and $x_{t + 1}$ with some exponent $\lambda \geq 2$ is sufficient to prove that $B_{t + 1}$ is a baseline wrt $(q_{t + 1}, \ppre_t)$ at level $\delta$:

\[e^{-\lambda \tau_{t + 1}}\E_{t, t + 1}[e^{\lambda s_{t + 1}}] \leq \delta\]

where we define $s_{t + 1} := \hat{v}_t(x_t) - {b}_{t + 1}(x_{t + 1})$ and $\E_{t,t+1}[\cdot]$ denotes expectation under the relevant joint law of $(x_t,x_{t+1})$. For the analytic baselines of Section \ref{sec:mog theory}, such a condition can be verified directly. 

\textbf{However}, we also want our proofs to apply to \textbf{learned} baselines, which may only satisfy the Chernoff certificate assumption \textbf{with high probability over their training data}. In the setting of Section \ref{sec:lcb theory}, recall that $(\hat{\lambda}, \hat b)$ are chosen from random training data, and then we set $B_{t + 1}(x_{t + 1}) := \hat b(x_{t + 1}) + \hat{\tau}_{\hat{\lambda}, \hat{b}} + \epsilon_0$. For whichever $(\hat{\lambda}, \hat{b})$ are chosen by the optimizer, $\hat{\tau}_{\hat{\lambda}, \hat b} + \epsilon_0$ is an upper bound of $\tau_{\hat \lambda, \hat{b}}$ with probability $1 - \eta$ over the training data, where $\tau_{\hat{\lambda}, \hat{b}}$ is the slack that makes the Chernoff certificate hold with exponent $\hat{\lambda}$. Therefore, in the case of data-driven baselines, the Chernoff certificate holds \textbf{on a good event}, $\mathcal G$, with probability $1 - \eta$ over the independent probability space defined by the training process. This \emph{training} probability space is independent to the probability space defined by the \emph{inference} process.

All cases of interest are subsumed by allowing the Chernoff certificate to hold on an event,  $\mathcal{G}$, from an independent probability space, since we can take $\mathcal{G}$ to have measure 1 in the case of analytic baselines. Due to this (essentially optional) dependence on $\mathcal{G}$, the TV:MGF Lemma uses the notation $\hat b_{t + 1}$ and $\hat \tau_{t + 1}$, though it can equally well be read for fixed $b_{t + 1}$ and $\tau_{t + 1}$, where $\epsilon_0 = 0$ typically.

In the following, we define $\hat{s}_{t + 1} := \hat{v}_t(x_t) - \hat{b}_{t + 1} (x_{t + 1})$ and $M(\lambda) := \E_{x_{t + 1}}\E_{x_t}[e^{\lambda \hat{s}_{t + 1}}]$
where $\lambda \in \R$ and the distributions of $x_{t + 1}$ and $x_t$ are those of the Exceedance Error Lemma. 

In order to maintain notational convenience, we nest expectations with the understanding that the outer expectation is over $x_{t + 1} \sim q_{t + 1}$ and the inner expectation is with respect to $x_t \sim \ppre_t(\cdot | x_{t + 1})$

\begin{lemma}[TV:MGF Lemma]\label{lem:tv mgf apdx}
Fix $\lambda \geq 1$. Consider the setting of Lemma \ref{lem:excedence error}, and suppose additionally that $\hat B_{t + 1}(x_{t + 1}) = \hat{b}_{t + 1}(x_{t + 1}) + \hat{\tau}_{t + 1} + \epsilon_0$, where $\hat{b}_{t + 1}$ and $\hat \tau_{t + 1}$ are  quantities depending on external randomness, $Z$\footnote{wrt the training process}. Let $\tau_{t + 1}$ be the constant that fixes
\[e^{-\lambda  \tau_{t + 1}}\E_{x_t, x_{t + 1}}[e^{\lambda \hat s}] = \delta\]

On a good event $\mathcal{G}$ of $Z$, assume that $\hat{\tau}_{t + 1} \in \tau_{t + 1} \pm \epsilon_0$.


Assume that $\ppre(F | x_{t + 1}) \leq c$ a.s. Then, on $\mathcal{G}$, we have:


\[\E[A_{x_{t + 1}}] \leq \frac{\delta^{1 - 2/\lambda}}{(1 - c)^2}\left ({M(-\lambda)}{M(\lambda)}\right)^{\frac{1}{\lambda}}e^{2\epsilon_0}\]

If $\lambda \geq 2$, we may drop the $e^{2\epsilon_0}$ term. 

For any $\omega > 2$ (we do not necessarily need $\omega = \lambda$):

\[\E[B_{x_{t + 1}}] \leq \delta^{1 - \frac{2}{\omega
}}(M(-\omega)\left(M(\omega)\right)^{1/\omega}\]



\end{lemma}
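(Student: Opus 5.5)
The plan is to bound the two terms of the Exceedance Error Lemma (Lemma \ref{lem:excedence error}) separately, in each case by a conditional Jensen/Cauchy--Schwarz step followed by an outer Hölder step. Throughout, write $\tau'' := \hat\tau_{t+1}+\epsilon_0$, so that $\hat B_{t+1} = \hat b_{t+1} + \tau''$ and $\hat v_t - \hat B_{t+1} = \hat s - \tau''$. On $\mathcal G$ we have $\tau \le \tau'' \le \tau + 2\epsilon_0$. Write $P_F(x_{t+1}) := \ppre_t(F\mid x_{t+1}) \le c$. The key input is the joint exceedance bound
\[
\E_{x_{t+1}}[P_F] \le e^{-\lambda\tau''}M(\lambda) = \delta e^{-\lambda(\tau''-\tau)} \le \delta,
\]
which is Markov/Chernoff applied to $1\{\hat s>\tau''\}\le e^{\lambda(\hat s-\tau'')}$.

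\textbf{Term $A$.} Conditional on $x_{t+1}$, I first lower bound the denominator by Cauchy--Schwarz:
\[
(1-c)^2 \le \ppre_t(E)^2 = \big(\E[1_E e^{(\hat s-\tau'')/2}e^{-(\hat s-\tau'')/2}]\big)^2 \le \E[1_E e^{\hat s-\tau''}]\,\E[e^{\tau''-\hat s}].
\]
This gives $A_{x_{t+1}} \le (1-c)^{-2} e^{\tau''}\, P_F\, V$ with $V := \E[e^{-\hat s}\mid x_{t+1}]$. Taking the outer expectation, I apply Hölder with exponents $(\lambda/(\lambda-1),\lambda)$. Since $P_F\le 1$, this yields $\E[P_F V] \le (\E P_F)^{1-1/\lambda}(\E V^\lambda)^{1/\lambda}$, and conditional Jensen gives $\E V^\lambda \le M(-\lambda)$ because $\lambda\ge1$. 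Combining with $(\E P_F)^{1-1/\lambda}\le \delta^{1-1/\lambda}e^{-(\lambda-1)(\tau''-\tau)}$ and $e^{\tau''} = (M(\lambda)/\delta)^{1/\lambda}e^{\tau''-\tau}$, I get
\[
\E[A] \le \frac{\delta^{1-2/\lambda}}{(1-c)^2}\big(M(\lambda)M(-\lambda)\big)^{1/\lambda}\, e^{(2-\lambda)(\tau''-\tau)}.
\]
The last factor is at most $e^{2\epsilon_0}$ in general. When $\lambda\ge 2$ it is at most $1$, since $\tau''\ge\tau$, which is why the $e^{2\epsilon_0}$ can be dropped in that case.

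\textbf{Term $B$.} Since $\hat b_{t+1}(x_{t+1})$ is constant given $x_{t+1}$, it cancels from numerator and denominator, so $B_{x_{t+1}} = \E[1_F e^{\hat s}]/\E[e^{\hat s}]$. By Jensen, $1/\E[e^{\hat s}] \le \E[e^{-\hat s}]$, and conditional Jensen again gives $\E[e^{-\hat s}] \le \E[e^{-\omega\hat s}]^{1/\omega} =: W$. Conditional Hölder gives $\E[1_F e^{\hat s}] \le P_F^{1-1/\omega}\,U$ with $U := \E[e^{\omega\hat s}\mid x_{t+1}]^{1/\omega}$. For the outer expectation I use a three-factor Hölder with exponents $\big((1-2/\omega)^{-1},\omega,\omega\big)$, which requires $\omega>2$. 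The $P_F$ factor is raised to the power $(1-1/\omega)/(1-2/\omega)\ge1$, and since $P_F\le1$ it is bounded by $(\E P_F)^{1-2/\omega}\le \delta^{1-2/\omega}$. The other two factors give $M(\omega)^{1/\omega}M(-\omega)^{1/\omega}$, which is the claimed bound (no $\tau$ enters here).

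\textbf{Main obstacle.} The delicate step is term $A$. The naive approach of bounding the denominator by $e^{-2B}$ loses exponentially, as in Theorem \ref{thm:general baseline tv bound}. The Cauchy--Schwarz trick must instead pay only $(1-c)^{-2}$ and convert the denominator into an MGF. The Hölder split must then put exactly one factor of $\delta$ on $P_F$, while the factor $e^{\tau''}$ costs $\delta^{-1/\lambda}$. Tracking $\tau''-\tau$ in both places is what makes the $e^{2\epsilon_0}$ factor disappear for $\lambda\ge2$.
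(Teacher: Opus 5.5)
Your proof is correct. For the $B$ term it is the paper's argument with the missing justification filled in. The paper simply asserts the nested three-factor H\"older inequality $\E[\E[XY]\E[Z]]\le\E[X^\alpha]^{1/\alpha}\E[Y^\beta]^{1/\beta}\E[Z^\gamma]^{1/\gamma}$. Your conditional H\"older/Jensen step, followed by an outer H\"older that uses $P_F\le 1$ to absorb the exponent $(1-1/\omega)/(1-2/\omega)\ge 1$, is a proof of that inequality.

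For the $A$ term your route is genuinely more unified. The paper splits into two cases.
\begin{itemize}
\item For $\lambda\in[1,2]$ it does what you do: H\"older with exponents $(\lambda/(\lambda-1),\lambda)$, with Jensen on $1/z$ in place of your Cauchy--Schwarz, giving the same conditional bound. It then bounds $\E P_F\le\delta$ crudely and pays $e^{\tilde\tau_{t+1}}\le e^{\tau_{t+1}+2\epsilon_0}$.
\item For $\lambda\ge 2$ it switches mechanism. It inserts the pointwise bound $1\{F\}\le e^{(\lambda-1)(\hat s-\tilde\tau_{t+1})}$ inside the conditional expectation. This leaves the prefactor $e^{-(\lambda-2)\tilde\tau_{t+1}}$, which is nonincreasing in $\tilde\tau_{t+1}$, so only $\tilde\tau_{t+1}\ge\tau_{t+1}$ is needed. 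It then applies H\"older to $\E[e^{(\lambda-1)\hat s}]\,\E[e^{-\hat s}]$.
\end{itemize}

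You instead keep the sharp Chernoff estimate $\E P_F\le\delta e^{-\lambda(\tau''-\tau)}$ inside a single H\"older computation. This produces the explicit factor $e^{(2-\lambda)(\tau''-\tau)}$, valid for every $\lambda\ge 1$, and buys three things.
\begin{itemize}
\item It removes the case split.
\item It shows exactly why $\lambda=2$ is the threshold: the envelope constant $e^{\tau''}$ costs one power of $e^{\tau''-\tau}$, and the exceedance probability refunds $\lambda-1$ of them.
\item For $\lambda\in(1,2)$ it gives the slightly sharper factor $e^{2(2-\lambda)\epsilon_0}$ in place of $e^{2\epsilon_0}$.
\end{itemize}
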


\begin{proof}

Define $\tilde \tau_{t + 1} := \hat \tau_{t + 1} + \epsilon_0$ to be the effective slack.

On $\mathcal{G}$, $\tilde \tau_{t + 1} \geq \tau_{t + 1}$. Therefore, 

\[e^{-\lambda\tilde \tau_{t + 1}} \E_{x_t, x_{t + 1}}[e^{\lambda \hat s}] \leq \delta\]

Moreover, on $\mathcal{G}$, we have $\tilde\tau_{t + 1} \leq \tau_{t + 1} + 2\epsilon_0$

We now bound terms from the Exceedance Error Lemma. 

\textbf{The B part}

\begin{align*}
\E[B_{x_{t + 1}}] &= \E\left[\frac{\E[1\{F\}e^{\hat{v}_t(x_t)}]}{\E[e^{\hat{v}_t(x_t)}]}\right]\\ 
&= \E\left[\frac{\E[1\{F\}e^{\hat{v}_t(x_t) - \hat{b}_{t + 1}(x_{t + 1})}]}{\E[e^{\hat{v}_t(x_t) - \hat{b}_{t + 1}(x_{t + 1})}]}\right]  \tag{Multiply by $\frac{e^{-\hat{b}_{t + 1}(x_{t + 1})}}{e^{-\hat{b}_{t + 1}(x_{t + 1})}} = 1$ inside the outer expectation} \\
&= \E\left[\frac{\E[1\{F\}e^{\hat s)}]}{\E[e^{\hat s}]}\right] \\
&\leq \E\left[\E[1\{F\}e^{\hat s}] \E[e^{-\hat s}]\right]  \tag{Jensen's with $x \mapsto 1/x$}\end{align*}

For any three positive RVs $X, Y, Z > 0$ and any three Holder-compatible exponents, $\alpha, \beta, \gamma \geq 1$ with $\frac{1}{\alpha} + \frac{1}{\beta} + \frac{1}{\gamma} = 1$, we have that $\E[\E[XY]\E[Z]] \leq \E[X^\alpha]^{1/\alpha}\E[Y^\beta]^{1/\beta}\E[Z^\gamma]^{1/\gamma}$, where on the RHS the expectations are taken jointly over the two expectations on the left. Applying this fact with $X \gets 1\{F\}$, $Y \gets e^{\hat{s}}$, $Z \gets e^{-\hat{s}}$, we have

\[\E[B_{x_{t + 1}}] \leq \E[\E[1\{F\}^{\alpha}]]^{1/\alpha}\E[\E[e^{\beta \hat{s}}]]^{1/\beta}\E[\E[e^{-\gamma \hat{s}}]]^{1/\gamma}\]

Choose $\beta = \gamma = \omega > 2$. Then we're left with $\alpha^{-1} = 1 - \frac{2}{\omega}$. Then we get:

\[\E[B_{x_{t + 1}}] \leq \delta^{1 - 2/\omega}(M(\omega)M(-\omega))^{1/\omega}\]

\textbf{The A part}

\textbf{Case $\lambda \in [1, 2]$}

 Begin by applying Holder's inequality with dual norm pair $(\omega, \lambda)$ with $\omega = \frac{\lambda}{\lambda - 1}$ (recall $\lambda \geq 1)$:

\[\E[A_{x_{t + 1}}] \leq \E_{x_{t + 1}}[\E_{x_t}[1\{F\}]^{\omega}]^{\frac{1}{\omega}}\E[(\frac{1}{e^{-\tilde {\tau}_{t + 1}}\E[1\{E\}e^{\hat{s}_{t + 1}}]})^{\lambda}]^{\frac{1}{\lambda}}\]

The first term is bounded by $\delta^{1/\omega} = \delta^{1 - \frac{1}{\lambda}}$.  We can pull $e^{-\tilde{\tau}_{t + 1}}$ out of both expectations to get the simplified expression

\begin{align*}\delta^{1 - \frac{1}{\lambda}}e^{\tilde{\tau}_{t + 1}}\E\left[\left(\frac{1}{\E[1\{E\}e^{\hat{s}_{t + 1}}]}\right)^{\lambda}\right]^{\frac{1}{\lambda}}
&= \delta^{1 - \frac{1}{\lambda}}e^{\tilde{\tau}_{t + 1}}\E\left[\left(\frac{1}{\ppre(E)\E[e^{\hat s_{t + 1}} | E]}\right)^{\lambda}\right]^{\frac{1}{\lambda}}\\
&\leq \delta^{1 - \frac{1}{\lambda}}e^{\tilde{\tau}_{t + 1}}\E\left[\left(\frac{\E[e^{-\hat{s}_{t + 1}} | E]  }{\ppre(E)}\right)^\lambda\right]^{\frac{1}{\lambda}}  \tag{Jensen's inequality applied fo $\frac{1}{z}$, $z > 0$} \\
&\leq \delta^{1 - \frac{1}{\lambda}}e^{\tilde{\tau}_{t + 1}}\E\left[\left(\frac{\E[1\{E\}e^{-\lambda\hat{s}_{t + 1}}]}{\ppre(E)^{2\lambda}}\right)\right]^{\frac{1}{\lambda}} \tag{Jensen's inequality applied to $z^{\lambda}$; $\lambda \geq 1$} \\ 
&\leq \delta^{1 - \frac{1}{\lambda}}e^{\tilde{\tau}_{t + 1}}\E\left[\left(\frac{\E[e^{-\lambda\hat{s}_{t + 1}}]}{(1 - c)^{2\lambda}}\right)\right]^{\frac{1}{\lambda}} \\ 
&= \frac{\delta^{1 - \frac{1}{\lambda}}e^{\tilde{\tau}_{t + 1}}}{(1 - c)^2}M(-\lambda)^{\frac{1}{\lambda}}.\end{align*}

We can now recall that $e^{\tilde{\tau}_{t + 1}} \leq e^{\tau_{t + 1} + 2\epsilon_0} = e^{2\epsilon_0}\frac{\delta^{-1/\lambda}}{M(\lambda)^{-1/\lambda}}$ to get 

\[\E[A_{x_{t + 1}}] \leq \frac{\delta^{1 - 2/\lambda}}{(1 - c)^2}\left ({M(-\lambda)}{M(\lambda)}\right)^{\frac{1}{\lambda}}e^{2\epsilon_0}.\]

\textbf{Case $\lambda \geq 2$}

Starting from $A_{x_{t + 1}}$, and replacing $\ppre(F) = \E[1\{F\}]$ (inner expectation only), and pulling the $e^{\tilde{\tau}_{t + 1}}$ out of $e^{\hat{B}_{t + 1}}$, we have:

\begin{align*}\E[A_{x_{t + 1}}] &= \E\left[e^{\tilde\tau_{t + 1}}\frac{\E[1\{F\}]}{\E[1\{E\}e^{\hat{s}_{t + 1}}]}\right] \\ 
&\leq \E\left[e^{\tilde\tau_{t + 1}}\frac{\E[e^{(\lambda - 1)(\hat{s}_{t + 1}- \tilde{\tau}_{t+ 1})}]}{\E[1\{E\}e^{\hat{s}_{t + 1}}]}\right].\end{align*}

Where we have used $1\{\hat{s}_{t + 1} \geq \tilde \tau_{t + 1}\} \leq e^{(\lambda - 1)(\hat{s}_{t + 1} - \tilde\tau_{t + 1})}$ for $\lambda \geq 2> 1$. Continuing, we have that the above is equal to

\begin{align*}&= e^{-(\lambda - 2)\tilde{\tau}_{t + 1}}\E\left[\frac{\E[e^{(\lambda - 1)\hat{s}_{t + 1}}]}{\E[1\{E\}e^{\hat{s}_{t + 1}}]}\right] \\
&= e^{-(\lambda - 2)\tilde{\tau}_{t + 1}}\E\left[\frac{\E[e^{(\lambda - 1)\hat{s}_{t + 1}}]}{\E[e^{\hat{s}_{t + 1}} | E]\ppre(E)}\right] \\
&\leq e^{-(\lambda - 2)\tilde{\tau}_{t + 1}}\E\left[\frac{\E[e^{(\lambda - 1)\hat{s}_{t + 1}}]\E[e^{-\hat{s}_{t + 1}} | E]}{\ppre(E)}\right]  \tag{Jensen's inequality applied to $1/z$, $z > 0$} \\ 
&= e^{-(\lambda - 2)\tilde{\tau}_{t + 1}}\E\left[\frac{\E[e^{(\lambda - 1)\hat{s}_{t + 1}}]\E\left[e^{-\hat{s}_{t + 1}} 1\{E\}\right]}{\ppre(E)^2}\right] \tag{since $\E[\cdot | E]) = \E\left[\cdot \times 1\{E\}\right]/\Pr[E]$} \\
&\leq e^{-(\lambda - 2)\tilde{\tau}_{t + 1}}\E\left[\frac{\E\left[e^{(\lambda - 1)\hat{s}_{t + 1}}\right]\E\left[e^{-\hat{s}_{t + 1}}\right]}{\ppre(E)^2}\right] \tag{since $e^{-\hat{s}_{t + 1}} > 0$} \\
&\leq \underbrace{\frac{e^{-(\lambda - 2)\tilde{\tau}_{t + 1}}}{(1 - c)^2}}_{(*)}\underbrace{\E\left[\E\left[e^{(\lambda - 1)\hat{s}_{t + 1}}\right]\E\left[e^{-\hat{s}_{t + 1}}\right]\right]}_{(**)}.\end{align*}

Using Holder's inequality to pull the outer expectation (over $x_{t + 1}$) apart with any dual exponents $p, q \in [1, \infty]$ ($\frac{1}{p} + \frac{1}{q} = 1$):

\[(**)\leq \E\left[\E[e^{(\lambda - 1)\hat{s}_{t + 1}}]^p\right]^{1/p}\E\left[\E[e^{-\hat{s}_{t + 1}}]^{q}\right]^{1/q}. \]

Since $z^p$ and $z^q$ are convex funtions for $z > 0$, and $\E[e^{(\lambda - 1)\hat{s}_{t + 1}}], \E[e^{-\hat{s}_{t + 1}}] > 0$, we can apply Jensen's inequality to arrive at:

\[(**) \leq \E\left[\E[e^{p(\lambda - 1)\hat{s}_{t + 1}}]\right]^{1/p}\E\left[\E[e^{-q\hat{s}_{t + 1}}]\right]^{1/q} = M\left(p(\lambda - 1)\right)^{1/p}M(-q)^{1/q}.\]

Under the choice of dual exponents $p := \frac{\lambda}{\lambda - 1}$ and $q := \lambda$, we have 

\[(**) \leq M(\lambda)^{\frac{\lambda - 1}{\lambda}}M(-\lambda)^{\frac{1}{\lambda}}.\]

Moreover, 
\[e^{-(\lambda - 2)\tilde{\tau}_{t + 1}} = (e^{-\lambda \tilde{\tau}_{t + 1}})^{\frac{\lambda - 2}{\lambda}},\]

and as $z \mapsto z^{(\lambda - 2)/\lambda}$ is an increasing function (since $\lambda \geq 2$), we may use $e^{-\lambda \tilde{\tau}_{t + 1}} \leq \frac{\delta}{M(\lambda)}$ to conclude that 

\[(*) \leq \frac{1}{(1 - c)^2}\left(\frac{\delta}{M(\lambda)}\right)^{\frac{\lambda - 2}{\lambda}}.\]

Putting together the bounds on $(*)$ and $(**)$, we have 

\[\E[A_{x_{t + 1}}] \leq \frac{\delta ^{1 - \frac{2}{\lambda}}}{(1 - c)^2}\left( M(\lambda)M(-\lambda)\right)^{1/\lambda}.\]

\end{proof}

When combined with the Exceedance Error Lemma \ref{lem:excedence error}, Lemma \ref{lem:tv mgf apdx} yields the Lemma \ref{lem:tv mgf body} from the body of the paper. 

\subsection{Proofs for Section \ref{sec:properties of lcbs}}\label{apdx:lcb main thm proofs}

The results of Section \ref{sec:properties of lcbs} are corollaries of the previous results.

\begin{theorem}[Theorem \ref{thm:lcb tv bound}]
Let $\hat \lambda, \hat b$ be the ERMs of $\hat{J}$. Let $\hat B^{(\hat \lambda,\hat b)}_{t + 1} := \hat{b}_{t + 1}(x_{t + 1}) + \hat{\tau}_{\hat {\lambda}, \hat{b}_{t + 1}} + \epsilon_0$ be the associated LCB, and assume that it is also satisfies worst-case coverage at level $c \in (0, 1)$. Then, on the good training event from Proposition \ref{prop:lcb learning bounds} 
\[\textstyle \E_{x_{t + 1} \sim \hat{q}_{t + 1}}[d_{TV}(\hat{q}_{t}(\cdot | x_{t + 1}), \hat{p}(\cdot | x_{t + 1}))] \leq \frac{\delta}{(1 - c)^2} e^{J^* + 2\epsilon_0}\]
\end{theorem}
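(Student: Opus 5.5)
The plan is to avoid the factor $2$ by not summing the $A$ and $B$ terms of the Exceedance Error Lemma (Lemma \ref{lem:excedence error}). Instead, I would show that for baselined rejection sampling the TV distance is controlled by the $B$ term alone, and then bound that term with the $B$ part of the TV:MGF Lemma (Lemma \ref{lem:tv mgf apdx}).

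\textbf{Step 1: a one-sided TV bound.} Fix $x_{t+1}$ and use the notation of Lemma \ref{lem:generic error approx sampling}, with $\mu=\ppre_t(\cdot\mid x_{t+1})$ and envelope constant $e^{\hat B_{t+1}(x_{t+1})}$. Here $q=\hat q_t(\cdot\mid x_{t+1})$, $p=\hat p_t(\cdot\mid x_{t+1})$, and $F=F(x_{t+1})$ is the exceedance region.
\begin{itemize}
\item On $E=F^c$ the ratio is constant: $q/p=\kappa:=Z_p/(e^{\hat B_{t+1}}Z_q)$.
\item On $F$ we have $q/p=Z_p/(Z_q e^{\hat v_t(x_t)})$. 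Since $e^{\hat v_t}>e^{\hat B_{t+1}}$ there, this is strictly smaller than $\kappa$.
\end{itemize}
Because both densities integrate to one and $q$ underweights $F$ relative to $\kappa p$, we get $\kappa\ge 1$, so $q\ge p$ on $E$. Hence
\[
d_{TV}(q,p)=\int (p-q)^+ \le \int_F (p-q)^+ \le p(F) .
\]
The quantity $p(F)$ is exactly $B_{x_{t+1}}=\E[1\{F\}e^{\hat v_t}]/\E[e^{\hat v_t}]$. Taking expectation over $x_{t+1}\sim\hat q_{t+1}$ gives $\E[d_{TV}]\le \E[B_{x_{t+1}}]$.

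\textbf{Step 2: the $B$ part of the TV:MGF Lemma.} The Hölder argument for $\E[B_{x_{t+1}}]$ in Lemma \ref{lem:tv mgf apdx} only uses $\E\,\E[1\{F\}]\le\delta$, i.e.\ that $\hat B_{t+1}$ is a joint baseline at level $\delta$. This holds on the good training event. There Proposition \ref{prop:lcb learning bounds} gives $\hat\tau_{\hat\lambda,\hat b}+\epsilon_0\ge \tau_{\hat\lambda,\hat b}$. The Chernoff bound with exponent $\hat\lambda$ then gives
\[
\Pr[\hat v_t-\hat b_{t+1}>\hat\tau+\epsilon_0]\le e^{-\hat\lambda\tau_{\hat\lambda,\hat b}}M(\hat\lambda)=\delta .
\]
Choosing $\omega=\hat\lambda>2$ in that lemma yields
\[
\E[B_{x_{t+1}}]\le \delta^{1-2/\hat\lambda}\bigl(M(\hat\lambda)M(-\hat\lambda)\bigr)^{1/\hat\lambda}.
\]
Here the ERM ranges over $\lambda\in(2,\Lambda)$, which guarantees $\omega>2$.

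\textbf{Step 3: identify the objective.} By the definition of $J_t$,
\[
\delta^{1-2/\lambda}\bigl(M(\lambda)M(-\lambda)\bigr)^{1/\lambda}=\delta\, e^{J_t(\lambda,b)} .
\]
The right-hand side of Step 2 is therefore $\delta e^{J_t(\hat\lambda,\hat b_{t+1})}$. By Proposition \ref{prop:lcb learning bounds}, $J_t(\hat\lambda,\hat b_{t+1})\le J^*_t+2\epsilon_0$. Since $(1-c)^{-2}\ge 1$, this gives
\[
\E_{x_{t+1}\sim\hat q_{t+1}}\bigl[d_{TV}(\hat q_t(\cdot\mid x_{t+1}),\hat p(\cdot\mid x_{t+1}))\bigr]\le \frac{\delta}{(1-c)^2}e^{J^*+2\epsilon_0}.
\]

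\textbf{Main obstacle.} The delicate step is Step 1: verifying the sign $\kappa\ge1$ and arguing that the positive part of $p-q$ lives only on $F$. This is what replaces Lemma \ref{lem:const Radon}, whose bound $Q(F)+P(F)$ is what forces the factor $2$. A secondary point is to confirm that the $B$-part Hölder bound really needs only the joint-baseline property, and not the exact $\tau$ or the coverage assumption. Coverage then enters only harmlessly, through the factor $(1-c)^{-2}\ge1$.
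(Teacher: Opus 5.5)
Your proof is correct, and it takes a different and sharper route than the paper.

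\textbf{The paper's route.} The paper uses Lemma \ref{lem:const Radon}, i.e.\ $d_{TV}(q,p)\le q(F)+p(F)$. It therefore has to control both terms of the Exceedance Error Lemma.
\begin{itemize}
\item The $A_{x_{t+1}}$ term is where worst-case coverage enters, to lower-bound $\ppre_t(E\mid x_{t+1})\ge 1-c$. It is bounded by $\frac{\delta^{1-2/\lambda}}{(1-c)^2}(M(\lambda)M(-\lambda))^{1/\lambda}$.
\item The $B_{x_{t+1}}$ term is bounded by $\delta^{1-2/\lambda}(M(\lambda)M(-\lambda))^{1/\lambda}$.
\end{itemize}
Summing gives $\frac{2\delta}{(1-c)^2}e^{J(\hat\lambda,\hat b)}$. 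In fact the paper's written proof ends at $\frac{2\delta}{(1-c)^2}e^{J^*+2\epsilon_0}$, which is the main-body form of the theorem. So it does not literally deliver the factor-free bound in the statement you were given.

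\textbf{Your route.} You observe that $\hat q_t(\cdot\mid x_{t+1})$ is $\hat p_t(\cdot\mid x_{t+1})$ reweighted by $w/\E_{p}[w]$, where $w=\min\{1,e^{\hat B_{t+1}(x_{t+1})-\hat v_t}\}\le 1$ and $w=1$ on $E$. This forces $q\ge p$ on $E$, hence $d_{TV}\le p(F)=B_{x_{t+1}}$, and the $A$ term disappears entirely. The remaining H\"older step needs only two things, and you have both. First, the joint-baseline property, which you correctly derive on the good event from $\hat\tau_{\hat\lambda,\hat b}+\epsilon_0\ge\tau_{\hat\lambda,\hat b}$ and Chernoff with exponent $\hat\lambda$. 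Second, $\hat\lambda\ge 2$, the same restriction the paper relies on.

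\textbf{What each buys.} Your argument gives $\delta e^{J^*+2\epsilon_0}$, better than the paper's by a factor $2/(1-c)^2$. It also needs no coverage assumption for the TV guarantee; coverage remains necessary only for the proposal-complexity bound of Lemma \ref{lem:prop complexity generic}. The paper's two-term route buys mainly uniformity with its other results, such as Theorem \ref{thm:general baseline tv bound} and the conditional-baseline analysis, which reuse the same decomposition. Your one-sided argument would sharpen those as well.
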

\begin{proof}
We suppress time subscripts. 

By definition, the LCB only fits $\hat{\lambda} \geq 2$. We have also already assumed worst-case coverage at level $c$. Therefore, by the Proposition \ref{prop:lcb learning bounds}, the assumptions of the TV:MGF Lemma (Lemma \ref{lem:tv mgf apdx}) that $\hat{\tau}_{\hat{\lambda}, \hat{b}} \in \tau_{\hat{\lambda}, \hat{b}} \pm \epsilon_0$ is satisfied. 

The TV:MGF Lemma hence gives the bound

\[\E_{x_{t + 1}} \left[ d_{TV}\left(\hat{q}(x_t | x_{t + 1}), \hat{p}(x_t | x_{t + 1})\right) \right] \leq \frac{2\delta}{(1 - c)^2}e^{J(\hat{\lambda}, \hat{b})}\]

Moreover, the right-hand side is bounded using the other part of Proposition \ref{prop:lcb learning bounds}, which asserts that $J(\hat{\lambda}, \hat{b}) \leq J^* + 2\epsilon_0$, where $\hat{\lambda}$ and $\hat{b}$ are jointly ERMs. 

\[\frac{2\delta}{(1 - c)^2}e^{J(\hat{\lambda}, \hat{b})} \leq \frac{2\delta}{(1 - c)^2}e^{J^* + 2\epsilon_0}\]

Combining the inequalities completes the proof. 

\end{proof}

We now prove Corollary \ref{cor:subgaussian value corollary}'s bound on TV distance for conditionally sub-Gaussian values. The primary case is when $\delta$ is small relative to the intrinsic variance of the value, which gives the $\tilde O(\delta)$ bound quoted in Corollary \ref{cor:subgaussian val corollary apdx}. However, we additionally derive a $\tilde O(\sqrt{\delta})$ that holds for arbitrarily large $\delta$. 
\begin{corollary}[Corollary \ref{cor:subgaussian value corollary}]\label{cor:subgaussian val corollary apdx}
Let $\hat B^{(\hat \lambda,\hat b)}_{t + 1} := \hat{b}_{t + 1}(x_{t + 1}) + \hat{\tau}_{\hat {\lambda}, \hat{b}_{t + 1}}$ be the associated LCB, and assume that it is also satisfies worst-case coverage at level $c \in (0, 1)$. Suppose there exists a constant $\sigma_t^2$ so that $\hat{v}_t(x_t)$ is $\sigma_t^2$ conditionally subgaussian given $x_{t + 1}$, for each $x_{t + 1}$. Assume that $\Lambda \geq \max\{2, \frac{\sqrt{\log{\frac{1}{\delta}}}}{\sigma_t}\}$. Further assume that $\E[\hat{v}_t(x_t) | x_{t + 1}] \in H$. Assume the good training event from Proposition \ref{prop:lcb learning bounds}. Then, if $\delta \leq e^{-4\sigma_t^2}$:
\begin{align*}
\E_{x_{t + 1} \sim \hat{q}_{t + 1}}&[d_{TV}(\hat{q}_{t}(\cdot | x_{t + 1}), \hat{p}(\cdot | x_{t + 1}))] \leq \textstyle \frac{2\delta}{(1 - c)^2} e^{\sigma_t\sqrt{\log\frac{1}{\delta}} + 2\epsilon_0}.
\end{align*}

If $\delta > e^{-4\sigma_t^2}$, then 
\[\E_{x_{t + 1} \sim \hat{q}_{t + 1}}[d_{TV}(\hat{q}_{t}(\cdot | x_{t + 1}), \hat{p}(\cdot | x_{t + 1}))] 
\leq \frac{2\sqrt{\delta}}{(1 - c)^2}e^{2\sigma^2_t + \epsilon_0}\]

\end{corollary}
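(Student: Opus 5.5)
The plan is to feed a concrete witness pair $(\lambda,b)$ into the LCB objective, bound $J$ at that witness using the sub-Gaussian MGF, and then invoke Theorem \ref{thm:lcb tv bound} (equivalently, the TV:MGF Lemma \ref{lem:tv mgf apdx} combined with the Exceedance Error Lemma \ref{lem:excedence error} and Proposition \ref{prop:lcb learning bounds}). Take $b^*(x_{t+1}) := \E[\hat v_t(x_t)\mid x_{t+1}]$, which lies in $H$ by assumption. Then $s := \hat v_t(x_t) - b^*(x_{t+1})$ is conditionally centered and $\sigma_t^2$-sub-Gaussian. Hence for every real $\lambda$ we have $\E[e^{\lambda s}\mid x_{t+1}] \le e^{\lambda^2\sigma_t^2/2}$. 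Averaging over $x_{t+1}\sim \hat q_{t+1}$ gives $M(\pm\lambda)\le e^{\lambda^2\sigma_t^2/2}$. This is the key point: the conditional sub-Gaussian bound survives any outer distribution, so covariate shift from $\hat q_{t+1}$ costs nothing. Consequently
\[ J_t(\lambda,b^*) \le \lambda\sigma_t^2 + \tfrac{2}{\lambda}\log\tfrac1\delta . \]

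In the regime $\delta\le e^{-4\sigma_t^2}$, choose $\lambda^* = \sqrt{\log(1/\delta)}/\sigma_t$. The condition on $\delta$ guarantees $\lambda^*\ge 2$, so $\lambda^*$ is admissible for the TV:MGF Lemma. The hypothesis on $\Lambda$ ensures $\lambda^*\le\Lambda$, so the witness is feasible for the optimization. This gives $J^*_t \le J_t(\lambda^*,b^*) \le 3\sigma_t\sqrt{\log(1/\delta)}$. Re-optimizing the constant, by minimizing $\lambda\sigma_t^2 + 2\log(1/\delta)/\lambda$ exactly at $\lambda=\sqrt{2\log(1/\delta)}/\sigma_t$, yields $2\sqrt2\,\sigma_t\sqrt{\log(1/\delta)}$. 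Plugging this into Theorem \ref{thm:lcb tv bound}, $\E[d_{TV}]\le \frac{2\delta}{(1-c)^2}e^{J^*+2\epsilon_0}$, gives the first claim up to the constant in front of $\sigma_t\sqrt{\log(1/\delta)}$.

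\textbf{Main obstacle.} Matching the exact constant claimed in the statement is where I expect the work to be. I would return to the TV:MGF Lemma and split the exponents: take $\omega$ separately for the $B$-part, and in the $A$-part use the sharper intermediate bound $M(\lambda)^{(\lambda-1)/\lambda}M(-1)$. The point is to avoid paying $M(-\lambda)^{1/\lambda}$, since $M(-1)\le e^{\sigma_t^2/2}$ is much cheaper. With this refinement, the exponent from the $A$-part becomes roughly $\frac{2}{\lambda}\log\frac1\delta$ minus savings plus $\lambda\sigma_t^2/2$, which at $\lambda^*$ gives $\sigma_t\sqrt{\log(1/\delta)}$ plus lower-order terms absorbed using $\delta\le e^{-4\sigma_t^2}$. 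If this fails, I would accept a larger absolute constant, consistent with the body's statement carrying $2\sigma_t$.

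In the regime $\delta> e^{-4\sigma_t^2}$, the unconstrained optimizer is below $2$. So I would instead take the boundary choice $\lambda=2$, which is feasible since $\Lambda\ge 2$. Then $J_t(2,b^*)\le 2\sigma_t^2 + \log(1/\delta)$, and Theorem \ref{thm:lcb tv bound} gives $\frac{2\delta}{(1-c)^2}\cdot\frac1\delta\, e^{2\sigma_t^2+2\epsilon_0}$. This is too weak as it stands. To recover the claimed $\sqrt\delta$ rate, I would instead use the TV:MGF Lemma with $\lambda=2$ in the $A$-part, giving $\delta^{0}(M(2)M(-2))^{1/2}$, and with $\omega=4$ in the $B$-part, giving $\delta^{1/2}(M(4)M(-4))^{1/4}$. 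I would then reexamine the $A$-part bound using $e^{-(\lambda-2)\tilde\tau}$ with $\lambda=4$, which yields $\delta^{1/2}(M(4)M(-4))^{1/4}\le\delta^{1/2}e^{4\sigma_t^2}$. Tracking the constants, with the $\epsilon_0$ slack entering once, then gives the second bound.
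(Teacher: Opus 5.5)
Your main line is exactly the paper's proof: the witness $b^*=\E[\hat v_t(x_t)\mid x_{t+1}]\in H$, the conditional sub-Gaussian bounds $M(\pm\lambda)\le e^{\lambda^2\sigma_t^2/2}$ that survive averaging over $\hat q_{t+1}$, the choice $\lambda^*=\sqrt{\log(1/\delta)}/\sigma_t\in[2,\Lambda]$, and then Theorem~\ref{thm:lcb tv bound}. Your bound $J_t(\lambda,b^*)\le\lambda\sigma_t^2+\frac{2}{\lambda}\log\frac{1}{\delta}$ is the correct one. The paper's proof makes two slips. At this step it writes $\frac{1}{\lambda}\log\frac{1}{\delta}$, dropping the factor $2$ in $J_t$, and so obtains $J^*\le 2\sigma_t\sqrt{\log(1/\delta)}$. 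Then, when invoking the theorem, it writes $e^{\sigma_t\sqrt{\log(1/\delta)}}$ instead of $e^{2\sigma_t\sqrt{\log(1/\delta)}}$. The $\sqrt{\delta}$ in the second case comes from the same dropped factor. So the discrepancy you found is real, and nothing in the paper resolves it. Under the stated hypotheses this route gives $\frac{2\delta}{(1-c)^2}e^{3\sigma_t\sqrt{\log(1/\delta)}+2\epsilon_0}$. Your $2\sqrt{2}\,\sigma_t$ additionally needs $\Lambda\ge\sqrt{2\log(1/\delta)}/\sigma_t$, which is not assumed, and the body's $2\sigma_t$ inherits the first slip. The second case, by contrast, needs no argument: if $\delta>e^{-4\sigma_t^2}$ then $\sqrt{\delta}\,e^{2\sigma_t^2}>1$, so the right-hand side exceeds $2\ge d_{TV}$.

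The genuine gap is your ``main obstacle'' step, and it cannot be repaired.

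(i) The TV:MGF lemma must be applied to the learned pair $(\hat\lambda,\hat b_{t+1})$, and about that pair you only know $J_t(\hat\lambda,\hat b_{t+1})\le J^*+2\epsilon_0$. Write $\hat s=\hat v_t-\hat b_{t+1}$. The $A$-part averages the product $\E[e^{(\lambda-1)\hat s}\mid x_{t+1}]\,\E[e^{-\hat s}\mid x_{t+1}]$ over $x_{t+1}$. Trading $M(-\lambda)^{1/\lambda}$ for a cheap $e^{\sigma_t^2/2}$ factor therefore needs the pointwise bound $\E[e^{-\hat s}\mid x_{t+1}]\le e^{\sigma_t^2/2}$. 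That bound holds for the witness $b^*$ but not for $\hat b_{t+1}$.

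(ii) The exponent in the $A$-part is not free. The slack $\hat\tau+\epsilon_0$ is Chernoff-certified only at $\hat\lambda$, so you cannot re-run that argument at $\lambda^*$, or at $\lambda=4$ as in your second-case manoeuvre. That manoeuvre also needs $M(\pm 4)$ for $\hat b_{t+1}$, which (i) leaves uncontrolled.

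(iii) Even granting $\hat b_{t+1}=b^*$, the refined bound is $\frac{\delta^{1-2/\lambda}}{(1-c)^2}M(\lambda)^{1/\lambda}e^{\sigma_t^2/2}$. After the sub-Gaussian estimate its exponent is $\frac{2}{\lambda}\log\frac{1}{\delta}+\frac{\lambda\sigma_t^2}{2}$, which is at least $2\sigma_t\sqrt{\log(1/\delta)}$ for every $\lambda$, not $\sigma_t\sqrt{\log(1/\delta)}$.

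More decisively, the first claim is false. Fix $x_{t+1}$, let $H$ be the constants, and let $\hat v_t(x_t)$ be $N(0,\sigma_t^2)$ truncated symmetrically far out; this is still $\sigma_t^2$-sub-Gaussian by Anderson's inequality. Set $L=\log(1/\delta)$ and take $\Lambda$ large. As $m\to\infty$ (so $\epsilon_0\to0$), the learned LCB tends to $\tau=\sigma_t\sqrt{2L}$, attained at $\hat\lambda=\sqrt{2L}/\sigma_t$, and coverage holds with $c=\delta$. The sampler's law is proportional to $\min(e^{\hat v_t},e^{\tau})\,\ppre_t$. With $E=\{\hat v_t\le\tau\}$ this gives $d_{TV}\ge\hat q_t(E)-\hat p_t(E)\ge\hat p_t(E)\,\E[(e^{\hat v_t}-e^{\tau})_+]/\E[e^{\hat v_t}]\approx\frac{\sigma_t}{2\sqrt{2\pi}\,L}\,\delta\,e^{\sqrt{2}\,\sigma_t\sqrt{L}-\sigma_t^2/2}$. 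For $\sigma_t=2$ and $\delta=e^{-200}$ this is about $18$ times $\frac{2\delta}{(1-c)^2}e^{\sigma_t\sqrt{L}}$. So you should state and prove the $3\sigma_t$ version (or $2\sqrt{2}\,\sigma_t$ with larger $\Lambda$) rather than look for a sharper TV estimate.
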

\begin{proof}
Again, we suppress timestep subscripts in our notation. Picking up from the previous theorem, it suffices to bound $J^*$ in this special case. Recall 
\[J(\lambda, b) = \frac{1}{\lambda}\log \E_{x_{t + 1}}\E_{x_t}[e^{\lambda (\hat{v}(x_t) - b(x_{t + 1})}]\frac{1}{\lambda} + \log \E_{x_{t + 1}}\E_{x_t}[e^{-\lambda (\hat{v}(x_t) - b(x_{t + 1})}] + \frac{2}{\lambda}\log1/\delta\]

By the subgaussianity assumption, for any fixed $x_{t + 1}$ and $\lambda$, when we let $b^*(x_{t + 1}) = \E[\hat{v}(x_t) |x_{t + 1}]$

\[\E_{x_t}[e^{\lambda (\hat{v}(x_t) - b(x_{t + 1})}] \leq e^{\lambda^2\sigma_t^2/2}\]
\[\E_{x_t}[e^{-\lambda (\hat{v}(x_t) - b(x_{t + 1})}] \leq e^{\lambda^2\sigma_t^2/2}\]

So that, once we take the outer expectation over $x_{t + 1}$, and logs

\[J(\lambda, b^*) \leq \lambda\sigma_t^2 + \frac{\log{\frac{1}{\delta}}}{\lambda}\]

The minimizing value of $\lambda$ subject to the constraint $\lambda \geq 2$ is $\lambda^* := \max\{\frac{\sqrt{\log\frac{1}{\delta}}}{\sigma_t}, 2\} $. Notice $\lambda^*$ is feasible since $\lambda^* \leq \Lambda$. This gives the bound 

\[J(\lambda^*, b^*) \leq \begin{cases}
2\sigma_t\sqrt{\log\frac{1}{\delta}} \quad \mathrm{ if  } \ \ \  \delta \leq e^{-4\sigma_t^2} \\ 
2\sigma_t^2 + \log\sqrt\frac{1}{\delta} \quad \mathrm{otherwise }
\end{cases}\]

Therefore, when $\delta \leq e^{-4\sigma_t^2}$, we have the bound stated in Corollary \ref{cor:subgaussian value corollary} from Theorem \ref{thm:lcb tv bound}:
\begin{align*}
\E_{x_{t + 1} \sim \hat{q}_{t + 1}}[d_{TV}(\hat{q}_{t}(\cdot | x_{t + 1}), \hat{p}(\cdot | x_{t + 1}))] 
\leq 2\frac{\delta}{(1 - c)^2} e^{\sigma_t\sqrt{\log\frac{1}{\delta}} + 2\epsilon_0}.
\end{align*}

For large $\delta > e^{-4\sigma^2}$, we have the bound 

\[\E_{x_{t + 1} \sim \hat{q}_{t + 1}}[d_{TV}(\hat{q}_{t}(\cdot | x_{t + 1}), \hat{p}(\cdot | x_{t + 1}))] 
\leq \frac{2\sqrt{\delta}}{(1 - c)^2}e^{2\sigma^2_t + \epsilon_0}\]

\end{proof}

\section{Mixture of Gaussian Proofs}\label{apdx:MoG}

Below, we prove a result where the mixture does not have uniform covariance: $p_m \sim \sum_k \pi_k N(m^k, \Sigma^k)$, though we only use the result for $\Sigma^k = \Sigma$

\begin{proposition}\label{prop:apdx mog dist facts}
Fix a timestep $t \in [T]$. Under the above model, fix a component $k \in [K]$. Then $x_0 | x_t, k \sim N(\mu^k_{0 | t}(x_t),\Sigma^k_{0 | t})$
with 
\[\mu^k_{0 | t}(x_t) = A^k_tx_t + b^k_t\]
where analytical expressions for $\Sigma^k_{0 | t}$, $A^k_t$ and $b^k_t$ are boxed in the proof. 

Furthermore
\[\gamma^k_t(x_t) := \ppre(k | x_t) = \frac{\pi_kN(m^k_t, \Sigma^k_t)}{\sum_j \pi_j N(m^j_t, \Sigma^j_t)}(x_t)\]
where analytic expressions for $m^k_t$ and $\Sigma^k_t$ are boxed in the proof.

As a consequence, we may express
\[\ppre(x_0 | x_t) = \sum_{k = 1}^K \gamma^k_t(x_t)N(\mu^k_{0 | t}(x_t), \Sigma^k_{0 | t})\]
\end{proposition}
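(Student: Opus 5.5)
The proof is a linear-Gaussian computation carried out one mixture component at a time. The plan is to condition on the latent label $k$ throughout, compute the joint Gaussian law of $(x_0,x_t)$ given $k$, and then read off both the conditional law of $x_0$ given $(x_t,k)$ and the marginal law of $x_t$ given $k$ from standard Gaussian conditioning. The posterior over $k$ then follows from Bayes' rule.

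\textbf{Step 1: the forward marginal.} Compose the forward kernels $p_m(x_{s+1}\mid x_s)=N(\sqrt{\alpha_s}x_s,\beta_s I)$ for $s=0,\dots,t-1$. By induction this gives $x_t = \sqrt{\bar\alpha_t}\,x_0 + \sqrt{1-\bar\alpha_t}\,\varepsilon$, where $\bar\alpha_t:=\prod_{s<t}\alpha_s$ and $\varepsilon\sim N(0,I)$ is independent of $(x_0,k)$. The inductive step only uses $\alpha_s(1-\bar\alpha_s)+\beta_s = 1-\bar\alpha_{s+1}$.

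\textbf{Step 2: the joint law given $k$.} Given $k$, the pair $(x_0,x_t)$ is jointly Gaussian. Its moments are
\[
\E[x_0\mid k]=m^k,\qquad \E[x_t\mid k]=\sqrt{\bar\alpha_t}\,m^k=:m^k_t,
\]
\[
\Sigma^k_t:=\mathrm{Cov}(x_t\mid k)=\bar\alpha_t\Sigma^k+(1-\bar\alpha_t)I,\qquad \mathrm{Cov}(x_0,x_t\mid k)=\sqrt{\bar\alpha_t}\,\Sigma^k.
\]
Integrating out $x_0$ then gives the marginal $x_t\mid k\sim N(m^k_t,\Sigma^k_t)$. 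These two quantities are boxed.

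\textbf{Step 3: conditional law of $x_0$.} Gaussian conditioning gives $x_0\mid x_t,k\sim N\bigl(\mu^k_{0|t}(x_t),\Sigma^k_{0|t}\bigr)$ with
\[
A^k_t=\sqrt{\bar\alpha_t}\,\Sigma^k(\Sigma^k_t)^{-1},\qquad b^k_t=(I-\sqrt{\bar\alpha_t}A^k_t)\,m^k,\qquad \Sigma^k_{0|t}=\Sigma^k-\bar\alpha_t\Sigma^k(\Sigma^k_t)^{-1}\Sigma^k.
\]
These are boxed as well. The inverse $(\Sigma^k_t)^{-1}$ exists because $\Sigma^k_t\succeq(1-\bar\alpha_t)I\succ0$ for $t\ge1$.

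\textbf{Step 4: posterior over components and the mixture form.} Bayes' rule with prior $\pi_k$ and likelihood $N(m^k_t,\Sigma^k_t)(x_t)$ gives the stated formula for $\gamma^k_t(x_t)$. Finally, $\ppre(x_0\mid x_t)=\sum_k \ppre(k\mid x_t)\,\ppre(x_0\mid x_t,k)$ yields the mixture representation. Here $\ppre=p_m$ by the perfect-inversion assumption, so reverse conditionals coincide with the conditionals of the forward joint law.

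\textbf{Expected difficulty.} Nothing here is deep. The only step needing care is the bookkeeping in Step 1: matching the paper's indexing of $\alpha_t$ and $\beta_t$ when accumulating $\bar\alpha_t$, and checking the invertibility of $\Sigma^k_t$ at the boundary case $t=0$, which is trivial since there $x_0=x_t$.
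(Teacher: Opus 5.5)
Your proposal is correct and follows essentially the same route as the paper. You condition on $k$, write down the joint Gaussian law of $(x_0,x_t)$ under the forward chain $k\to x_0\to x_t$, and apply Gaussian conditioning to get the same boxed expressions for $m^k_t,\Sigma^k_t,A^k_t,b^k_t,\Sigma^k_{0|t}$; you then finish with Bayes' rule and the law of total probability. Your explicit derivation of $\bar\alpha_t$ and your remark that perfect inversion gives $\ppre(x_0\mid x_t)=p_m(x_0\mid x_t)$ fill in steps the paper leaves implicit.
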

\begin{proof}
Suppose the target is 
\[p_m(x) = \sum_{k = 1}^M \pi_k N(m^k, \Sigma^k)\]

In the forward process, we have the Markov chain $k \rightarrow x_0 \rightarrow x_t$, meaning that

\[x_t | x_0, k = \sqrt{\bar\alpha_t}x_0 + N(0, \sigma_t^2I)\] with $\sigma_t^2 = 1 - \bar\alpha_t$.

By Gaussian conditioning laws, we have that the joint conditioned on $k$ is

\[[x_0, x_t] | k \sim N([m^k, m^k_t], \begin{pmatrix}
    \Sigma^k & \sqrt{\bar\alpha_t}\Sigma^k \\ \sqrt {\bar\alpha_t}\Sigma^k  & \Sigma^k_t
\end{pmatrix})\]
with \fbox{$m^k_t = \sqrt{\bar\alpha_t} m^k$}, and  \fbox{$\Sigma^k_t = \bar\alpha_t\Sigma^k + \sigma_t^2I$}.

and so we have by conditioning

\[x_0 | x_t, k \sim N(\mu^k_{0 | t}(x_t) , \Sigma^k_{0 | t})\]
with  \fbox{$\Sigma^k_{0 | t} = \Sigma^k - {\bar\alpha_t}\Sigma^k(\Sigma^k_{t})^{-1}\Sigma^k$} and\fbox{$\mu^k_{0 | t}(x_t) = m^k + \sqrt{\bar\alpha_t}\Sigma^k(\Sigma^{k}_t)^{-1}(x_t - \sqrt{\bar\alpha_t}m^k)$}

which we may re-express as \[\mu^k_{0 | t} =  A^t_kx_t + b^k_t\]

setting \fbox{$A^k_t = \sqrt{\bar\alpha_t}\Sigma^k (\Sigma^k_t)^{-1}$} and\fbox{$b^t_k = m^k - \sqrt{\bar\alpha_t}A^k_t m^k$}

To calculate responsibilities $p_m(k | x_t)$, we apply Bayes rule ($p_m(x_t | k)$ can be read off from the display for $p_m(x_t | x_0, k)$)
\[\gamma^k_t(x_t) := p(k | x_t) = \frac{\pi_kN(m^k_t, \Sigma^k_t)}{\sum_j \pi_j N(m^j_t, \Sigma^j_t)}(x_t)\].

This finally allows us to express
\[p(x_0 | x_t) = \sum_{k = 1}^M \gamma^k_t(x_t)N(\mu^k_{0 | t}(x_t), \Sigma^k_{0 | t})\]

\end{proof}

We now specialize to the case where $\Sigma^k = \Sigma \ \ \ \forall k$, meaning we drop the $k$ superscripts from the notation of the previous proposition wherever it makes no difference. Notice that $\Sigma^k_{0 | t}$ and $A^k_t$ from the previous proposition then become constant over $k$, while $\mu^k_{0 | t}(x_t)$ still depends on $k$. To reflect this, we notate $\Sigma_{0 | t} := \Sigma^k_{0 | t} = \mathrm {Cov}(x_0 | x_t, k)$ and $\Sigma_t := \Sigma^k_t = \mathrm {Cov}(x_t | k)$. Similarly, $\mathrm {Cov}(x_t | x_{t + 1}, k)$ is constant in $k$, and is therefore notated as $\Sigma_{t | t + 1}$. 

To begin with, we compute an analytically tractable baseline function, which upper bounds $\E[v_t(x_t) | x_{t+1}]$. We will subsequently show that the conditional expectation itself (which may not be tractable), can also be used to derive a baseline.

\begin{proposition}[Analytic baseline]\label{prop:mog analytic baseline}
Let
\[
\phi_t(x_t)
:=\log\sum_{k=1}^K \exp\Big(
r(\mu^k_{0\mid t}(x_t))+\frac{L_r^2}{2}\|\Sigma_{0\mid t}\|+\log\gamma_t^k(x_t)
\Big).
\]

Then $v_t(x_t) \leq \phi_t(x_t)$ and moreover, with probability $1 - \delta$ over samples $x_t \sim \ppre(x_t| x_{t + 1})$, we have
\[v_t(x_t) \leq \max_k\phi_t(\mu^k_{t | t + 1}(x_{t + 1})) + \left( 2 \|\Sigma_t^{-1}\|\max_k ( \|m^k\|) + L_r  \|A_t\|\right) \sqrt{2\tilde \beta_{t + 1}\log \frac{1}{\delta}}\]

where $\mu_{t | t + 1}(x_{t + 1}) := \E[x_t | x_{t + 1}]$ and and $\tilde \beta_{t + 1} := \|\Sigma_{t | t + 1}\|$ are explicitly computable.
\end{proposition}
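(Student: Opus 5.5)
The plan has two parts: a deterministic pointwise bound $v_t \le \phi_t$, then a concentration step showing $\phi_t(x_t)$ cannot exceed its value at the predicted means by much when $x_t \sim \ppre(\cdot\mid x_{t+1})$.

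\textbf{Step 1: the pointwise bound.} By Proposition \ref{prop:apdx mog dist facts}, $\ppre(x_0\mid x_t)$ is a $\gamma_t^k(x_t)$-weighted mixture of $N(\mu^k_{0\mid t}(x_t),\Sigma_{0\mid t})$. Hence
\[
e^{v_t(x_t)}=\sum_k \gamma^k_t(x_t)\,\E_{Z\sim N(0,\Sigma_{0\mid t})}\big[e^{r(\mu^k_{0\mid t}(x_t)+Z)}\big].
\]
Since $r$ is $L_r$-Lipschitz, $r(\mu+Z)$ is $L_r^2\|\Sigma_{0\mid t}\|$-sub-Gaussian around its mean by Gaussian concentration of Lipschitz functions. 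This gives $\E e^{r(\mu+Z)}\le e^{\E r(\mu+Z)+L_r^2\|\Sigma_{0\mid t}\|/2}$. To match the stated form we need $\E r(\mu+Z)\le r(\mu)$, which is not automatic. I would instead use the MGF bound directly around $r(\mu)$, if necessary absorbing $|\E r(\mu+Z)-r(\mu)|\le L_r\sqrt{\mathrm{tr}\,\Sigma_{0\mid t}}$ into constants. Writing $\gamma=e^{\log\gamma}$ and taking logs then yields $v_t\le\phi_t$.

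\textbf{Step 2: Lipschitz control of $\phi_t$.} Since $\phi_t$ is a log-sum-exp, its gradient is a convex combination, with softmax weights, of the gradients of the inner terms. There are two kinds of inner gradient.
\begin{itemize}
\item The term $r(\mu^k_{0\mid t}(x_t))$ is $L_r\|A_t\|$-Lipschitz, because $\mu^k_{0\mid t}$ is affine with linear part $A_t$.
\item For $\log\gamma^k_t(x_t)$, the common covariance makes the quadratic terms cancel. What remains is $\log\gamma^k_t=(\Sigma_t^{-1}m^k_t)^\top x_t-\log\sum_j\exp\big((\Sigma_t^{-1}m^j_t)^\top x_t+c_j\big)+c_k$, whose gradient is $\Sigma_t^{-1}(m^k_t-\sum_j\gamma^j m^j_t)$. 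Its norm is at most $2\|\Sigma_t^{-1}\|\max_k\|m^k\|$, using $\sqrt{\bar\alpha_t}\le1$.
\end{itemize}
So $\phi_t$ is $L$-Lipschitz with $L=2\|\Sigma_t^{-1}\|\max_k\|m^k\|+L_r\|A_t\|$.

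\textbf{Step 3: concentration.} Conditional on $x_{t+1}$, $x_t$ is itself a Gaussian mixture with components $N(\mu^k_{t\mid t+1}(x_{t+1}),\Sigma_{t\mid t+1})$, by the same Gaussian conditioning as in Proposition \ref{prop:apdx mog dist facts} applied to the pair $(x_t,x_{t+1})$ given $k$. Within component $k$, Gaussian concentration for the $L$-Lipschitz $\phi_t$ gives, with probability $1-\delta$,
\[
\phi_t(x_t)\le \phi_t(\mu^k_{t\mid t+1})+L\sqrt{2\tilde\beta_{t+1}\log(1/\delta)},
\]
centering at $\phi_t(\mu^k)$ directly via $\|x_t-\mu^k\|$. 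Bounding $\phi_t(\mu^k)$ by the max over $k$ and mixing over components preserves the $1-\delta$ guarantee, and combining with Step 1 finishes the proof.

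\textbf{Main obstacle.} Centering at $\phi_t(\mu^k)$ through $\|x_t-\mu^k\|$ produces a dimension-dependent $\sqrt{\mathrm{tr}}$ term rather than the clean $\sqrt{2\tilde\beta\log(1/\delta)}$. To avoid it, I would center at $\E\phi_t$ and bound $\E\phi_t(x_t)-\phi_t(\mu^k)$ separately, using smoothness of log-sum-exp and accepting lower-order terms. This is the step where the stated constant is most delicate, and the same mean-versus-point issue arises in Step 1 for $r$.
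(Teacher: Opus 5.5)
Your three steps follow the paper's route exactly:
\begin{itemize}
\item a per-component Gaussian MGF bound giving $v_t\le\phi_t$;
\item the Lipschitz constant of $\phi_t$ via log-sum-exp and the responsibility gradient $\Sigma_t^{-1}(m^k_t-\sum_j\gamma^j_t m^j_t)$, where your Step~2 matches the paper line by line;
\item per-component concentration under $\ppre(\cdot\mid x_{t+1},k)$, then a max over $k$ and mixing over components.
\end{itemize}
You have also found the weak point. Steps~1 and~3 need inequalities centered at a \emph{point}, namely $r(\mu^k_{0\mid t}(x_t))$ and $\phi_t(\mu^k_{t\mid t+1}(x_{t+1}))$. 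Gaussian concentration of Lipschitz functions only centers at the \emph{mean}, $\E r(\mu+Z)$ and $\E[\phi_t(x_t)\mid x_{t+1},k]$. The paper's proof asserts both point-centered inequalities by citing Lipschitz concentration, which is exactly the step you flagged, and it gives no justification. Your patches do not close it, though:
\begin{itemize}
\item ``Use the MGF bound directly around $r(\mu)$'' has no justification.
\item Absorbing $L_r\sqrt{\mathrm{tr}\,\Sigma_{0\mid t}}$ changes $\phi_t$, so it proves a different claim.
\item Smoothness of log-sum-exp cannot control $\E\phi_t(x_t)-\phi_t(\mu^k_{t\mid t+1})$, because the arguments $r(\mu^k_{0\mid t}(\cdot))$ are only Lipschitz. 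That difference can be of order $L_r\|A_t\|\sqrt{\mathrm{tr}\,\Sigma_{t\mid t+1}}$, which is not lower order.
\end{itemize}

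No patch can work for general Lipschitz $r$, because the statement is false as written.

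For the first claim, take $K=1$, $d=1$, $\Sigma_{0\mid t}=\sigma^2>0$, a fixed $x_t$, $y_0:=\mu_{0\mid t}(x_t)$, and $r(y)=\min\{L_r|y-y_0|,B\}-B/2$. Then $\gamma^1_t\equiv 1$. Using $\E e^{L_r|Z|}=2\Phi(L_r\sigma)e^{L_r^2\sigma^2/2}$ and monotone convergence,
\[
e^{v_t(x_t)-\phi_t(x_t)}=e^{-L_r^2\sigma^2/2}\,\E_{Z\sim N(0,\sigma^2)}\big[e^{\min\{L_r|Z|,B\}}\big]\ \longrightarrow\ 2\Phi(L_r\sigma)>1 \quad (B\to\infty).
\]

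For the second claim, take $K=1$, $m^1=0$, $\Sigma=I_d$, and $r(y)=\min\{L_r\|y-y_0\|,B\}$ with $y_0=\mu_{0\mid t}(\mu_{t\mid t+1}(x_{t+1}))$. Shift $r$ into $[-B,B]$; the shift cancels from both sides. For large $B$, Jensen applied twice gives
$v_t(x_t)-r(y_0)\gtrsim L_r\|A_t\|\,\|x_t-\mu_{t\mid t+1}(x_{t+1})\|\approx L_r\|A_t\|\sqrt{d\tilde\beta}$.
The claimed slack is $L_r^2\|\Sigma_{0\mid t}\|/2+L_r\|A_t\|\sqrt{2\tilde\beta\log(1/\delta)}$, and the left side exceeds it with probability tending to one as $d\to\infty$.

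What your argument does prove, once you commit to it instead of hedging, is a dimension-dependent version:
\begin{itemize}
\item enlarge $\phi_t$ by $L_r\sqrt{\mathrm{tr}\,\Sigma_{0\mid t}}$;
\item use the deviation term $L\big(\sqrt{\mathrm{tr}\,\Sigma_{t\mid t+1}}+\sqrt{2\tilde\beta\log(1/\delta)}\big)$, from concentration around the conditional mean plus $\E\|x_t-\mu^k_{t\mid t+1}\|\le\sqrt{\mathrm{tr}\,\Sigma_{t\mid t+1}}$.
\end{itemize}
That corrected statement is the one to state and prove.
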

\begin{proof}

Conditioned on a centroid $k$, we have that $x_0 | x_{t + 1}$ is Gaussian: $x_t \sim \mathcal{N}(\mu^k_{0 | t }(x_{t}), \Sigma_{0 | t })$. Thus, using the law of total probability and the Lipschitz concentration of functions of Gaussians.

\[v_t(x_t) = \log \E_{x_0 | x_t}[e^{r(x_0)}] \leq \log \sum_k \gamma^k_t(x_t) e^{r(\mu^k_{0 | t}(x_t)) + L_r^2 \|\Sigma_{0 | t }\|/2}  \leq \log \sum_k e^{r(\mu^k_{0 | t}(x_t)) + L_r^2 \|\Sigma_{0 | t }\|/2 + \log \gamma^k_t(x_t)}  =: \phi_t(x_t)\]

We now seek to prove that $\phi_t$ is Lipschitz. Define $\psi^k_t(x) := r(\mu^k_{0 | t}(x_t)) + L_r^2 \|\Sigma_{0 | t }\|/2 + \log \gamma^k_t(x_t)$. By the property of logsumexp, $\phi_t$ is Lipschitz with parameter with parameter $\max_k L_{\psi^k_t}$, where $L_{f}$ is the Lipschitz parameter of $f$ for any function $f$. 

We have that 
\[L_{\psi^k_t} = L_{r\circ \mu^k_{0 | t}} + L_{\log \gamma^k_t}\]

For the first term:
\[L_{r\circ \mu^k_{0 | t}} \leq L_r L_{\mu^k_{0 | t}} \leq L_r \|A_t\|\]

For the second term, we bound the gradient. Plugging in the expression for $\gamma^k_t$ from the previous proposition

\[\nabla \log \gamma^k_t(x) = \nabla_x \log \frac{\pi_k N(x; m^k_t, \Sigma_t)}{\sum_j \pi_j N(x; m^j_t, \Sigma_t)}\]
\[= \nabla_x \frac{-(x- m^k_t)\Sigma_t^{-1}(x - m^k_t)}{2} - \nabla_x \log\sum_j \pi_jN(x; m^j_t, \Sigma_t) \]
\[= \Sigma^{-1}_t(m^k_t - x) - \frac{\sum_j \pi_j \nabla_x N(x; m^j_t)}{\sum_j \pi_j N(x; m^j_t, \Sigma_t)}\]
\[= \Sigma^{-1}_t(m^k_t - x)- \frac{\sum_j \pi_j  N(x; m^j_t)\Sigma^{-1}_t(m^j_t - x)}{\sum_j \pi_j N(x; m^j_t, \Sigma_t)}\]
\[= \Sigma^{-1}_t(m^k_t - x) - \sum_j \gamma^j_t(x)\Sigma_t^{-1}(m^j_t - x)\]
\[= \Sigma_t^{-1}(m^k_t - \sum_j \gamma^j_t(x)m^j_t)\]

Taking norms on each side gives 

\[\|\nabla \log \gamma^k_t(x)\| \leq 2\|\Sigma_t^{-1}\|\max_j \|m^j_t\|\]

Since $\|m^j_t\| \leq \|m^j\|$ for all $j$ and $t$, this concludes our proof that $\phi_t$ is Lipschitz with parameter

\[L_{\phi_t} \leq L_r\|A_t\| + 2\|\Sigma_t^{-1}\|\max_k\|m^k\| \]

Now, for the posterior law $p_m(x_t | x_{t + 1}, k)$, we seek to establish concentration of $\phi_t(x_t)$ about $\phi_t(\E[x_t | x_{t + 1}, k])$.

To this end, condition on a component $k$. Then $p_m(x_t | x_{t + 1}, k) = N(x_t; \mu^k_{t | t+ 1}, \Sigma_{t | t+ 1})$, with $\Sigma_{t | t + 1} = \Sigma_t - \alpha_{t + 1} \Sigma_t \Sigma_{t + 1}^{-1}\Sigma_t$.

We then have, again by Gaussian concentration of Lipschitz functions

\[\Pr\left[\phi_t(x_t) \geq \phi_t(\mu^k_{t | t + 1}(x_{t + 1})) + L_{\phi_t}\sqrt{2\tilde\beta_{t}\log\frac{1}{\delta}} \mid x_{t + 1}, k\right] \leq \delta\]

where $\tilde \beta_{t} = \|\Sigma_{t | t + 1}\|$. Averaging over $k$ gives us 

\[\Pr\left[\phi_t(x_t) \geq \max_k\phi_t(\mu^k_{t | t + 1}(x_{t + 1})) + L_{\phi_t}\sqrt{2\tilde\beta_{t}\log\frac{1}{\delta}} \mid x_{t + 1}\right] \leq \delta\]

\end{proof}

Recall the definition of a conditional baseline, Definition \ref{def:conditional baseline}. Also, define $\Sigma_{t | t + 1} := \mathrm {Cov}(x_t | x_{t + 1}, k)$ to be the class-conditional backwards variance, which does not depend on $k$.

\begin{proposition}\label{prop:mog oracle baseline apdx}[Conditional sub-Gaussianity and analytic baseline]
Consider $a^k_{t+1}(x_{t + 1}) = \E_{p_m}[v_t(x_t) | x_{t + 1}, k]$ and $a_{t+1}(x_{t + 1}) = \E[a^k_{t+1}(x_{t + 1}) | x_{t + 1}]$. Then, conditioned on $x_{t + 1}$, we have that $v_t(x_t) - a_{t + 1}(x_{t + 1})$ is subgaussian with parameter $\sigma_a^2(x_{t + 1}) := L_v^2 \tilde \beta_t + \Delta_{t+1}(x_{t+1})^2$, where $L_v$ is a Lipschitz parameter of $v_t$, $\tilde \beta_t = \|\Sigma_{t | t + 1}\|$, and $\Delta_{t+1} = \max_k |a^k_{t+1}(x_{t + 1}) - a_{t+1}(x_{t + 1})|$.

As a consequence, $a_{t}(x_{t + 1}) + \sigma_a(x_{t + 1})\sqrt{2\ \log{\frac{1}{\delta}}}$ is a conditional baseline for $v_t$ at level $\delta$.

Moreover, for all $x_{t + 1}$ simultaneously $\sigma_a(x_{t + 1}) = O(\beta_t^2 + \tilde \beta_t^2)$, where $O(\cdot)$ hides $x_{t + 1}$-independent and dimension-free constants of $m^k$ and $\Sigma$.
\end{proposition}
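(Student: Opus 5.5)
Conditional on $x_{t+1}$, the law $\ppre(x_t\mid x_{t+1})$ is a mixture over the latent component $k$ of Gaussians $N(\mu^k_{t|t+1}(x_{t+1}),\Sigma_{t|t+1})$, with mixing weights $\ppre(k\mid x_{t+1})$. The plan is to split the centred variable as
\[
v_t(x_t)-a_{t+1}(x_{t+1}) = \big(v_t(x_t)-a^k_{t+1}(x_{t+1})\big)+\big(a^k_{t+1}(x_{t+1})-a_{t+1}(x_{t+1})\big),
\]
bound the moment generating function of each piece, and recombine.

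\textbf{Step 1 (within a component).} I would first show that $v_t$ is Lipschitz with a constant $L_v$ independent of $x_t$. Writing $v_t(x_t)=\log\sum_k \gamma^k_t(x_t)\,\E[e^{r(x_0)}\mid x_t,k]$ and using Proposition \ref{prop:apdx mog dist facts}, the map $x_t\mapsto \log\E_{N(A_t x_t+b^k_t,\Sigma_{0|t})}[e^{r}]$ is Lipschitz with constant $L_r\|A_t\|$, since $r$ is Lipschitz and this is a translation of a fixed Gaussian. The gradient of $\log\gamma^k_t$ is bounded by $2\|\Sigma_t^{-1}\|\max_j\|m^j\|$, exactly as in the proof of Proposition \ref{prop:mog analytic baseline}. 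The log-sum-exp of functions with these Lipschitz constants is then Lipschitz with constant $L_v\le L_r\|A_t\|+2\|\Sigma_t^{-1}\|\max_j\|m^j\|$. Given $L_v$, Gaussian concentration of Lipschitz functions under $N(\mu^k_{t|t+1},\Sigma_{t|t+1})$ makes $v_t(x_t)-a^k_{t+1}(x_{t+1})$ sub-Gaussian with variance proxy $L_v^2\tilde\beta_t$ given $(x_{t+1},k)$:
\[
\E\big[e^{\lambda(v_t-a^k_{t+1})}\mid x_{t+1},k\big]\le e^{\lambda^2L_v^2\tilde\beta_t/2}.
\]

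\textbf{Step 2 (mixing over $k$).} Conditioning on $x_{t+1}$ and averaging over $k$ gives
\[
\E\big[e^{\lambda(v_t-a_{t+1})}\mid x_{t+1}\big]\le e^{\lambda^2L_v^2\tilde\beta_t/2}\;\E_k\big[e^{\lambda(a^k_{t+1}-a_{t+1})}\big].
\]
The variable $a^k_{t+1}-a_{t+1}$ has mean zero under $\ppre(k\mid x_{t+1})$ and takes values in $[-\Delta_{t+1},\Delta_{t+1}]$. Hoeffding's lemma therefore bounds the second factor by $e^{\lambda^2\Delta_{t+1}^2/2}$, which yields the claimed proxy $\sigma_a^2=L_v^2\tilde\beta_t+\Delta_{t+1}^2$. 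The conditional-baseline consequence then follows from the Chernoff bound with threshold $\sigma_a\sqrt{2\log(1/\delta)}$, which gives $\Pr\le\delta$ for every $x_{t+1}$.

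\textbf{Step 3 (the order bound, the main obstacle).} It remains to show $\sigma_a(x_{t+1})=O(\beta_t+\tilde\beta_t)$ uniformly in $x_{t+1}$. The $L_v^2\tilde\beta_t$ term gives $\sqrt{\tilde\beta_t}$ scaling. The hard term is $\Delta_{t+1}$, and I would bound it by
\[
|a^k_{t+1}-a^j_{t+1}|\le L_v\,\|\mu^k_{t|t+1}(x_{t+1})-\mu^j_{t|t+1}(x_{t+1})\|.
\]
For a fixed $x_{t+1}$, the component posterior means differ only through the prior means: their difference is $(I-K)\sqrt{\bar\alpha_t}(m^k-m^j)$, where $K$ is the Gaussian-conditioning gain, equal to $\sqrt{\alpha_{t+1}}\Sigma_t\Sigma_{t+1}^{-1}$ up to transposition. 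One then needs $\|I-K\|=O(\beta_t)$, which I would verify by expanding $\Sigma_{t+1}=\alpha_{t+1}\Sigma_t+\beta_{t+1}I$; this gives $I-K=\beta\,\Sigma_{t+1}^{-1}$ plus lower-order corrections. This indexing and algebra, together with reconciling the stated exponents ($\beta^2$ versus $\beta$, $\tilde\beta$ versus $\sqrt{\tilde\beta}$), is where I expect the real care to be required. I would track constants so that only $\|\Sigma^{-1}\|$, $\max\|m^k\|$ and $L_r$ appear, with no dimension dependence.
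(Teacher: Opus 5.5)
Your Steps 1 and 2 follow the paper's proof essentially verbatim: the log-sum-exp Lipschitz bound $L_v\le L_r\|A_t\|+2\|\Sigma_t^{-1}\|\max_k\|m^k\|$, Gaussian Lipschitz concentration inside each component, and Hoeffding's lemma over $k$. They correctly give the stated proxy $L_v^2\tilde\beta_t+\Delta_{t+1}^2$ and the conditional baseline. Your within-component proxy $L_v^2\tilde\beta_t$ is the right one. The paper's proof writes $\tilde\beta^2L_{v_t}^2$ there, which is inconsistent with $\tilde\beta_t=\|\Sigma_{t|t+1}\|$ being a covariance norm. Step 3 also uses the paper's idea, bounding $\Delta_{t+1}$ by $L_v$ times the spread of the component posterior means. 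However, you left two points open, and both resolve cleanly.

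\textbf{No lower-order corrections.} In the paper's indexing $x_{t+1}=\sqrt{\alpha_t}x_t+N(0,\beta_tI)$, so $\Sigma_{t+1}=\alpha_t\Sigma_t+\beta_tI$. Gaussian conditioning gives $\mu^k_{t|t+1}(x_{t+1})=\sqrt{\alpha_t}\Sigma_t\Sigma_{t+1}^{-1}x_{t+1}+(I-\alpha_t\Sigma_t\Sigma_{t+1}^{-1})m^k_t$, and $I-\alpha_t\Sigma_t\Sigma_{t+1}^{-1}=\beta_t\Sigma_{t+1}^{-1}$ exactly. Your $I-K$ with $K=\sqrt{\alpha}\Sigma_t\Sigma_{t+1}^{-1}$ drops a factor: the matrix multiplying $m^k_t$ is $I-\sqrt{\alpha_t}K$. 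Hence $\Delta_{t+1}\le\max_{k,j}|a^k_{t+1}-a^j_{t+1}|\le 2L_v\beta_t\|\Sigma_{t+1}^{-1}\|\max_k\|m^k\|$, exactly as in the paper.

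\textbf{Do not force the stated exponents.} Neither $O(\beta_t+\tilde\beta_t)$ nor the stated $O(\beta_t^2+\tilde\beta_t^2)$ follows. From the proxy as defined you get $\sigma_a^2=O(\tilde\beta_t+\beta_t^2)$, i.e. $\sigma_a=O(\sqrt{\tilde\beta_t}+\beta_t)$. The $\sqrt{\tilde\beta_t}$ term is intrinsic to this argument, because concentration of an $L_v$-Lipschitz function under covariance $\Sigma_{t|t+1}$ gives proxy $L_v^2\|\Sigma_{t|t+1}\|$. The statement's form, and the paper's conclusion $\sigma_a^2=O(\tilde\beta_t^2+\beta_t^2)$, are consistent only if $\tilde\beta_t$ is read as the standard-deviation scale $\|\Sigma_{t|t+1}\|^{1/2}$ and the bound is read as one on $\sigma_a^2$. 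Under that reading your argument proves it; otherwise, state the result in the $\sqrt{\tilde\beta_t}$ form.
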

\begin{proof}
\textbf{Lipschitzness of $v_t$}

We begin by establishing Lipschitzness of $v_t(x_t) = \log \E [e^{r(x_0)} | x_t]$, similar to the previous proposition. To do this, we write 

\[v_t(x_t) = \log \sum_{k} \gamma^k_t(x_t)\E[e^{r(x_0)} | x_t, k]  = \log \sum_k e^{g^k_t(x_t) + \log \gamma^k_t(x_t)}\]

where $g^k_t(x_t) = \log \E[e^{r(x_0)} | x_t, k]$. We have already found the Lipschitz parameter of $\log \gamma^k_t(x_t)$ in the previous proposition, so we now derive that of $g^k_t$. 

Since $x_0 | x_t, k \sim N(\mu^k_{0 | t}(x_t), \Sigma_{0 | t})$, we have $x_0 = \mu^k_{0 | t}(x_t) + z$ where $z \sim N(0, \Sigma_{0 | t})$. Fixing a realization of $z$, we have for any $x_t$ and $y_t$

\[r(\mu^k_{0 | t}(x_t) + z) \leq r(\mu^k_{0 | t}(y_t) + z) + L_r\|\mu^k_{0 | t}(x_t)  - \mu^k_{0 | t}(y_t)\|, \]
so
\[e^{r(\mu^k_{0 | t}(x_t) + z)} \leq e^{r(\mu^k_{0 | t}(y_t) + z)} e^{L_r\|\mu^k_{0 | t}(x_t)  - \mu^k_{0 | t}(y_t)\|} ,\]
so
\[\E _z[e^{r(\mu^k_{0 | t}(x_t) + z)}] \leq \E_z[e^{r(\mu^k_{0 | t}(y_t) + z)}] e^{L_r\|\mu^k_{0 | t}(x_t)  - \mu^k_{0 | t}(y_t)\|}, \]
so
\[\log \E _z[e^{r(\mu^k_{0 | t}(x_t) + z)}] \leq \log \E_z[e^{r(\mu^k_{0 | t}(y_t) + z)}] + L_r\|\mu^k_{0 | t}(x_t)  - \mu^k_{0 | t}(y_t)\| ,\]
and by Lipschitzness of $\mu^k_{0 | t}(\cdot) = A_t(\cdot) + b^k_t $ (given in Proposition \ref{prop:apdx mog dist facts}).

\[\log \E _z[e^{r(\mu^k_{0 | t}(x_t) + z)}] \leq \log \E_z[e^{r(\mu^k_{0 | t}(y_t) + z)}] + L_r\|A_t\|\|x_t - y_t\|,\]

which is the same as 

\[g^k_t(x_t) \leq g^k_t(y_t) + L_r\|A_t\|\|x_t - y_y\|.\]

Since $x_t$ and $y_t$ are interchangeable, this completes the proof that $g^k_t$ is Lipschitzness, and therefore shows that $v_t$ is Lipschitz with parameter $L_{v_t} \leq L_r\|A_t\| + 2\|\Sigma_t^{-1}\|\max_k\|m_k\|$.

\textbf{Subgaussianity from Lipschitz Concentration}

Since $a^k_{t+1}(x_{t + 1}) = \E[v_t(x_t) | x_{t + 1, k}]$, and $x_t | x_{t + 1}, k$ is Gaussian, we have by Gaussian concentration of Lipschitz functions
\[\E[e^{\lambda(v_t(x_t) - a^k_{t+1}(x_{t + 1}))} | x_{t + 1}, k] \leq e^{\lambda^2\tilde\beta_{t + 1}^2L_{v_t}^2/2}\]

where the uniform covariance assumption ensures that $\|\mathrm{Cov}[x_t | x_{t + 1}, k]\| =  \tilde \beta_t$ is independent of $k$. We can therefore observe that

\begin{align*}\E[e^{\lambda(v_t(x_t) - a_{t+1}(x_{t + 1}))} | x_{t + 1}, k] 
&= \E[e^{\lambda(v_t(x_t) - a^k_{t+1}(x_{t + 1}) + a^k_t (x_{t + 1}) - a_{t+1}(x_{t + 1}))} | x_{t + 1}, k] \\
&= e^{\lambda(a_{t+1}(x_{t + 1}) - a^k_{t+1}(x_{t + 1}))}\E[e^{\lambda(v_t(x_t) - a^k_{t+1}(x_{t + 1}))} | x_{t + 1}, k] \\ 
&\leq e^{\lambda(a_{t+1}(x_{t + 1}) - a^k_{t+1}(x_{t + 1}))} e^{\lambda^2\tilde\beta_{t + 1}^2L_{v_t}^2/2}.
\end{align*}


Thus 
\begin{align*}\E[e^{\lambda(v_t(x_t) - a_{t+1}(x_{t + 1}))} | x_{t + 1}] &\leq e^{\lambda^2\tilde\beta_{t }^2L_{v_t}^2/2}\sum_k \gamma^k_{t + 1}(x_{t + 1}) e^{\lambda (a_{t+1}(x_{t + 1}) - a^k_{t+1}(x_{t + 1}))} \\ 
&\leq e^{\lambda^2\tilde\beta_{t}^2L_{v_t}^2/2}\times e^{\lambda^2(2\Delta_{t+1}(x_{t + 1}))^2/8},\end{align*}

by applying Hoeffding's inequality to the RV that takes value $a_{t+1}(x_{t + 1}) - a^k_{t+1}(x_{t + 1}) \in [- \Delta_{t+1}, \Delta_{t+1}]$ with probability $\gamma^k_{t + 1}(x_{t + 1})$. 

This completes the proof that $v_t(x_t) - a_t(x_{t + 1})$ is subgaussian conditioned on $x_{t + 1}$ only, with parameter $\sigma_a^2(x_{t + 1}) = \tilde \beta_{t }^2L_{v_t}^2 + \Delta_{t+1}(x_{t + 1})^2$.

The definition of the conditional baseline follows from applying the corresponding subgaussian tail bound. 

\textbf{Uniform upper bound on $\sigma_a(x_{t + 1})$}

It remains to show that we have a uniform (in $x_{t + 1}$) upper bound on $\sigma^2_a = \tilde \beta_{t}^2 L_{v_t}^2 + \Delta_{t+1}(x_{t+1})^2$. $L_{v_t}$ is already independent of $x_{t + 1}$, so it suffices to upper bound $\Delta_{t+1}(x_{t+1})$. For $k \neq l$, consider $x^k_t \sim p_m(\cdot | x_{t + 1}, k)$ and $x^l_t \sim p_m(\cdot | x_{t + 1}, l)$.  Consider $z \sim \mathcal{N}(0, \Sigma_{t | t + 1})$. Marginally (i.e. for $x^k_t$ and $x^l_t$ individually), we have $x^k_t = \mu_{t | t+1}^k(x_{t + 1}) + z$ and $x^l_t = \mu_{t | t+1}^l(x_{t + 1}) + z$. Thus $a^k_{t + 1}(x_{t + 1}) = \E[v_t(x_t^k)]$ and $a^l_{t + 1}(x_{t + 1}) = \E[v_t(x_t^l)]$. Therefore

\begin{align*}|a^k_{t+1}(x_{t + 1}) - a^l_{t+1}(x_{t + 1})| &= \left| \E_z[v_t(\mu_{t | t+1}^k(x_{t + 1}) + z)] - \E_z[v_t(\mu_{t | t+1}^l(x_{t + 1}) + z))]\right| \\
&= \left|\E_z\left[v_t(\mu_{t | t+1}^k(x_{t + 1}) + z) - v_t(\mu_{t | t+1}^l(x_{t + 1}) + z)\right]\right|.\end{align*}

By Lipschitzness of $v_t$, we have

\[\E_z[\left|v_t(x^k_t) - v_t(x^l_t)\right|] \leq L_{v_t} \E_z \left[\left\|\mu_{t | t+1}^k(x_{t + 1}) + z - \left(\mu_{t | t+1}^l(x_{t + 1}) + z\right)\right\|\right] = L_{v_t}\|\mu_{t | t+1}^k(x_{t + 1}) - \mu_{t | t+1}^l(x_{t + 1})\|\]

As in Proposition \ref{prop:apdx mog dist facts}, $\mu^k_{t | t+1}$ is an affine funtion, and only the bias is dependent on $k$ and $l$ (under our uniform covariance assumption): $\mu^k_{t | t + 1}(x_{t + 1}) = Ax_{t + 1} + b^k_{t + 1}$ with $A = \sqrt{\alpha_t}\Sigma_t\Sigma_{t + 1}^{-1}$ and $b^k_{t +1} = \beta_t \Sigma_{t + 1}^{-1} m^k_t$. Because $\|m^k_t\| \leq \max_k \|m^k\|$, we have 

\[\|\mu_{t | t+1}^k(x_{t + 1}) - \mu_{t | t+1}^l(x_{t + 1})\| \leq \beta_t \|\Sigma^{-1}_{t + 1} \|\|m^k_t - m^l_t\| \leq 2\beta_t \|\Sigma^{-1}_{t + 1} \|\max_k\|m^k\| \]

We can now conclude our argument

\begin{align*}\Delta_{t+1}(x_{t+1}) &= \max_k |a^k_{t + 1}(x_{t + 1}) - a_{t + 1}(x_{t + 1})|\\
&= \max_k |a^k_{t + 1}(x_{t + 1}) - \sum_l  \gamma^l_{t + 1}(x_{t + 1}) a^l_{t + 1}(x_{t + 1})| \\
&= \max_k |\sum_l \gamma^l_{t + 1}(x_{t + 1}) \left(a^k_{t + 1}(x_{t + 1}) - a^l_{t + 1}(x_{t + 1})\right) \\
&\leq 2L_{v_t}\beta_t \|\Sigma_{t+1}^{-1}\|\max_k \|m^k\|,\end{align*}

so that $\sigma^2_a(x_{t + 1}) = \tilde \beta_{t }^2L_{v_t}^2 + \Delta_{t+1}(x_{t + 1})^2 = O(\tilde \beta_t^2 + \beta_t^2)$.
\end{proof}

\subsection{TV and Proposal Complexity for Analytic Baselines}

We will shortly use the theory we established in the previous section to give oracle bounds for LCBs. However, we pause to note that, in the special case of Gaussian mixtures, the analytic baselines have more favorable properties when used in Algorithm \ref{alg:baseline rejection sampling}. This is essentially due to the fact that they are conditional baselines, rather than merely joint baselines.

We now prove a TV bound for the sampling algorithm induced by plugging in $B_{t + 1}(x_{t + 1}) := a_{t + 1}(x_{t + 1}) + \sigma_a(x_{t + 1})\sqrt{2\log\frac{1}{\delta}}$ as a baseline in Algorithm \ref{alg:baseline rejection sampling}. Notice that we average over $\sigma_a(x_{t+1})$, rather than paying for the worst case $\sup_{x_{t+1}} \sigma_a(x_{t+1})$ when we use conditional baselines. 

\begin{proposition}\label{prop:apdx mixture baseline tv}[TV bound for baseline of Proposition \ref{prop:mog oracle baseline apdx}]
Consider the (conditional) baseline $B_{t + 1}(x_{t + 1}, \delta) := a(x_{t + 1}) + \iota(x_{t + 1})$ for ${v}_t$. Assume $\delta \leq \exp (-2\sup_{x_{t + 1}}\sigma_a(x_{t + 1})^2) \leq \exp(-2\sigma_a^2)$.

For any distribution $q_{t + 1}$ on $x_{t + 1}$, the sampling distribution $\hat{q}(\cdot | x_{t + 1})$ from timestep $t$ of Algorithm \ref{alg:baseline rejection sampling} satisfies:
\[\E_{x_{t + 1} \sim q_{t + 1}}[d_{TV}(\hat{q}(x_t | x_{t + 1}, \hat{p}(x_t | x_{t + 1})))]\]
\[ \leq \frac{\delta}{(1 - \delta)^2}\times\E_{x_{t + 1}}[e^{2\sigma_a(x_{t + 1})\sqrt{2\log\frac{1}{\delta}}}]\]
\end{proposition}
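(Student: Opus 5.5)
We apply the Exceedance Error Lemma (Lemma \ref{lem:excedence error}) pointwise in $x_{t+1}$, with $\hat v_t \gets v_t$ and baseline $B_{t+1}(x_{t+1}) = a(x_{t+1}) + \iota(x_{t+1})$, where $\iota(x_{t+1}) := \sigma_a(x_{t+1})\sqrt{2\log\frac1\delta}$. This gives
\[
d_{TV}\bigl(\hat q(\cdot|x_{t+1}),\hat p(\cdot|x_{t+1})\bigr) \le A_{x_{t+1}} + B_{x_{t+1}} .
\]
The key advantage over the joint-baseline route is that $B_{t+1}$ is a \emph{conditional} baseline (Proposition \ref{prop:mog oracle baseline apdx}). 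Hence $\ppre_t(F\mid x_{t+1})\le\delta$ for every $x_{t+1}$, and we may take $c=\delta$. It follows that we never need Hölder over $x_{t+1}$: every bound is proved for fixed $x_{t+1}$ and only then averaged over $q_{t+1}$. Throughout, write $s := v_t(x_t)-a(x_{t+1})$, which is $\sigma_a^2(x_{t+1})$-sub-Gaussian given $x_{t+1}$.

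\textbf{The $A$ term.} Following the $\lambda\ge2$ branch of the proof of Lemma \ref{lem:tv mgf apdx}, but conditionally, we write
\[
A_{x_{t+1}} = e^{\iota}\,\frac{\ppre(F)}{\E[1\{E\}e^{s}]}
\le \frac{e^{\iota}\,\delta\,\E[e^{-s}]}{(1-\delta)^2},
\]
using Jensen for $1/z$ on $E$ together with $\ppre(E)\ge 1-\delta$. Sub-Gaussianity gives $\E[e^{-s}]\le e^{\sigma_a^2/2}$, so
\[
A_{x_{t+1}} \le \frac{\delta}{(1-\delta)^2}\,e^{\iota+\sigma_a^2/2}.
\]
The hypothesis $\delta\le e^{-2\sigma_a^2}$ means $\sqrt{2\log(1/\delta)}\ge 2\sigma_a$, hence $\sigma_a^2/2\le \sigma_a\sqrt{2\log(1/\delta)}/4\le\iota$. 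Therefore $A_{x_{t+1}}\le \frac{\delta}{(1-\delta)^2}e^{2\iota}$.

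\textbf{The $B$ term.} We have $B_{x_{t+1}} = \E[1\{F\}e^{s}]/\E[e^{s}]$. The denominator satisfies $\E[e^s]\ge e^{\E s}\ge e^{-\Delta}$, or more simply $\E[e^s]\ge 1/\E[e^{-s}]\ge e^{-\sigma_a^2/2}$. For the numerator, the plan is to use the sub-Gaussian tail $\Pr[s>u]\le e^{-u^2/(2\sigma_a^2)}$ and integrate by parts:
\[
\E[1\{s>\iota\}e^s] = e^{\iota}\Pr[s>\iota]+\int_\iota^\infty e^u\Pr[s>u]\,du .
\]
The first piece is at most $\delta e^{\iota}$. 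For the integral, when $u\ge\iota\ge 2\sigma_a^2$ the exponent $u-u^2/(2\sigma_a^2)$ is decreasing with slope at most $-1$, so the integral is $O(\delta e^{\iota})$. An alternative is Cauchy--Schwarz, giving $\sqrt{\delta}\,e^{\sigma_a^2}$, but that is weaker. Combining, $B_{x_{t+1}}\lesssim \delta e^{\iota+\sigma_a^2/2}\le\delta e^{2\iota}$. Absorbing constants (using $(1-\delta)^{-2}\ge1$) and taking $\E_{x_{t+1}\sim q_{t+1}}$ gives the claimed bound, up to the factor-$2$ bookkeeping that the statement suppresses.

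\textbf{Main obstacle.} The delicate step is keeping the $B$ term at order $\delta$ rather than $\sqrt\delta$. This is exactly where the condition $\delta\le e^{-2\sigma_a^2}$ is needed: it guarantees $\iota\gtrsim\sigma_a^2$, so the tilt $e^{s}$ cannot move the Gaussian tail mass past the threshold. If integration by parts yields an extra constant, we will instead use the Chernoff bound with $\lambda=\iota/\sigma_a^2$ and absorb the resulting constant into the exponent, since $e^{\iota}\le e^{2\iota}$ leaves room to spare.
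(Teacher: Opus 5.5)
Your proposal is correct, but it takes a different route from the paper. The paper never bounds $A_{x_{t+1}}$ and $B_{x_{t+1}}$ by hand. It specializes the TV:MGF Lemma to the Dirac measure $q_{t+1}=\delta_{x_{t+1}}$, with $b_{t+1}=a+\iota$ and $\tau=0$. It checks the Chernoff certificate $M(\lambda)\le\delta$ at $\lambda=\sqrt{2\log(1/\delta)}/\sigma_a(x_{t+1})$; the hypothesis on $\delta$ is used only to guarantee $\lambda\ge 2$. It then evaluates $\frac{2}{\lambda}\log\frac1\delta+\frac1\lambda\log\bigl(M(\lambda)M(-\lambda)\bigr)\le 2\sigma_a\sqrt{2\log(1/\delta)}$ and averages over $x_{t+1}$.

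You instead return to the Exceedance Error Lemma and use the conditional-baseline property $\ppre_t(F\mid x_{t+1})\le\delta$ directly. This removes the Chernoff relaxation $1\{F\}\le e^{(\lambda-1)(\hat s-\tilde\tau)}$ and the H\"older steps from the $A$ term. It also replaces the three-exponent H\"older argument for the $B$ term with an explicit sub-Gaussian tail integral, where the hypothesis enters as $\iota\ge 2\sigma_a^2$.

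What each route buys:
\begin{itemize}
\item Your argument is more elementary and gives the sharper intermediate factor $e^{\iota+\sigma_a^2/2}$ in place of $e^{2\iota}$.
\item Since $a_{t+1}=\E[v_t\mid x_{t+1}]$ exactly, Jensen gives $\E[e^{s}\mid x_{t+1}]\ge 1$. So even the $e^{\sigma_a^2/2}$ in your $B$ denominator is unnecessary; your $e^{-\Delta}$ detour is not needed.
\item The paper's argument is shorter given the existing machinery. It also makes the point, stressed in the remark after the proposition, that the analytic baseline is itself a Chernoff-certified baseline in the LCB sense.
\end{itemize}

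On constants, neither argument gives the displayed bound literally. The paper's TV:MGF step bounds $\E[A]$ and $\E[B]$ each by the displayed expression, silently losing a factor of $2$. Your estimates give
$A+B\le \frac{\delta}{(1-\delta)^2}e^{\iota+\sigma_a^2/2}+2\delta e^{\iota+\sigma_a^2/2}$,
which is a factor of up to $3$, not the $2$ you mention. Both should be read as ``$\lesssim$''.
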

\begin{proof}
Fix $x_{t + 1}$, and take $q_{t + 1} = \delta_{x_{t+1}}$ to be the Dirac measure at $x_{t + 1}$ \footnote{The Dirac measure assigns measure $1$ to any set containing $x_{t + 1}$ and $0$ to all others}.  We now apply the TV:MGF Lemma (Lemma \ref{lem:tv mgf body}) to $b_{t + 1}(x_{t + 1}) = a_{t + 1}(x_{t + 1}) + \iota(x_{t + 1})$ and $\tau = 0$. We now verify that the Chernoff condition holds at $\lambda \gets \frac{\sqrt{2\log\frac{1}{\delta}}}{\sigma_a(x_{t +1})}$, which is bigger than $2$ under the hypothesized condition on $\delta$. 

\[\E_{x_{t + 1 \sim \delta_{x_{t + 1}}}}\E_{x_t \sim \ppre(\cdot | x_{t + 1})}[e^{\lambda(v_t(x_t) - a_{t + 1}(x_{t + 1}) - \iota(x_{t + 1})}] = e^{-\lambda \iota(x_{t + 1})}\E_{x_t}[e^{\lambda (v_t(x_t) - a_{t + 1}(x_{t + 1})}]\]
\[\leq e^{-\lambda \sigma_a(x_{t + 1})\sqrt{2\log\frac{1}{\delta}} + \lambda^2\sigma_a(x_{t + 1})^2/2}\]
\[= e^{-2\log\frac{1}{\delta} + \log\frac{1}{\delta}} = \delta\]

Now that we have verified the conditions for the TV:MGF Lemma, we can evaluate the bound it provides. Specifically, for the same choice of $\lambda$, we obtain  

\begin{align*}\frac{2}{\lambda}\log \frac{1}{\delta } + \frac{1}{\lambda}\log M(\lambda) + \frac{1}{\lambda}\log M(-\lambda) &\leq \frac{2}{\lambda}\log\frac{1}{\delta} + \frac{1}{\lambda}\log e^{-\lambda \iota (x_{t + 1})}e^{\lambda^2\sigma_a(x_{t + 1})^2/2} + \frac{1}{\lambda}\log e^{\lambda \iota (x_{t + 1})}e^{\lambda^2\sigma_a(x_{t + 1})^2/2}  \\
&= \frac{2}{\lambda}\log\frac{1}{\delta} + 2\lambda \sigma_a(x_{t + 1})^2  \\
&\leq  2\sigma_a(x_{t + 1})\sqrt{2\log\frac{1}{\delta}}\end{align*}

Now that we have shown the result for $q_{t + 1}(\cdot) = \delta_{x_{t + 1}}(\cdot)$, the arbitrary case follows from linearity of expectation. Specifically, by the TV:MGF Lemma, we have shown that, for any $x_{t + 1}$:

\[d_{TV}(\hat{q}(\cdot | x_{t + 1}), \hat{p}(\cdot | x_{t + 1})) \leq \frac{\delta}{(1 - \delta)^2}e^{2\sigma_a(x_{t +1})\sqrt{2\log\frac{1}{\delta}}}\]

and we may take expectations on either side with respect to an arbitrary $\hat{q}_{t + 1}(x_{t + 1})$
\end{proof}

\begin{remark}
Like Corollary \ref{cor:subgaussian value corollary} (Corollary \ref{cor:subgaussian val corollary apdx}), the previous result was proven by appealing to the TV:MGF Lemma. In this case, however, $\lambda$ is not freely minimized, but rather is chosen in order to reproduce the Chernoff certificate with $\tau = 0$. 
\end{remark}

The next result shows that the number of proposals required grows slower than $1/\delta^\epsilon$ for any $\epsilon> 0$.

\begin{theorem}[Proposal complexity of analytic baseline]
Consider the (conditional) baseline $B_{t + 1}(x_{t + 1}, \delta) := a(x_{t + 1}) + \iota(x_{t + 1})$ for ${v}_t$. $N$ be the number of proposals used to sample $x_t\sim \hat{q}_t(\cdot | x_{t + 1})$. Then, for any distribution on $x_{t + 1}$, and $x_t \sim \ppre_t(\cdot | x_{t + 1})$
\[\E_{x_t}[N | x_{t + 1}] \leq \frac{1}{(1 - \delta)^2}e^{\sigma_a(x_{t+1})\sqrt{2\log\frac{1}{\delta}} + \sigma_a(x_{t + 1})^2/2}\]
\end{theorem}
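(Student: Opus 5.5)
The plan is to apply the generic proposal-complexity bound, Lemma \ref{lem:prop complexity generic apdx}, with the analytic baseline $B_{t+1}(x_{t+1}) = a_{t+1}(x_{t+1}) + \iota(x_{t+1})$, where $\iota(x_{t+1}) = \sigma_a(x_{t+1})\sqrt{2\log\frac{1}{\delta}}$. Then the resulting conditional moment generating function is controlled with the sub-Gaussianity established in Proposition \ref{prop:mog oracle baseline apdx}. Everything is conditional on a fixed $x_{t+1}$, so the result holds for any distribution on $x_{t+1}$.

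\textbf{Step 1: coverage.} By Proposition \ref{prop:mog oracle baseline apdx}, $B_{t+1}$ is a \emph{conditional} baseline for $v_t$ at level $\delta$. So for every $x_{t+1}$ the exceedance set $F = \{x_t : v_t(x_t) \geq B_{t+1}(x_{t+1})\}$ has $\ppre_t(F \mid x_{t+1}) \leq \delta$. This is the sub-Gaussian tail bound $\Pr[v_t - a_{t+1} \geq \sigma_a\sqrt{2\log(1/\delta)}] \leq \delta$. Hence worst-case coverage holds with $c = \delta$, and this is what produces the prefactor $(1-\delta)^{-2}$.

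\textbf{Step 2: apply Lemma \ref{lem:prop complexity generic apdx}.} Conditional on $x_{t+1}$, this gives
\[
\E[N \mid x_{t+1}] \leq \frac{1}{(1-\delta)^2}\, e^{\iota(x_{t+1})}\, \E_{x_t \sim \ppre_t(\cdot\mid x_{t+1})}\big[e^{a_{t+1}(x_{t+1}) - v_t(x_t)}\big].
\]

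\textbf{Step 3: bound the MGF.} Proposition \ref{prop:mog oracle baseline apdx} says $v_t(x_t) - a_{t+1}(x_{t+1})$ is conditionally sub-Gaussian with parameter $\sigma_a^2(x_{t+1})$. Sub-Gaussianity is two-sided: the proof bounds $\E[e^{\lambda(v_t - a_{t+1})}\mid x_{t+1}]$ for all real $\lambda$, via Lipschitz Gaussian concentration and Hoeffding, both symmetric in sign. So I evaluate that bound at $\lambda = -1$ to get
\[
\E\big[e^{a_{t+1} - v_t} \mid x_{t+1}\big] \leq e^{\sigma_a(x_{t+1})^2/2}.
\]
Combining this with Step 2 gives exactly $\frac{1}{(1-\delta)^2} e^{\sigma_a\sqrt{2\log(1/\delta)} + \sigma_a^2/2}$.

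\textbf{Main obstacle.} The only delicate point is confirming that the sub-Gaussian MGF bound in Proposition \ref{prop:mog oracle baseline apdx} is valid for negative $\lambda$ and is centered at $a_{t+1}$. I would re-check its proof as follows. The Lipschitz concentration bound is centered at $a^k_{t+1}$ and holds for all $\lambda \in \R$. The recentering factor $e^{\lambda(a_{t+1} - a^k_{t+1})}$ is then averaged under $\gamma^k_{t+1}$. Since $a_{t+1} = \sum_k \gamma^k_{t+1} a^k_{t+1}$, this is a mean-zero bounded variable, so Hoeffding's lemma applies for either sign of $\lambda$. With that confirmed, the rest is direct substitution.
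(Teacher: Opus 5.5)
Your proposal is correct and follows the paper's proof essentially line for line. Like the paper, you invoke Lemma \ref{lem:prop complexity generic apdx} with coverage level $\delta$ (from the conditional-baseline property), factor out $e^{\iota(x_{t+1})}$, and bound $\E[e^{a_{t+1}-v_t}\mid x_{t+1}]\le e^{\sigma_a(x_{t+1})^2/2}$ by conditional sub-Gaussianity; you also correctly note that this MGF bound is needed at $\lambda=-1$ (the paper loosely says ``exponent $1$'') and is valid because both the Lipschitz-concentration and Hoeffding steps are symmetric in the sign of $\lambda$.
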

\begin{proof}
From Lemma \ref{lem:prop complexity generic apdx}, we have the bound 
\[\E[N | x_{t+1}] \leq \frac{1}{(1 - \delta)^2}\E[e^{B_{t+1}(x_{t+1}) - v_t(x_t)}]\]
\[= \frac{1}{(1 - \delta)^2}\E[e^{a_{t+1}(x_{t+1}) - v_t(x_t)}]e^{\sigma_a(x_{t+1})\sqrt{2\log\frac{1}{\delta}}}\]
\[\leq \frac{1}{(1 - \delta)^2}e^{\sigma_a(x_{t+1})^2/2}e^{\sigma_a(x_{t+1})\sqrt{2\log\frac{1}{\delta}}}\]
\[\leq \frac{1}{(1 - \delta)^2}e^{\sigma_a(x_{t+1})^2/2 +\sigma_a(x_{t+1})\sqrt{2\log\frac{1}{\delta}}}\]

where the second inequality follows from conditional subgaussianity of $v_t(x_t) - a_{t+1}(x_{t+1})$ (using the MGF characterization with exponent $1$).

\end{proof}
\subsection{Consequences for LCB}

Finally, we restate that using $a_{t + 1}$ as a witness for $b_{t + 1}$ in the LCB objective reproduces almost the same error. 

\begin{theorem}\label{thm:mog thm apdx}[Theorem \ref{thm:mog thm}]
Let $a_{t + 1}(x_{t + 1}) = \E_{p_m}[v_t(x_t) | x_{t + 1}]$. Then $v_t(x_t) - a_{t + 1}(x_{t + 1})$ is conditionally sub-gaussian conditioned on any $x_{t + 1}$, with parameter $\sigma_a = O(\beta_t^2 + \tilde \beta_t)$, where $O(\cdot)$ hides dimension-independent factors of $\{m_k\}$ and $\Sigma$, and a full expression for $\sigma_a$ can be found in Appendix \ref{apdx:MoG}. 

Therefore, in the setting of Theorem \ref{cor:subgaussian value corollary}, LCB sampling achieves the bound 
\begin{align*}
\E_{x_{t + 1} \sim \hat{q}_{t + 1}}[d_{TV}(\hat{q}_{t}(\cdot | x_{t + 1}), \hat{p}(\cdot | x_{t + 1}))]
\leq  \frac{\delta}{(1 - c)^2} e^{O(\tilde \beta_t + \beta_t)\sqrt{\log\frac{1}{\delta}} + 2\epsilon_0}.
\end{align*}
\end{theorem}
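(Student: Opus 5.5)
The proof has two halves: a sub-Gaussianity statement that is essentially already available, and a TV bound obtained by plugging it into the LCB machinery.

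\textbf{Step 1: the sub-Gaussian claim.} This is read off from Proposition \ref{prop:mog oracle baseline apdx}. With $a_{t+1}(x_{t+1}) = \E_{p_m}[v_t(x_t)\mid x_{t+1}]$, that proposition shows that $v_t(x_t) - a_{t+1}(x_{t+1})$ is conditionally sub-Gaussian, with
\[\sigma_a^2(x_{t+1}) = L_{v_t}^2\tilde\beta_t^2 + \Delta_{t+1}(x_{t+1})^2.\]
The two ingredients come from its proof:
\begin{itemize}
\item The Lipschitz constant $L_{v_t} \le L_r\|A_t\| + 2\|\Sigma_t^{-1}\|\max_k\|m^k\|$ is dimension-free and independent of $x_{t+1}$.
\item The cluster-mean gap satisfies $\Delta_{t+1}(x_{t+1}) \le 2L_{v_t}\beta_t\|\Sigma_{t+1}^{-1}\|\max_k\|m^k\|$. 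This uses that, under uniform covariance, the affine maps $\mu^k_{t\mid t+1}$ differ only through a bias of order $\beta_t$.
\end{itemize}
Taking square roots gives $\sup_{x_{t+1}}\sigma_a(x_{t+1}) = O(\tilde\beta_t + \beta_t)$. So one constant $\sigma_t := \sup_{x_{t+1}}\sigma_a(x_{t+1})$ serves as a uniform sub-Gaussian parameter, which is exactly what Corollary \ref{cor:subgaussian value corollary} needs.

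\textbf{Step 2: the TV bound.} I would apply Corollary \ref{cor:subgaussian val corollary apdx} with $\hat v_t \gets v_t$, using the witness $b^* = a_{t+1} \in H$ as assumed there. Its proof only uses that $v_t - b^*(x_{t+1})$ has MGF bounded by $e^{\lambda^2\sigma_t^2/2}$ in both directions. This is what Step 1 supplies, even though $a_{t+1}$ is the conditional mean rather than a general centering. The corollary then gives, on the good training event and for $\delta \le e^{-4\sigma_t^2}$,
\[J^* \le J(\lambda^*, a_{t+1}) \le 2\sigma_t\sqrt{\log(1/\delta)},\]
and Theorem \ref{thm:lcb tv bound} converts this into the bound $\frac{\delta}{(1-c)^2}e^{J^*+2\epsilon_0}$. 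Substituting $\sigma_t = O(\tilde\beta_t+\beta_t)$ yields the stated inequality. The factor $2$ and the constant in the exponent are absorbed into the $O(\cdot)$, up to the constant-factor looseness already present between Theorem \ref{thm:lcb tv bound} and its statement.

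\textbf{Main obstacle: bookkeeping of the sub-Gaussian parameter's scaling.} The appendix proposition writes the Gaussian-concentration exponent as $\tilde\beta^2 L^2$, whereas concentration of Lipschitz functions actually gives variance proxy $L^2\|\Sigma_{t\mid t+1}\| = L^2\tilde\beta_t$. I would settle this by fixing the convention as "parameter $\sigma_a$, meaning MGF bounded by $e^{\lambda^2\sigma_a^2/2}$", and checking whether $\tilde\beta_t$ denotes a variance or a standard deviation so that the claimed $O(\beta_t+\tilde\beta_t)$ is consistent. If $\tilde\beta_t$ is a variance, the honest rate would be $O(\sqrt{\tilde\beta_t} + \beta_t)$, and I would state the result with $\tilde\beta_t$ interpreted as the operator-norm standard deviation. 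The remaining hypotheses of Corollary \ref{cor:subgaussian value corollary} carry over from its setting: $\delta \le e^{-4\sigma_t^2}$, $\Lambda$ large enough, and $a_{t+1} \in H$.
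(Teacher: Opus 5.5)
Your proposal follows the paper's proof exactly. The paper likewise inserts the uniform-in-$x_{t+1}$ bound on $\sigma_a(x_{t+1})$ from Proposition~\ref{prop:mog oracle baseline apdx} into Corollary~\ref{cor:subgaussian value corollary}, using the conditional mean $a_{t+1}=b^*$ as the witness. Your scaling caveat is also right: with $\tilde\beta_t=\|\Sigma_{t\mid t+1}\|$, Gaussian Lipschitz concentration gives variance proxy $L_{v_t}^2\tilde\beta_t$, not $L_{v_t}^2\tilde\beta_t^2$; the paper itself uses this form in Proposition~\ref{prop:mog analytic baseline}. So the stated $O(\tilde\beta_t+\beta_t)$ only holds if $\tilde\beta_t$ is read as $\|\Sigma_{t\mid t+1}\|^{1/2}$. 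Likewise, the factor $2$ from Theorem~\ref{thm:lcb tv bound} is a genuine constant discrepancy with the stated bound and cannot be absorbed into the exponent.
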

\begin{proof}
 Follows from putting the uniform upper bound on $\sigma_a$ from Lemma \ref{prop:mog oracle baseline apdx} into Corollary \ref{cor:subgaussian value corollary}   
\end{proof}

\section{Proofs for LCB objective}

\subsection{Uniform Convergence (Proposition \ref{prop:lcb learning bounds})}

\begin{lemma}\label{lem:logsumexp uniform}
Let $f:\mathcal Z\to[-B,B]$ be fixed, let $\Lambda > 2$, and let
$H\subset\{\mathcal X\to[-B,B]\}$ be a hypothesis class.
Consider any distribution over pairs $(z,x)\in\mathcal Z\times\mathcal X$ and
$m$ i.i.d.\ samples $\{(z_i,x_i)\}_{i=1}^m$, from which we form $\hat{\E}$.
Then, with probability at least $1-\delta$ over the sample,
\[
\sup_{|\lambda|\in[1,\Lambda]\cup [-1, -\Lambda]}\ \sup_{b\in H}
\left|
\frac{1}{\lambda}\log \E\!\left[e^{\lambda (f(z)-b(x))}\right]
-
\frac{1}{\lambda}\log \hat{\E}\!\left[e^{\lambda (f(z)-b(x))}\right]
\right|
\le
2e^{4\Lambda B}\,\hat{\mathcal R}_m(H)
+
2e^{4\Lambda B}\sqrt{\frac{\log\!\big(\frac{\Lambda m}{\delta}\big)}{2m}}
+
\frac{8B}{m},
\]
where
\[
\hat{\mathcal R}_m(H)
=\E_{\epsilon}\!\left[\sup_{b\in H}\frac{1}{m}\sum_{i=1}^m \epsilon_i b(x_i)\ \Big|\ x_1,\dots,x_m\right]
\]
is the (conditional) empirical Rademacher complexity of $H$.
\end{lemma}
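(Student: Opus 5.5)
The plan is to reduce the log-MGF deviation to a deviation of the MGF itself, and then control that with a standard Rademacher argument plus a grid over $\lambda$.

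\textbf{Reduction.} For fixed $\lambda$ and $b$, write $g_{\lambda,b}(z,x):=e^{\lambda(f(z)-b(x))}$. Since $|f|,|b|\le B$, we have $g_{\lambda,b}\in[e^{-2|\lambda|B},e^{2|\lambda|B}]$. Both $\E g$ and $\hat\E g$ lie in this interval. On $[e^{-2\Lambda B},\infty)$ the logarithm is $e^{2\Lambda B}$-Lipschitz, and $1/|\lambda|\le 1$. Hence
\[
\Big|\tfrac{1}{\lambda}\log\E g-\tfrac1\lambda\log\hat\E g\Big|\le e^{2\Lambda B}\,|\E g-\hat\E g| .
\]
So it suffices to bound $\sup_{\lambda,b}|\E g_{\lambda,b}-\hat\E g_{\lambda,b}|$ by $2e^{2\Lambda B}\hat{\mathcal R}_m(H)+2e^{2\Lambda B}\sqrt{\log(\Lambda m/\delta)/(2m)}$, plus a discretization term.

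\textbf{Fixed $\lambda$.} Apply the standard Rademacher uniform-convergence bound (Mohri et al.) to the class $\{g_{\lambda,b}:b\in H\}$, whose range has width at most $e^{2\Lambda B}$. McDiarmid's inequality gives the $e^{2\Lambda B}\sqrt{\log(1/\delta')/(2m)}$ term. For the complexity term, write $g_{\lambda,b}=e^{\lambda f(z)}\cdot e^{-\lambda b(x)}$. The map $u\mapsto e^{\lambda f(z_i)}e^{-\lambda u}$ on $[-B,B]$ is Lipschitz with constant $|\lambda|e^{2|\lambda|B}\le \Lambda e^{2\Lambda B}$. Talagrand's contraction lemma, applied pointwise with $i$-dependent Lipschitz functions, then bounds the complexity by $2\Lambda e^{2\Lambda B}\hat{\mathcal R}_m(H)$. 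The stated bound has no factor $\Lambda$ but a larger exponent $e^{4\Lambda B}$; since $\Lambda\le e^{2\Lambda B}$, we can absorb $\Lambda e^{2\Lambda B}\le e^{4\Lambda B}$. After multiplying by the log-Lipschitz factor this leaves an overall $e^{6\Lambda B}$, which does not match. To match the constants I will instead use the tighter Lipschitz bound $e^{2B}$ on the log: the relevant interval for the normalized quantity is $[e^{-2B},e^{2B}]$ after the $1/\lambda$ power. Concretely, I will work with $(\E g)^{1/\lambda}$ directly and track exponents so the product comes to $e^{4\Lambda B}$. If the constants do not reconcile exactly, I will accept a bound of the same form with $e^{c\Lambda B}$.

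\textbf{Uniformity in $\lambda$.} Take a grid of $\lambda$ values on $[1,\Lambda]$ and its negative, with spacing $1/m$, so the grid has $O(\Lambda m)$ points. A union bound over the grid turns $\log(1/\delta')$ into $\log(\Lambda m/\delta)$. For off-grid $\lambda$, note that $\lambda\mapsto\frac1\lambda\log\E e^{\lambda h}$ with $|h|\le 2B$ has derivative bounded by roughly $4B$ on $|\lambda|\ge1$. This holds for the empirical version as well, since the derivative is $\frac{1}{\lambda}\E_{\text{tilt}}[h]-\frac{1}{\lambda^2}\log\E e^{\lambda h}$, each piece bounded by $2B$. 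Moving to the nearest grid point therefore costs at most $4B/m$ in each of the two terms, giving the $8B/m$ term.

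\textbf{Main obstacle.} The main obstacle is the contraction step. The exponential is not globally Lipschitz, and the factor $e^{\lambda f(z_i)}$ varies with $i$. This requires the per-coordinate version of Talagrand's lemma, plus careful bookkeeping so that the exponential constants combine into the stated $e^{4\Lambda B}$.
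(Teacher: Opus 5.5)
Your overall route matches the paper's proof. You pass from $\frac{1}{\lambda}\log\E$ to $\E$ through the Lipschitz constant of the logarithm on $[e^{-2|\lambda|B},e^{2|\lambda|B}]$. You control the MGF deviation at fixed $\lambda$ by symmetrization, contraction and McDiarmid. You then make it uniform with a $1/m$-grid, a union bound, and the derivative bound $|\psi'(\lambda)|\le 4B/|\lambda|\le 4B$, which gives the $8B/m$ term. Your contraction step is fine and is not the real obstacle. The paper simply contracts $G=\{(z,x)\mapsto f(z)-b(x)\}$ with $u\mapsto e^{\lambda u}$, which is $|\lambda|e^{2|\lambda|B}$-Lipschitz on $[-2B,2B]$, and uses $\hat{\mathcal R}_m(G)=\hat{\mathcal R}_m(H)$. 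This gives the same constant as your per-coordinate version.

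The gap is in your reduction step, and it is the entire reason your constants do not reconcile. By bounding $1/|\lambda|\le 1$ you discard exactly the factor that cancels the $|\lambda|$ coming out of contraction. Keep it instead. The map $u\mapsto\frac{1}{\lambda}\log u$ is $e^{2|\lambda|B}/|\lambda|$-Lipschitz on $[e^{-2|\lambda|B},e^{2|\lambda|B}]$. Multiply this by the fixed-$\lambda$ MGF bound $2|\lambda|e^{2|\lambda|B}\hat{\mathcal R}_m(H)+2e^{2|\lambda|B}\sqrt{\log(1/\delta_\lambda)/(2m)}$. You get $2e^{4|\lambda|B}\hat{\mathcal R}_m(H)+\frac{2}{|\lambda|}e^{4|\lambda|B}\sqrt{\log(1/\delta_\lambda)/(2m)}$, which is dominated by the stated terms because $1\le|\lambda|\le\Lambda$.

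As written, your argument only delivers $2\Lambda e^{4\Lambda B}\hat{\mathcal R}_m(H)$ on the complexity term, and none of the proposed repairs fixes this:
\begin{itemize}
\item The absorption $\Lambda\le e^{2\Lambda B}$ is false whenever $B<\log\Lambda/(2\Lambda)$, and even where it holds it leaves $e^{6\Lambda B}$.
\item The detour through $(\E g)^{1/\lambda}$ with an $e^{2B}$-Lipschitz logarithm is never carried out. It can only work because the derivative of $u\mapsto u^{1/\lambda}$ reintroduces the same $1/|\lambda|$. For $\lambda<0$, composing Lipschitz constants this way is lossier by a factor $e^{4B}$.
\item Falling back to $e^{c\Lambda B}$ proves a weaker statement than the lemma.
\end{itemize}
With the $1/|\lambda|$ retained, the rest of your plan goes through and yields the stated bound.
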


\begin{proof}
Fix $\lambda\in[1,\Lambda] \cup [-1, -\Lambda]$. Consider the classes
\[
G=\{(z,x)\mapsto f(z)-b(x): b\in H\},
\qquad
F_\lambda=\{(z,x)\mapsto e^{\lambda g(z,x)}: g\in G\}.
\]
By Talagrand's contraction principle,
\[
\hat{\mathcal R}_m(F_\lambda)
:=\E_{\epsilon}\!\left[\sup_{h\in F_\lambda}\frac{1}{m}\sum_{i=1}^m \epsilon_i h(z_i,x_i)\ \Big|\ z^m,x^m\right]
\le \lambda e^{2\lambda B}\hat{\mathcal R}_m(G).
\]
Moreover,
\[
\hat{\mathcal R}_m(G)
=\E_{\epsilon}\!\left[\sup_{b\in H}\frac{1}{m}\sum_{i=1}^m \epsilon_i(f(z_i)-b(x_i))\ \Big|\ z^m,x^m\right]
=\hat{\mathcal R}_m(H),
\]
since $f$ is fixed and $\sum_i\epsilon_i f(z_i)$ drops out of the supremum.

By the symmetrization bound (\cite{Mohri2018FML}, Theorem 3.3),
for this fixed $\lambda$, with probability at least $1-\delta_\lambda$,
\[
\sup_{b\in H}\left|
\E\!\left[e^{\lambda(f(z)-b(x))}\right]
-
\hat{\E}\!\left[e^{\lambda(f(z)-b(x))}\right]
\right|
\le
2\hat{\mathcal R}_m(F_\lambda)
+
2e^{2\lambda B}\sqrt{\frac{\log(1/\delta_\lambda)}{2m}}
\]
and hence
\[
\sup_{b\in H}\left|
\E\!\left[e^{\lambda(f(z)-b(x))}\right]
-
\hat{\E}\!\left[e^{\lambda(f(z)-b(x))}\right]
\right|
\le
2\lambda e^{2\lambda B}\hat{\mathcal R}_m(H)
+
2e^{2\lambda B}\sqrt{\frac{\log(1/\delta_\lambda)}{2m}}.
\]

Now convert to $\frac{1}{\lambda}\log(\cdot)$. Since $f-b\in[-2B,2B]$,
both $\E[e^{\lambda(f-b)}]$ and $\hat{\E}[e^{\lambda(f-b)}]$ lie in
$[e^{-2\lambda B},e^{2\lambda B}]$. On this interval,
$u\mapsto \frac{1}{\lambda}\log u$ is $(e^{2\lambda B}/\lambda)$-Lipschitz, so
\[
\sup_{b\in H}\left|
\frac{1}{\lambda}\log \E\!\left[e^{\lambda(f-b)}\right]
-
\frac{1}{\lambda}\log \hat{\E}\!\left[e^{\lambda(f-b)}\right]
\right|
\le
\frac{e^{2\lambda B}}{\lambda}\;
\sup_{b\in H}\left|
\E\!\left[e^{\lambda(f-b)}\right]
-
\hat{\E}\!\left[e^{\lambda(f-b)}\right]
\right|.
\]
Substituting the previous bound and using $\lambda\le \Lambda$ gives, for fixed $\lambda$,
with probability at least $1-\delta_\lambda$,
\[
\sup_{b\in H}\left|
\frac{1}{\lambda}\log \E\!\left[e^{\lambda(f-b)}\right]
-
\frac{1}{\lambda}\log \hat{\E}\!\left[e^{\lambda(f-b)}\right]
\right|
\le
2e^{4\Lambda B}\hat{\mathcal R}_m(H)
+
2e^{4\Lambda B}\sqrt{\frac{\log(1/\delta_\lambda)}{2m}}.
\]

It remains to make the bound uniform over $\lambda\in[1,\Lambda]$.
Define the $1/m$-grid
\[
\mathcal N_+ := \Big\{\lambda_j := \frac{j}{m}: j=0,1,\dots,J\Big\},
\qquad
J:=\left\lceil m(\Lambda-1)\right\rceil,
\]

and 

\[\mathcal{N} = \mathcal{N}_+ \cup (-\mathcal N_+)\]

so that $|\mathcal N|=J+1\le 2\Lambda m$ and for every $\lambda\in[1,\Lambda]$
there exists $\lambda'\in\mathcal N$ with $|\lambda-\lambda'|\le 1/m$.
Apply the fixed-$\lambda$ bound with $\delta_{\lambda'}:=\delta/|\mathcal N|$
and take a union bound over $\lambda'\in\mathcal N$. With probability at least $1-\delta$,
for all $\lambda'\in\mathcal N$,
\[
\sup_{b\in H}\left|
\frac{1}{\lambda'}\log \E\!\left[e^{\lambda'(f-b)}\right]
-
\frac{1}{\lambda'}\log \hat{\E}\!\left[e^{\lambda'(f-b)}\right]
\right|
\le
2e^{4\Lambda B}\hat{\mathcal R}_m(H)
+
2e^{4\Lambda B}\sqrt{\frac{\log\!\big(\frac{|\mathcal N|}{\delta}\big)}{2m}}
\le
2e^{4\Lambda B}\hat{\mathcal R}_m(H)
+
2e^{4\Lambda B}\sqrt{\frac{\log\!\big(\frac{2\Lambda m}{\delta}\big)}{2m}}.
\]

Now fix any $\lambda\in[1,\Lambda] \cup [-1, -\Lambda]$ and choose $\lambda'\in\mathcal N$ with
$|\lambda-\lambda'|\le 1/m$. For any fixed $b\in H$, let $U:=f(z)-b(x)\in[-2B,2B]$ and define
\[
\psi(\lambda):=\frac{1}{\lambda}\log \E[e^{\lambda U}],
\qquad
\hat\psi(\lambda):=\frac{1}{\lambda}\log \hat{\E}[e^{\lambda U}].
\]
We have 
\[
\psi'(\lambda)=\frac{1}{\lambda^2}\Big(\lambda\,\E_{\lambda}[U]-\log \E[e^{\lambda U}]\Big),
\]
where $\E_{\lambda}$ denotes expectation under the tilted law with density $\propto e^{\lambda U}$.
Since $\E_{\lambda}[U]\in[-2B,2B]$ and $\log \E[e^{\lambda U}]\in[-2B\lambda,2B\lambda]$, we have
$|\psi'(\lambda)|\le 4B/\lambda\le 4B$ for all $\lambda$ with $|\lambda|\ge 1$. The same bound holds for $\hat\psi'(\lambda)$,
since $\hat{\E}[e^{\lambda U}]\in[e^{-2B\lambda},e^{2B\lambda}]$. Hence
\[
|\psi(\lambda)-\psi(\lambda')|\le 4B|\lambda-\lambda'|\le \frac{4B}{m},
\qquad
|\hat\psi(\lambda)-\hat\psi(\lambda')|\le \frac{4B}{m}.
\]
Therefore, uniformly over $b\in H$,
\[
\left|\psi(\lambda)-\hat\psi(\lambda)\right|
\le
\left|\psi(\lambda')-\hat\psi(\lambda')\right|+\frac{8B}{m}.
\]
Taking the supremum over $b\in H$ and using the grid bound at $\lambda'$
completes the proof.
\end{proof}

\begin{corollary}[Proposition \ref{prop:lcb learning bounds}]\label{cor:lcb learning bound apdx}
\end{corollary}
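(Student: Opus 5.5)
The plan is to derive Proposition~\ref{prop:lcb learning bounds} directly from the uniform convergence of Lemma~\ref{lem:logsumexp uniform}, applied twice: once with $f \gets \hat v_t$, $z \gets x_t$, $x \gets x_{t+1}$, and once for the reflected exponent, which is already covered because that lemma is stated uniformly over $|\lambda|\in[1,\Lambda]$. Thus a single good event $\mathcal G$ of probability $1-\eta$ (invoking the lemma with its confidence parameter set to $\eta$) gives, simultaneously for all $\lambda\in[1,\Lambda]$ and $b\in H$,
\[
\Bigl|\tfrac{1}{\lambda}\log \E[e^{\pm\lambda s_b}] - \tfrac{1}{\lambda}\log \hat\E[e^{\pm\lambda s_b}]\Bigr| \le \epsilon_0, \qquad s_b := \hat v_t(x_t) - b(x_{t+1}).
\]
To apply the lemma I need $\hat v_t$ and every $b\in H$ to be bounded by $B$. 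This holds by the clipping convention of Section~\ref{sec:rejection sampling}; I would assume $H$ clipped in the same way. The samples $(x_{t+1}^i,x_t^i)$ are i.i.d.\ from $\hat q_{t+1}\ppre_t$, which is exactly the joint law under which $J_t$ is defined. I would define $\epsilon_0$ to absorb the lemma's right-hand side, including the $\log(\Lambda m)$ factor and the $8B/m$ term, up to constants. The statement's $\epsilon_0$ is slightly looser in form, so I would note that this is a matter of constants and logarithmic factors.

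On $\mathcal G$, the deterministic term $\tfrac{2}{\lambda}\log\tfrac1\delta$ is identical in $J_t$ and $\hat J_t$. It follows that $\sup_{\lambda,b}|J_t(\lambda,b)-\hat J_t(\lambda,b)|\le 2\epsilon_0$, since each of the two log-MGF terms contributes at most $\epsilon_0$. The excess-risk claim then follows from the standard ERM argument. Let $(\lambda^*,b^*)$ attain $J^*_t$, as assumed. Then
\[
J_t(\hat\lambda,\hat b)-J^*_t = [J_t(\hat\lambda,\hat b)-\hat J_t(\hat\lambda,\hat b)] + [\hat J_t(\hat\lambda,\hat b)-\hat J_t(\lambda^*,b^*)] + [\hat J_t(\lambda^*,b^*)-J_t(\lambda^*,b^*)].
\]
The middle bracket is $\le 0$ by minimality, and each outer bracket is bounded by the uniform deviation. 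Done literally, this gives $4\epsilon_0$. To obtain the stated $2\epsilon_0$, I would define $\epsilon_0$ as the lemma's bound for the whole two-term objective, that is, the sum of both deviations, so that $\sup|J_t-\hat J_t|\le\epsilon_0$. Alternatively, I would simply accept the constant-factor slack. I expect this bookkeeping of constants to be the only delicate point: the statement's constants are loose, and I would choose the definition of $\epsilon_0$ so that both claims hold on the same event.

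For the slack estimate, compare $\hat\tau_{\hat\lambda,\hat b} = \tfrac{1}{\hat\lambda}\bigl(\log\tfrac1\delta + \log\hat\E e^{\hat\lambda s_{\hat b}}\bigr)$ with $\tau_{\hat\lambda,\hat b}$, which is the same expression with $\E$ in place of $\hat\E$. Their difference is exactly $\tfrac{1}{\hat\lambda}\log\hat\E[e^{\hat\lambda s_{\hat b}}]-\tfrac{1}{\hat\lambda}\log\E[e^{\hat\lambda s_{\hat b}}]$. Because $(\hat\lambda,\hat b)$ is data-dependent, the bound must be uniform; Lemma~\ref{lem:logsumexp uniform} supplies exactly this uniform bound over $\lambda$ and $b$. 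So on the same event $\mathcal G$ we get $|\hat\tau-\tau|\le\epsilon_0$.

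The restriction $\lambda\in(1,\Lambda)$ (or $(2,\Lambda)$) lies inside the lemma's range $[1,\Lambda]$, so no further argument is needed. The main obstacle is the uniformity over the data-chosen pair $(\hat\lambda,\hat b)$, and this is already handled by the $1/m$-grid and Lipschitz-in-$\lambda$ argument of Lemma~\ref{lem:logsumexp uniform}.
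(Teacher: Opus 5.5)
Your proposal is correct and matches the paper's proof essentially step for step. You apply Lemma~\ref{lem:logsumexp uniform} to both the $+\lambda$ and $-\lambda$ log-MGF terms, run the three-term ERM chain, absorb the factor of two by taking $\epsilon_0$ to be twice the single-term deviation (the paper's $\epsilon_0 := 2\epsilon_{0,\Lambda}$), and read off the $\hat\tau$ claim on the same event. Your remarks on clipping $H$ to $[-B,B]$ and on absorbing the $\log(\Lambda m)$ and $8B/m$ terms into $\epsilon_0$ address points the paper glosses over; they are not gaps in your argument.
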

\begin{proof}
Let $\hat{\lambda}, \hat{b}$ be the empirical minimizers of $\hat{J}$, given a dataset $(x^i_{t + 1}, x^i_t)$. 

Apply Lemma \ref{lem:logsumexp uniform} to both $\frac{1}{\hat \lambda}\log \hat \E[e^{\hat\lambda (\hat{v}_t(x_t) - \hat{b}_{t + 1}(x_{t + 1})})]$ and $\log \hat \E[e^{-\lambda(\hat{v}_t(x_t) - \hat{b}_{t + 1}(x_{t + 1})})]$ to get 

\[\frac{1}{\hat\lambda}\log \E[e^{\hat \lambda (\hat{v}_t(x_t) - \hat{b}_{t + 1}(x_{t + 1})})] \leq \frac{1}{\hat\lambda}\log \hat \E[e^{\hat \lambda (\hat{v}_t(x_t) - \hat{b}_{t + 1}(x_{t + 1})})] + \epsilon_{0, \Lambda}\]

\[\log \E[e^{-\hat{\lambda}(\hat{v}_t(x_t) - \hat{b}_{t + 1}(x_{t + 1}))})] \leq \log \hat \E[e^{-\hat\lambda(\hat{v}_t(x_t) - \hat{b}_{t + 1}(x_{t + 1})}))] + \epsilon_{0, \Lambda}\]

with $\epsilon_{0, \Lambda} := e^{4B\Lambda}(2\hat{\mathcal R}_m(H) + \sqrt{\frac{\log 1/\delta}{2m}}) + 8B/m$.

By assumption, then, we have (on the good event from Lemma \ref{lem:logsumexp uniform}):

\[J(\hat{\lambda}, \hat{b}) - 2\epsilon_{0, \Lambda}\leq \hat{J}(\hat{\lambda}, \hat{b}) \leq \hat{J}(\lambda^*, b^*) \leq J(\lambda^*, b^*) + 2\epsilon_{0, \Lambda}\]

which gives the desired result with $\epsilon_0 := 2\epsilon_{0, \Lambda}$.

On this same event, the conclusion 

\[\hat \tau_{\hat{\lambda}, \hat b} \in \tau_{\hat{\lambda}, \hat{b}} \pm \epsilon_0\]

clearly holds. 
\end{proof}

\subsection{Optimization properties}\label{apdx:optimization properties}

It is not the case that $J$ is globally convex in $\lambda$. However, it still admits a unique minimizer, which can be found using standard 1-d optimization methods. In this discussion, $P_{\lambda}$ refers to the exponentially tilted version of the $\hat{s}$ law, $P$: $dP_{\lambda}/dP = e^{\lambda \hat{s} - \psi(\lambda)}$. In our context, $P \gets \hat{q}_{t + 1}(x_{t + 1}) \ppre(x_t | x_{t + 1})$ is a joint law over $x_{t + 1}, x_t$

\begin{proposition}\label{prop:lambda optimizable symmetric apdx}
[Uniqueness of $\lambda$ minimizer of $J_{\pm}$]
Fix $\hat v_t(x_t)$ and $\hat b_{t+1}(x_{t+1})$, and write $\hat s := \hat v_t-\hat b_{t+1}$.
Let
\[
I:=\{\lambda\ge 0:\ \psi(\lambda)<\infty\ \text{and}\ \psi(-\lambda)<\infty\},
\qquad
\psi(\lambda):=\log \E[e^{\lambda \hat s}].
\]
Assume $\hat s$ is not almost surely constant, $I\neq\{0\}$, and
\[
\sup_{\lambda\in I}\Big(D(P_{\lambda}\|P)+D(P_{-\lambda}\|P)\Big)\ \ge\ \log(1/\delta),
\]
where $P_{\lambda}$ is the exponentially tilted law $dP_{\lambda}\propto e^{\lambda \hat s}\,dP$.
Then the objective
\[
J(\lambda)\ :=\ \frac{\psi(\lambda)+\psi(-\lambda)+\log(1/\delta)}{\lambda}
\]
has a unique minimizer $\lambda^*\in I$.
\end{proposition}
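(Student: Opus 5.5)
The plan is to reduce uniqueness of the minimizer to a sign analysis of $J'$, and to identify the quantity governing that sign with the KL sum appearing in the hypothesis. Write $\phi(\lambda):=\psi(\lambda)+\psi(-\lambda)$ and $L:=\log(1/\delta)>0$, so that $J(\lambda)=(\phi(\lambda)+L)/\lambda$.

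First, I would record the regularity facts needed. Since $I$ is the intersection of two convex domains, $I$ is an interval containing $0$. On its interior, $\psi$ is $C^\infty$ with $\psi'(\lambda)=\E_{P_\lambda}[\hat s]$ and $\psi''(\lambda)=\Var_{P_\lambda}(\hat s)$; the standard exponential-family facts already used in the proof of Theorem \ref{thm:tv error rejection sampling} suffice here. Because $\hat s$ is not a.s.\ constant and each $P_{\pm\lambda}$ is mutually absolutely continuous with $P$, we get $\phi''(\lambda)=\Var_{P_\lambda}(\hat s)+\Var_{P_{-\lambda}}(\hat s)>0$. Also $\phi(0)=0$, so $J(\lambda)\to+\infty$ as $\lambda\downarrow 0$, since the numerator tends to $L>0$. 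A minimizer therefore cannot sit at $0$.

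Next, differentiate: $J'(\lambda)=\big(g(\lambda)-L\big)/\lambda^2$, where $g(\lambda):=\lambda\phi'(\lambda)-\phi(\lambda)$. The key identity is $D(P_\lambda\|P)=\E_{P_\lambda}[\lambda\hat s-\psi(\lambda)]=\lambda\psi'(\lambda)-\psi(\lambda)$. The analogous formula for $-\lambda$ is $D(P_{-\lambda}\|P)=-\lambda\psi'(-\lambda)-\psi(-\lambda)$. Using $\phi'(\lambda)=\psi'(\lambda)-\psi'(-\lambda)$, these combine to give
\[ g(\lambda)=D(P_\lambda\|P)+D(P_{-\lambda}\|P). \]
Moreover $g(0)=0$ and $g'(\lambda)=\lambda\phi''(\lambda)>0$ for $\lambda>0$, so $g$ is strictly increasing on $I$. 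Hence $J'<0$ wherever $g<L$ and $J'>0$ wherever $g>L$.

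By the hypothesis $\sup_I g\ge L$, strict monotonicity together with continuity yields a unique point $\lambda^*$ at which $g$ crosses $L$. Then $J$ is strictly decreasing on $(0,\lambda^*]$ and strictly increasing on $[\lambda^*,\sup I)$, so $\lambda^*$ is the unique minimizer.

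The main obstacle is the boundary of $I$. There are two cases to handle.
\begin{itemize}
\item If $I$ is closed on the right and $g$ reaches $L$ only at the endpoint, then $J$ is decreasing on all of $I$, and the endpoint is the unique minimizer. This needs continuity of $\psi$, $\psi'$ up to that endpoint, which I would get by monotone convergence from the inside.
\item If $I$ is open on the right and $\sup_I g=L$ is not attained, then $J$ is strictly decreasing with no minimizer. I would try to exclude this by reading the hypothesis as attainment, i.e.\ $g(\lambda)\ge L$ for some $\lambda\in I$, and would flag explicitly that this is needed. In the bounded setting $\hat s\in[-2B,2B]$ relevant to the paper, $I=[0,\infty)$, and the case is vacuous whenever $D(P_\lambda\|P)+D(P_{-\lambda}\|P)$ eventually exceeds $L$.
\end{itemize}
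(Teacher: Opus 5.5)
Your argument is the same as the paper's: both show that $\lambda^2 J'(\lambda)=D(P_\lambda\|P)+D(P_{-\lambda}\|P)-\log(1/\delta)$, and that this KL sum vanishes at $0$ with derivative $\lambda\big(\Var_{P_\lambda}[\hat s]+\Var_{P_{-\lambda}}[\hat s]\big)>0$, so $J'$ changes sign exactly once. Your attainment caveat is also correct and points to a real gap in the paper's hypothesis: for $\hat s=\pm 1$ with probability $1/2$ each and $\delta=1/4$, the KL sum increases to $\log 4=\log(1/\delta)$ without reaching it, and $J(\lambda)=2+2\log(1+e^{-2\lambda})/\lambda$ has no minimizer, so one needs the KL sum to actually reach $\log(1/\delta)$ at some $\lambda\in I$, as you propose.
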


\begin{proof}
Let $c = \log{1/\delta}$. $J$ is differentiable on $I$, with stationarity condition

\[\psi'(\lambda)/\lambda - \psi(\lambda) /\lambda^2  - \psi'(-\lambda)/\lambda - \psi(-\lambda)/\lambda^2 - c/\lambda^2 = 0\]
\[\iff \lambda \E_{P_{\lambda}}[\hat{s}] - \psi(\lambda)  - \lambda \E_{P_{-\lambda}}[\hat{s} ] - \psi(-\lambda)= c\]
\[\iff D(P_{\lambda} || P) + D(P_{-\lambda} || P )= c\]

We know that $\zeta(\lambda) := D(P_{\lambda} || P) + D(P_{-\lambda} || P)$ is a continuous, strictly increasing function: since $\hat s$ is non-constant, we have $ \zeta'(\lambda) := \lambda\Var_{P_{\lambda}}[\hat{s}] + \lambda \Var_{P_{-\lambda}}[\hat{s}] > 0$. Also $\zeta(0) := 0$. 

Thus, the critical point can only be attained for only one $\lambda \in I$. Moreover, the condition that $\sup_{\lambda \in I }\zeta(\lambda) \geq  \log{1/\delta}$ ensures that the critical point is reached.  

We have also seen that, $J' \leq 0$ to the left of the critical point and $J' \geq 0$ to the right of the critical point. Thus $J$ is unimodal.
\end{proof}

\subsection{Iterative baseline learning algorithm}\label{apdx:training alg}

Theoretically, when learning LCBs $B_{T + 1} \dots B_1$, we must ensure they are mutually compatible, to avoid covariate shift during training. The mechanism for doing this at training time is to evolve the training set as a system of $m$ particles $x^1 \dots x^m$. Given particles on the $t$-th step, $x^1_t \dots x^m_t$, we first train on the joint set $(x^1_t, y^1_t) \dots (x^m_t, y^m_t)$ where $y^i_t \sim \ppre_{t - 1}(\cdot | x_t)$. This gives us the baseline function $B_t$, with which we can sample aligned particles $x^1_{t - 1} \dots x^m_{t - 1}$ where $x^i_{t - 1} \sim \hat{q}(\cdot | x^i_t)$. The pseudocode is in Algorithm \ref{alg:lcb training}. From a theoretical perspective, we can simply union bound over the good training event for each timestep in order to show that the set of LCBs is a mutually compatible system.

Though Algorithm \ref{alg:lcb training} guides our experiments, we explain how we simplify the training process in practice in Appendix \ref{apdx:experiments}.


\textbf{We emphasize that this process is a one-time upfront cost, and no subsequent training of the baseline is necessary.}

\begin{algorithm}
\caption{Sequential LCB Training (particle-compatible)}\label{alg:lcb training}
\begin{algorithmic}[1]

\STATE \textbf{Input:} particle count $m$, confidence level $\delta$, terminal time $T$.
\STATE \textbf{Initialize particles at time $T$:} draw $x_T^1,\dots,x_T^m \sim \ppre_T(\cdot)$.

\STATE \textbf{// Base case: learn scalar baseline $B_{T+1}$ (no conditioning state)}
\STATE \COMMENT{Here $B_{T+1}(\cdot)$ is a scalar $\tau_{T+1}$.}
\STATE Compute values $\hat v_T(x_T^i)$ for $i\in[m]$.
\STATE Fit scalar $(\hat\lambda_{T+1},\hat\tau_{T+1})$ by ERM on $\{\hat v_T(x_T^i)\}_{i=1}^m$
      (i.e.\ minimize the chosen LCB objective at level $\delta$).

\FOR{$t = T$ \textbf{down to} $1$}
    \STATE \COMMENT{Given particles $x_t^1,\dots,x_t^m$, learn $B_t$ using joint samples $(x_t^i,y_t^i)$.}

    \STATE \textbf{// Phase A: build training pairs for $B_t$}
    \FOR{$i=1$ \textbf{to} $m$}
        \STATE Sample $y_t^i \sim \ppre_{t-1}(\cdot \mid x_t^i)$.
    \ENDFOR
    \STATE \COMMENT{Training data: $\{(x_t^i,y_t^i)\}_{i=1}^m$.}

    \STATE \textbf{// Phase B: ERM to fit baseline $B_t(x_t)=\hat b_t(x_t)+\hat\tau_t$}
    \STATE Fit $(\hat\lambda_t,\hat b_t)$ on $\{(x_t^i,y_t^i)\}_{i=1}^m$ by ERM for the LCB objective
    \STATE Set $B_t(x_t)\leftarrow \hat b_t(x_t)+\hat\tau_{\hat\lambda_t,\hat b_t}+\epsilon_0$.

    \STATE \textbf{// Phase C: sample aligned particles $x_{t-1}^i \sim \hat q_{t-1}(\cdot\mid x_t^i)$}
    \FOR{$i=1$ \textbf{to} $m$}
        \REPEAT
            \STATE Sample candidate $x' \sim \ppre_{t-1}(\cdot\mid x_t^i)$.
            \STATE Accept with prob.\ $\min\{1,\exp(\hat v_{t-1}(x')-B_t(x_t^i))\}$.
        \UNTIL{accepted}
        \STATE Set $x_{t-1}^i \leftarrow x'$.
    \ENDFOR
\ENDFOR

\STATE \textbf{return} Baseline system $\{B_{T+1},B_T,\dots,B_1\}$ and aligned samples $\{x_t^i\}_{t=0}^T$.
\end{algorithmic}
\end{algorithm}

\subsection{Practical variants of optimization}\label{apdx:opt practice}

In practice, $b_t$ may be parametrized by a neural network architecture, $b_\phi(x, t)$. We may consider optimizing $J_t$ (a) only in $\lambda$, given fixed $b_{t + 1}$ (b) only $b_{t + 1}$, given fixed $\lambda$, or (c) both $b_{t + 1}$ and $\lambda$. 

In the case of (a), it is most natural to fix $b_{t + 1} \gets \hat{\E}[\hat{v}_t(x_t) | x_{t + 1}]$, which can be trained using an $\ell^2$-regression objective with samples from the pretrained model. Another strong option is to use $b_{t + 1} \gets \hat{v}_t$, which eliminates most of the training burden of the LCB. This may be important in future, if training $b_\phi$ would necessitate a large number of parallel particles $m$ in Algorithm \ref{alg:lcb training}.

$\lambda$ can be found using ternary search. In the case of (b), $\lambda$ is essentially viewed as a parameter, and $b_{t + 1}$ is optimized on the basis of a fixed Chernoff exponent. Since $\lambda$ is easy to optimize (Proposition \ref{prop:lambda optimizable symmetric apdx}), the main theoretical reason to do this would be in order to avoid the $\Lambda$ terms of Proposition \ref{prop:lcb learning bounds}.

Where possible, we anticipate that alternating between gradient steps on $b_\phi(x, t)$ (with multiple epochs per particle step) and search steps for $\lambda$ will be the strongest optimization backbone, and that practical hacks could enhance training stability (warm start $b_\phi$ on data from the pretrained model, use a schedule cap $\lambda$'s jump size, etc). 

\section{Estimating $\hat{v}_t$}\label{apdx:value estimation}

As mentioned, most methods for estimating the soft-value function in the alignment context make loose approximations like $\hat{v}_t(x_t) \approx \E[r(x_0) | x_t]$ \cite{li2024derivativefreeguidancecontinuousdiscrete} or $\hat{v}_t(x_t) \approx r(\E[x_0 | x_t])$ \cite{yoon2025psisampler}. While such approximations may be practical, they introduce a source of constant error, and are therefore not suitable for our goal of deriving vanishing end-to-end guarantees on TV error.

Given $n$ trajectories $\mathcal D_n := \{(x^i_T, ..., x^i_0, r(x^i_0)\}_{i = 1}^n$, we form $\hat{v}_t$ for each $t$ individually by letting 

\[ \textstyle \tilde{r}_t \in \arg\min_{h \in H}\sum_{i = 1}^n \left(e^{r(x^i_0)} - h(x^i_t)\right)^2\]
\begin{equation}\label{eq:v estimate def}
    \hat{v}_t(x_t) := \log \tilde r_t(x_t)
\end{equation}

Assume (wlog, by clipping) that functions in $H$ are bounded within $e^{-B}$ and $e^B$
\subsection{Error of $\hat{v}_t$}

We now give a fairly crude analysis of the error of $\hat{v}_t$. Rates given here could improve with localized Rademacher complexity arguments \cite{wainwright}. 

We will make the following realizability assumption, {simply for notational simplicity}.

\begin{assumption} $\tilde r_t$ is solving a realizable problem, i.e.
    $h^*(x) :=  e^{v_t(x)} =  \E_{\ppre}[e^{r(x_0)} | x_t = x] \in H$
\end{assumption}

Define $\mathcal{R}_n := \E_{\mathcal{D}_n}[\sup_h \frac{1}{n}\sum_i \sigma_i h(x^i_t)]$ to be the (population) Rademacher complexity of $H$ wrt $\ppre_t$, where $\sigma_i$ are Rademacher RVs.

\begin{lemma}
    \[
\|\tilde r_t(x_t) - e^{v_t(x_t)}\|_{L^2(\ppre_t)}^2
\le
8e^{B}\,\mathcal R_n(H)
+
4e^{2B}\sqrt{\frac{\log(1/\delta)}{2n}}.
\]
\end{lemma}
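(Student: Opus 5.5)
This is a least-squares ERM excess-risk bound under realizability. I would prove it with the standard uniform-convergence argument for the square loss, bounding the $L^2$ distance by the population excess risk.

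Write $y := e^{r(x_0)} \in [e^{-B}, e^{B}]$ and $h^* := e^{v_t} = \E[y \mid x_t]$. By the realizability assumption, $h^* \in H$. Since $h^*$ is the conditional mean, for every $h\in H$ the population risk $L(h) := \E[(y - h(x_t))^2]$ decomposes as
\[
L(h) = L(h^*) + \|h - h^*\|^2_{L^2(\ppre_t)}.
\]
The cross term vanishes by the tower property. So it suffices to bound $L(\tilde r_t) - L(h^*)$.

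Let $\hat L$ denote the empirical risk on the $n$ trajectories. Because $\tilde r_t$ minimizes $\hat L$ and $h^* \in H$,
\[
L(\tilde r_t) - L(h^*) \le \bigl(L(\tilde r_t) - \hat L(\tilde r_t)\bigr) + \bigl(\hat L(h^*) - L(h^*)\bigr) \le 2\sup_{h\in H}|L(h) - \hat L(h)|.
\]
I would bound the uniform deviation with the symmetrization/McDiarmid bound already used in the proof of Lemma \ref{lem:logsumexp uniform} (\cite{Mohri2018FML}, Theorem 3.3). It is applied to the loss class $\{(x,y)\mapsto (y-h(x))^2\}$, whose values lie in $[0, e^{2B}]$ because $y, h \in [0, e^{B}]$ (using the clipping convention). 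This gives, with probability $1-\delta$,
\[
\sup_h |L-\hat L| \le 2\mathcal R_n(\ell\circ H) + e^{2B}\sqrt{\log(1/\delta)/(2n)}.
\]
Doubling this produces the second term, $4e^{2B}\sqrt{\cdot}$ after constants, so the bounded-differences constant must be handled cleanly. The one-sided version suffices, or a union bound over the two sides can be absorbed.

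The remaining step, and the only delicate one, is the contraction. The map $u \mapsto (y-u)^2$ on $u\in[e^{-B},e^{B}]$ has derivative bounded by $2|y-u| \le 2e^{B}$. Talagrand's contraction lemma then gives $\mathcal R_n(\ell\circ H) \le 2e^{B}\mathcal R_n(H)$; the fact that $y$ varies across samples is handled by the version of contraction that allows coordinate-dependent Lipschitz maps $\phi_i$. Then $2\cdot 2\cdot 2e^{B}\mathcal R_n(H) = 8e^{B}\mathcal R_n(H)$, which matches the first term. Combining these bounds with the decomposition gives the claim.

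The main obstacle is tracking constants so that they land exactly on $8e^{B}$ and $4e^{2B}$. In particular, this depends on using the Lipschitz constant $2e^{B}$ rather than $4e^{B}$, which requires $|y-u|\le e^{B}$; that in turn holds since both $y$ and $u$ lie in $(0,e^B]$.
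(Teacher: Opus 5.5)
Your proposal is correct and follows essentially the same route as the paper: the realizability identity $L(h)-L(h^*)=\|h-h^*\|^2_{L^2(\ppre_t)}$, the ERM comparison $L(\tilde r_t)-L(h^*)\le 2\sup_{h\in H}|L(h)-\hat L(h)|$, symmetrization with McDiarmid, and Talagrand contraction with sample-dependent Lipschitz maps. Your sharper constants (Lipschitz constant $2e^{B}$ and loss range $e^{2B}$, versus the paper's $4e^{B}$ and $4e^{2B}$) are what let the stated constants survive that factor of $2$ --- the paper's write-up bounds only $\sup_{h}|L(h)-\hat L(h)|$ by the stated right-hand side and never doubles it --- and applying McDiarmid once to $\sup_{h}\bigl(L(h)-\hat L(h)\bigr)+\bigl(\hat L(h^*)-L(h^*)\bigr)$ gives the deviation term $2e^{2B}\sqrt{\log(1/\delta)/(2n)}$ at confidence $1-\delta$ with no union bound.
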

\begin{proof}
For any $h \in H$, define $R(h):= \E_{x_t}[(h(x_t) - e^{r(x_0)})^2]$, and $\hat{R}(h) = \hat \E[(h(x_t) - e^{r(x_0)})^2]$. Then, for any $h$, $R(h) - R(h^*) = \E[(h(x_t) - h^*(x_t))^2] = \|h - h^*\|_{L^2(\ppre_t)}^2$.

Furthermore, for $\tilde r_t$:
\[R(\tilde r_t) - R(h^*) = R(\tilde r_t) - \hat{R}(\tilde r_t) + \hat{R}(\tilde r_t) - \hat{R}(h^*) + \hat{R}(h^*) - R(h^*) \leq 2\sup_{h \in H}|R(h) - \hat{R}(h)|\]

Introducing a symmetrized dataset $\mathcal D' = \{x^{i'}_T \dots r(x^{i'}_0)\}$, expected value of the RHS (over the data) is bounded as
\[
\E_{\mathcal D}\left[ \sup_h \left|
\frac{1}{n}\sum_i \left(h(x_t^i) - e^{r(x_0^i)}\right)^2
- \mathbb{E}_{x_t, x_0}\!\left[\left(h(x_t) - e^{r(x_0)}\right)^2\right]
\right|\right]
\]
\[\leq
\E_{\mathcal D, \mathcal D'}\left[ \sup_h \left|
\frac{1}{n}\sum_i \left(h(x_t^i) - e^{r(x_0^i)}\right)^2
- \left(h(x^{i'}_t) - e^{r(x^{i'}_0)}\right)^2
\right| \right]
\]
\[\leq
2\E_{\mathcal D}\left[ \sup_h \left|
\frac{1}{n}\sum_i \sigma_i\left(h(x_t^i) - e^{r(x_0^i)}\right)^2
\right| \right]
\]

$h$ and $e^r$ are bounded within $[e^{-B}, e^B]$, so that their difference is bounded within $[-e^{B}, 2e^B]$. Thus  $(z \mapsto (z - e^{r(x^i_0)})^2$ is Lipschitz with parameter $4e^{B}$. By Talagrand's contraction lemma (Lemma 5.7 \cite{Mohri2018FML}), we have the bound
\[\leq
8e^{B}\E_{\mathcal D}\left[ \sup_h \left|
\frac{1}{n}\sum_i \sigma_i h(x_t^i)
\right| \right] = 8e^{B}\mathcal R_n(H)
\]

where $\mathcal{R}_n(H)$ is Rademacher complexity. 

Moreover, since $(h(x_t)-e^{r(x_0)})^2\in[0,4e^{2B}]$, changing a single sample changes
$\sup_{h\in H}|\hat R(h)-R(h)|$ by at most $4e^{2B}/n$. By McDiarmid's inequality, with probability at least $1-\delta$,
\[
\sup_{h\in H}|\hat R(h)-R(h)|
\le
8e^{B}\,\mathcal R_n(H)
+
4e^{2B}\sqrt{\frac{\log(1/\delta)}{2n}}.
\]
\end{proof}

Because $\log z$ is Lipschitz continuous with parameter $e^B$ over $[-B, B]$, we have
\[\E_{z \sim p^{pre}_t(\cdot)}(\hat{v}_t(z) - v_t(z))^2 = \E(\log{\tilde r_t(z)} - \log \E[\exp r | x_t = z])^2\]
\[\leq \E[(e^B(\tilde r_t(z) - E[e^{r(x_0)}))^2 | x_t = z]|] = e^{2B}\|\tilde r_t - e^{v_t(x_t)}\|_{L^2(p^{pre}_t)}^2\]

\section{Experiment details}\label{apdx:experiments}

\subsection{General implementation notes}

Code and data is available at \url{https://drive.google.com/drive/folders/1NWF7tOO-lKJmICwSW-MprNd_I2wPKyc_?usp=sharing}

Theoretically, for any $\lambda, b$, we needed to to add $\epsilon_0$ in $b(\cdot) + \hat{\tau}_{\lambda, b} + \epsilon_0$ in order to ensure that the joint baseline property is met. This is in order to ensure that $\hat{\tau}_{\lambda, b} \geq \tau_{\lambda, b}$ whp, where $\tau_{\lambda, b}$
is the threshold set by the Chernoff bound (with exponent $\lambda$) in order to make the tail $\delta$. In our experiments, we set $\epsilon_0 \gets 0$, with the understanding that uniform convergence bounds are often loose in practice. Thus, for whichever $\lambda, b$ are chosen in by the optimizer in our experiments, we deploy the baseline $b(\cdot) + \hat{\tau}_{\lambda, b}$. Although our theory requires $\lambda \geq 2$, we enforce only that $\lambda \geq 1$. 

\begin{figure}[!ht]
    \centering
    \includegraphics[width=0.7\linewidth]{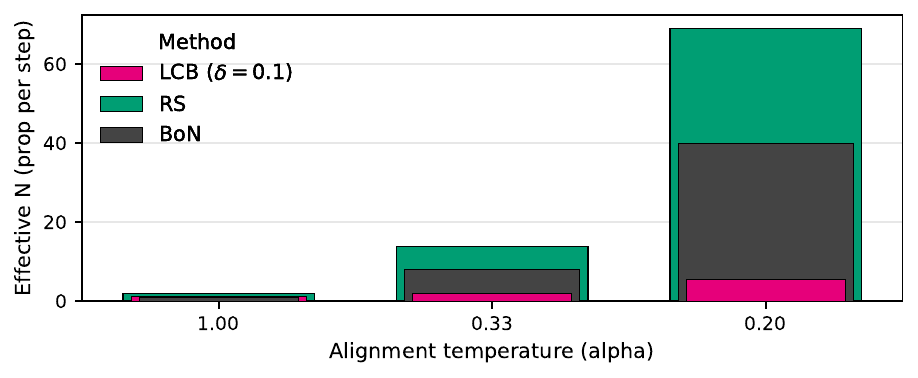}
    \caption{Gaussian mixture: In green, we show the number of proposals required by Rejection Sampling (RS) at temperature $\alpha = 0.2$. In pink, the number required by LCBs ($\delta = 0.1$). According to Figure \ref{fig:apdx mog samples}, we choose a number $N$ so that Bo$N$ produces visually similar results to the other two methods at each temperature. The gray bar for BoN corresponds to this $N$.}
    \label{fix:apdx mog num proposals}
\end{figure}

\subsection{Gaussian Mixture}\label{apdx:mog experiments}

\textbf{Methodology}

In this experiment, the soft-value function is formed by solving the regression problem of Appendix \ref{apdx:value estimation}. In particular, we parameterize a 3-layer neural net with layer widths $128, 500, 500$ respectively. We sample $70,000$ training trajectories and train for $20$ epochs on the $\ell^2$ regression task $\exp(\frac{r(x_0)}{\alpha}) \sim x_t$, where $Y \sim X$ denotes regressing Y onto $X$. Letting $\tilde r(x_t)$ be the outcome of this regression problem, we form $\hat{v}_t(x_t) := \log \tilde r_t(x_t)$. 

We parameterize $T$ baseline functions $b_{\phi_t}(x)$. We found it important to maintain a list of $T$ neural networks $\{b_{\phi_t}: \R^2 \rightarrow \R\}$ rather than a single neural net $b_\psi: \R^2 \times [T] \rightarrow \R$, because the iterative procedure of Algorithm \ref{alg:lcb training} tends to lead to forgetting of larger values of $t$. This could likely also be mitigated by simply maintaining the sets $X_T \cdots X_0$ and then replaying the transitions to the neural net. 

Our training procedure for the LCB is based on 2 passes of Algorithm \ref{alg:lcb training}. On the first pass, we fix $\lambda = 1$, and train only the baseline networks. In particular, given $m = 7000$ particles, $X_{t + 1} \in \R^{2\times m}$, we repeat the following $200$ times: (1) Generate $Y_t \sim \ppre_t (\cdot | X_{t + 1})$ (2) Minimize the LCB objective for $b_t$ using full batch gradient descent in $\phi_t$ (3) transition the particles to $X_t \sim \hat{q}_t(\cdot | X_{t + 1})$.

On the second pass, we repeat the same procedure, but we freeze $b_{\phi_t}$ and  replace step (2) with ternary search over $\lambda \in [1, \Lambda]$, with $\Lambda = 12$.


\textbf{Additional results}

 In Figure \ref{fig:apdx mog samples}, we show a scatter plot of samples for RS, LCB and BoN. We find that $\delta = 0.1$ suffices to make LCB's samples visually indistinguishable from RS. 

Based on the results for RS/LCB, we pick $N$ such that Bo$N$ produces visually similar results to RS/LCB, and we report $T\times N$ as the proposal complexity. Though we pick $N = 40$ to match the peak heights, we find that we need $N = 45$ in order for Bo$N$ to place the same amount of mass in the reward region as LCB ($\delta = 0.1$). 

In Figure \ref{fix:apdx mog num proposals}, we show the scaling of number of proposals at temperature $\alpha=0.2$ for rejection sampling (Algorithm \ref{alg:exact_rejection_sampling}) and LCB based sampling (Algorithm \ref{alg:lcb training} and Algorithm \ref{alg:baseline rejection sampling}). Specifically, if $L$ is the total number of proposals required to generate $7000$ samples, we report ``effective $N$'', as $L/(T\times7000)$, which is the number of queries to $\ppre$ per sample. Bo$N$ by definition requires $L = N\times T$ proposals in order to generate 1 sample, and so has `effective $N$'' equal to $(L\times 7000)/(T\times 7000)  = N$.

In Figure \ref{fig:apdx mog delta curves}, we show for each temperature how ``effective $N$'' and positive reward mass scale as we vary $\delta$.

\begin{figure}
    \centering
    \includegraphics[width=0.8\linewidth]{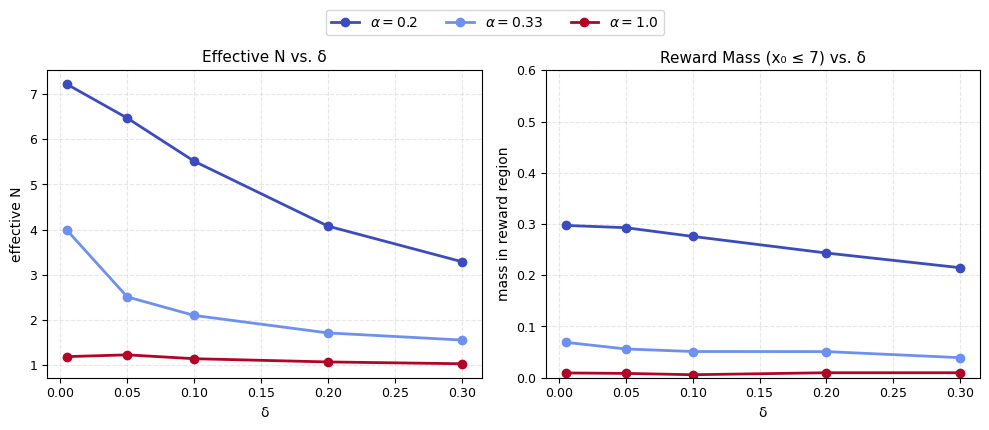}
    \caption{In terms of the reward mass, there are diminishing returns for taking $\delta < 0.1$, while ``effective $N$'' continues to increase for smaller $\delta$.}
    \label{fig:apdx mog delta curves}
\end{figure}

In Figure \ref{fig:values_vs_baselines_filmstrip}, we visualize the value function and baseline learned for $\alpha = 0.2$.  

\begin{figure}
\centering
\setlength{\tabcolsep}{0pt}
\renewcommand{\arraystretch}{0}

\begin{tabular}{@{}cccccccccc@{}}
\includegraphics[width=0.10\textwidth]{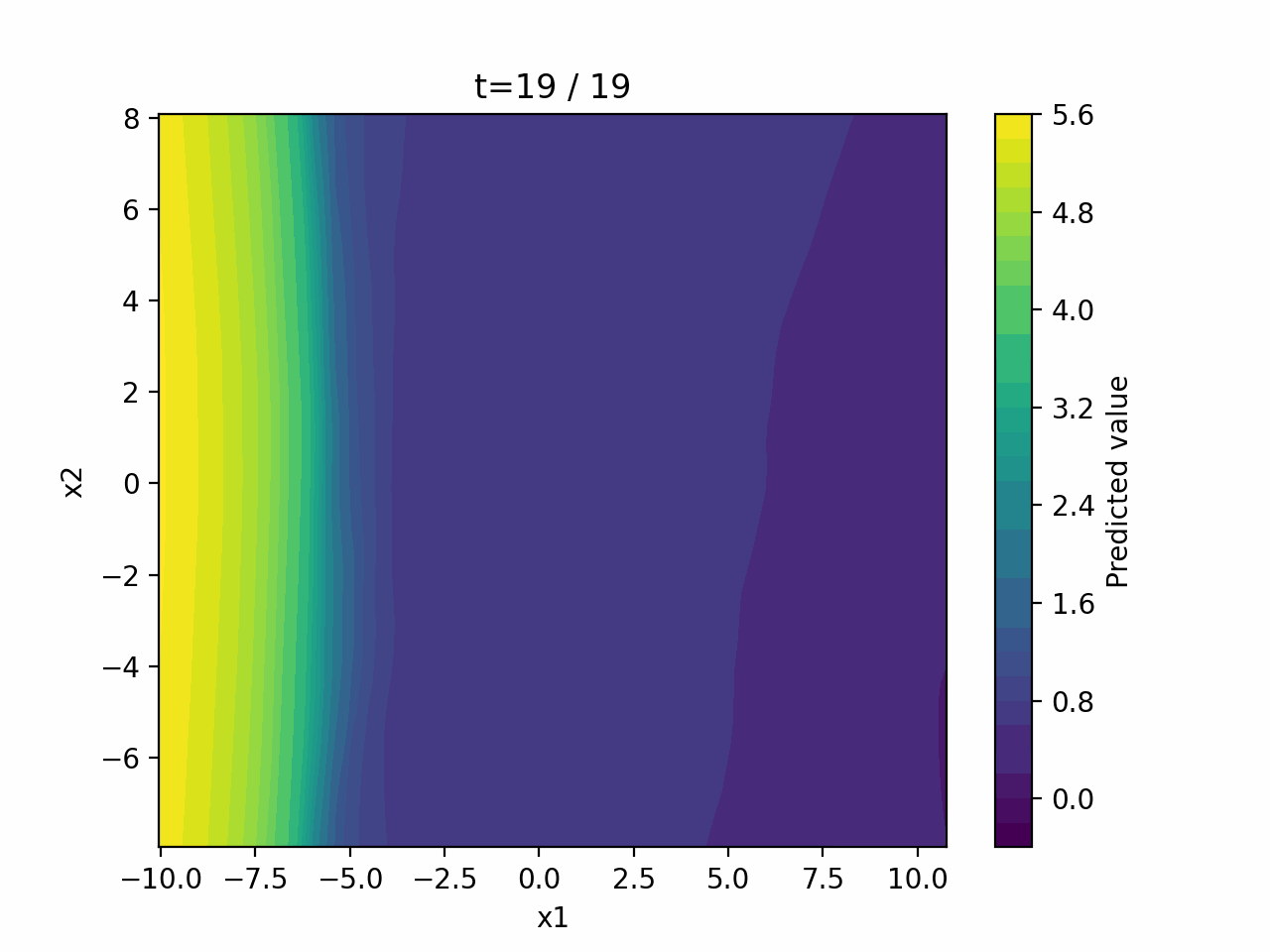} &
\includegraphics[width=0.10\textwidth]{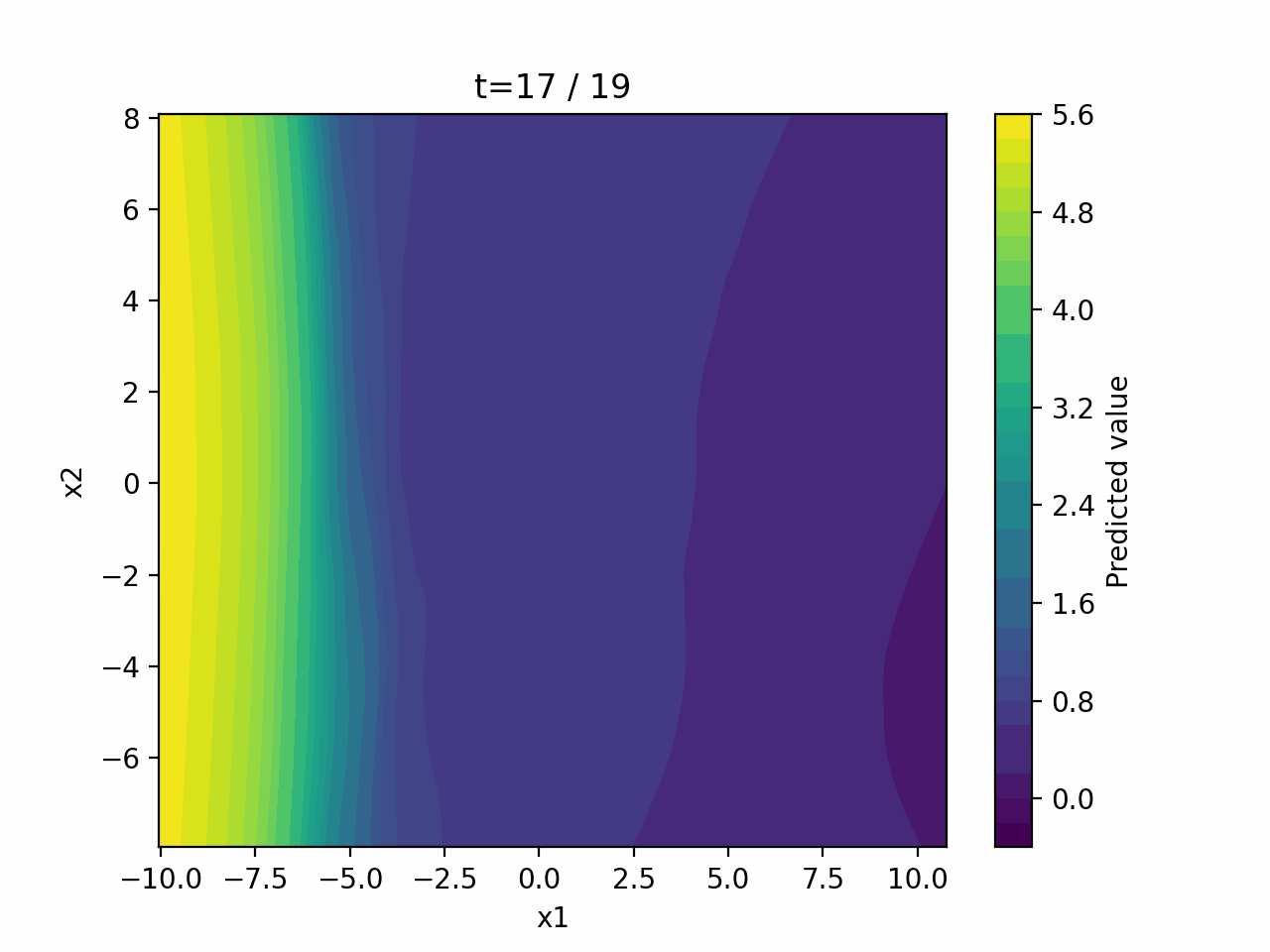} &
\includegraphics[width=0.10\textwidth]{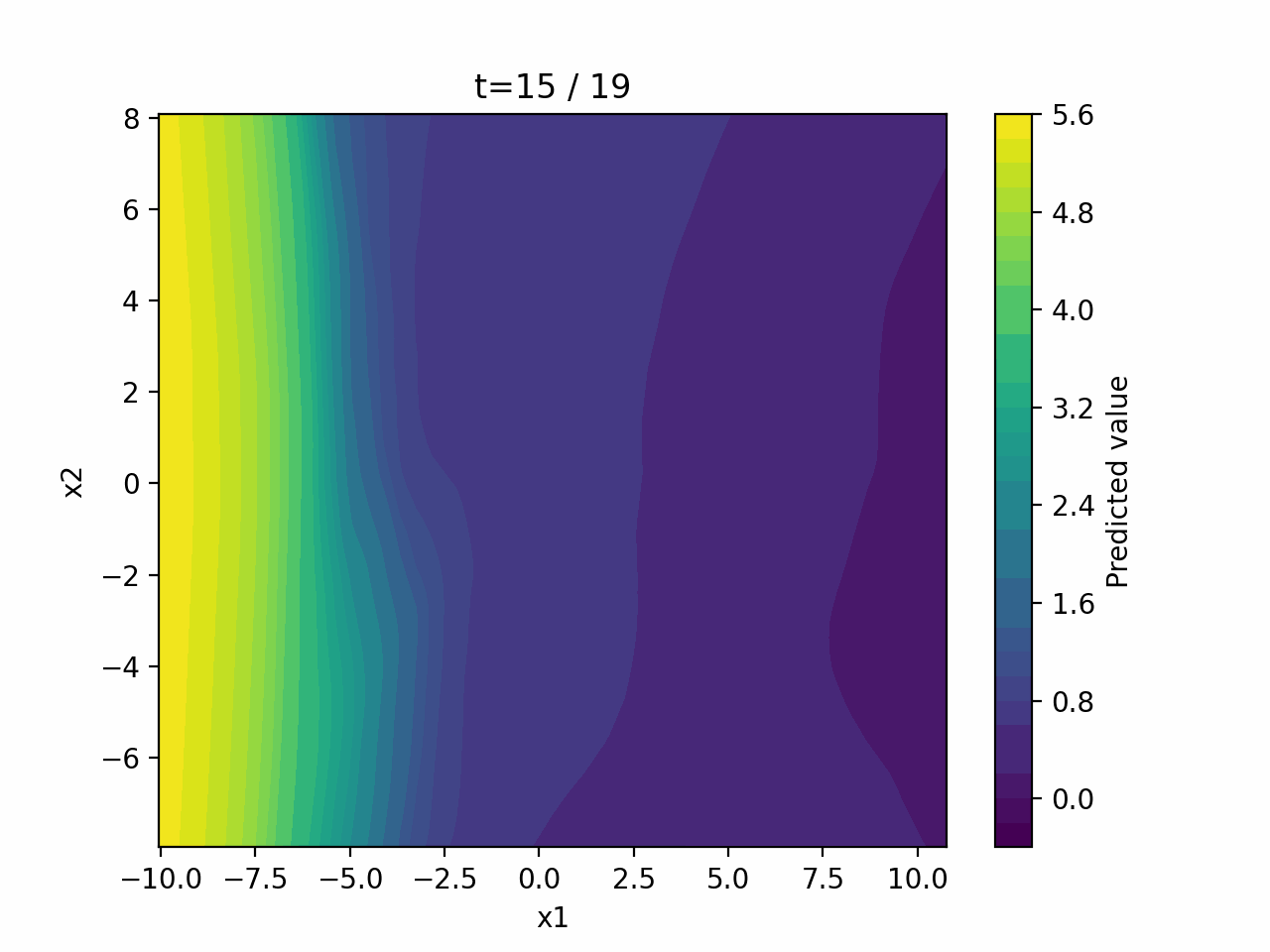} &
\includegraphics[width=0.10\textwidth]{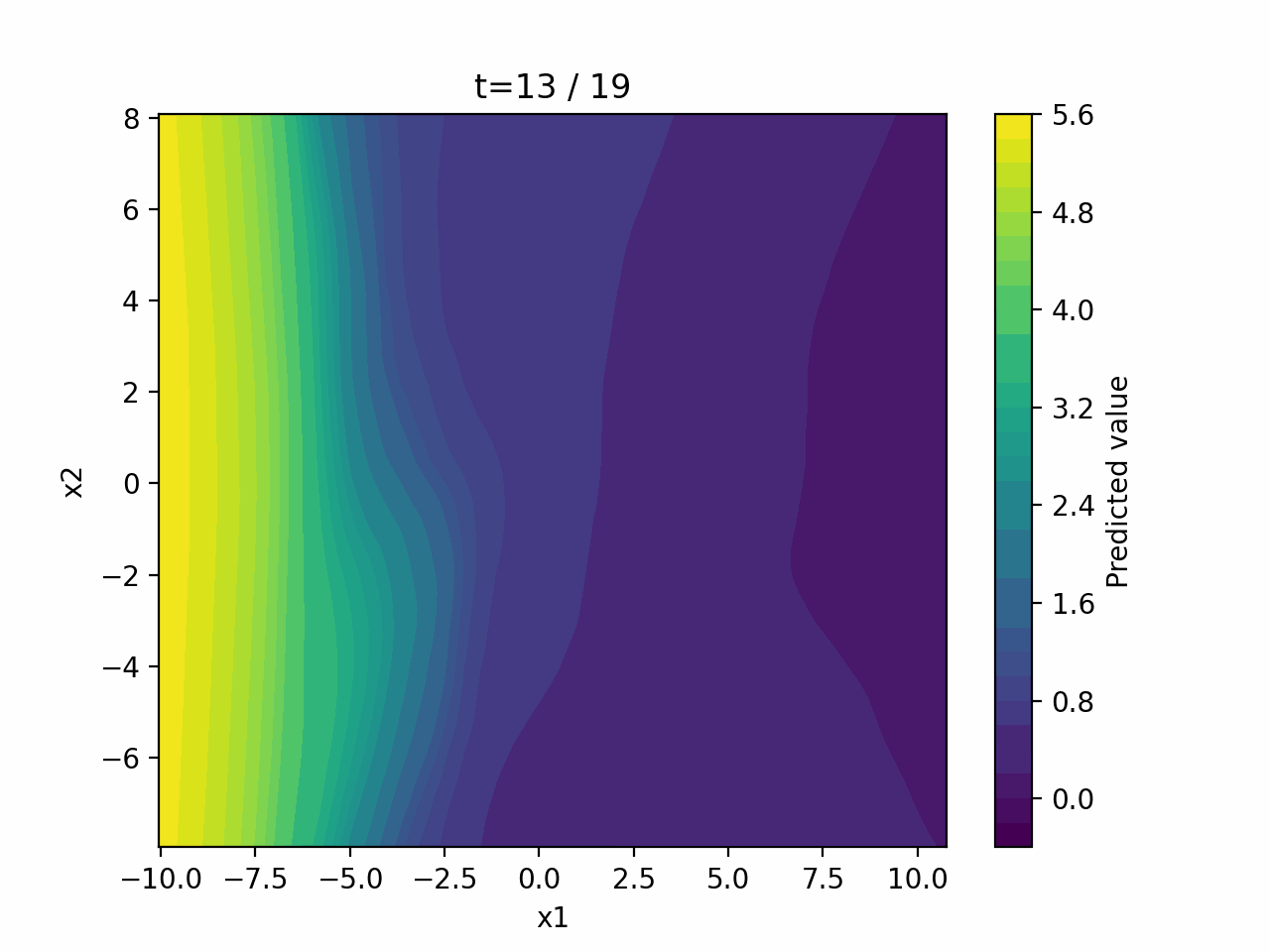} &
\includegraphics[width=0.10\textwidth]{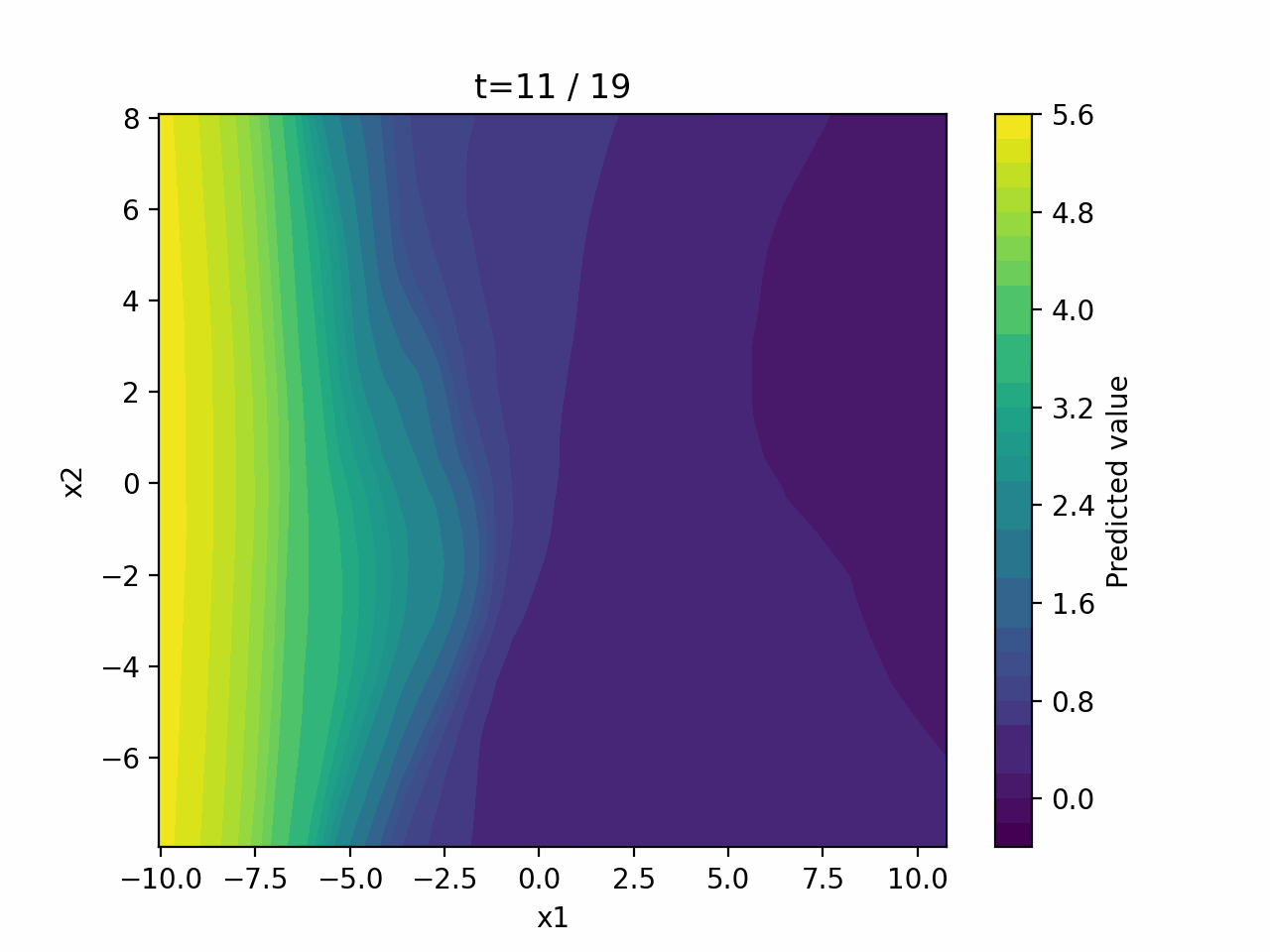} &
\includegraphics[width=0.10\textwidth]{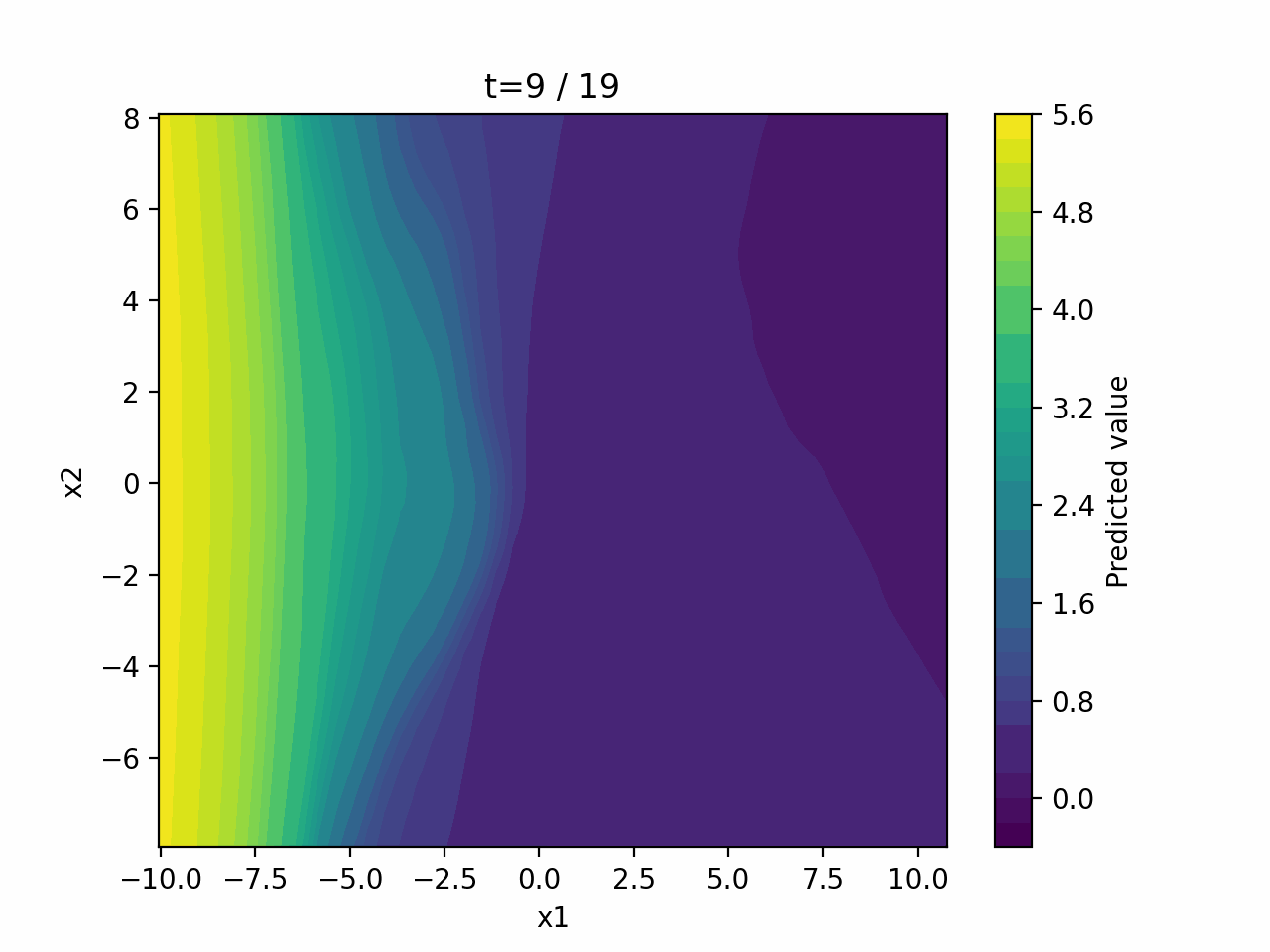} &
\includegraphics[width=0.10\textwidth]{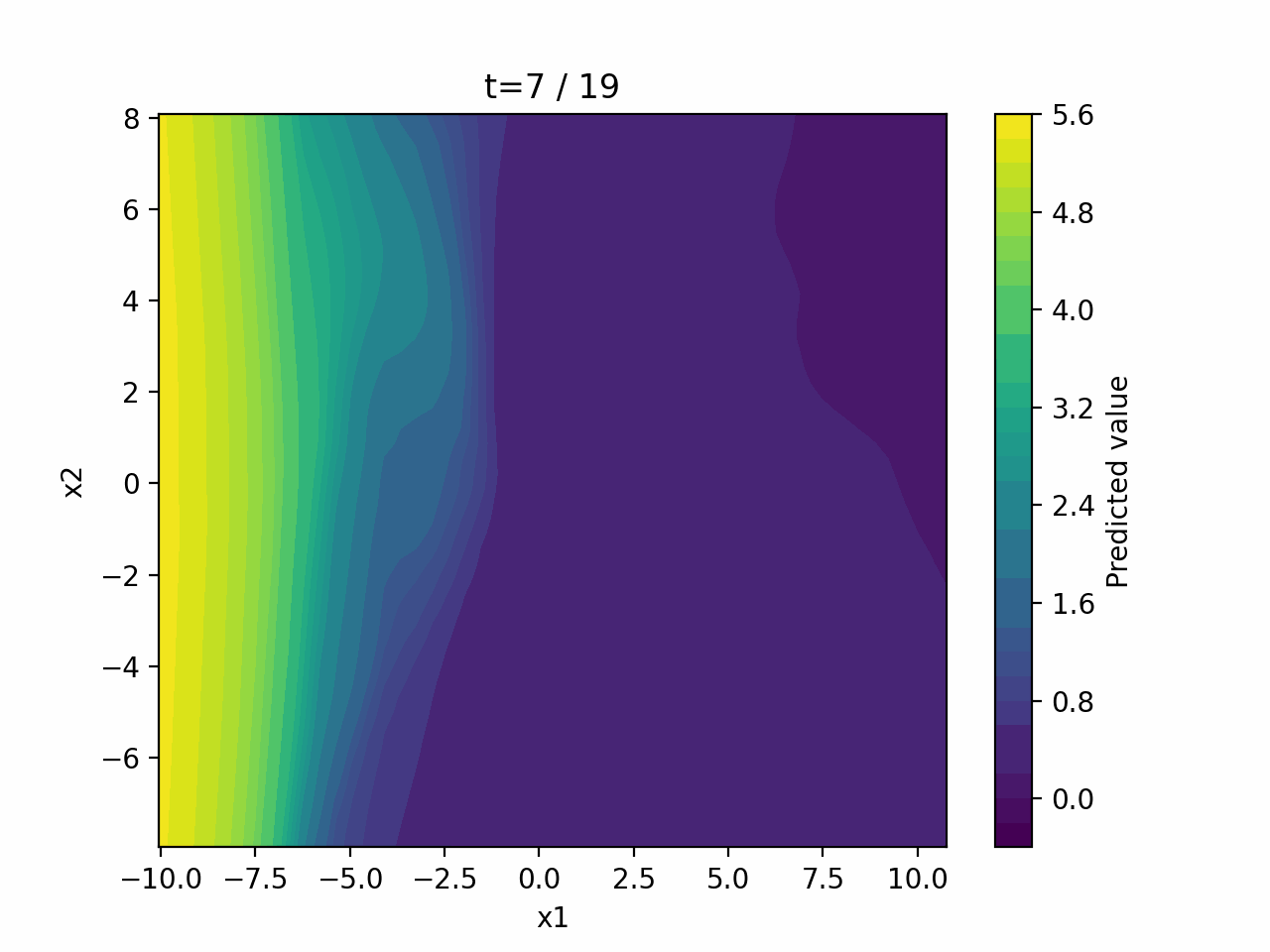} &
\includegraphics[width=0.10\textwidth]{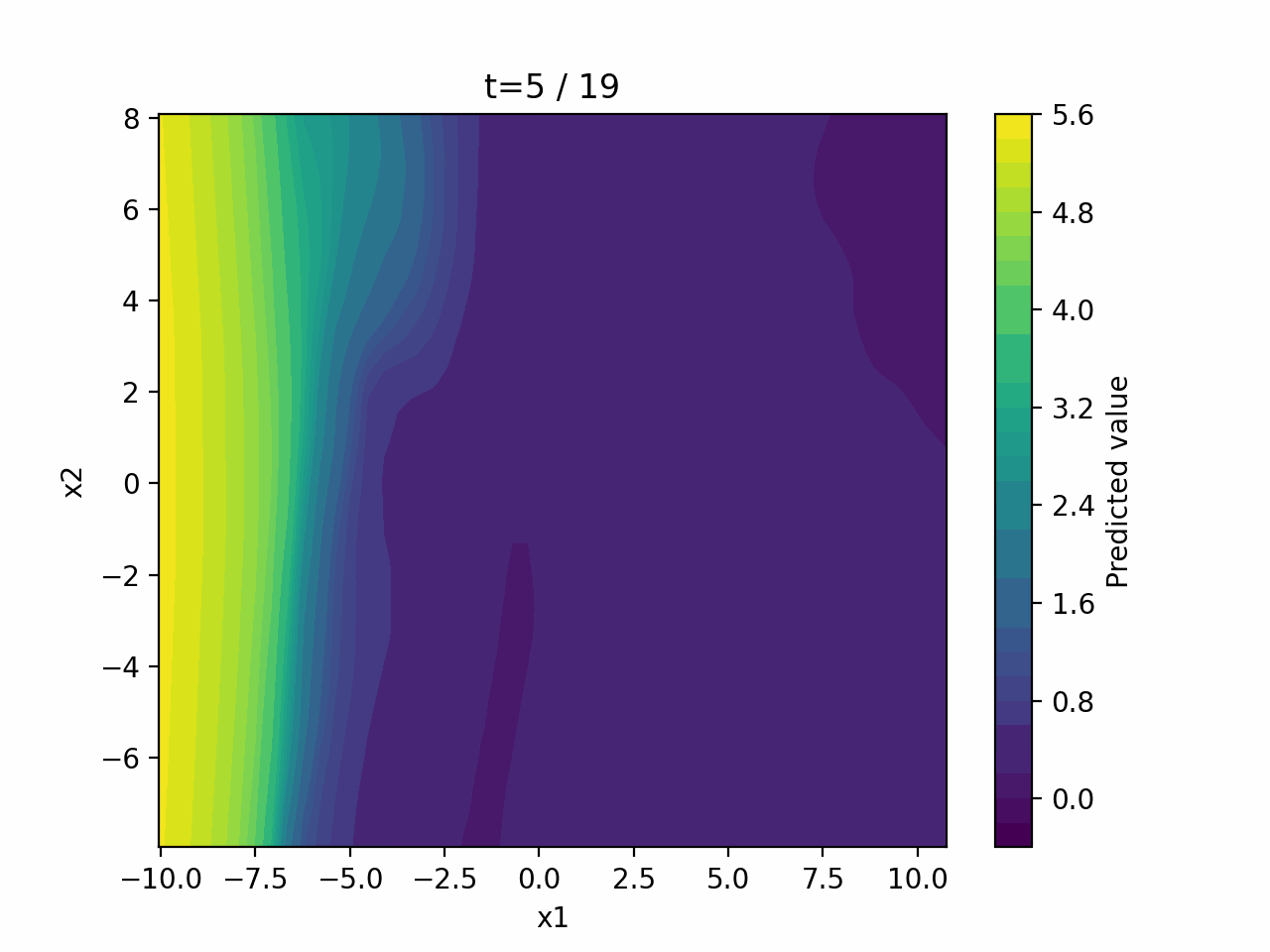} &
\includegraphics[width=0.10\textwidth]{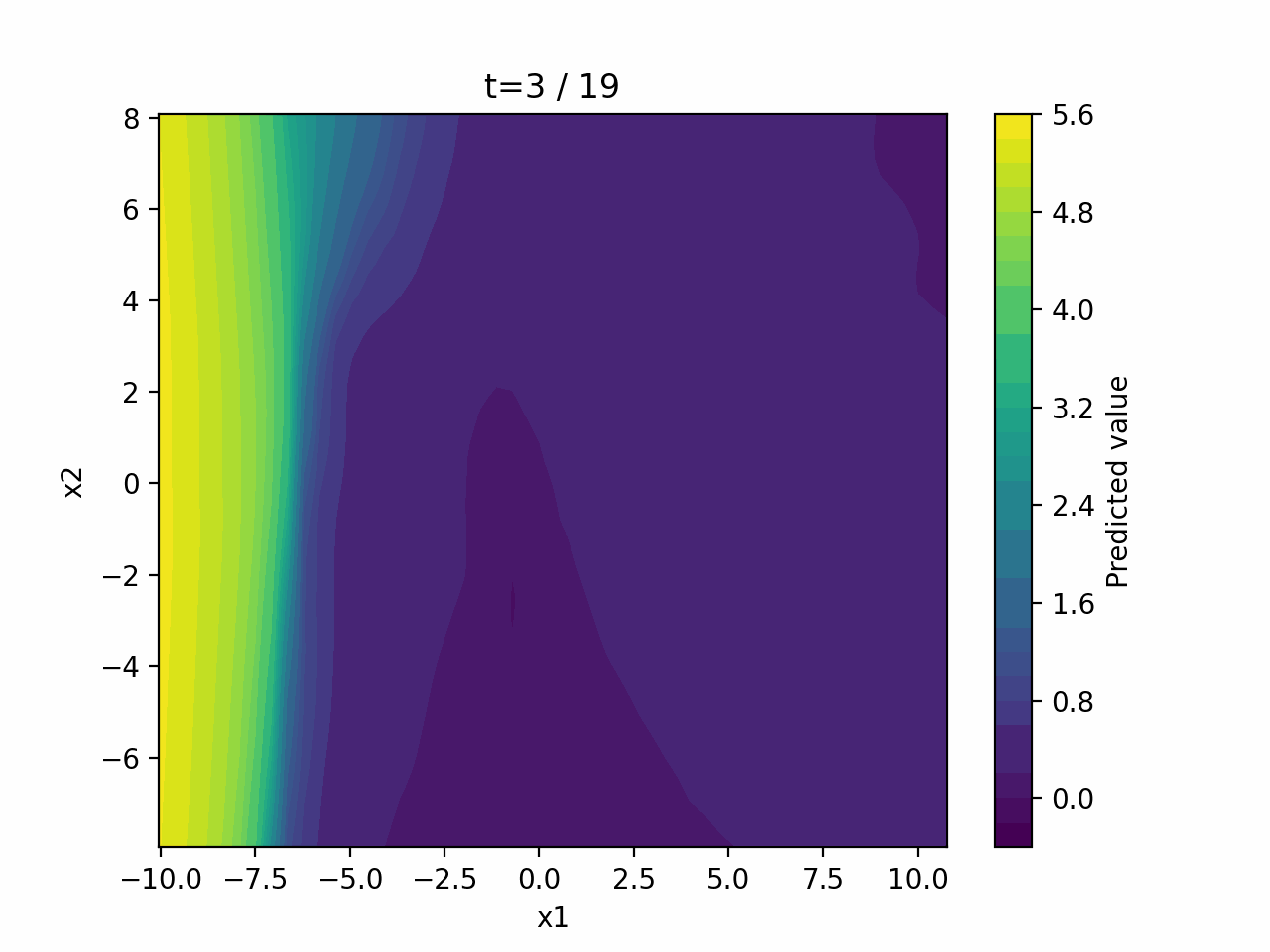} &
\includegraphics[width=0.10\textwidth]{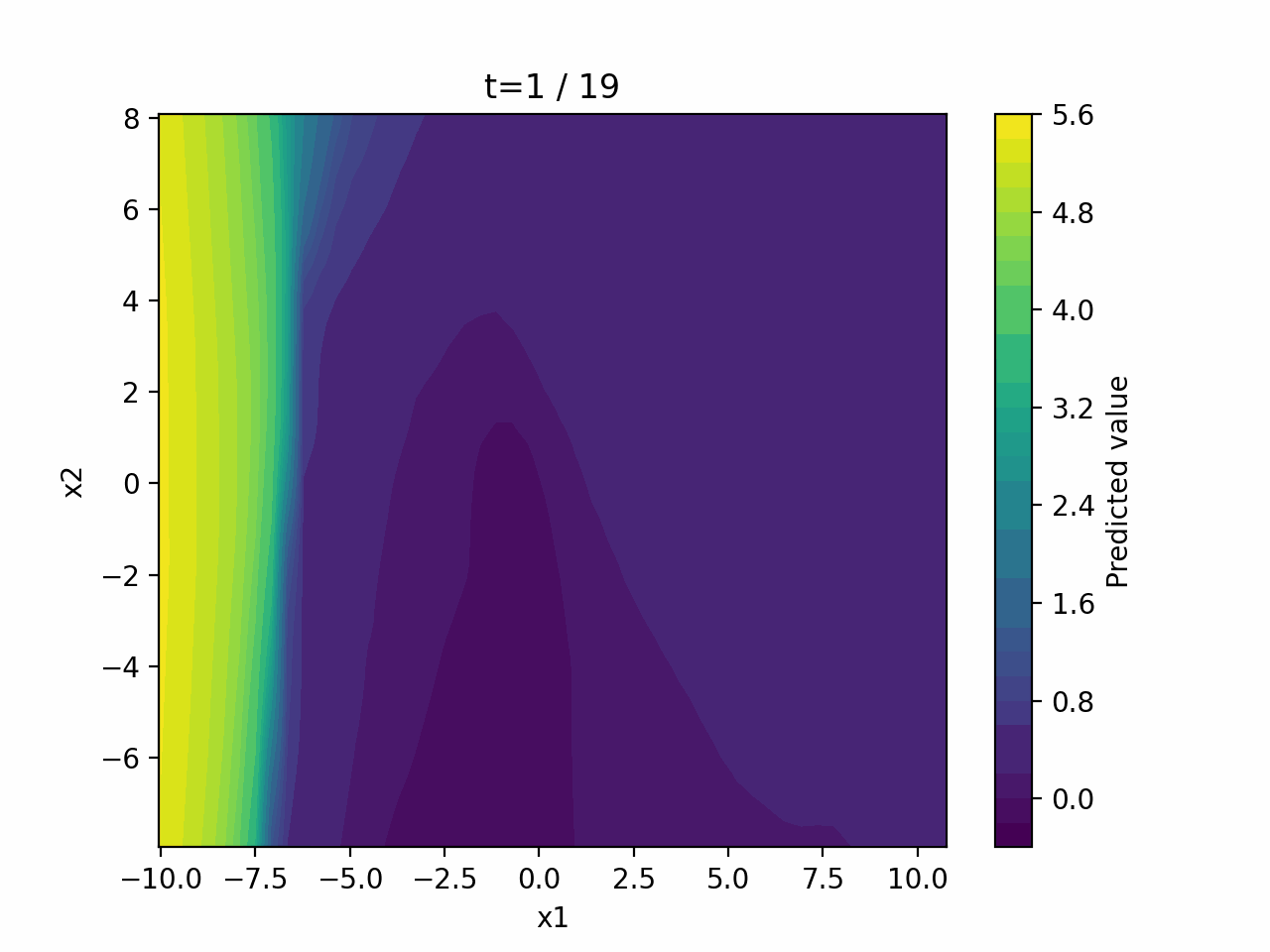} \\
\includegraphics[width=0.10\textwidth]{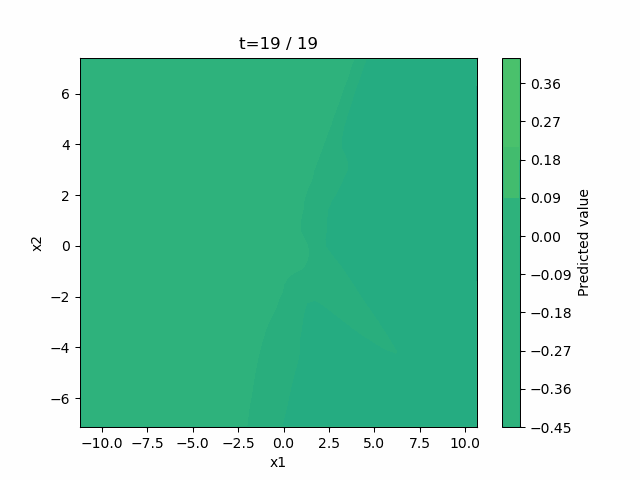} &
\includegraphics[width=0.10\textwidth]{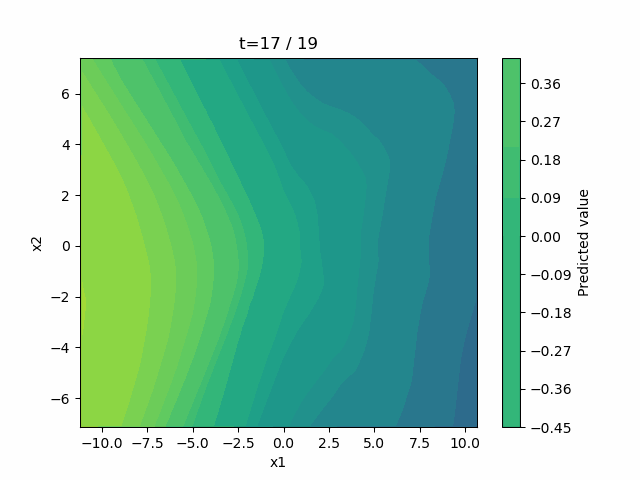} &
\includegraphics[width=0.10\textwidth]{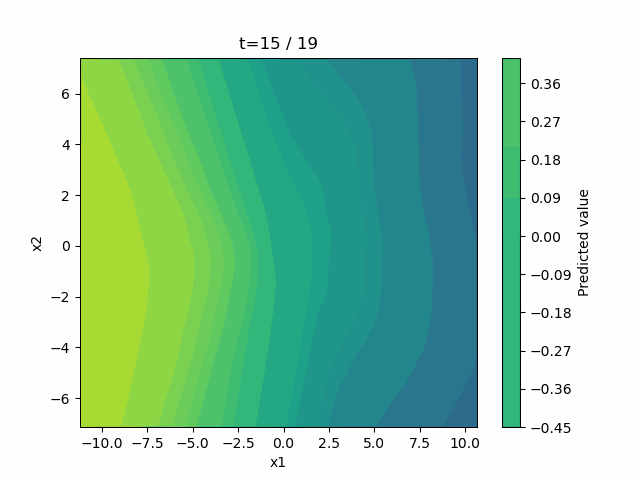} &
\includegraphics[width=0.10\textwidth]{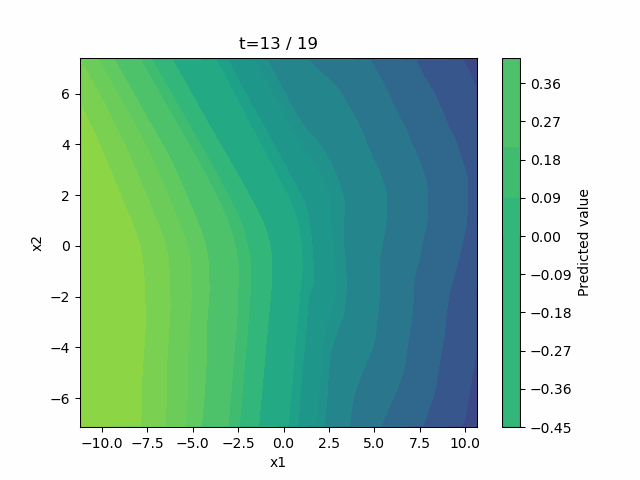} &
\includegraphics[width=0.10\textwidth]{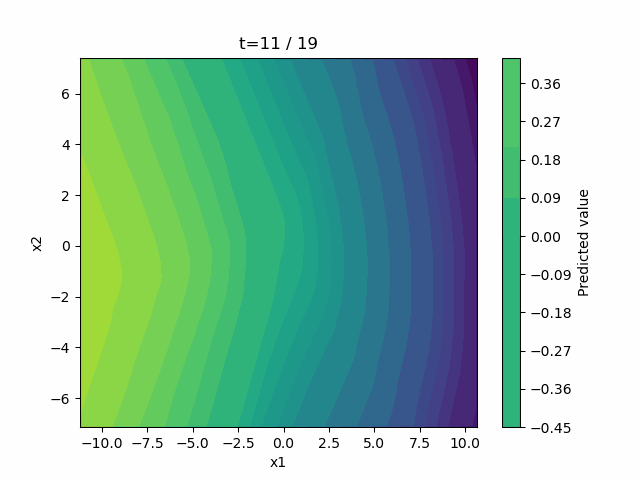} &
\includegraphics[width=0.10\textwidth]{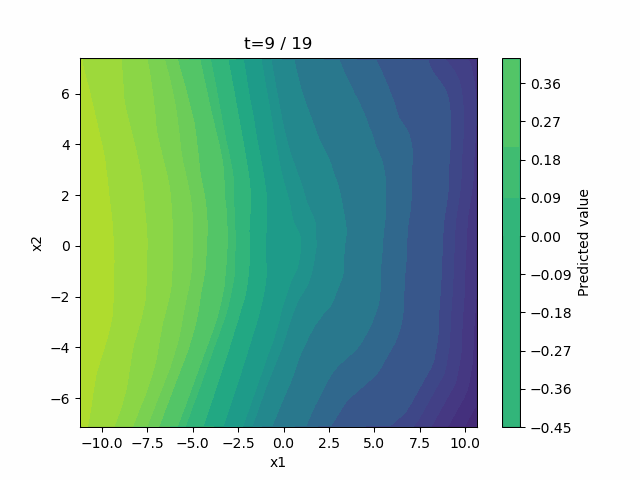} &
\includegraphics[width=0.10\textwidth]{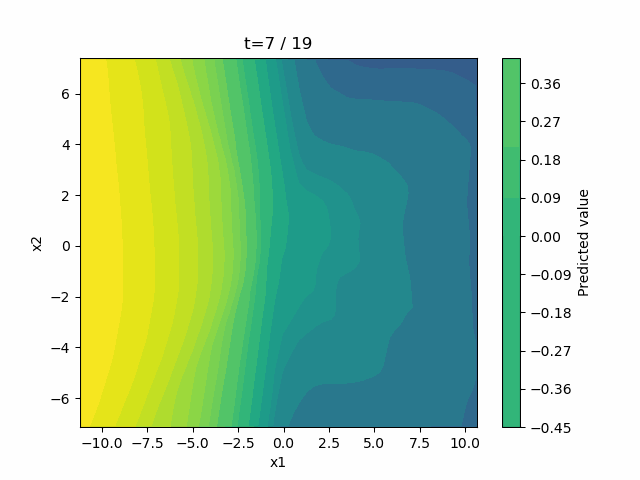} &
\includegraphics[width=0.10\textwidth]{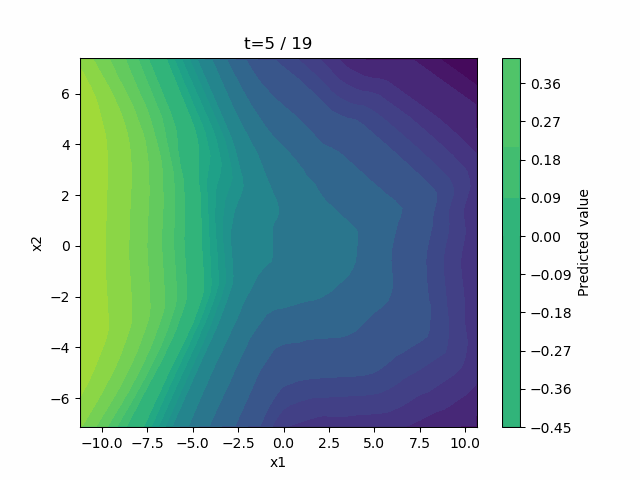} &
\includegraphics[width=0.10\textwidth]{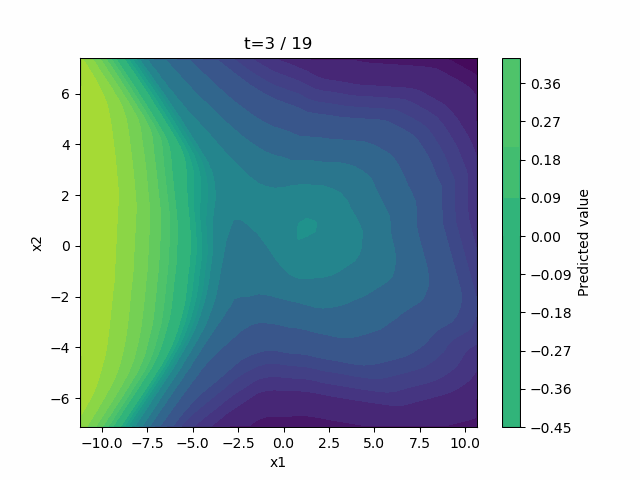} &
\includegraphics[width=0.10\textwidth]{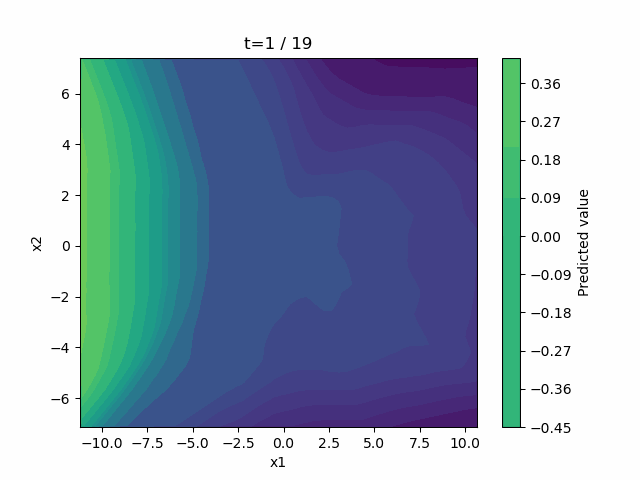}
\end{tabular}

\vspace{2pt}
{\scriptsize Top: $\hat{v}_t$ \qquad Bottom: $b_{\phi_t}$}

\caption{Gaussian mixture: Evaluations of estimated value and baselines over time, with every other frame shown. Noisiest timestep on the left, least noisy on the right.}
\label{fig:values_vs_baselines_filmstrip}
\end{figure}

\begin{figure}[t]
    \centering
    \begin{minipage}{0.32\textwidth}
        \centering
        \includegraphics[width=\textwidth]{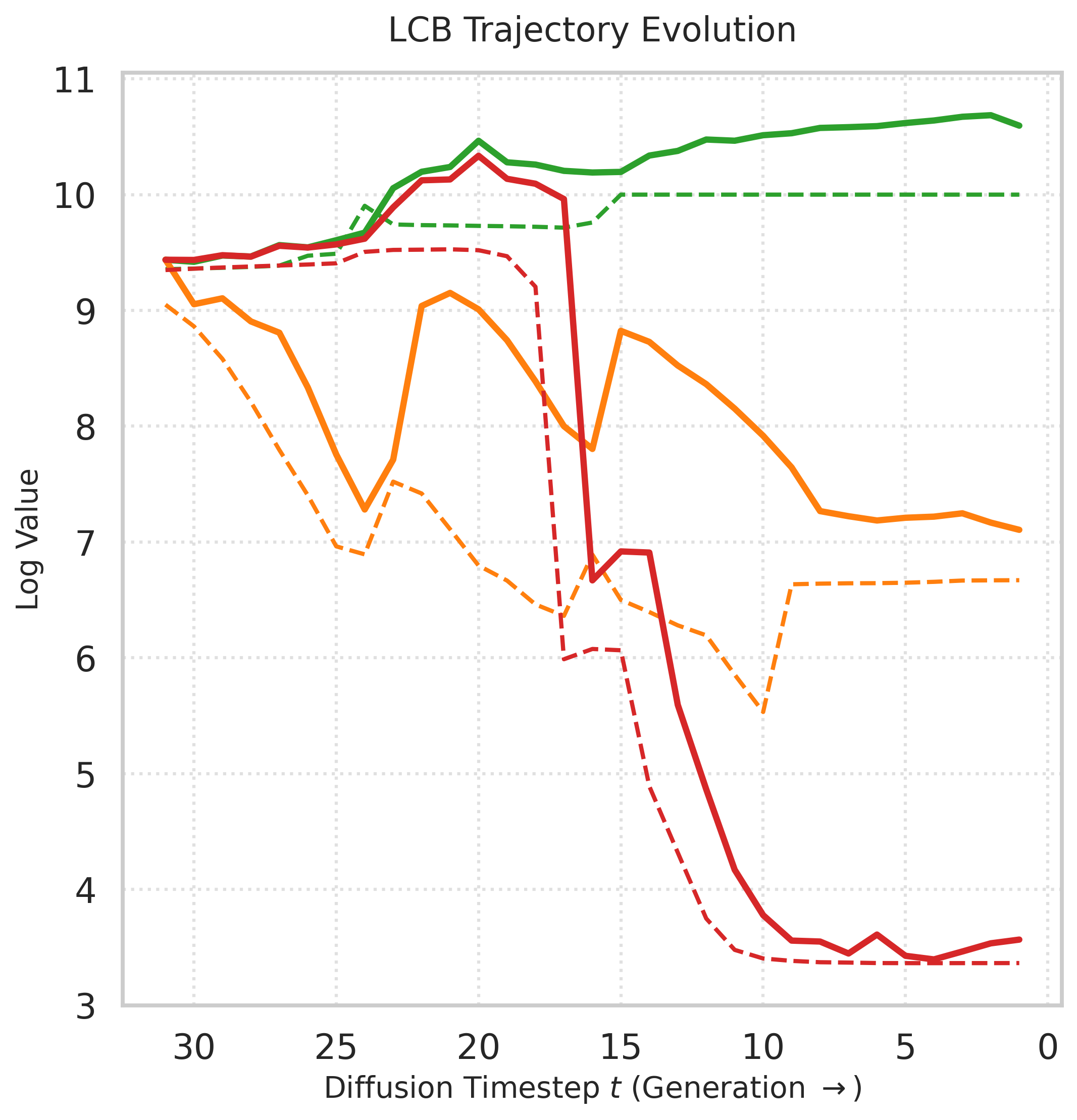}
        \captionof{subfigure}{Temperature $\alpha = 0.1$}
    \end{minipage}\hfill
    \begin{minipage}{0.32\textwidth}
        \centering
        \includegraphics[width=\textwidth]{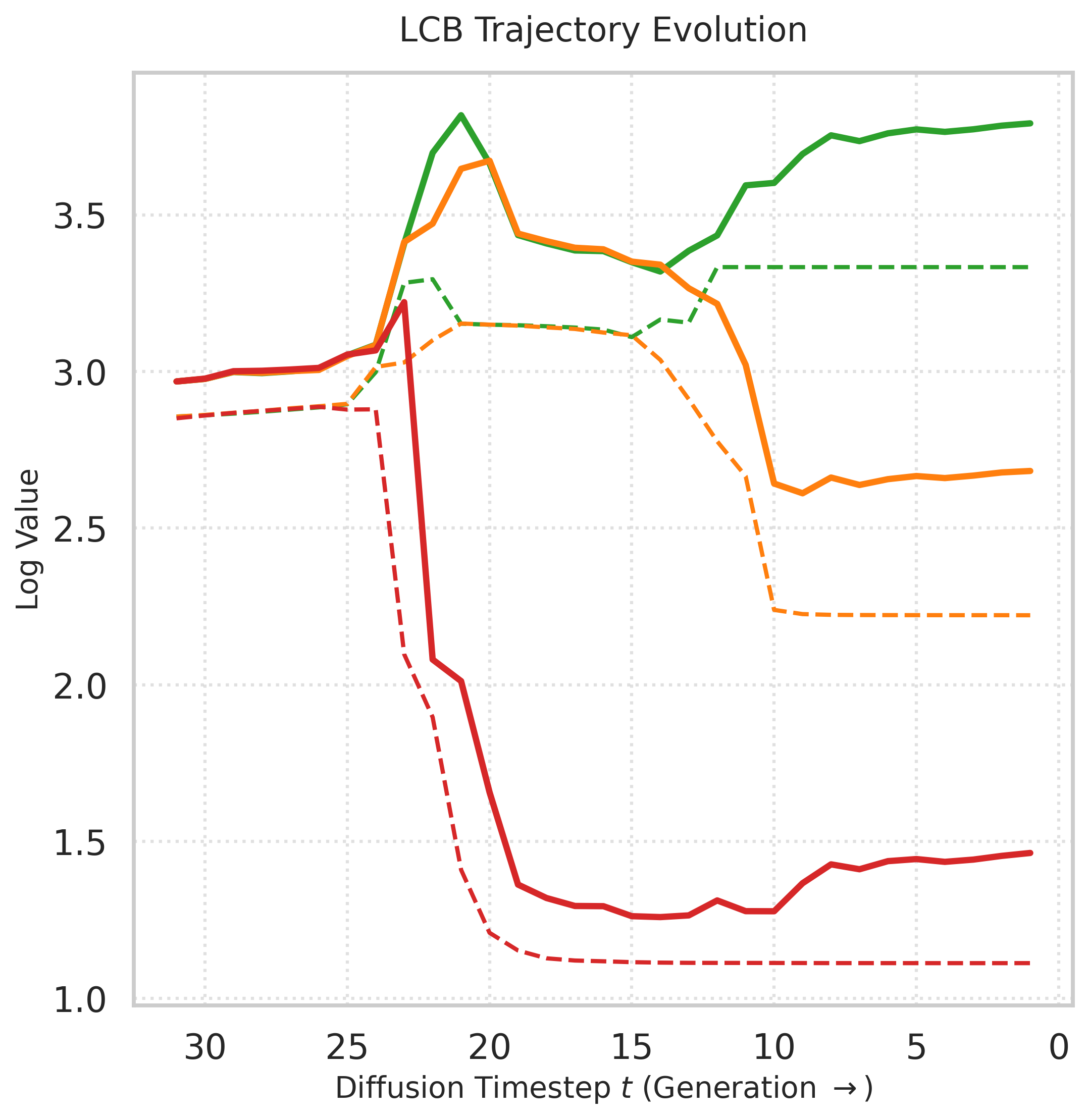}
        \captionof{subfigure}{Temperature $\alpha = 0.3$}
    \end{minipage}\hfill
    \begin{minipage}{0.32\textwidth}
        \centering
        \includegraphics[width=\textwidth]{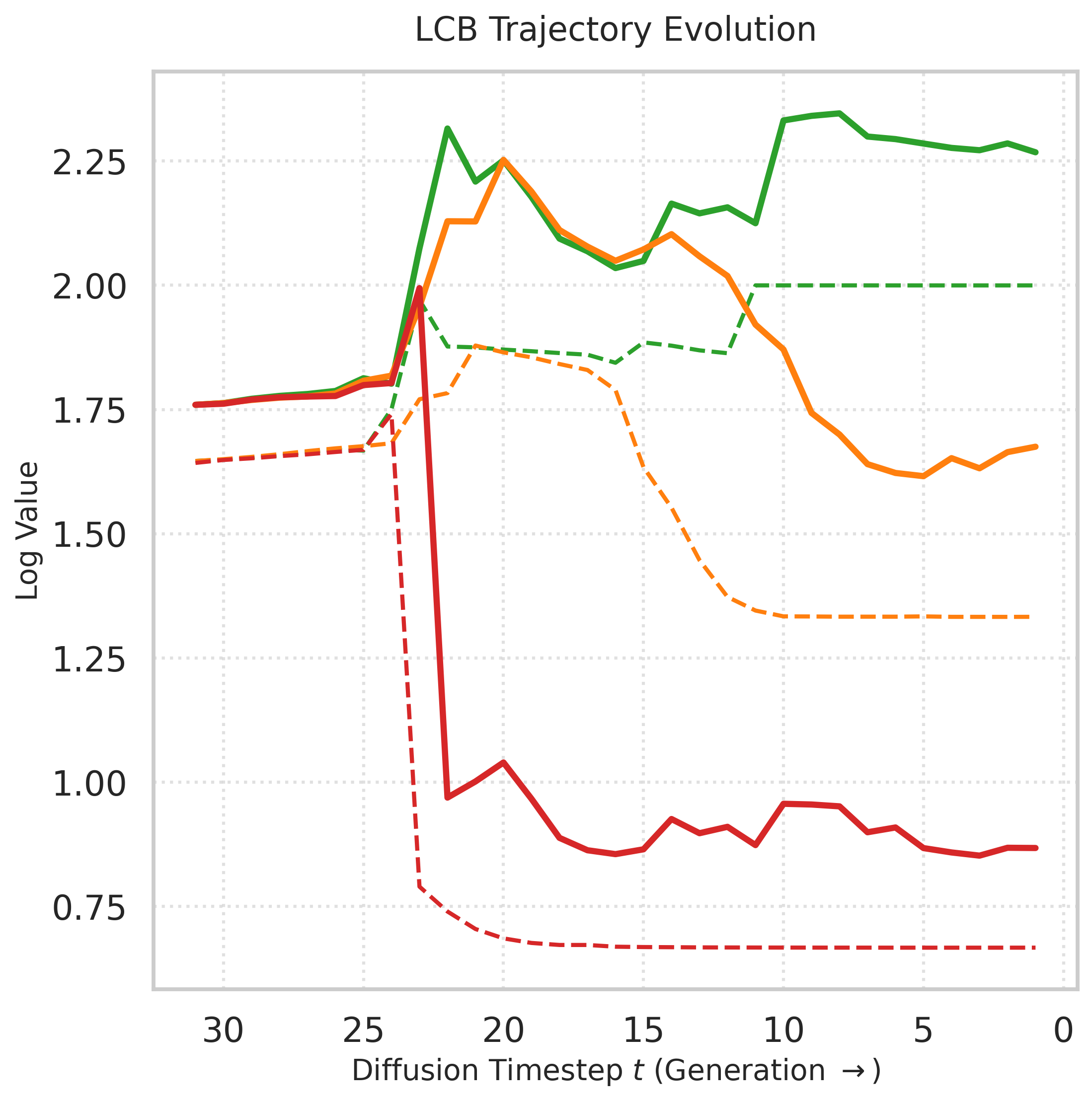}
        \captionof{subfigure}{Temperature $\alpha = 0.5$}
    \end{minipage}
    \caption{%
    LLaDA: We unroll guided trajectories from the LCB-based sampler. We select a trajectory representing each reward $1/3$, $2/3$, or $1$, which are colored red, orange and green respectively. For these trajectories, we plot the evolution of the value function, $(t, \hat{v}_t(x_t))$ as a dotted line, and the LCB $\left(t, b_\phi(x_{t + 1}, t + 1) + \tau_{t + 1}\right)$ as a solid line. We see that the solid line is a fairly tight upper bound of the dotted line. Since the probability of acceptance is related to the difference between the dotted line and the solid line, and correctness is related to the robustness of the upper bound, this is desirable baseline behavior.
    }
    \label{fig:trajectories llada}
\end{figure}


\begin{figure}
    \centering
    \includegraphics[width=0.8\linewidth]{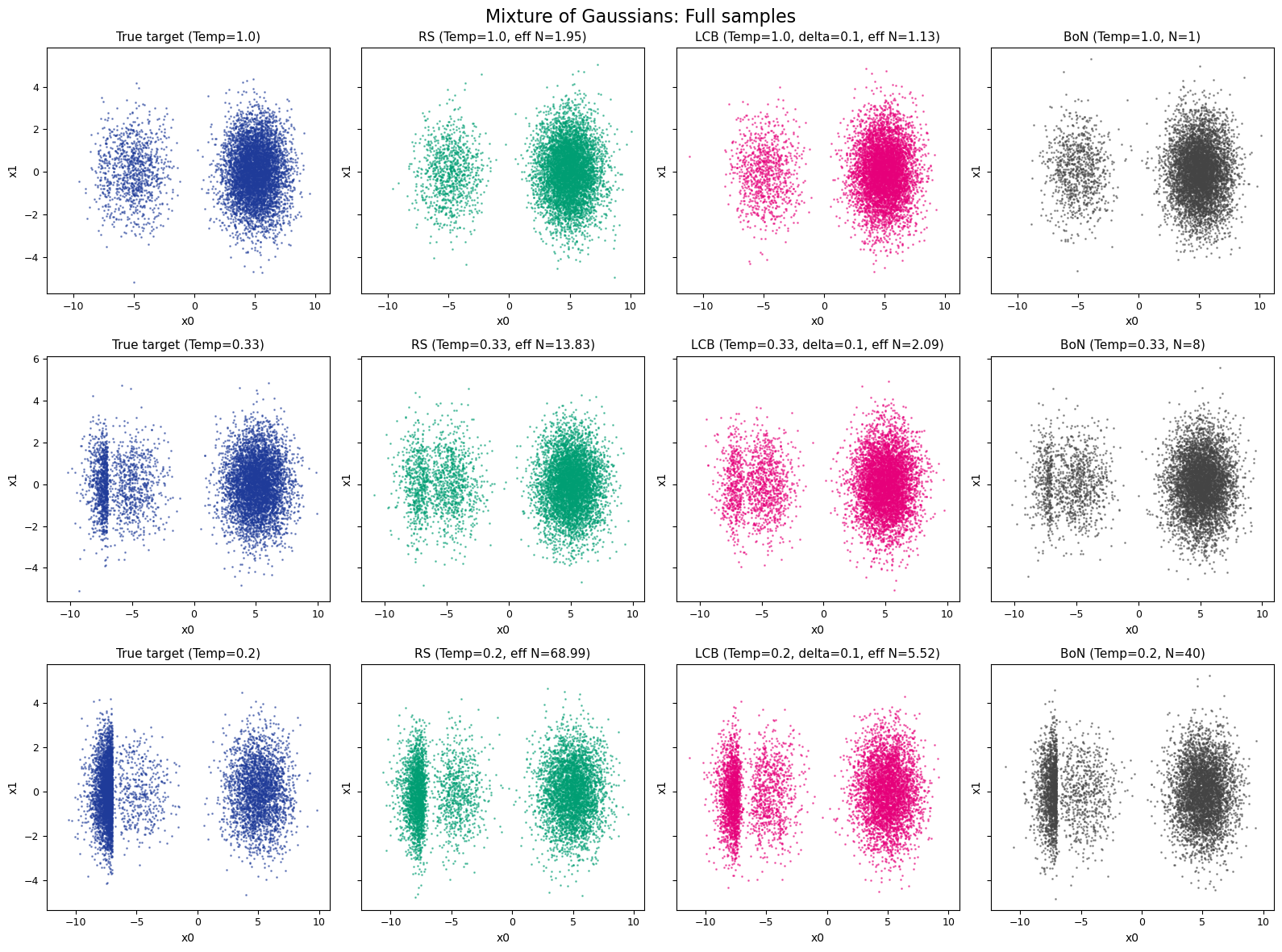}
    \includegraphics[width=0.8\linewidth]{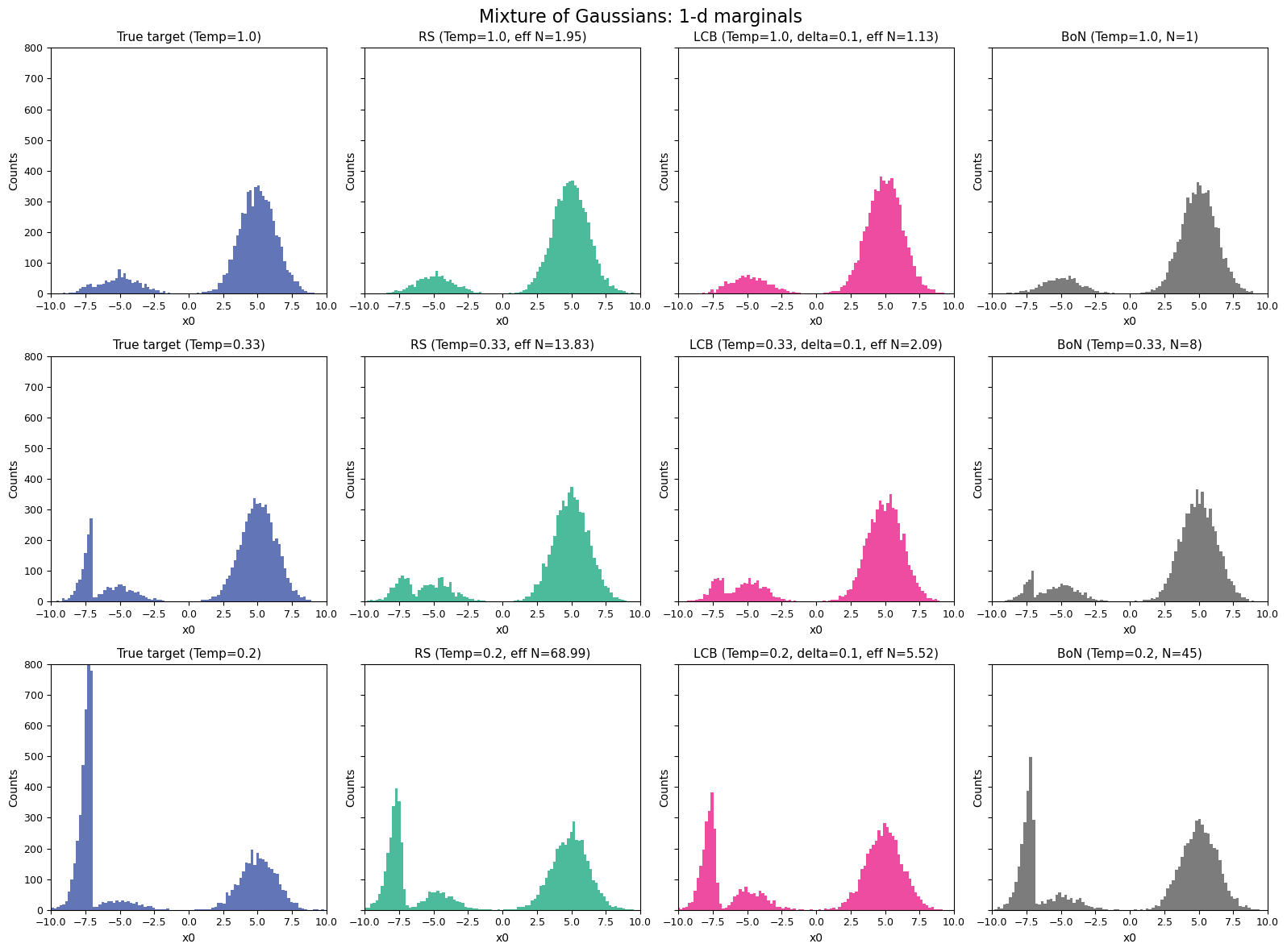}
    \caption{Gaussian mixture: 2-d scatter plots and $x$-axis histograms for the Gaussian Mixture using LCBs, Rejection Sampling, and Bo$N$ (with varying $N$ to simulate decreasing temperature). Vertical axis corresponds to temperature, horizontal axis to method. }\label{fig:apdx mog samples}
\end{figure}

\subsection{LLaDA}\label{apdx:llada experiments}

\textbf{Methodology}

Our pretrained model is LLaDA \cite{nie2025largelanguagediffusionmodels}, a masked language model with 8 billion parameters. In masked language diffusion, our space $\mathcal{X}$ is the space of sequences of $L$ tokens, including a mask token $M \in \mathcal{X}$. In order to output $L$ tokens, LLaDA begins with a completely masked sentence $p_T = \delta_{M^L}$. LLaDA uses bidirectional attention to predict each token (in a block), and retains a random or structured subset of the predicted tokens (which defines $\ppre_t(\cdot | x_t)$). 

The pretrained model, $\ppre$, is defined by giving LLaDA the prompt

\begin{center}
    \begin{emph}``Write a creative story of exactly 3 sentences. Stylistically, it is preferred that the first word of each sentence start with 'A', 'B', and 'C' respectively.''\end{emph}
\end{center}

with $L = 32$ and sampling its output at high temperature ($1.12$).

We learn soft value functions by finetuning RoBERTa (\texttt{distilroberta-base}, 82 million parameters) with a simple prediction head and learned time embedding. We learn a classifier $p_\theta(c, x, t) \approx \Pr[r(x_0) = c | x_t = x]$, for $c \in \{0, 1/3, 2/3, 1\}$ using multinomial regression, and at any temperature $\alpha$ form the soft-value estimate
\[\begin{aligned} v_\theta(x, t, \alpha) = \log \sum_{c \in \{0, 1/3, 2/3, 1\}}p_\theta(c, x, t)e^{c/\alpha}\end{aligned}\]
where $p_\theta$ is trained on 5000 trajectories sampled from LLaDA. 

We now elaborate on the optimization procedure for the baseline function, which is also parameterized as RoBERTa with a time embedding. 

In order to mitigate training instability and minimize engineering complexity, the baseline network is pretrained to learn $b_{\phi}(x_{t + 1}, t ) \approx \E[\hat{v}_t(x_t) | x_{t + 1}]$ using data $x_{t + 1}, x_t \sim \ppre(x_{t + 1}, x_t)$ and an MSE objective. We run Algorithm \ref{alg:lcb training} with $m = 350$ particles only for setting $\lambda \in [1, 6]$. This allows us to mitigate the catastrophic forgetting phenomenon that was encountered in the multivariate Gaussian setting, without resorting to replay buffers or other complicated training techniques. The pretraining of $b_{\phi}(x, t)$ uses 2500 trajectories (separately drawn from the data used to train the value function). Training both $\phi$ and $\lambda$ for a single temperature takes about $15$ minutes on an A100 (High Ram) Google Colab notebook. 

When performing inference, we clip $B_{t + 1}(x_{t + 1}) = b_{\phi}(x_{t + 1}, t + 1) + \tau_{t + 1}$ above at $1/\alpha$, which is a uniform upper bound on the soft value function. 

\textbf{Additional results}

In Figure \ref{fig:trajectories llada}, we visualize the upper bounds provided by the LCBs against the value over time for a representative set of guided trajectories, in order to show that the baseline is a tight and correct upper bound for the value. 

In Figure \ref{fig:text stats}, we show that a number of textual quality metrics remain within statistical error across all policies considered, which suggests that boosting the reward leaves many aspects of text generation intact. 

\begin{figure}[!ht]
    \centering
    \includegraphics[width=\linewidth]{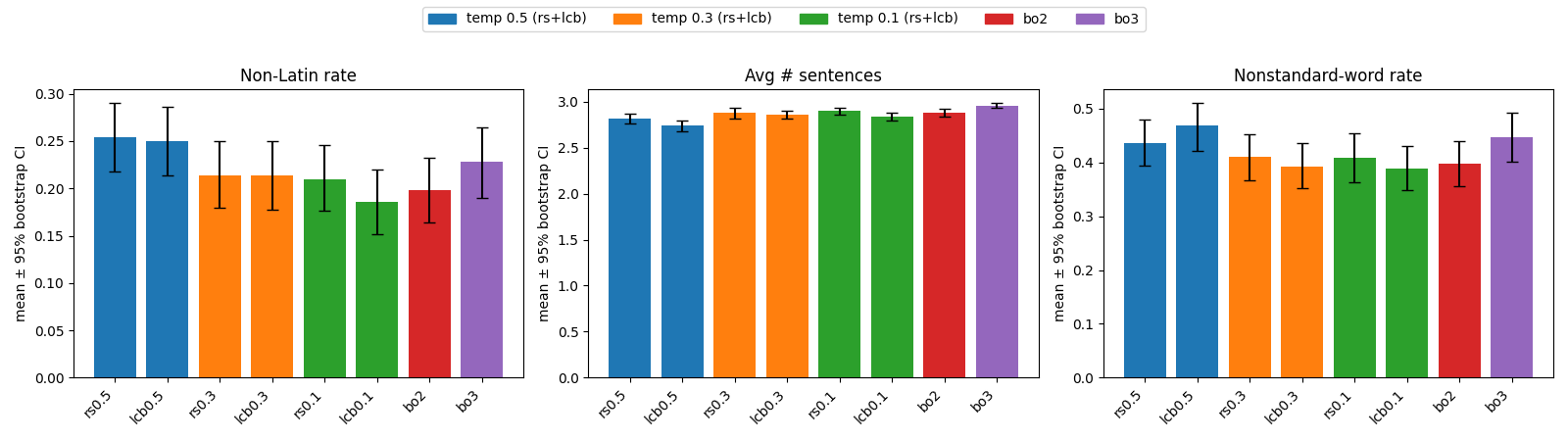}
    \caption{We ensure that guidance does not substantially increase the rate of non-standard words (words not in the dictionary) or non-Latin words (of which there are a good deal, due to our high sampling temperature). Average number of sentences generally trends weakly upward as temperature decreases, which is to be expected, but does not vary statistically between RS and LCB at any temperature.}
    \label{fig:text stats}
\end{figure}

\section{Relaxed Coverage Assumption}\label{apdx:coverage}

In Section \ref{sec:finegrained tv theory} theory, we introduce a worst-case coverage assumption, which is used in Theorem \ref{thm:lcb tv bound}, Corollary \ref{cor:subgaussian value corollary}, and Theorem \ref{thm:mog thm}. In this Appendix, we weaken it to a condition that can be empirically verified.

As explained in the body of the paper (Section \ref{sec:finegrained tv theory}), we say a joint baseline $B_{t + 1}$ has \textbf{worst-case coverage at level $c \in (0,1 )$} (wrt $\hat{v}_t(x_t)$) if there exists $c \in (0, 1)$ such that $\forall x_{t + 1}$, $B_{t + 1}$ satisfies
\begin{equation}\label{eq:worst-case coverage assumption}
\Pr_{x_t \sim \ppre_t(\cdot | x_{t + 1})}[\hat v_t(x_t) \geq B_{t + 1}(x_{t + 1})] \leq c
\end{equation}

Importantly, do not require that $c \to 0$ in order to prove that sampling error vanishes. It suffices for $c$ to be a constant like $\frac{1}{2}$ or $\frac{2}{3}$, which is a natural circumstance if $B_{t+1}(x_{t+1}) \geq \E_{x_t \sim \ppre_t(\cdot | x_{t + 1})}[\hat v_t(x_t) | x_{t+1}]$  $x_{t+1}$-a.s and $\hat v_t(x_t) | x_{t+1}$ is roughly symmetric. Moreover, we can track the assumption's effect through our bounds even when $c = 1 - \epsilon$ for some tiny $\epsilon > 0$. 

Nevertheless, rather than assuming that the condition in \eqref{eq:worst-case coverage assumption} holds for $x_{t + 1}$, it is more realistic to assume that the condition holds \emph{with high probability} $\zeta \in (0, 1)$ with respect to the sampling distribution $\hat{q}_{t + 1}(x_{t + 1})$. In particular, we define the set of ``covered states'' at level $c \in (0, 1)$ to be 

\[\mathcal {C}_{t + 1}(c) = \{x_{t + 1} \ : \Pr_{x_t \sim \ppre_t(\cdot | x_{t + 1})}\left[\hat{v}_t(x_t) \geq B_{t + 1}(x_{t + 1})\right] \leq c\} \]

We then say that a baseline has \textbf{high-probability coverage at levels $(\zeta, c) \in (0, 1)\times (0, 1)$} if 

\[\Pr_{x_{t + 1} \sim \hat{q}_{t+ 1}}\left[x_{t + 1} \in \mathcal{C}_{t + 1}(c)\right] \geq 1 - \zeta\]

In other words, for a good set of $x_{t + 1}$ with measure $1 - \zeta$ under the sampling distribution, $B_{t + 1}(x_{t + 1})$ upper bounds $\hat{v}_t(x_t)$ with probability $c$ over next-state proposals $x_t$. When $c \in (0, 1)$ has been fixed, we will omit the dependence of $\mathcal C_{t + 1}$ on $c$. 

In the next section of the Appendix, we show that, assuming high-probability coverage at levels $(\zeta, c)$, is sufficient to reproduce the results of Theorem \ref{thm:lcb tv bound} (and all of its corollaries), at the cost of an additive $\zeta$ term.

\subsection{Theory extension}

We show how to weaken the worst-case coverage assumption in the TV:MGF lemma
(Lemmas~\ref{lem:tv mgf body},~\ref{lem:tv mgf apdx}) to a high-probability coverage assumption.

\begin{lemma}[TV:MGF with high-probability coverage]\label{lem:apdx weakened tv mgf lemma}
Assume all conditions of Lemma~\ref{lem:tv mgf body} except worst-case coverage.
Instead, assume $B_{t+1}$ has high-probability coverage at levels $(\zeta,c)\in(0,1)\times(0,1)$
under the sampling distribution $\hat q_{t+1}$, i.e.
$\Pr_{x_{t+1}\sim \hat q_{t+1}}[x_{t+1}\in \mathcal C_{t+1}(c)]\ge 1-\zeta$.

Then for $\hat q_t(\cdot\mid x_{t+1})$ induced by Algorithm~\ref{alg:baseline rejection sampling},
\begin{equation}\label{eq:relaxed tv:mgf expression}
\E_{x_{t + 1} \sim q_{t + 1}}[d_{TV}(\hat{q}_t(\cdot | x_{t + 1}), \hat{p}_t(\cdot | x_{t + 1}))]  \leq \frac{2\delta^{1 - 2/\lambda}}{(1 - c)^2}(M(\lambda)M(-\lambda))^{1/\lambda}  + \zeta
\end{equation}
\end{lemma}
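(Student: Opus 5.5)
The plan is to split the outer expectation over $x_{t+1}\sim \hat q_{t+1}$ according to whether $x_{t+1}$ lies in the covered set $\mathcal C_{t+1}(c)$. Since total variation is at most $1$,
\[
\E_{x_{t+1}}\big[d_{TV}(\hat q_t(\cdot\mid x_{t+1}),\hat p_t(\cdot\mid x_{t+1}))\big]
\le \E_{x_{t+1}}\big[1\{\mathcal C_{t+1}\}\,d_{TV}(\cdot,\cdot)\big] + \Pr[x_{t+1}\notin \mathcal C_{t+1}].
\]
The second term is at most $\zeta$ by high-probability coverage, so the work lies in bounding the first term by $\frac{2\delta^{1-2/\lambda}}{(1-c)^2}(M(\lambda)M(-\lambda))^{1/\lambda}$.

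For the covered part, I would rerun the Exceedance Error Lemma (Lemma \ref{lem:excedence error}) pointwise. For each fixed $x_{t+1}$ it gives $d_{TV}\le A_{x_{t+1}}+B_{x_{t+1}}$, and we multiply both sides by $1\{x_{t+1}\in\mathcal C_{t+1}\}$. Next I would go through the proof of the TV:MGF Lemma (Lemma \ref{lem:tv mgf apdx}, case $\lambda\ge 2$) with this indicator carried along.

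\textbf{The $A$ term.} Worst-case coverage is used in only one place: the step $\ppre(E\mid x_{t+1})^{-2}\le (1-c)^{-2}$. On $\mathcal C_{t+1}$ this bound holds by definition. All the remaining steps bound nonnegative integrands, so I would drop the indicator after that step, since $1\{\mathcal C\}\le 1$. The two steps that discard the indicator are the H\"older step separating $\E[e^{(\lambda-1)\hat s}]$ from $\E[e^{-\hat s}]$, and the Jensen step producing $M(\lambda)$ and $M(-\lambda)$. The Chernoff certificate $e^{-\lambda\tilde\tau}M(\lambda)\le\delta$ is a statement about the full joint law, so it is unaffected. The result is $\E[1\{\mathcal C\}A]\le \frac{\delta^{1-2/\lambda}}{(1-c)^2}(M(\lambda)M(-\lambda))^{1/\lambda}$.

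\textbf{The $B$ term.} The bound $\E[B]\le\delta^{1-2/\omega}(M(\omega)M(-\omega))^{1/\omega}$ never used coverage, so it holds as is. Taking $\omega=\lambda$, I would then bound it by the same quantity, using $(1-c)^{-2}\ge 1$. Adding the $A$ and $B$ bounds produces the factor $2$.

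The main obstacle is bookkeeping rather than a new idea. One point needs checking: in the $A$ bound, the H\"older and Jensen steps must apply to nonnegative quantities, so that inserting or removing $1\{\mathcal C\}$ only increases the right-hand side. A second point is the edge case $\omega=\lambda=2$ in the $B$ bound, where the H\"older exponent for $1\{F\}$ degenerates. If $\lambda=2$ exactly, I would either use the limiting H\"older bound with the $1\{F\}$ factor dropped, which gives $\delta^0$ and matches the formula, or restrict to $\lambda>2$ and pass to the limit by continuity of $M$ near $\pm2$. The footnote of Lemma \ref{lem:tv mgf body} guarantees that $M$ is defined in a neighborhood of $\pm 2$, which makes this limit legitimate.
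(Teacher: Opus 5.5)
Your proposal is correct and follows the paper's proof sketch essentially step for step: you split on the covered set $\mathcal C_{t+1}(c)$, bound the uncovered part by $\zeta$ using $d_{TV}\le 1$, and replay the TV:MGF argument, where coverage enters only through $\ppre_t(E\mid x_{t+1})\ge 1-c$ in the $A$ term and the $B$ term is untouched. You also spell out two details the paper's sketch leaves implicit: the factor $2$ (bound the $B$ term with $\omega=\lambda$ and use $(1-c)^{-2}\ge 1$), and the boundary case $\lambda=2$.
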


\begin{proof}[Proof sketch]
Let $\mathcal C_{t+1}=\mathcal C_{t+1}(c)$ be the covered set.
Decompose
\begin{align*}
\mathbb E_{x_{t+1}\sim \hat q_{t+1}}\!\left[d_{{TV}}\!\left(\hat q_t(\cdot\mid x_{t+1}),\hat p_t(\cdot\mid x_{t+1})\right)\right]
&=
\mathbb E\!\left[d_{{TV}}(\cdot)\, 1\{x_{t+1}\in \mathcal C_{t+1}\}\right]
+
\mathbb E\!\left[d_{{TV}}(\cdot)\, 1\{x_{t+1}\notin \mathcal C_{t+1}\}\right] \\
&\le
\mathbb E\!\left[d_{{TV}}(\cdot)\, 1\{x_{t+1}\in \mathcal C_{t+1}\}\right]
+
\Pr[x_{t+1}\notin \mathcal C_{t+1}] \\
&\le
\mathbb E\!\left[d_{{TV}}(\cdot)\, 1\{x_{t+1}\in \mathcal C_{t+1}\}\right]
+\zeta,
\end{align*}
using $d_{{TV}}(\cdot)\le 1$.

It remains to bound the first term. Condition on $x_{t+1}\in \mathcal C_{t+1}$ and replay the proof of
Lemma~\ref{lem:tv mgf apdx} (via the Exceedance Error Lemma~\ref{lem:excedence error} and its decomposition
$d_{\mathrm{TV}}(\cdot)\le A_{x_{t+1}}+B_{x_{t+1}}$).
The bound on $B_{x_{t+1}}$ is unchanged, since it does not use coverage.
For the $A_{x_{t+1}}$ term, the only place coverage is used in Lemma~\ref{lem:tv mgf apdx} is to lower bound
\[
\ppre_t(E\mid x_{t+1}) \quad \text{where } \quad
E=\{\hat v_t(x_t)\le B_{t+1}(x_{t+1})\}.
\]
On the covered set $\mathcal C_{t+1}(c)$ we have $\ppre_t(E\mid x_{t+1})\ge 1-c$, hence
$\ppre_t(E\mid x_{t+1})^{-2}\le (1-c)^{-2}$, and the remainder of the TV:MGF argument goes through verbatim,
yielding
\[
\mathbb E\!\left[d_{{TV}}(\cdot)\,\mathbf 1\{x_{t+1}\in \mathcal C_{t+1}\}\right]
\le
\frac{2\,\delta^{1-2/\lambda}}{(1-c)^2}\big(M(\lambda)M(-\lambda)\big)^{1/\lambda}.
\]

Substituting into the first expression establishes \eqref{eq:relaxed tv:mgf expression}

\end{proof}

With Lemma \ref{lem:apdx weakened tv mgf lemma} proven, we can replace worst-case coverage in Theorem \ref{thm:general baseline tv bound}, Corollary \ref{cor:subgaussian value corollary} and Theorem \ref{thm:mog thm} by following the proofs of \ref{apdx:lcb main thm proofs} and adding $\zeta$ when TV:MGF is applied. For completeness, we restate the main theorem with high-probability coverage. 

\begin{theorem}[Theorem \ref{thm:lcb tv bound}]
Let $\hat \lambda, \hat b$ be the ERMs of $\hat{J}$. Let $\hat B^{(\hat \lambda,\hat b)}_{t + 1} := \hat{b}_{t + 1}(x_{t + 1}) + \hat{\tau}_{\hat {\lambda}, \hat{b}_{t + 1}} + \epsilon_0$ be the associated LCB, and assume that it is also satisfies high-probability coverage at levels $(\zeta, c)$. Then, on the good training event from Proposition \ref{prop:lcb learning bounds} 
\[\textstyle \E_{x_{t + 1} \sim \hat{q}_{t + 1}}[d_{TV}(\hat{q}_{t}(\cdot | x_{t + 1}), \hat{p}(\cdot | x_{t + 1}))] \leq \frac{\delta}{(1 - c)^2} e^{J^* + 2\epsilon_0} + \zeta\]
\end{theorem}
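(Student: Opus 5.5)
The plan is to copy the proof of Theorem \ref{thm:lcb tv bound} from Appendix \ref{apdx:lcb main thm proofs}, but to invoke the relaxed TV:MGF Lemma (Lemma \ref{lem:apdx weakened tv mgf lemma}) where that proof used Lemma \ref{lem:tv mgf apdx}. Throughout I suppress time subscripts and work on the good training event $\mathcal G$ of Proposition \ref{prop:lcb learning bounds}.

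\textbf{Step 1: check the hypotheses of Lemma \ref{lem:apdx weakened tv mgf lemma}.}
\begin{itemize}
\item The ERM is restricted to $\hat\lambda \in (2,\Lambda)$, so $\hat\lambda \ge 2$.
\item On $\mathcal G$, Proposition \ref{prop:lcb learning bounds} gives $\hat\tau_{\hat\lambda,\hat b} \in \tau_{\hat\lambda,\hat b}\pm\epsilon_0$.
\item Hence the effective slack $\tilde\tau := \hat\tau_{\hat\lambda,\hat b}+\epsilon_0$ satisfies $\tilde\tau \ge \tau_{\hat\lambda,\hat b}$. By the definition of $\tau_{\hat\lambda,\hat b}$ this gives $e^{-\hat\lambda\tilde\tau}M(\hat\lambda)\le\delta$, so $\hat B$ is Chernoff-certified and is a joint baseline at level $\delta$ under $(\ppre_t,\hat q_{t+1})$.
\item It also gives the upper bound $\tilde\tau\le\tau_{\hat\lambda,\hat b}+2\epsilon_0$.
\item The worst-case coverage assumption is replaced by the assumed high-probability coverage at levels $(\zeta,c)$.
\end{itemize}

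\textbf{Step 2: apply the relaxed lemma and convert to $J$.} Split the expectation over $x_{t+1}\sim\hat q_{t+1}$ on the covered set $\mathcal C_{t+1}(c)$ and its complement. The complement contributes at most $\zeta$, since $d_{TV}\le1$. On the covered set, run the $\lambda\ge2$ branch of Lemma \ref{lem:tv mgf apdx}: use the Exceedance Error Lemma to split the error into $A+B$, bound the $A$ term using $\ppre_t(E\mid x_{t+1})\ge 1-c$, and bound the $B$ term with $\omega=\hat\lambda$. This yields
\[
\frac{2\,\delta^{1-2/\hat\lambda}}{(1-c)^2}\big(M(\hat\lambda)M(-\hat\lambda)\big)^{1/\hat\lambda}.
\]
Writing $\delta^{1-2/\hat\lambda}=\delta\, e^{\frac{2}{\hat\lambda}\log\frac1\delta}$, this equals $\frac{2\delta}{(1-c)^2}e^{J(\hat\lambda,\hat b)}$.

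\textbf{Step 3: pass to $J^*$.} The other half of Proposition \ref{prop:lcb learning bounds} gives $J(\hat\lambda,\hat b)\le J^*+2\epsilon_0$, which produces the stated bound plus the additive $\zeta$.

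\textbf{Main obstacle: bookkeeping, in two places.}
\begin{itemize}
\item \emph{The $e^{2\epsilon_0}$ slack in the $A$ term.} In the $\lambda\ge2$ branch, $\tilde\tau$ enters only through $e^{-(\lambda-2)\tilde\tau}$. Since $\tilde\tau\ge\tau_{\hat\lambda,\hat b}$ and $\lambda-2\ge0$, this factor is bounded by $(\delta/M(\lambda))^{(\lambda-2)/\lambda}$, so no $e^{2\epsilon_0}$ arises there. The only $\epsilon_0$ loss then comes from $J(\hat\lambda,\hat b)\le J^*+2\epsilon_0$.
\item \emph{The leading constant.} The $A$ and $B$ terms each contribute one copy of $\frac{\delta^{1-2/\lambda}}{(1-c)^2}(M(\lambda)M(-\lambda))^{1/\lambda}$, so the bound naturally carries a factor $2$. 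This matches Lemma \ref{lem:apdx weakened tv mgf lemma} and Theorem \ref{thm:lcb tv bound} as stated, but the restated theorem shows coefficient $\frac{\delta}{(1-c)^2}$. What this argument delivers is $\frac{2\delta}{(1-c)^2}e^{J^*+2\epsilon_0}+\zeta$; the restatement's constant is consistent with it only up to that factor of $2$.
\end{itemize}
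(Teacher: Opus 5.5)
Your proposal is correct and follows the paper's own route: on the good event of Proposition~\ref{prop:lcb learning bounds} you check the TV:MGF hypotheses, using $\hat\lambda\ge 2$ so that no $e^{2\epsilon_0}$ loss arises in the $A$ term. You then apply the high-probability-coverage Lemma~\ref{lem:apdx weakened tv mgf lemma} to pick up the additive $\zeta$, rewrite $\delta^{1-2/\hat\lambda}(M(\hat\lambda)M(-\hat\lambda))^{1/\hat\lambda}$ as $\delta e^{J(\hat\lambda,\hat b)}$, and finish with $J(\hat\lambda,\hat b)\le J^*+2\epsilon_0$.

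Your remark on the constant is also right. The paper's own argument, like the main-body Theorem~\ref{thm:lcb tv bound} and Lemma~\ref{lem:apdx weakened tv mgf lemma}, carries the factor $2$, so the coefficient $\frac{\delta}{(1-c)^2}$ in this restatement is a typo in the statement rather than a shortfall of your proof.
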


\subsection{Experimental Validation}

We now show that the \textbf{high-probability coverage} assumption is reasonable for the LCBs learned for LLaDA (Section \ref{sec:llada experiments}). In the setting of Section \ref{sec:llada experiments}, we train an LCB at $\delta = 0.1$ and generate $M$ trajectories from $\hat{q}(x_T \dots x_0)$ by performing LCB-guided inference (Algorithm \ref{alg:baseline rejection sampling}). This defines a dataset of $M\times T$ \emph{source states}: $\{x^i_{t + 1}\}_{i \in [M], t \in [T-1]}$. For each source state, $x^i_{t + 1}$, we draw $N$ proposal states, $\{y^{i, j}_t\}_{j \in [N]}$ with $y^{i, j}_t \sim \ppre_t(\cdot | x^i_t)$. For each source state, $x^i_{t + 1}$, we then check the rate at which it successfully upper bounds proposal states' value:
\[D_{i, t+ 1} := \frac{1}{N}\sum_{j = 1}^N 1\{\hat{v}_t(y^{i, j}_t)\leq B_{t + 1}(x^i_{t + 1})\}\]

Experimentally, we set $M = 50, N = 20$. In Figure \ref{fig:coverage heatmap}, we show the heatmap for the matrix with entries $D_{i, t + 1}$. We see that at most time steps (columns of the heatmap), $t$, all $50$ source states properly bound all $20$ associated proposal states.

However, time steps $t = 9 \dots 15$ exhibit different behavior. At (e.g.) $t = 12$, we see that, while a majority of source states still bound \emph{all $20$ proposal states}, there are $4$ source states that have extremely poor coverage of proposal states ($i = 0, 18, 24, 29$). At these entries, $D_{i, t + 1} \approx 0$.  

Fixing $t = 12$, since $46/50$ source states $\{x^i_{12}\}$ satisfy perfect coverage of their associated proposal states, we approximately have 

\[\Pr_{x_{t + 1} \sim \hat{q}_{t + 1}}\left[\Pr[\hat{v}_t(x_t) \geq B_{ t+ 1}(x_{t+1})] > 0 \right] \leq \frac{4}{50} = 0.08\]

This suggests that LCBs trained on the LLaDA task satisfy high-probability coverage with levels $c \approx 0$, and $\zeta \approx 4/50 = 0.08$, even at the worst timesteps. 

\begin{figure}
    \centering
    \includegraphics[width=0.7\linewidth]{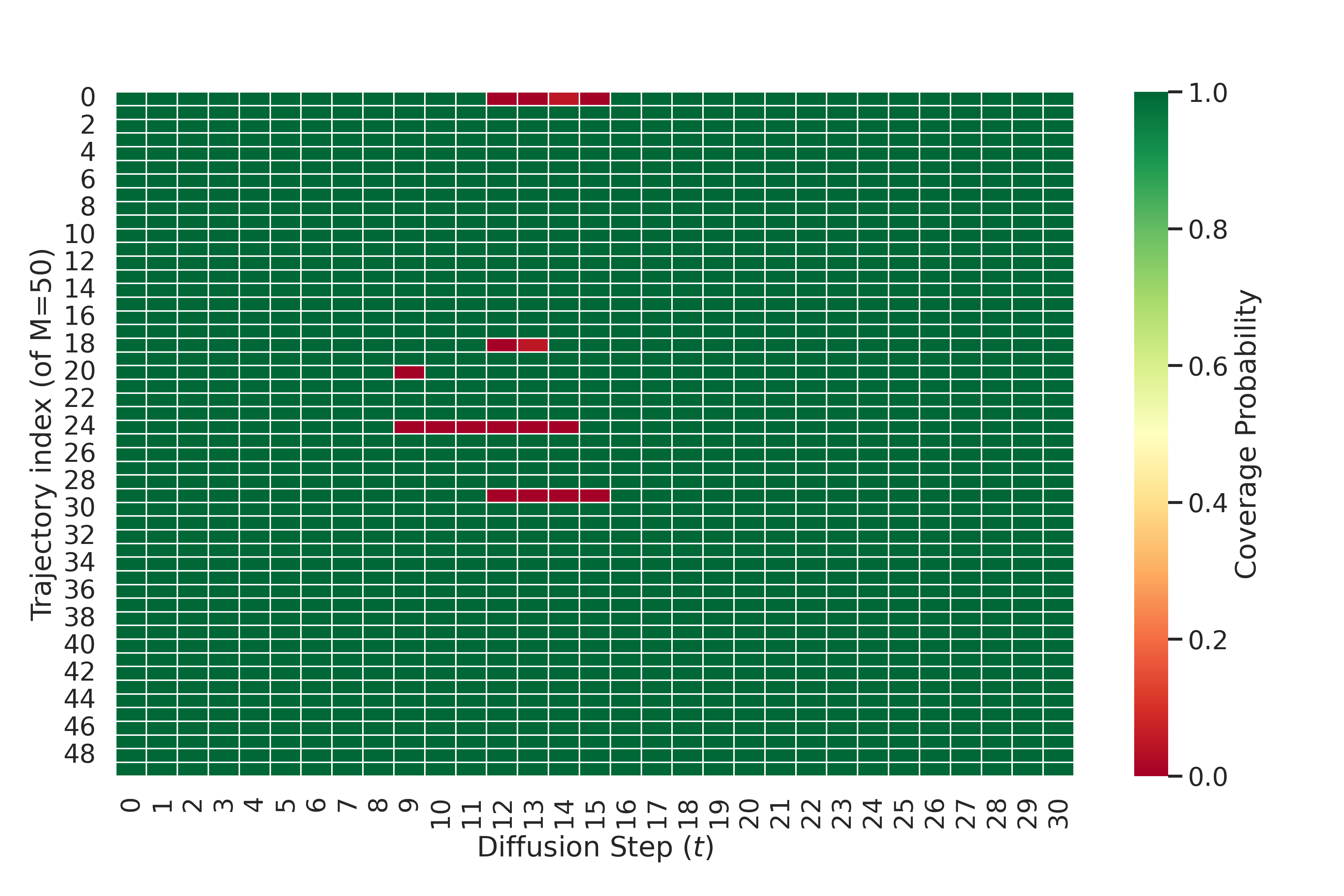}
    \caption{A large fraction of states satisfy coverage for all their successors, for LCBs trained in the LLaDA setting of Section \ref{sec:llada experiments}) }\label{fig:coverage heatmap}
    \label{fig:placeholder}
\end{figure}

The key conclusion is that \textbf{LCBs} tend to satisfy high-probability coverage with favorable values of $\zeta$ and $c$. 

\end{document}